\documentclass{article}
\usepackage{iclr2027_conference,times}
\usepackage{pifont}
\usepackage{amsmath,amssymb,amsthm}
\usepackage{tikz}
\usepackage{listings}
\usepackage{graphicx}
\usepackage{algpseudocode}
\usepackage{mathtools}
\usepackage{color}
\usepackage[utf8]{inputenc}
\usepackage{booktabs}
\usepackage{multirow}
\usepackage{makecell}
\usepackage[table]{xcolor}
\DeclareUnicodeCharacter{FF0C}{,}
\definecolor{grey}{gray}{0.5}
\usepackage{verbatim}
\usepackage[most]{tcolorbox}
\usepackage{multicol}
\usepackage{enumerate}
\usepackage{wrapfig}
\usepackage{caption}
\usepackage{float}
\usetikzlibrary{shapes.geometric, arrows.meta, positioning, calc, fit}
\usepackage[ruled, linesnumbered]{algorithm2e}
\usepackage{tabularx}
\usepackage{booktabs}

\newtheorem{definition}{Definition}
\newtheorem{theorem}{Theorem}

\newtheorem{lemma}{Lemma}
\newtheorem{remark}{Remark}

\newtcolorbox[auto counter, number within=section]{bluebox}{
    enhanced,
    colback=white,
    colframe=black,
    colbacktitle=black,
    coltitle=white,
    fonttitle=\bfseries\sffamily,
    attach boxed title to top left={xshift=3mm, yshift=-2mm},
    boxed title style={sharp corners, size=small},
    arc=1.5mm,
    top=1mm,
    bottom=1mm,
    before skip=1mm,
    after skip=1mm,
    before upper={
        \setlength{\parskip}{0pt}
        \setlength{\abovedisplayskip}{3pt}
        \setlength{\belowdisplayskip}{3pt}
        \setlength{\abovedisplayshortskip}{2pt}
        \setlength{\belowdisplayshortskip}{2pt}
    }
}

\newtcolorbox[auto counter, number within=section]{yellowbox}{
    enhanced,
    colback=white,
    colframe=black,
    colbacktitle=black,
    coltitle=white,
    fonttitle=\bfseries\sffamily,
    attach boxed title to top left={xshift=3mm, yshift=-2mm},
    boxed title style={sharp corners, size=small},
    arc=1.5mm,
}

\newtcolorbox[auto counter, number within=section]{orangebox}{
    enhanced,
    colback=white,
    colframe=black,
    colbacktitle=black,
    coltitle=white,
    fonttitle=\bfseries\sffamily,
    attach boxed title to top left={xshift=3mm, yshift=-2mm},
    boxed title style={sharp corners, size=small},
    arc=1.5mm,
}

\newtcolorbox{myremark}{
    blanker, breakable,
    left=3mm, before skip=10pt, after skip=10pt,
    borderline west={1.5pt}{1.5pt}{grey} 
}

\definecolor{linkblue}{RGB}{0,70,140}

\usepackage{amsmath,amsfonts,bm}

\def\eqref#1{equation~\ref{#1}}

\def\1{\bm{1}}

\DeclareMathAlphabet{\mathsfit}{\encodingdefault}{\sfdefault}{m}{sl}
\SetMathAlphabet{\mathsfit}{bold}{\encodingdefault}{\sfdefault}{bx}{n}

\usepackage{hyperref}
\hypersetup{
    colorlinks=true,
    linkcolor=blue,
    citecolor=linkblue,
    urlcolor=cyan
}
\usepackage{url}

\title{All Work And No Play Makes Jack a Dull Boy: Understanding and Preventing Catastrophic Strategy Collapse in RLVR}

\author{Qiyuan Huang\textsuperscript{\dag} \\
Peking University \\
\And
Tianshi Xu\textsuperscript{\dag} \\
Peking University \\
\And Meng Li$^*$ \\
Peking University \\
}

\newcommand{\sota}[1]{%
  \begingroup
  \setlength{\fboxsep}{1.5pt}%
  \colorbox{blue!5}{\textbf{#1}}%
  \endgroup
}

\begin{document}
\begingroup
\renewcommand{\thefootnote}{\fnsymbol{footnote}}
\footnotetext[2]{These authors contributed equally to this work. \quad $^*$Corresponding author: meng.li@pku.edu.cn}
\endgroup
\iclrfinalcopy

\maketitle
\lhead{}
\begin{abstract}
During post-training of large language models (LLMs) with Reinforcement Learning with Verifiable Rewards (RLVR), GRPO-style algorithms can exhibit severe late-stage collapse. Prompt-based probing reveals that this is not benign strategic pruning, but a harmful contraction of effective strategy capacity that makes distinct reasoning strategies increasingly inaccessible. To characterize this phenomenon, we define strategies through trajectory-level policy-update interactions and develop a unified theoretical framework combining optimization dynamics and information theory. We prove that major RLVR objectives progressively concentrate probability mass onto a single strategy, while sustaining nontrivial task accuracy requires a minimum strategy capacity. The conflict between these two results provides a mechanistic explanation for catastrophic collapse. We further derive the {Mirrored Entanglement Index (MEI)} as a lightweight online warning signal. To prevent collapse, we propose \textbf{Mesh Learning}, which exposes multiple reasoning strategies and prevents any single strategy from dominating optimization. Across AIME26, AIME25, MATH-500, GPQA, and LiveCodeBench, Mesh Learning consistently outperforms strong baselines across Qwen and Phi model families, with gains of up to 13.4 pp and 11.5 pp, respectively. These results establish strategy preservation as a key principle for stable RLVR. Code is available at \href{https://github.com/Ayanami-0123/Open-Mesh-Learning}{GitHub}.
\end{abstract}

\begin{figure}[h]
\begin{center}
\includegraphics[width=1.0\linewidth]{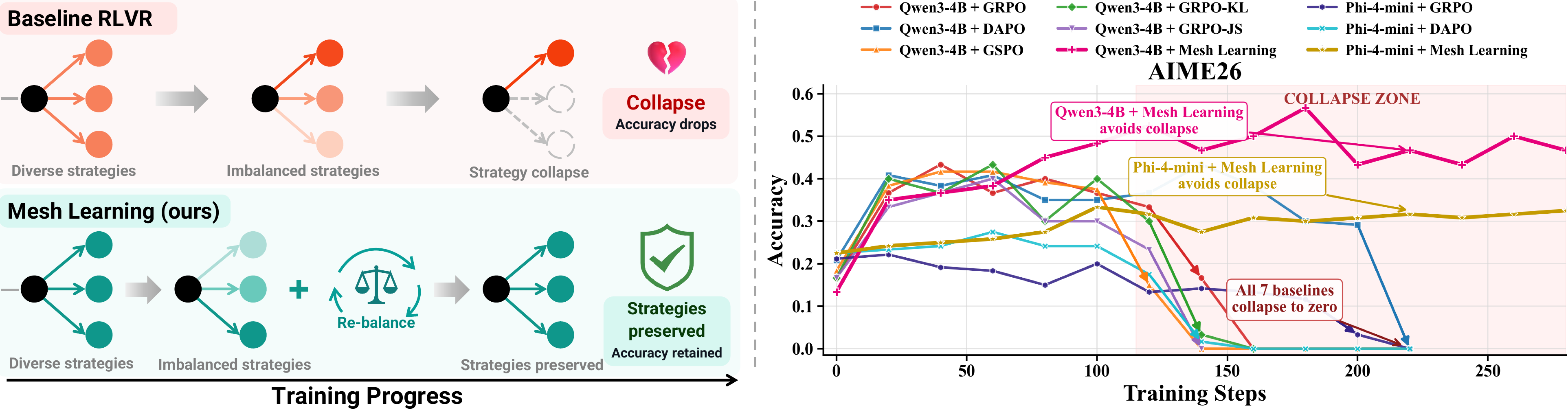}
\end{center}
\caption{
\textbf{Left:} Baseline RLVR progressively collapses to a single strategy, while \textsc{Mesh Learning} preserves strategy diversity through rebalancing.
\textbf{Right:} On AIME26, all baselines eventually collapse, whereas \textsc{Mesh Learning} remains stable and accurate across two model families.}\label{fig:cover}
\end{figure}

\section{{Introduction}}

In recent years, Reinforcement Learning with Verifiable Rewards (RLVR) has emerged as a powerful paradigm for improving the complex reasoning capabilities of Large Language Models (LLMs), particularly in mathematics and coding~\citep{guo2025deepseekr1,chen2025minimax,xu2026deepseek,team2026kimi,zeng2026glm,xu2026cleaner}. 
Driven by verifiable outcome supervision and scalable critic-free policy optimization, RLVR has been rapidly adopted in both academia and industry, giving rise to a range of optimization frameworks, including GRPO~\citep{guo2025deepseekr1}, DAPO~\citep{yu2025dapo}, and GSPO~\citep{zheng2025groupspo}, as well as regularization schemes based on KL divergence~\citep{shao2024DeepSeekMath} and JS divergence~\citep{li2026choice_js}. 
Together, these developments have established RLVR as a core post-training paradigm for modern reasoning models.

Despite its effectiveness in improving reasoning performance, RLVR can exhibit severe late-stage instability. Prior studies have reported similar collapse behaviors~\citep {guo2025deepseekr1,zheng2025groupspo,gao2025sapo,li2026choice_js}, and we observe the same phenomenon systematically in our experiments. As shown in Fig.~\ref{fig:cover}, accuracy drops abruptly after a period of seemingly successful optimization across two model families, Qwen~\citep{yang2025qwen3} and Phi~\citep{abdin2025phi}, and five representative training recipes, including GRPO, DAPO, GSPO, GRPO+KL, and GRPO+JS. We call this abrupt degradation \textbf{collapse}. Its recurrence across prior studies, model families, algorithms, and regularization schemes suggests a shared limitation of current RLVR optimization, yet its underlying mechanism remains poorly understood.

To characterize this behavior across RLVR algorithms, we develop a unified theory grounded in the optimization dynamics of policy updates. Since individual trajectories rarely recur exactly and are too fine-grained to describe reasoning behavior, we introduce \emph{strategies} as groups of trajectories that evolve coherently under parameter updates, and derive their distribution updates from the underlying trajectory-level updates. Within this framework, we explain collapse by contradiction through two central theorems: \textbf{\underline{1)}} If RLVR objectives (GRPO, DAPO, GSPO) persist in a non-collapse regime, the optimization inherently drives the policy toward a single dominant strategy with high probability; \textbf{\underline{2)}} achieving nontrivial task accuracy requires the model to retain a minimum strategy capacity. Together, these results reveal a fundamental conflict: RLVR optimization progressively contracts strategy capacity, while successful reasoning requires sufficient capacity to remain available. This contradiction provides a mechanistic explanation for the catastrophic collapse observed in practice.

This mechanistic view also suggests what to monitor before accuracy collapses: the latent concentration of strategy dynamics. Since this transition is difficult to observe from standard training signals, we derive the \textbf{Mirrored Entanglement Index (MEI)} from the geometry of RLVR updates as a lightweight online proxy for strategy collapse, requiring no auxiliary training and little computational overhead. Calibrating MEI on pre-RLVR models yields a \(3\sigma\) alarm threshold for detecting departure from the non-collapsed regime before the accuracy cliff. Across model families, RLVR algorithms, and regularization schemes, MEI rises above the threshold ahead of the eventual accuracy cliff, consistently tracking the theoretically predicted transition. Independent strategy-level analysis further confirms that this transition corresponds to the loss of distinct reasoning strategies, supporting strategy collapse as a shared mechanism underlying late-stage RLVR failure.

Building on the theoretical characterization and online measurability of strategy collapse, we propose \textbf{Mesh Learning}, a strategy-preserving RLVR framework that prevents individual strategies from dominating optimization and preserves multiple viable reasoning strategies throughout training. It consists of two complementary components: \textbf{\underline{1)} Coach Prompting (CP)}, which augments each query with strategy prefixes that elicit distinct solution approaches and ensure broad strategy exposure; and \textbf{\underline{2)} Strategy-Balancing Regularization}, which balances the learning progress across strategies to prevent any single strategy from overwhelming the others. In this way, CP constructs a diverse strategy mesh, while the regularizer preserves it throughout optimization. Across AIME26, AIME25, MATH-500~\citep{hendrycks2021measuring,lightman2023lets}, GPQA~\citep{rein2024gpqa}, and LiveCodeBench (v6)~\citep{jain2024livecodebench}, Mesh Learning consistently outperforms strong RLVR and divergence-regularized baselines across both Qwen and Phi families, with gains of up to 13.4 and 11.5 percentage points (pp), respectively. These results highlight strategy preservation as a key principle for stable and effective RLVR optimization.
In summary, our contributions are as follows:

\noindent\ding{182}\enspace
We establish a unified theory of RLVR dynamics, showing that inherent competition drives strategy concentration and explaining collapse through its conflict with the minimum strategy capacity required for successful reasoning. The analysis also yields MEI as a lightweight online indicator.

\noindent\ding{183}\enspace
We propose \textbf{Mesh Learning}, combining Coach Prompting and Strategy-Balancing Regularization to expose diverse reasoning strategies and prevent any strategy from dominating optimization.

\noindent\ding{184}\enspace
We provide extensive empirical validation across RLVR recipes and model families on AIME25/26, MATH-500, GPQA-Diamond, and LiveCodeBench, confirming the predicted collapse dynamics and the effectiveness of Mesh Learning.
\section{{From Trajectory to Strategy}}
Since the exact recurrence of an individual trajectory is exceedingly unlikely, analyzing isolated trajectories provides limited insight into training dynamics. To study these dynamics more effectively, we introduce the notion of a \textbf{\textit{strategy}}, which captures a group of trajectories that respond coherently to local policy updates, i.e., they tend to be reinforced or suppressed together.

\subsection{{The Concept of Strategy}}
Consider a prompt $Q$ and a policy $\pi_\theta$ parameterized by $\theta$. For a query $Q$, let its admissible trajectory space be $\mathcal T_Q=\{\tau_1,\tau_2,\ldots\}$. We first introduce the \textbf{Fisher Score} of a trajectory $\tau_i$:
$U_i=\nabla_\theta\log\pi_\theta(\tau_i\mid Q).$
Then consider an infinitesimal update induced by another trajectory $\tau_j$:
$\Delta\theta=\eta A_j U_j,$ where $\eta$ is the learning rate and $A_j$ denotes the scalar reward signal associated with $\tau_j$.
By first-order approximation, the log-likelihood of $\tau_i$ changes by
$\Delta\log\pi_{\theta}(\tau_i\mid Q)\approx\eta A_j U_i^\top U_j.$

Hence, we define the \textbf{trajectory coupling kernel} $K_{ij}\triangleq U_i^\top U_j$, which characterizes the first-order interaction between trajectories. For a positive update on $\tau_j$, $K_{ij}>0$ reinforces $\tau_i$, whereas $K_{ij}<0$ suppresses it, and $K_{ij}\approx0$ indicates that the two trajectories are locally decoupled.

Intuitively, trajectories belonging to the same strategy should not only be positively coupled, but also respond similarly to updates induced by other trajectories. This suggests a natural partitioning based on pairwise similarity $\sup_k |K_{ik}-K_{jk}|\le\epsilon$. However, pairwise similarity alone is not transitive and therefore does not directly induce a valid partition. To make this intuition precise, we define \textbf{strategy} in Def.~\ref{def:strategy} through the neighborhoods of the examined trajectories.

\begin{bluebox}
\begin{definition}[Strategy]\leavevmode\\
\label{def:strategy}
Given a prompt $Q$, policy $\pi_\theta$, and examined trajectories $\{\tau_1,\ldots,\tau_n\}$, let $K_{ij}=U_i^\top U_j$. 
For threshold $\epsilon$, define the closed neighborhood of $\tau_i$ as
\begin{equation}
    \mathcal{N}_i(\epsilon)\coloneqq \{\tau_i\}\cup
    \left\{\tau_j \mid \tau_j\neq\tau_i,\ K_{ij}>0,\ 
    \sup_{k}|K_{ik}-K_{jk}|\le\epsilon\cdot\sup_{a,b}|K_{ab}| \right\}.
\end{equation}
Here, $K_{ij}>0$ requires $\tau_i$ and $\tau_j$ to be positively coupled, while $\sup_{k}|K_{ik}-K_{jk}|\le\epsilon\cdot\sup_{a,b}|K_{ab}|$ further requires them to have similar coupling profiles with the examined trajectories.
We say $\tau_i$ and $\tau_j$ are dynamically equivalent, denoted by $\tau_i\sim_\epsilon\tau_j$, if
\begin{equation}
    \tau_i\sim_\epsilon\tau_j
    \iff
    \mathcal{N}_i(\epsilon)=\mathcal{N}_j(\epsilon).
\end{equation}
A \textbf{Strategy} is an equivalence class induced by $\sim_\epsilon$, with $\epsilon$ defined as the \textbf{sensitivity}.
\end{definition}
\end{bluebox}

\subsection{{The Semantic Proxy of Strategy}}
\label{subsec:2.2}
Def.~\ref{def:strategy} is deliberately optimizer-centric, grouping trajectories solely by their interactions under policy updates without semantic assumptions. We next investigate whether these strategies align with human-recognizable solution methods, enabling both interpretation and explicit control.

\begin{wrapfigure}[11]{r}[0em]{0.55\textwidth}
  \centering
  \vspace{-12pt}
  \includegraphics[width=1.0\linewidth]{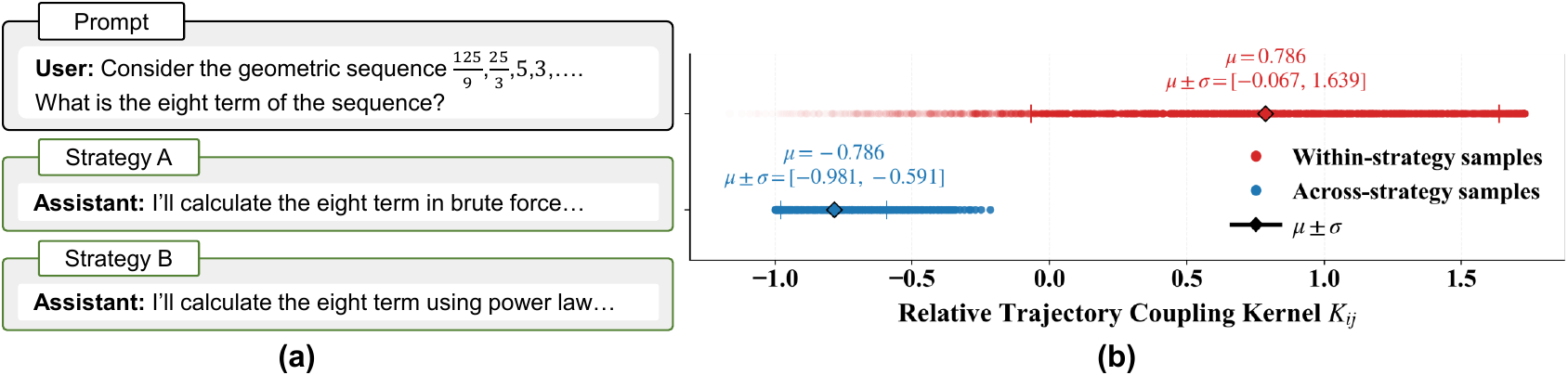}
  \caption{{Semantic interpretability of strategies.} 
(a) Distinct solution methods for the same problem. 
(b) On MATH-500, within-method trajectory pairs are substantially more coupled than cross-method pairs, supporting Def.~\ref{def:strategy}.}~\label{fig:strategy_example}
\end{wrapfigure}

Using correct trajectories generated by Qwen-2.5-7B-Instruct on MATH-500, we categorize their solution methods with DeepSeek-V4-Pro and compare the coupling kernel $K_{ij}$. With $\epsilon=0.05$, the resulting strategy partition empirically aligns with distinct solution methods. Fig.~\ref{fig:strategy_example} (a) illustrates that the same problem can admit semantically distinct strategies, while Fig.~\ref{fig:strategy_example} (b) shows at scale that within-method trajectory pairs are substantially more positively coupled than cross-method pairs, which are weakly or negatively coupled. This correspondence supports the semantic interpretability of our dynamically defined \textbf{strategies} and allows solution-method prefixes to serve as practical proxies for latent strategies. We use these prefixes for strategy elicitation in Sec.~\ref{subsec:3.2} and strategy-conditioned training in Sec.~\ref{subsec:data_orchestration}.
\section{{Theoretical Analysis }}\label{sec:3}
In this section, we prove the RLVR collapse mechanism by contradiction: \textbf{Theorem~\ref{theorem:concentration}} shows that if the non-collapse regime persists, GRPO, DAPO, and GSPO reduce strategy capacity; \textbf{Theorem~\ref{theorem:PowerLaw}} yields a minimum strategy capacity $\vert{}\mathcal S_{\epsilon,k}\vert{}$ for nontrivial task accuracy. These consequences eventually conflict, causing an accuracy cliff. Then, we derive a \textbf{lightweight online metric} that alerts collapse via a $3\sigma$ threshold from non-collapsed training. Finally, \textbf{Theorem~\ref{theorem:divergence_failure}} builds an effective strategy-level RL system, explains regularization failures, and motivates our prevention in the next section. The main notation used throughout our theoretical analysis is summarized in Table~\ref{tab:notation}.
\begin{table}[htbp]
\centering
\caption{Core notation used in Sec.~\ref{sec:3}.}
\label{tab:notation}
\footnotesize
\setlength{\tabcolsep}{3pt}
\renewcommand{\arraystretch}{0.8}
\resizebox{0.97\linewidth}{!}{%
\begin{tabularx}{\linewidth}{@{}cX|cX@{}}
\toprule
\textbf{Symbol} & \textbf{Meaning}
& \textbf{Symbol} & \textbf{Meaning} \\
\midrule

$Q,\mathcal D,\mathcal D',\mathcal H$ &
Query; test/training set; distribution &
$S_{\alpha,Q}^{(\epsilon)}$ &
Strategy $\alpha$ in $Q$'s step-0 $\epsilon$-partition\\

$k,\pi_k,\pi_{old,k}$ &
Step; step-$k$ current/old policy &
$\nu_{\alpha,Q,k}^{(\epsilon)}$ &
Probability mass of $S_{\alpha,Q}^{(\epsilon)}$ under $\pi_k$ \\
$Y(Q),\widehat{Y}(\tau)$ &
ground truth/ prediction of trajectory $\tau$ &
$P_{acc,k}$ &
Model's step-$k$ test set accuracy\\

$\epsilon,m_{\epsilon,Q}$ &
Sensitivity in Def.~\ref{def:strategy}; step-0 strategy amount for $Q$ under sensitivity $\epsilon$&
$|\mathcal{S}_{\epsilon,k}|$ &
Model's total number of usable strategies on $\mathcal D$ at step $k$
\\
\bottomrule
\end{tabularx}%
}
\end{table}
\vspace{-5pt}
\hypertarget{back:thm1}{}
\subsection{The Main Theorem of Strategy Collapse}
\textbf{If} RLVR remains in \textbf{non-collapse} regime, where test accuracy stays nontrivial, it induces competition among strategies. We show that this competition inherently concentrates their probability mass.

\begin{bluebox}
\begin{theorem}[The Main Theorem of Strategy Collapse]~\label{theorem:concentration}\leavevmode\\
Suppose $\mathcal D,\mathcal D'\overset{\mathrm{i.i.d.}}{\sim}\mathcal H$, with $|\mathcal D|,|\mathcal D'|\gg1$, and GRPO, GSPO, or DAPO is optimized
on $\mathcal D'$ by gradient descent with step-wise old-policy refresh
$\pi_{\mathrm{old},k}=\pi_{k-1}$.
For \(Q\in\mathcal{D}\), let $\eta$, $|\tau|$, and $\epsilon_{\mathrm{clip}}$
denote the learning rate, trajectory length, and clipping ratio. Define
$T_c\triangleq\lfloor\eta^{-1}\rfloor$. 
If $P_{acc,k}\in(0,1)$, for all $k\le T_c,$ and 
$\eta\ll\epsilon_{\mathrm{clip}}<1\ll|\tau|\ll\eta^{-1}\ll|\tau|^3,$
then
\begin{equation}
\begin{gathered}
    \forall\epsilon>0,\epsilon\ll1,\zeta,\delta\in(\eta,1),\zeta^2\gg\eta\quad
    \exists K\in\mathbb{Z}^{+}
    ,K\ll |\tau|<T_c,\\
    \forall k\in\mathbb{Z}^{+}
    ,K<k\le T_c,\quad
    \mathbb{P}\!\left(
        \max_{1\le\alpha\le m_{\epsilon,Q}}
        \nu_{\alpha,Q,k}^{(\epsilon)}
        \ge 1-\zeta
    \right)
    \ge 1-\delta.
\end{gathered}
\end{equation}
The probability is taken over the stochasticity of RLVR training on
$\mathcal{D}'$.
\end{theorem}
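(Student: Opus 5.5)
We argue at the level of strategy masses rather than individual trajectories. Fix $Q\in\mathcal D$ and its step-0 $\epsilon$-partition $\{S_{\alpha,Q}^{(\epsilon)}\}_{\alpha\le m_{\epsilon,Q}}$. By Def.~\ref{def:strategy}, trajectories in the same class have coupling profiles within $\epsilon\sup|K|$ of each other. So, to first order in $\eta$, every update direction $\Delta\theta=\eta\sum_j A_jU_j$ produced by GRPO, DAPO or GSPO on $\mathcal D'$ shifts the log-probabilities of all members of a class by the same amount, up to an $O(\epsilon)$ error.

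\textbf{Step 1: strategy-level dynamics.} Aggregating the first-order change $\Delta\log\pi(\tau_i\mid Q)\approx\eta\sum_j A_jK_{ij}$ over each class gives a replicator-type recursion
\begin{equation}
\nu_{\alpha,k+1}=\nu_{\alpha,k}\,\frac{\exp(\eta g_{\alpha,k})}{\sum_\beta\nu_{\beta,k}\exp(\eta g_{\beta,k})}+O(\eta\epsilon+\eta^2),
\end{equation}
where $g_{\alpha,k}$ is the class-averaged projected advantage. Because $\mathcal D,\mathcal D'$ are i.i.d.\ draws from $\mathcal H$ with $|\mathcal D|,|\mathcal D'|\gg1$, the coupling between training prompts and the test prompt $Q$ concentrates around its population mean, so $g_{\alpha,k}=\bar g_{\alpha}(\nu_k)+\xi_{\alpha,k}$ with martingale-difference noise $\xi$.

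For each algorithm we must check that clipping and the step-wise refresh $\pi_{\mathrm{old},k}=\pi_{k-1}$ leave this form intact up to algorithm-specific constants. Since $\eta\ll\epsilon_{\mathrm{clip}}$, the ratio stays inside the clip region for all $k\le T_c$, so clipping is inactive at first order. GSPO's sequence-level ratio and DAPO's token averaging only rescale the effective advantage by factors polynomial in $|\tau|$.

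\textbf{Step 2: competition forces a strict drift.} The hypothesis $P_{acc,k}\in(0,1)$ guarantees that group-normalised advantages are nondegenerate, so the $\bar g_\alpha$ are not all equal. Combined with the zero-mean normalisation of advantages, this yields a strictly positive fitness gap. We then take a Lyapunov functional, either $\Phi_k=\sum_\alpha\nu_{\alpha,k}^2$ or $-H(\nu_k)$, and show from the recursion that
\begin{equation}
\mathbb E[\Phi_{k+1}-\Phi_k]\ge c\,\eta\,\mathrm{Var}_{\nu_k}(\bar g)\ \ge\ c'\eta\,\min_\alpha\nu_{\alpha,k}(1-\max_\alpha\nu_{\alpha,k}).
\end{equation}
Intuitively, the current leader's fitness advantage is self-reinforcing: mass on a strategy increases its coupling with the update direction, so once a strategy leads it keeps gaining.

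\textbf{Step 3: finite-time concentration with probability $1-\delta$.} First, the per-step gain in Step 2 carries a factor of order $|\tau|$, because the score norm $\|U\|^2$ scales with trajectory length. Summing the drift over steps then shows that $1-\max_\alpha\nu_\alpha$ falls below $\zeta$ after $K=O(\log(1/\zeta)/(\eta|\tau|\cdot\text{gap}))$ steps. The window $|\tau|\ll\eta^{-1}\ll|\tau|^3$ is used to make $K\ll|\tau|<T_c$. Second, Azuma/Freedman on the noise martingale bounds its cumulative effect by $O(\sqrt{\eta\log(1/\delta)})$, and this is $\ll\zeta$ because $\zeta^2\gg\eta$. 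Third, we need the concentrated state to persist for all $K<k\le T_c$. For this, near a vertex the recursion is a contraction toward it, and the $O(\eta\epsilon+\eta^2)$ errors accumulate to at most $T_c(\eta\epsilon+\eta^2)=O(\epsilon+\eta)\ll\zeta$. A union bound over the $T_c$ steps is absorbed into $\delta$ by a stopping-time argument.

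\textbf{Main obstacle.} The hardest step is Step 2: obtaining a uniform strict lower bound on the fitness gap from the mere assumption $P_{acc,k}\in(0,1)$. This requires showing that cross-strategy couplings are weak or negative, as suggested empirically in Fig.~\ref{fig:strategy_example}, so the dynamics are genuinely competitive rather than cooperative. It also requires controlling the fact that the partition is frozen at step 0 while the kernel $K$ drifts with $\theta$. I would first try to bound the drift of $K$ over $T_c$ steps by $O(\eta T_c)=O(1)$ relative to $\epsilon\sup|K|$, which would keep the class structure approximately valid.
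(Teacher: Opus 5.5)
Your Step 1 matches the paper's Steps 0--2: clipping is inactive since $\eta\ll\epsilon_{\mathrm{clip}}$, and the dynamics take a strategy-level replicator form with martingale noise. Your overall skeleton (bounded potential with positive drift, Azuma--Hoeffding for hitting, a separate persistence step) is also the paper's. However, two load-bearing steps fail.

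\textbf{The potential in Step 2.} Under $\Delta\nu_\alpha=\eta\nu_\alpha(g_\alpha-\bar g)$, the first-order increments of $\sum_\alpha\nu_\alpha^2$ and $-H(\nu)$ are $2\eta\,\mathrm{Cov}_\nu(\nu,g)$ and $\eta\,\mathrm{Cov}_\nu(\log\nu,g)$. Both are negative whenever a minority strategy is the fitter one. For example, with $\nu=(0.9,0.1)$ and the second strategy fitter, mass flows toward balance and entropy rises. So $\mathbb E[\Delta\Phi]\ge c\eta\,\mathrm{Var}_\nu(\bar g)$ does not hold, and the ``self-reinforcing leader'' you invoke is exactly what is unproven.

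The quantity that grows at rate $\mathrm{Var}_\nu$ is the \emph{mean fitness}, and this is the missing idea. The paper freezes the strategy-level coefficients $a_{\alpha,Q}$ on the adiabatic window $k\ll|\tau|$ and takes $\Phi_k=\frac{1}{|\mathcal D|}\sum_{Q\in\mathcal D}\sum_\alpha a_{\alpha,Q}\nu_{\alpha,Q,k}$. Its drift is $\eta\langle\widehat g_{\mathcal D,k},\widehat g_{\mathcal D',k}\rangle$. Because $\mathcal D,\mathcal D'$ are i.i.d.\ from $\mathcal H$, this equals $\eta\|g_k\|^2$ up to an $O(\epsilon)$ factorization error and an $O(|\mathcal D|^{-1/2}+|\mathcal D'|^{-1/2})$ sampling error. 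The leading term is a squared norm, nonnegative whatever the signs of the cross-strategy couplings, so the obstacle you single out (showing the dynamics are competitive) never arises.

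Strict positivity then comes from genericity, not from $P_{acc,k}\in(0,1)$. Distinct strategies have a.s.\ distinct coefficients, and once $g_k$ is written in contrast form via $\sum_\alpha\nu_\alpha\overline U_\alpha=0$, it is a nonzero linear function of the score vectors that vanishes only on a null set. Separately, your factor $\min_\alpha\nu_\alpha$ would kill the drift whenever one of three or more strategies is nearly extinct while the others still share the mass.

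The drift is also quadratic in the score, $\|U\|^2=\Theta(|\tau|^2)$, which gives $K\sim\eta^{-1}|\tau|^{-2}$; that is exactly where $\eta^{-1}\ll|\tau|^3$ enters. With your factor $|\tau|$, $K\ll|\tau|$ would need $\eta^{-1}\ll|\tau|^2$, which the hypothesis does not supply.

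\textbf{Persistence.} You assert without argument that the recursion contracts toward whichever vertex is reached. With frozen fitness this holds only at the fittest strategy's vertex; the other vertices are unstable. More importantly, writing the fitness as $\bar g_\alpha(\nu_k)$ implicitly freezes the kernel, in-class distributions, advantages and correction factors. The paper justifies this only for $k\ll|\tau|$, whereas $T_c=\lfloor\eta^{-1}\rfloor\gg|\tau|$. Your own estimate $O(\eta T_c)=O(1)$ is an order-one relative change of $K$, not small compared with the tolerance $\epsilon\sup|K|$ that defines the classes. So the tally $T_c(\eta\epsilon+\eta^2)$ is unjustified.

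The paper avoids both problems. It runs the hitting argument down to tolerance $\zeta'=O(\eta)$, then uses the score identity $(1-\nu^\star)\overline U_o=-\sum_{\alpha\neq o}\nu_\alpha\overline U_\alpha$ to get $\|\overline U_o\|=O(\eta|\tau|)$. Hence the dominant strategy's fitness, and the gap $\Delta_{Q,k}=-2f_{o,Q,k}$, are tiny without any freezing. The multiplicative loss of $\nu_o$ over the whole horizon is $\approx e^{-\bar\Delta/2}\approx1$, and maximal Azuma--Hoeffding with $\zeta^2\gg\eta$ controls the noise. This works whether or not $o$ is the fittest strategy.
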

\end{bluebox}
\textbf{Theorem~\ref{theorem:concentration}} shows that, after burn-in time $K$, RLVR concentrates at least $1-\zeta$ of the strategy mass for $Q$ on a single strategy throughout $K<k\le T_c$ with probability at least $1-\delta$, thereby \textbf{reducing effective strategy capacity}. The scale condition $\eta\ll\epsilon_{\mathrm{clip}}<1\ll|\tau|\ll\eta^{-1}\ll|\tau|^3$ matches standard RLVR configurations and experiments (Sec.~\ref{subsec:exp_setup}); the proof is in Appendix~\hyperlink{proof:thm1}{\ref*{app:theorem1_proof}}. Since directly tracking $\nu_{\alpha,Q,k}^{(\epsilon)}$ is impractical, Sec.~\ref{subsec:mei} derives MEI as a lightweight observable.

\hypertarget{back:thm2}{}
\subsection{Power Law for Strategy Capacity Lower Bound}
\label{subsec:3.2}

Assuming the non-collapse regime persists, \textbf{Theorem~\ref{theorem:concentration}} implies concentration. We further derive a lower bound on usable strategy capacity for nontrivial accuracy, thus contradicting this assumption.
\begin{bluebox}
\begin{theorem}[Power Law for Strategy Capacity Lower Bound]\leavevmode\\
\label{theorem:PowerLaw}
For $\mathcal D,\mathcal D'\overset{\mathrm{i.i.d.}}{\sim}\mathcal H,$ $|\mathcal D|=N$. Suppose $
\mathbb P_{Q\sim\mathcal D}\!\left[\exists Q'\neq Q,\exists\tau\in S^{(\epsilon)}_{\cdot,Q}:\widehat Y(\tau)=Y(Q')\right]\le\gamma\ll1$, requiring a diverse answer space.
Fix $\rho,\epsilon\in(0,1)$ such that  $\min_{Q\in\mathcal D,\alpha}\nu^{(\epsilon)}_{\alpha,Q,0}\ge\rho$, (initially non-collapsed). Let $m_\epsilon\triangleq\min_{Q\in\mathcal D}m_{\epsilon,Q}$,
then for $0<k\ll|\tau|$ under RLVR optimization on $\mathcal D'$:

\begin{equation}
    |{\mathcal S}_{\epsilon,k}|
    =
    \Omega_{\rho,\gamma}\!\left(
        m_\epsilon N^{P_{acc,k}-\gamma}
    \right).
\end{equation}
\end{theorem}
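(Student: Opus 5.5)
The plan is to count, for each query $Q\in\mathcal D$, how many usable strategies the model must carry so that its predictions on $\mathcal D$ are correct on a $P_{acc,k}$ fraction of queries. The link is an information-theoretic (Fano/counting) argument. Strategies are shared across queries, so a small pool $\mathcal S_{\epsilon,k}$ of strategies can only produce a small set of distinct final answers. The diverse-answer assumption with parameter $\gamma$ forces distinct queries to need distinct answers. Hence correctly answering many queries forces many strategies, with the count growing polynomially in $N$.

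\textbf{Step 1 (persistence of the initial partition for $k\ll|\tau|$).} Theorem~\ref{theorem:concentration} says concentration only sets in after a burn-in $K\ll|\tau|$. In the short-time regime $0<k\ll|\tau|$, I would argue directly from the first-order update $\Delta\log\pi\approx\eta A_jK_{ij}$ with $\eta\ll 1$. Summed over $k$ steps, each log-mass changes by $O(k\eta\sup|K|)$, so $\nu^{(\epsilon)}_{\alpha,Q,k}\ge c_\rho\,\nu^{(\epsilon)}_{\alpha,Q,0}\ge c_\rho\rho$ for a constant $c_\rho>0$. Thus every one of the $m_{\epsilon,Q}\ge m_\epsilon$ step-0 strategies of each query remains usable at step $k$. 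This supplies the $m_\epsilon$ factor and the $\rho$-dependence hidden in $\Omega_{\rho,\gamma}$.

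\textbf{Step 2 (answer-space counting).} Let $\mathcal C_k\subseteq\mathcal D$ be the set of correctly answered queries, with $|\mathcal C_k|\approx P_{acc,k}N$ (using $|\mathcal D|\gg1$ and i.i.d. concentration). By the $\gamma$-assumption, outside a $\gamma N$ exceptional set, no trajectory in any strategy of $Q$ outputs $Y(Q')$ for $Q'\neq Q$. So each non-exceptional correct query needs a strategy that is \emph{private} in its answer. The clean linear version of this argument gives $|\mathcal S_{\epsilon,k}|\gtrsim m_\epsilon(P_{acc,k}-\gamma)N$, which is too strong. To obtain the stated power law instead, I would use a Fano-type bound. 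Treat the strategy label as a channel from $Q$ (uniform on $N$ items, entropy $\log N$) to the answer. Fano's inequality gives
\[
I(Q;\widehat Y)\ge P_{acc,k}\log N-h(P_{acc,k})-\gamma\log N .
\]
Correct answers must route through strategies, which gives $I(Q;\widehat Y)\le \log|\mathcal S_{\epsilon,k}|-\log m_\epsilon+O_\rho(1)$. Here each query spreads mass at least $c_\rho\rho$ over $m_\epsilon$ strategies, costing roughly $\log m_\epsilon$ bits of non-informative entropy. Exponentiating yields $|\mathcal S_{\epsilon,k}|\ge C_{\rho,\gamma}\,m_\epsilon N^{P_{acc,k}-\gamma}$.

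\textbf{Main obstacle.} The delicate step is the data-processing inequality through the strategy variable. I have to justify that $Q\to S\to\widehat Y$ is (approximately) Markov, i.e. that the answer depends on the query only through the strategy used. I would try to derive this from Def.~\ref{def:strategy}: trajectories in one equivalence class have nearly identical coupling profiles, so their contributions to the answer distribution are interchangeable up to $O(\epsilon)$. The $\gamma$-term would absorb the resulting leakage. A second difficulty is the precise bookkeeping of the $\log m_\epsilon$ term. Showing the $m_\epsilon$ factor enters multiplicatively rather than additively requires the lower bound $\nu\ge c_\rho\rho$ from Step 1 to prevent mass from hiding in few strategies. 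I would first attempt this via a conditional-entropy bound $H(S\mid Q)\ge\log m_\epsilon-O(\log(1/\rho))$.
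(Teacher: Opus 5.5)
Your skeleton matches the paper's: $H(S_{\epsilon,k})=H(S_{\epsilon,k}\mid Q)+I(Q;S_{\epsilon,k})\le\log_2|\mathcal S_{\epsilon,k}|$, a Fano bound with decoding error at most $1-P_{acc,k}+\gamma$, and a $\rho$-dependent floor on $H(S_{\epsilon,k}\mid Q)$. The gap is the step you flagged, and the repair you propose does not work.

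You apply Fano to a sampled answer $\widehat Y$. You then need $I(Q;\widehat Y)\le I(Q;S_{\epsilon,k})$, i.e.\ a Markov chain $Q\to S\to\widehat Y$, which you hope to extract from Def.~\ref{def:strategy}. But dynamical equivalence constrains only the Fisher-score coupling profiles $K_{i\cdot}$. It says nothing about terminal answers, and trajectories in one class may end in different answers; the paper explicitly notes that the $\gamma$-condition does not require answer homogeneity within a strategy. So there is no $O(\epsilon)$ interchangeability of answers to exploit.

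Worse, if strategies are labels shared across queries, as you frame them, both the Markov property and the conclusion are false. Take a single shared method whose restriction to each $\mathcal T_Q$ outputs only $Y(Q)$. It satisfies the $\gamma$-condition with $\gamma=0$ and has $P_{acc,k}=1$, yet $|\mathcal S_{\epsilon,k}|=1$ and $I(Q;S)=0<\log_2 N=I(Q;\widehat Y)$. The bound holds only because Def.~\ref{def:strategy} is per prompt: each $S^{(\epsilon)}_{\alpha,Q}$ is a class of trajectories in $\mathcal T_Q$, and $\mathcal S_{\epsilon,k}$ collects these query-specific classes.

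The fix, which is the paper's route, is to bypass the sampled answer and decode from the strategy itself. Let $\widehat Q_k(S_{\epsilon,k})$ match the terminal answers $\{\widehat Y(\tau):\tau\in S_{\epsilon,k}\}$ against $\{Y(Q'):Q'\in\mathcal D\}$, without access to the index $Q$. Its failure event lies in the union of \{no correct trajectory in $S_{\epsilon,k}$\} and the cross-query collision event, so its error is at most $1-P_{acc,k}+\gamma$. This is where $\gamma$ really enters, rather than as absorbed leakage. Because $\widehat Q_k$ is a deterministic function of $S_{\epsilon,k}$, the bound $I(Q;S_{\epsilon,k})\ge I(Q;\widehat Q_k)\ge(P_{acc,k}-\gamma)\log_2N-1$ is plain data processing.

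Your second difficulty is then moot. Every step-$0$ atom has mass at least $\rho$, so $\rho\,m_\epsilon\le1$, and the Fano step alone already gives $|\mathcal S_{\epsilon,k}|\ge\tfrac12N^{P_{acc,k}-\gamma}\ge\tfrac{\rho}{2}m_\epsilon N^{P_{acc,k}-\gamma}$. Your Step~1 only sharpens the constant. The same is true of the paper's identity $H(S_{\epsilon,k}\mid Q)=-B_k-\mathbb E_Q\sum_\alpha\nu_{\alpha,Q,k}\log_2\nu_{\alpha,Q,0}$, with $B_k=O(\eta^2|\tau|^2)$ and $\nu_{\alpha,Q,0}\le(1+\rho)^2/(4\rho\,m_{\epsilon,Q})$, which yields the constant $2\rho m_\epsilon/(1+\rho)^2$. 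Separately, the drift estimate in your Step~1, $O(k\eta\sup|K|)$, need not be $O(1)$ when $\sup|K|=\Theta(|\tau|^2)$ and $\eta|\tau|^3\gg1$.

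Likewise, a valid linear count would not be ``too strong'': $N^x\le1+x(N-1)$ for $x\in[0,1]$, so it would imply the claim.
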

\end{bluebox}
\textbf{Theorem~\ref{theorem:PowerLaw}} shows higher accuracy requires a \textbf{larger usable strategy capacity}. With \textbf{Theorem~\ref{theorem:concentration}}, this explains accuracy collapse: RLVR contracts strategy capacity until violating the bound, making performance unsustainable. Collapsing \(m_\epsilon\) strategies to one removes entropy \(\Delta H_{\mathrm{req}}\!\approx\!\log m_\epsilon\), which exceeds budget \(\Delta H_{\mathrm{ad}}=o(\log m_\epsilon)\), thus breaking the non-collapse regime. As Theorem~\ref{theorem:concentration} holds with high probability, occasional non-collapse runs are expected. See details in Appendix \hyperlink{proof:thm2}{\color{blue}{\ref*{app:theorem2_proof}}}.

\begin{wrapfigure}[11]{r}[0em]{0.49\textwidth}
    \vspace{-12pt}
    \centering
    \includegraphics[width=0.92\linewidth]{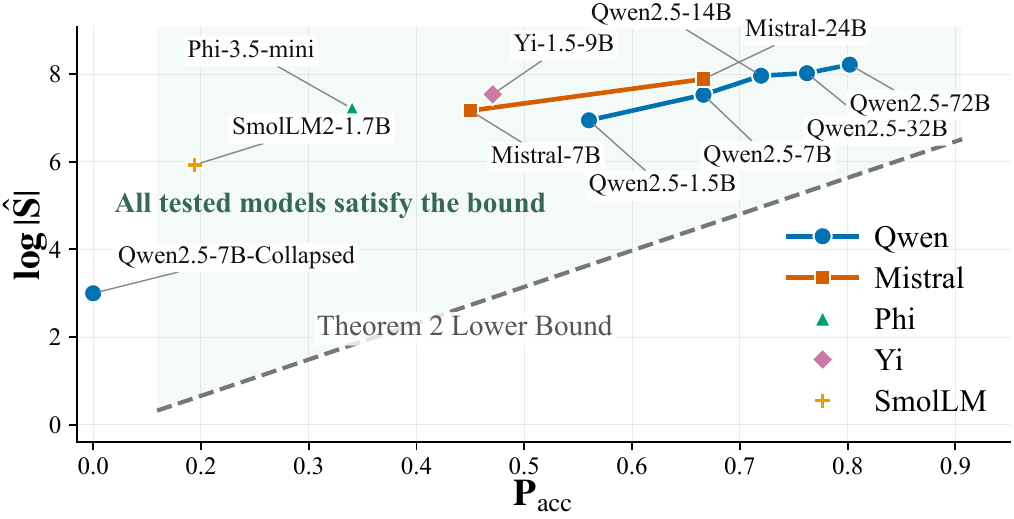}
    \caption{Empirical examination of Theorem~\ref{theorem:PowerLaw}. All tested models satisfy the lower bound.}
    \label{fig:powerlawbound}
\end{wrapfigure}

For empirical validation, we use a conservative estimate $\widehat m=4$, obtained by asking DeepSeek-V4-Pro to enumerate distinct solution methods for $1{,}000$ MATH problems and taking the minimum across prompts. This deliberately avoids overestimating $m$, ensuring that the empirical test does not rely on an inflated estimate of task strategy diversity. We then evaluate the resulting bound on models from 1.7B to 72B parameters. As shown in Fig.~\ref{fig:powerlawbound}, all tested models satisfy the predicted lower bound on strategy capacity.

\hypertarget{back:mei}{}
\subsection{{How to Measure and Forecast Collapse}}\label{subsec:mei}
The preceding analysis identifies strategy concentration as the latent transition underlying accuracy collapse, but directly tracking strategy partitions or parameter-space gradients during training is impractical. We therefore seek a lightweight observable of this transition. In Appendix~\hyperlink{proof:appB}{\ref*{app:mei}}, we prove that as collapse approaches, both the Fisher score $U_i$ and the logit gradients $v_i=\nabla_z\log\pi(\tau_i\mid Q)$ contract, i.e., $\|U_i-U_j\|\to0$ and $\|v_i-v_j\|\to0$. To reduce memory overhead, we construct our monitor from $v_i$, which can be obtained directly from rollout logits. This yields a lightweight online observable, the \textbf{Mirrored Entanglement Index (MEI)}:

\begin{bluebox}
\begin{definition}[Mirrored Entanglement Index (MEI)]
$
\text{MEI} = \| \sum_{i=1}^{G} v_i\|^2 / \sum_{i=1}^{G} \|v_i\|^2
$. 
\end{definition}
\end{bluebox}
Equivalently, $\mathrm{MEI}=G-\sum_{i<j}\|v_i-v_j\|^2/\sum_i\|v_i\|^2
=1+2\sum_{i<j}v_i^\top v_j/\sum_i\|v_i\|^2$. The first form shows that MEI increases as $\|v_i-v_j\|\to0$, while the second gives a baseline of $1$ for weakly coupled trajectories. Appendix~\hyperlink{proof:appB}{\ref*{app:mei}} establishes that increasing MEI reflects $v$-concentration, which implies $g$-concentration and departure from the non-collapsed regime. Calibrating on pre-RLVR Qwen2.5-7B-Instruct and Qwen3-4B across $3$ datasets yields the $3\sigma$ threshold $\mathrm{MEI}=1.013$.

\begin{wrapfigure}[9]{r}[0em]{0.52\textwidth}
\vspace{-12pt}
\centering
\includegraphics[width=1.0\linewidth]{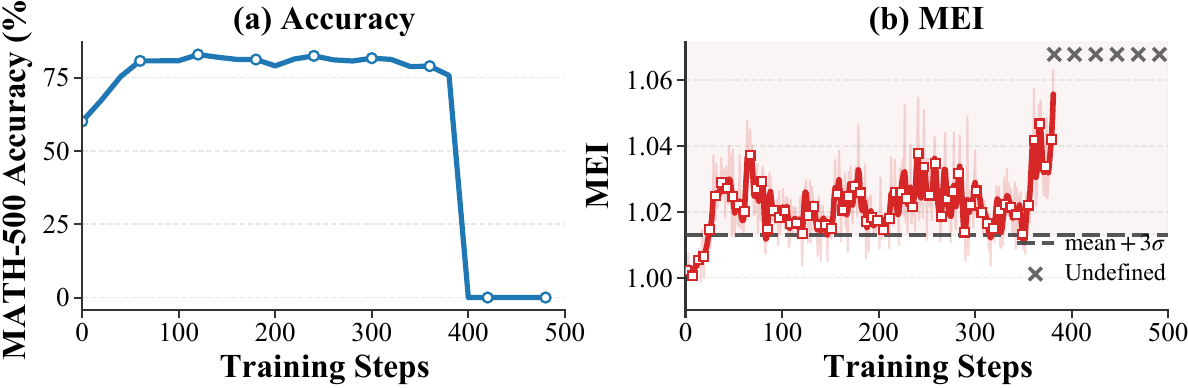}
\caption{MEI as an alert signal of collapse.}~\label{fig:mei}
\end{wrapfigure}

Fig.~\ref{fig:mei} illustrates this behavior in GRPO training. MEI crosses the \(3\sigma\) threshold early in training, before its final surge and the subsequent accuracy collapse. This provides a lightweight online signal for detecting departure from the non-collapsed regime, without explicitly tracking parameter-space gradients during optimization.
\subsection{Why Divergences Fail}
The preceding results identify collapse as intrinsic to pure RLVR. Can a fixed divergence penalty prevent it? \textbf{Theorem~\ref{theorem:divergence_failure}} says no. Using a \textbf{derived strategy-level RL system}, we show sufficiently large reward gaps can drive KL- or JS-regularized optima arbitrarily close to a single strategy.

\begin{bluebox}
\begin{theorem}[Failure of Divergence Regularization]\leavevmode\\
\label{theorem:divergence_failure}
Under \textbf{Theorem~\ref{theorem:concentration}}, within step $k\ll|\tau|$, RLVR admits a static strategy-level optimization
$\nu^D\triangleq\arg\max_\nu\{\mathbb E_\nu[R]-\beta D\},$
where $R(S)$ is the effective static reward, $\beta>0$ the regularization strength, and
$D$ be KL or JS divergence with full-support $\nu^{\rm ref}$. Let
$\Delta_R=R_{(1)}-R_{(2)}$ be the best-to-second-best reward gap,
 then:
\begin{equation}
    \forall\epsilon>0,
    \exists C_D<\infty \text{ such that } \Delta_R/\beta\ge C_D
    \implies \max_i\nu^D_i\ge1-\epsilon.
\end{equation}
\end{theorem}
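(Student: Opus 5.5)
We will prove Theorem~\ref{theorem:divergence_failure} in two parts. First we reduce RLVR, within the window $k\ll|\tau|$, to a static problem over strategy masses. Then we solve the KL and JS problems on the simplex explicitly enough to bound the optimal masses.

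\textbf{Reduction.} We take as given the strategy-level dynamics from the proof of Theorem~\ref{theorem:concentration}. There, the per-strategy mass update is $\Delta\nu_\alpha\approx\eta\,\nu_\alpha(R_\alpha-\bar R)$ up to $O(\eta)$ cross-coupling, with $R_\alpha$ the within-strategy averaged advantage. For $k\ll|\tau|$ the coupling kernel and the strategy partition are essentially frozen, so $R(S_\alpha)$ can be treated as a fixed effective reward. The regularized update is then a (mirror/natural) gradient ascent on $\mathbb E_\nu[R]-\beta D(\nu\|\nu^{\rm ref})$, whose stationary point is $\nu^D$. Since this part is essentially definitional, we will only sketch it and work directly with the static problem.

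\textbf{KL case.} The objective is strictly concave on the simplex. Its maximizer is the Gibbs tilt $\nu^D_i=\nu^{\rm ref}_i e^{R_i/\beta}/\sum_j\nu^{\rm ref}_j e^{R_j/\beta}$. Let $i^*$ be the best strategy and set $r=\min_i\nu^{\rm ref}_i>0$, which is positive by full support. Then
\[
1-\nu^D_{i^*}\le \frac{\sum_{j\ne i^*}\nu^{\rm ref}_j e^{-(R_{(1)}-R_j)/\beta}}{\nu^{\rm ref}_{i^*}}\le r^{-1}e^{-\Delta_R/\beta},
\]
so $C_{\rm KL}=\log\bigl(1/(r\epsilon)\bigr)$ suffices.

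\textbf{JS case (main obstacle).} JS is bounded by $\log 2$ and has no closed-form optimum, so we will use a comparison argument instead of solving the first-order conditions. Let $\delta_{i^*}$ be the point mass on $i^*$. Optimality gives
\[
\mathbb E_{\nu^D}[R]-\beta\,\mathrm{JS}(\nu^D)\ \ge\ R_{(1)}-\beta\,\mathrm{JS}(\delta_{i^*}),
\]
and hence $\mathbb E_{\nu^D}[R]\ge R_{(1)}-\beta\log 2$. On the other hand,
\[
\mathbb E_{\nu^D}[R]\le R_{(1)}-(1-\nu^D_{i^*})\Delta_R .
\]
Combining the two yields $1-\nu^D_{i^*}\le \beta\log2/\Delta_R$, so $C_{\rm JS}=\log 2/\epsilon$ works. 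The same bound also covers KL restricted to the full-support case, since $\mathrm{KL}(\delta_{i^*}\|\nu^{\rm ref})=\log(1/\nu^{\rm ref}_{i^*})<\infty$. This gives a uniform proof with $C_D=\sup_\nu D(\delta_{i^*},\nu^{\rm ref})/\epsilon$, finite for either divergence.

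The delicate points are two. First, the argument needs existence of the maximizer, which follows from compactness of the simplex and continuity of $D$. Second, the reduction must genuinely produce a time-independent $R$; that is the step where we rely most heavily on the frozen-kernel regime of Theorem~\ref{theorem:concentration}. If the reduction leaves an $O(k\eta)$ drift in $R$, we will absorb it into $\Delta_R$ by requiring $\Delta_R/\beta\ge C_D$ with a slightly larger constant.
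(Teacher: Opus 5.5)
Your reverse-KL step (Gibbs tilt, then bounding the non-best mass by $r^{-1}e^{-\Delta_R/\beta}$) and your JS step (optimality of $\nu^D$ against the point mass $\delta_{i^*}$, together with $0\le D_{\rm JS}\le\log 2$) match the paper's Steps~1 and~3, and both are correct. The gap is the forward-KL regularizer $D_{\rm KL}(\nu^{\rm ref}\|\nu)$. The paper proves this as a separate case of the theorem (its Step~2), and your argument does not cover it.

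Your closing claim that the comparison argument gives a constant ``finite for either divergence'' fails for forward KL. Since $\nu^{\rm ref}$ has full support, $D_{\rm KL}(\nu^{\rm ref}\|\delta_{i^*})=+\infty$, so $\mathcal J(\delta_{i^*})=-\infty$ and optimality against the point mass gives nothing. Comparing instead with a full-support point such as $(1-t)\delta_{i^*}+t\nu^{\rm ref}$ does not rescue it. That comparator incurs a reward deficit $t\,(R_{(1)}-\mathbb E_{\nu^{\rm ref}}[R])$, which depends on the whole reward spread rather than on $\Delta_R$, so the resulting threshold does not depend only on $\epsilon$ and $\nu^{\rm ref}$.

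What works is first-order optimality. Up to constants, the objective is $\sum_\alpha\nu_\alpha R_\alpha+\beta\sum_\alpha\nu^{\rm ref}_\alpha\log\nu_\alpha$. Its log barrier forces an interior maximizer, and the KKT condition gives $\nu^D_\alpha=\beta\nu^{\rm ref}_\alpha/(\lambda-R_\alpha)$ with $\lambda>R_{(1)}$. Then $\lambda-R_\alpha>\Delta_R$ for every $\alpha\neq i^*$, hence $1-\nu^D_{i^*}<\beta(1-\nu^{\rm ref}_{i^*})/\Delta_R$, and $C_D=(1-\nu^{\rm ref}_{i^*})/\epsilon$ suffices.

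Your reduction also omits the step that moves the penalty to the strategy level, which is the step the paper actually relies on. The regularizer compares trajectory distributions, and by the chain rule $D_{\rm KL}(\pi\|\pi_{\rm ref})=D_{\rm KL}(\nu\|\nu^{\rm ref})+\sum_\alpha\nu_\alpha D_{\rm KL}(q_\alpha\|q^{\rm ref}_\alpha)$. The second term shifts each effective reward by $-\beta D_{\rm KL}(q_\alpha\|q^{\rm ref}_\alpha)$, which can change both which strategy is best and the size of the gap.

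The paper removes this term through the adiabatic freezing of the within-strategy conditionals. Then $\pi$ and $\pi_{\rm ref}$ share $q_{i\mid\alpha}$, and the exact identity $D_{\rm KL}(\pi_k\|\pi_0)=D_{\rm KL}(\nu_k\|\nu_0)$ from the proof of Theorem~\ref{theorem:PowerLaw} applies. The same factorization carries over to forward KL and JS, since the mixture is $\tfrac12(\nu_\alpha+\nu^{\rm ref}_\alpha)q_{i\mid\alpha}$.

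Relatedly, the proof of Theorem~\ref{theorem:concentration} does not give a replicator law $\Delta\nu_\alpha\approx\eta\nu_\alpha(R_\alpha-\bar R)$ in the within-strategy advantage. The strategy fitness there is the kernel-weighted sum $\frac{1}{|\mathcal D'|}\sum_{Q',\beta}\nu_{\beta,Q'}a_{\beta,Q'}\overline U_{\alpha,Q}^\top\overline U_{\beta,Q'}$. The paper identifies $R(S_\alpha)$ with the frozen coefficient $a_{\alpha,Q}$ through the potential $\Phi_k$, not through such a replicator form.
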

\end{bluebox}
\hypertarget{back:exp2}{}
Theorem~\ref{theorem:divergence_failure} shows that if $\Delta_R/\beta$ exceeds a finite threshold $C_D$, the optimal distribution assigns at least $1-\epsilon$ mass to one best strategy for any $\epsilon>0$. Thus, fixed divergence penalties cannot universally prevent strategy concentration when reward dominates regularization. See details in Appendix~\hyperlink{proof:exp2}{\ref*{app:kl_js_fail}}.
\section{Methodology: Mesh Learning}

Sec.~\ref{sec:3} shows that preventing RLVR collapse requires preserving sufficient strategy capacity. This creates two challenges: training rollouts must elicit distinct strategies, and no single strategy should dominate optimization. We therefore propose \textbf{Mesh Learning} with two components: \textbf{\underline{1)} Coach Prompting}, which elicits multiple reasoning strategies for each query, and \textbf{\underline{2)} Strategy-Balancing Regularization}, which prevents any strategy from dominating optimization. Under the conditions analyzed in Appendix~\ref{app:mesh_learning}, we further prove that the resulting dynamics converge to equilibrium retaining nonzero probability mass for every strategy. The pipeline of Mesh Learning is shown in Fig.~\ref{fig:pipeline}, it replaces homogeneous GRPO rollouts with multi-strategy exploration and jointly optimizes GRPO with Strategy-Balancing Regularization.

\begin{figure}[!tb]
\centering
\includegraphics[width=1.0\linewidth]{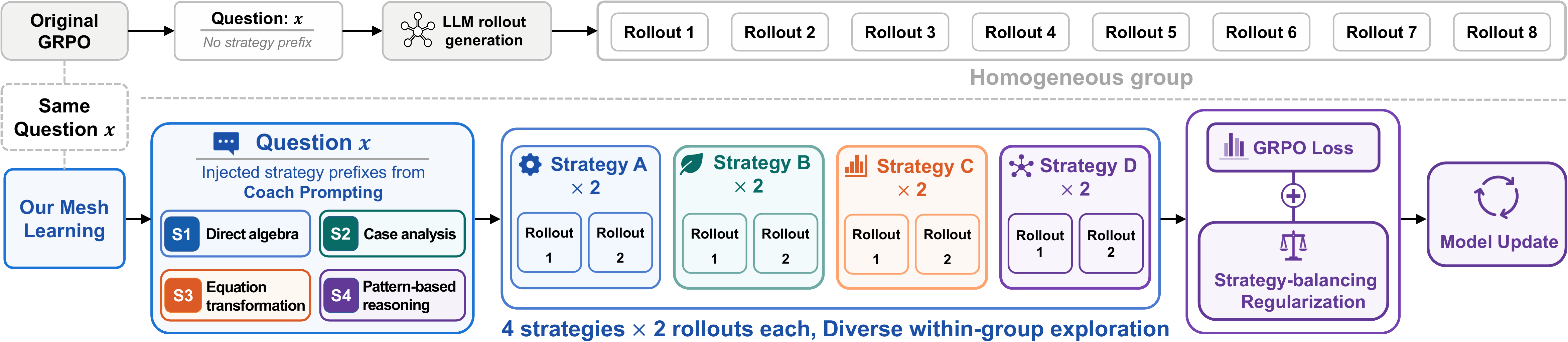}
\caption{{Mesh Learning versus standard GRPO.} Mesh Learning elicits multiple strategies and balances their relative growth during optimization.}
\label{fig:pipeline}
\end{figure}
\hypertarget{back:cp}{}

\subsection{Strategy Elicitation: Coach Prompting}
\label{subsec:data_orchestration}

\begin{wrapfigure}[14]{r}[0em]{0.5\textwidth}
    \vspace{-12pt}
    \centering
    \includegraphics[width=1.0\linewidth]{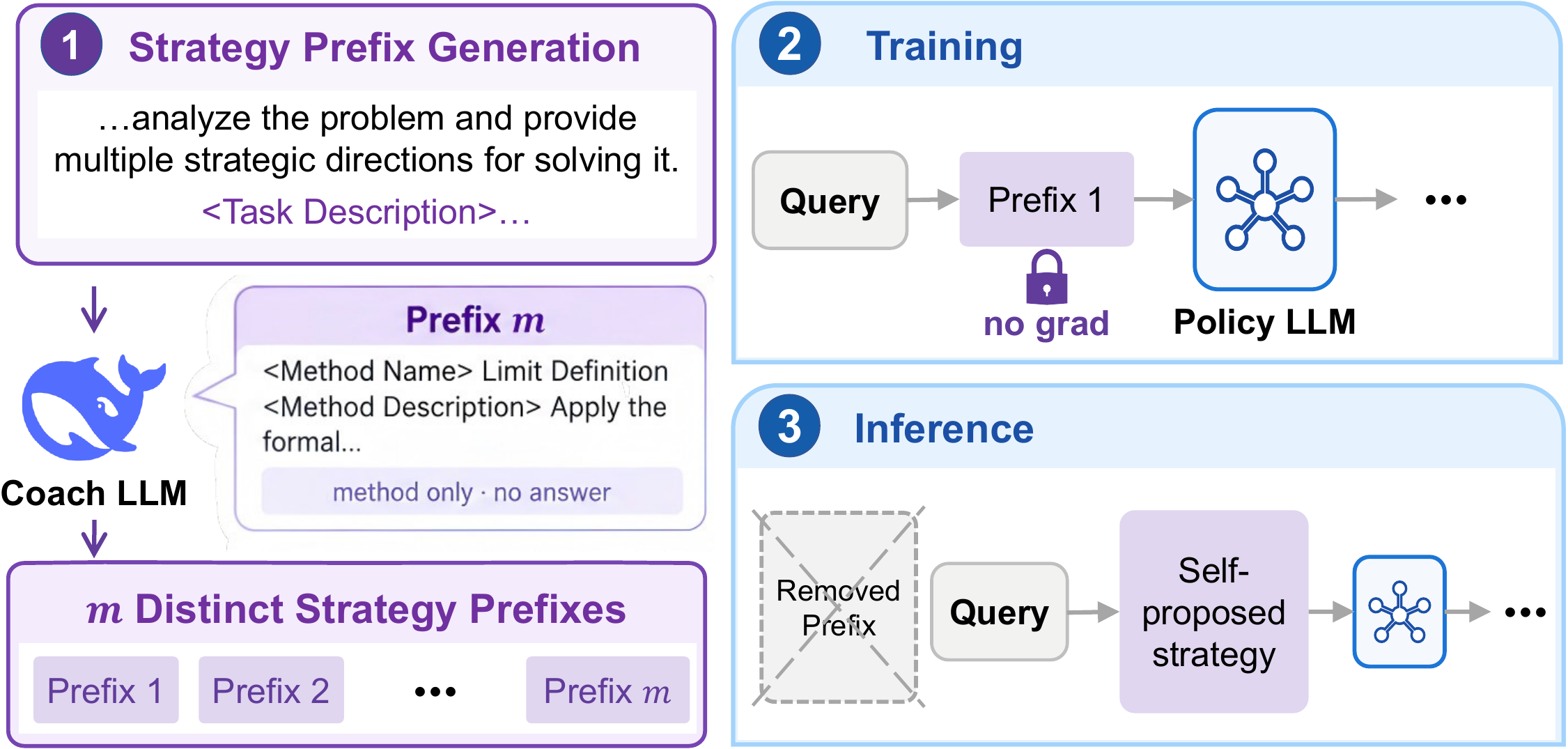}
    \caption{{Coach Prompting.}
    An offline Coach LLM generates fixed strategy prefixes for training; at inference, the policy proposes its own strategy and follows the same verification procedure.}
    \label{fig:method_cp}
\end{wrapfigure}

Sec.~\ref{subsec:2.2} empirically shows that trajectories using the same human-recognizable solution method exhibit similar coupling patterns, motivating solution-method prefixes as practical proxies for the latent strategies in Def.~\ref{def:strategy}. We therefore introduce \textbf{\textit{Coach Prompting} (CP)} to expose multiple strategies for each training query. For each query $Q$, an offline Coach LLM generates $m$ semantically distinct strategy prefixes, each specifying a candidate method, optionally with a brief description, but excluding calculations, intermediate solutions, and final answers. Each prefix conditions a separate rollout group, ensuring explicit exploration of all $m$ strategies rather than relying on spontaneous policy discovery. These prefixes are generated once and fixed throughout training, incurring no online Coach cost.

During training, each prefix is treated as a candidate rather than a fixed instruction: the policy assesses its applicability, underlying mechanism, and alternatives before answering. At inference, no pre-generated prefix is provided; the policy proposes a candidate strategy itself and follows the same verification procedure, requiring no Coach LLM at test time. Coach-provided tokens are masked from the training loss. Full prompts and implementation details are provided in Appendix~\ref{app:cp_prompt}.

\hypertarget{back:m_reg}{}
\subsection{Strategy-Balancing Regularization}
We seek to prevent strategy elimination without hindering their joint improvement. For strategy $S_i$, let $\mathcal{C}_i\subset S_i$ denote its correct trajectories, and define
$z_i=\frac{1}{|\mathcal{C}_i|}\sum_{\tau\in\mathcal{C}_i}\log\pi_\theta(\tau\mid Q)$
as a differentiable surrogate for its current support under the policy.
Let $\mathbf{z}=(z_1,\ldots,z_m)$ and let $\mathcal{U}_m$ denote the uniform distribution over the $m$ strategies. Directly minimizing $D_{\mathrm{KL}}(\mathcal{U}_m\|\mathrm{softmax}(\mathbf z))$ is biased by trajectory length, since longer trajectories accumulate more negative log-probability. Conversely, length normalization $z_i/|\tau_i|$ admits a padding shortcut, as high-probability filler tokens can increase the normalized score without actually strengthening the underlying strategy. We therefore measure each strategy's relative growth against a fixed pre-RLVR reference policy $\pi_{\mathrm{ref}}$.

\begin{bluebox}
\begin{definition}[Strategy-Balancing Regularization]\leavevmode\\
Let $z_i^{\mathrm{ref}}=\frac{1}{|\mathcal{C}_i|}\sum_{\tau\in\mathcal{C}_i}\log\pi_{ref}(\tau\mid Q)$ denote the strategy scores under $\pi_{\mathrm{ref}}$. We define
\begin{equation}
\mathcal{J}_S
=
D_{\mathrm{KL}}
\left(
\mathcal{U}_m
\|
\mathrm{softmax}(\mathbf{z}-\mathbf{z}^{\mathrm{ref}})
\right),
\end{equation}
and objective
$\mathcal{J}=\mathcal{J}_{\mathrm{RLVR}}-\mu\mathcal{J}_S$.
The three forms of $\mathcal{J}_{\mathrm{RLVR}}$ are provided in Appendix~\ref{app:def}. \hypertarget{back:notation}{}
\end{definition}
\end{bluebox}

Let $\mathbf{p}=\mathrm{softmax}(\mathbf{z}-\mathbf{z}^{\mathrm{ref}})$. Then
$\nabla_{\mathbf z}\mathcal{J}_S=\mathbf p-\frac{1}{m}\mathbf 1$,
which suppresses relatively fast-growing strategies while reinforcing lagging ones. Meanwhile,
$\mathcal{J}_S(\mathbf z+c\mathbf 1)=\mathcal{J}_S(\mathbf z)$,
so the regularizer controls only relative strategy growth without opposing their common improvement. In Appendix~\ref{app:mesh_learning}, we further prove that the resulting dynamics converge to existing equilibrium where every strategy retains nonzero probability, while preserving within-strategy trajectory competition to first order.
\section {Experiments}

\subsection{Experimental Setup}\label{subsec:exp_setup}

\noindent\textbf{Models and Data.}
We run full experiments on Qwen3-4B and Qwen2.5-7B-Instruct, with additional cross-family experiments on Phi-4-mini-reasoning. Qwen3-4B and Phi-4-mini-reasoning use thinking mode, while Qwen2.5-7B-Instruct uses standard instruction-tuned generation. For MATH-500, we train on the $7{,}500$-problem Hendrycks-MATH training split~\citep{hendrycks2021measuring}; for AIME25/26 and GPQA-Diamond, we use $7{,}000$ DAPOMath~\citep{yu2025dapo} and $4{,}500$ WildSci~\citep{liu2026wildsci} problems, respectively; and for LiveCodeBench, we use $4{,}500$ Skywork-OR1-RL-Data problems~\citep{he2025skywork}. For Coach Prompting, each query is augmented with $m=4$ distinct strategy prefixes generated offline by DeepSeek-V4-Pro and fixed throughout training.

\noindent\textbf{Training Configuration.}
Mesh Learning is implemented on top of GRPO using VeRL~\citep{sheng2024hybridflow}, with the clip ratio set to $0.2$. For DAPO and GSPO, we follow the clip ratio configurations recommended in their original papers, while keeping all other hyperparameters identical across baselines. We use $16$ rollouts per query and set the Strategy-Balancing Regularization weight to $\mu=10^{-5}$. Remaining configurations and hyperparameters are provided in Appendix~\ref{app:exp_setup}. \hypertarget{back:exp_config}{}

\noindent\textbf{Evaluation and Baselines.}
We evaluate on AIME25/26, MATH-500~\citep{hendrycks2021measuring}, GPQA-Diamond~\citep{rein2024gpqa}, and LiveCodeBench (v6)~\citep{jain2024livecodebench}, covering mathematical, scientific, and code reasoning. We compare Mesh Learning with GRPO, DAPO~\citep{yu2025dapo}, GSPO~\citep{zheng2025groupspo}, and GRPO with KL or JS regularization~\citep{li2026choice_js}.
\subsection{Main Results}
\label{subsec:exp_main_result}
\begin{table}[!tb]
\centering
\caption{
{Main results and controlled ablations (Mean@4 Accuracy \%).}
CP variants isolate the effect of Coach Prompting, while $m=3$ and $m=4$ compare different strategy capacities.
}
\label{tab:main_results}

\setlength{\tabcolsep}{3.2pt}
\renewcommand{\arraystretch}{1.05}
\small

\resizebox{0.97\textwidth}{!}{%
\begin{tabular}{lcccccccc}
\toprule
\multirow{2}{*}{\textbf{Method}}
& \multicolumn{4}{c}{\textbf{Qwen-2.5-7B-Instruct}}
& \multicolumn{4}{c}{\textbf{Qwen-3-4B}} \\
\cmidrule(lr){2-5}
\cmidrule(lr){6-9}
& \textbf{AIME26}
& \textbf{AIME25}
& \textbf{MATH-500}
& \textbf{GPQA}
& \textbf{AIME26}
& \textbf{AIME25}
& \textbf{MATH-500}
& \textbf{GPQA} \\
\midrule

\rowcolor{gray!15}
\multicolumn{9}{l}{\textbf{\textit{Standard RLVR Training}}} \\

Baseline
& 3.3 & 5.0 & 64.8 & 13.7
& 13.3 & 21.6 & 62.5 & 13.7 \\

GRPO
& 9.2 & 13.3 & 69.6 & 33.4
& 43.3 & 40.0 & 81.6 & 38.9 \\

DAPO
& 8.3 & 6.7 & 70.5 & 34.1
& 43.3 & 40.8 & 84.3 & 44.5 \\

GSPO
& 9.2 & 5.8 & 71.4 & 34.3
& 41.6 & 38.3 & 79.8 & 45.8 \\

GRPO + KL
& 9.2 & 13.3 & 71.8 & 31.6
& 43.3 & 40.0 & 82.2 & 42.3 \\

GRPO + JS
& 10.0 & 6.4 & 71.4 & 28.5
& 40.0 & 43.3 & 81.8 & 44.8 \\

\midrule

\rowcolor{gray!15}
\multicolumn{9}{l}{
\textbf{\textit{Controlled with Coach Prompting (CP)}}
} \\

GRPO + CP
& 10.0 & 13.3 & 69.6 & 31.7
& 43.3 & 40.0 & 81.6 & 37.8 \\

DAPO + CP
& 7.5 & 5.8 & 69.7 & 26.3
& 36.6 & 33.3 & 84.5 & 42.3 \\

GSPO + CP
& 6.7 & 8.3 & 68.5 & 32.1
& 41.7 & 33.3 & 85.3 & 43.7 \\

GRPO + KL + CP
& 4.4 & 8.3 & 71.8 & 31.9
& 40.4 & 38.3 & 78.5 & 33.2 \\

GRPO + JS + CP
& 10.0 & 7.8 & 72.0 & 30.7
& 40.0 & 41.4 & 80.4 & 30.7 \\

\midrule

\textbf{Mesh Learning ($m=3$)}
& 10.3
& 16.3
& 72.4
& 36.7
& 52.5
& 43.7
& 84.8
& 44.6 \\

\rowcolor{blue!5}
\textbf{Mesh Learning ($m=4$)}
& \textbf{13.3}
& \textbf{16.7}
& \textbf{73.4}
& \textbf{36.7}
& \textbf{56.7}
& \textbf{48.3}
& \textbf{86.8}
& \textbf{47.9} \\

\bottomrule
\end{tabular}%
}
\end{table}
\noindent\textbf{Overall Performance.}
Table~\ref{tab:main_results} summarizes results across two models and four benchmarks. We highlight two observations: \textbf{\underline{1)} Mesh Learning consistently improves reasoning performance.} With the default $m=4$, Mesh Learning achieves the best performance across all eight model--benchmark settings. Compared with the strongest RLVR baseline in each setting, Mesh Learning improves Qwen2.5-7B-Instruct by $1.6$--$3.4$ pp and Qwen3-4B by $2.1$--$13.4$ pp. The largest gain occurs on AIME26 with Qwen3-4B, improving accuracy from $43.3\%$ to $56.7\%$. \textbf{\underline{2)} We observe consistent gains across tasks and model scales.} Mesh Learning improves challenging competition-level benchmarks (AIME25/26) and broader mathematical and scientific reasoning benchmarks (MATH-500 and GPQA-Diamond). These gains hold across Qwen2.5-7B-Instruct and Qwen3-4B, indicating that the benefit is not specific to a single model or task. Trajectory-level examples are provided in Appendix~\ref{app:case_study}, while ablations on Coach Prompting and strategy number $m$ are presented in Sec.~\ref{subsec:ablation}.
\hypertarget{back:case_study}{}

\begin{figure}[!tb]
\centering
\includegraphics[scale=0.27]{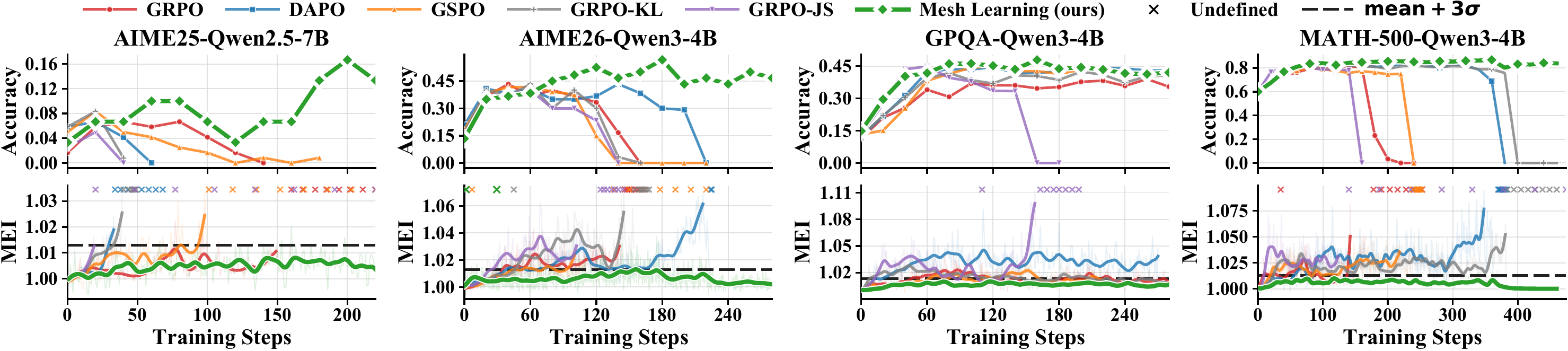}
\caption{Training dynamics of accuracy (top) and MEI (bottom). Baselines can exhibit late-stage collapse with rising MEI, while Mesh Learning maintains stable accuracy and low MEI.}
\label{fig:exp_dynamics}
\end{figure}
\begin{figure}[!tb]
\begin{center}
\includegraphics[width=1.0\linewidth]{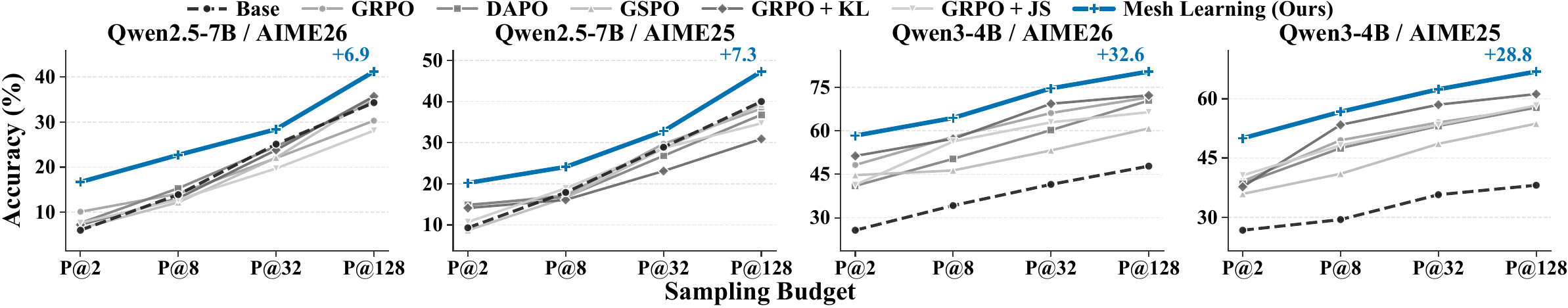}
\end{center}
\caption{
{Pass@$k$ performance on AIME25 and AIME26.}
Across increasing inference budgets, Mesh Learning consistently outperforms all baselines on both models.
}
\label{fig:capability_frontier}
\end{figure}
\noindent\textbf{Training Dynamics.}
Fig.~\ref{fig:exp_dynamics} shows that several RLVR baselines improve but later suffer abrupt accuracy collapse, accompanied by a sharp rise in MEI beyond the $3\sigma$ threshold, while Mesh Learning keeps MEI low and consistently attains higher accuracy. GPQA is an exception: four choices imply $\gamma\approx1$, violating Theorem~\ref{theorem:PowerLaw}'s condition, so strategy concentration need not induce collapse.

\noindent\textbf{Capability Frontier.} To examine whether these gains persist under larger inference budgets, we report Pass@$k$ on AIME25 and AIME26 for $k\in\{2,8,32,128\}$. As shown in Fig.~\ref{fig:capability_frontier}, Mesh Learning consistently outperforms all baselines across both models, with its advantage remaining substantial as $k$ increases. In particular, the gains at Pass@128 reach $6.9$ and $7.3$ pp for Qwen2.5-7B and $32.6$ and $28.8$ pp for Qwen3-4B on AIME26 and AIME25, respectively. These results indicate that Mesh Learning improves not only single-budget accuracy but also the attainable performance under substantially larger sampling budgets.

\begin{wraptable}[8]{r}{0.55\textwidth}
    \centering
    \vspace{-10pt}
    \footnotesize
    \caption{{Cross-family generalization on Phi-4-mini-reasoning
    (Mean@4 Accuracy \%).}}
    \label{tab:phi_results}
    \resizebox{\linewidth}{!}{
        \begin{tabular}{lcccc}
            \toprule
            \textbf{Method}
            & \textbf{AIME26}
            & \textbf{AIME25}
            & \textbf{MATH-500}
            & \textbf{GPQA} \\
            \midrule
            GRPO
            & 22.3 & 24.1 & 79.9 & 18.4 \\
            DAPO
            & 25.0 & 23.4 & 77.3 & 23.0 \\
            \rowcolor{blue!5}
            \textbf{Mesh Learning}
            & \textbf{33.3}
            & \textbf{30.8}
            & \textbf{83.2}
            & \textbf{34.5} \\
            \bottomrule
        \end{tabular}
    }
\end{wraptable}
\noindent\textbf{Cross Family Generalization.} Beyond Qwen3-4B and Qwen2.5-7B-Instruct, we further evaluate Mesh Learning on Phi-4-mini-reasoning to examine transfer across model families. 
As shown in Table~\ref{tab:phi_results}, Mesh Learning consistently outperforms both GRPO and DAPO across all four benchmarks, achieving gains over the stronger baseline of $8.3$, $6.7$, $3.3$, and $11.5$ pp on AIME26, AIME25, MATH-500, and GPQA-Diamond, respectively. These results provide additional evidence that the effectiveness of Mesh Learning is not specific to the Qwen family.

\begin{wraptable}[6]{r}{0.55\textwidth}
    \centering
    \footnotesize
    \vspace{-12pt}
    \caption{LiveCodeBench v6 (Mean@4 Accuracy \%)}
    \label{tab:code_results}
    \resizebox{\linewidth}{!}{
        \begin{tabular}{lcccc}
            \toprule
            \textbf{Model}
            & \textbf{Baseline}
            & \textbf{GRPO}
            & \textbf{DAPO}
            & \textbf{Mesh Learning} \\
            \midrule
            \textbf{Qwen-3-4B}
            & 30.5 & 48.4 & 52.2 & \sota{56.4} \\
            \textbf{Phi-4-mini-reasoning}
            & 24.8 & 31.5 & 32.7 & \sota{36.3} \\
            \bottomrule
        \end{tabular}
    }
\end{wraptable}
\noindent\textbf{Generalization to Coding Tasks.}
To evaluate generalization beyond math reasoning, we extend experiments to LiveCodeBench v6. Table~\ref{tab:code_results} shows that Mesh Learning reaches 56.4\% on Qwen3-4B and 36.3\% on Phi-4-mini-reasoning, outperforming DAPO by 4.2 pp and 3.6 pp, suggesting that its benefits transfer to coding tasks.

\subsection{Ablation Studies}
\label{subsec:ablation}

\noindent\textbf{Ablation for Coach-Prompting Priors.}
Since Coach Prompting provides extra strategy information, we compare each RLVR baseline with a CP-only counterpart using the same prefixes but without Strategy-Balancing Regularization. As shown in Table~\ref{tab:main_results}, CP alone brings little or no improvement, while full Mesh Learning consistently performs better. This indicates that the gains do not come from stronger strategy priors alone, but from preserving these strategies during optimization.

\noindent\textbf{Number of Strategies.}
As shown in Table~\ref{tab:main_results}, increasing $m$ from $3$ to $4$ leads to only small differences overall, showing that Mesh Learning is not sensitive to modest changes in $m$. However, we found that an overly small $m$ limits the number of strategies exposed during training and therefore restricts the strategy capacity that can be preserved. We use $m=4$ by default.
\section{Related Work}
\label{sec:related_work}

\textbf{RLVR optimization.}
Following GRPO~\citep{shao2024DeepSeekMath}, substantial work has improved RLVR through better sampling, clipping, importance weighting, and optimization objectives, including DAPO~\citep{yu2025dapo}, CISPO~\citep{chen2025minimax}, GSPO~\citep{zheng2025groupspo}, SAPO~\citep{gao2025sapo}, Dr. GRPO~\citep{liu2025understanding}, VAPO~\citep{yue2025vapo}, REINFORCE++~\citep{hu2025reinforce++},  GMPO~\citep{ICLR2026_44a1f18a} and FAPO~\citep{ding2026fapo}. These methods primarily improve RLVR's optimization behavior, whereas we focus on the mechanism underlying its late-stage collapse.

\textbf{Understanding and Preventing Collapse.}
Recent work has documented collapse during prolonged RLVR training. \citet{ICLR2026_805cd624} show that Pass@1 improvements can coincide with reduced generation diversity and degraded high-$k$ Pass@$k$, while \citet{nguyen2025reasoning} attribute reasoning-boundary shrinkage to negative interference and winner-take-all dynamics. \citet{yuan2026understanding} link diversity collapse to continued optimization on already-saturated problems, and M-GRPO~\citep{bai2025m} studies abrupt performance collapse under long-horizon training. A common remedy is to constrain policy drift with fixed-reference KL regularization~\citep{shao2024DeepSeekMath}, while alternative divergences such as forward KL and JS have been shown to better preserve solution diversity~\citep{li2026choice_js}. In contrast, we provide a strategy-level theoretical account: starting from trajectory-level update interactions, we derive RLVR's tendency toward strategy concentration and a complementary lower bound on the strategy capacity required for nontrivial task performance. Their tension directly explains catastrophic collapse and motivates Mesh Learning.

\section{Conclusion}

This work characterizes, anticipates, and mitigates catastrophic collapse in RLVR. We develop a strategy-level theory showing that RLVR progressively concentrates strategy mass, while successful reasoning requires sufficient strategy capacity; their conflict results collapse. This analysis further yields MEI as a lightweight online warning signal. We propose \textbf{Mesh Learning}, combining Coach Prompting and Strategy-Balancing Regularization to expose and preserve diverse strategies during optimization. Experiments across AIME26, AIME25, MATH-500, GPQA, and LiveCodeBench consistently demonstrate its effectiveness across Qwen and Phi model families.
In conclusion, \textbf{All Work And No Play Makes Jack A Dull Boy}. RLVR may trade away strategy diversity for continued gains in reward, eventually exhausting the capacity required for further meaningful optimization. Our results highlight strategy preservation as a key principle for stable and effective RLVR.

\subsection*{AI use statement}

In this work, we used generative AI tools to assist with language polishing, code implementation, and the refinement and correctness checking of theorem proofs. Generative AI tools were not used to independently write the manuscript from scratch or to generate citations or bibliographic references.

All AI-assisted code was reviewed, verified, and tested by the authors. All theorem statements and proofs, including those for which generative AI tools were used as an auxiliary correctness check, were independently examined and verified by the authors. No citations were generated by generative AI tools, and all references included in the paper were manually checked by the authors to confirm their authenticity and relevance.

We have reviewed all AI-assisted work and take full responsibility for the final content of this work, including all text, claims, proofs, code, and other artifacts produced with the aid of generative AI.

\subsection*{Reproducibility statement}

All code and scripts are available in our anonymous GitHub repository at \href{https://github.com/Ayanami-0123/Open-Mesh-Learning}{GitHub}. All datasets required for training, including those processed with our Coach Prompting procedure, are provided in the supplementary materials as data.zip, except for the coding dataset, which is too large to include in the submission.

\bibliography{iclr2026_conference}
\bibliographystyle{iclr2026_conference}

\appendix

\clearpage
\section{{Appendix A: Math Proof On The Collapse Of RLVR}}

\subsection{{Definitions and Mathematical Framework}}\label{app:def}
\subsubsection{Notations}
\hyperlink{back:notation}{\color{blue}{Back}}
To formalize collapse in Group Relative Policy Optimization (GRPO), we use the following notation:

\begin{itemize}
    \item $\pi$: model policy; $\theta$: model parameters.
    \item $\mathcal{D}$: test dataset with prompts $Q\in\mathcal{D}$.
    \item $\tau$: generated trajectory.
    \item $\mathcal{T}_Q$: trajectory space conditioned on prompt $Q$.
    \item $\mathcal{T}$: global trajectory space.
\end{itemize}

\subsubsection{Mathematical Expressions of Three RLVR Objectives}

Let $\{\tau_i\}_{i=1}^{G}\sim\pi_{\mathrm{old}}(\cdot\mid Q)$ denote a response group and $A_i$ the group-relative advantage of $\tau_i$. Following the response-level GRPO formulation in DeepSeek-R1~\citep{guo2025deepseekr1}, define
\begin{equation}
    \rho_i(\theta)
    \triangleq
    \frac{\pi_\theta(\tau_i\mid Q)}
         {\pi_{\mathrm{old}}(\tau_i\mid Q)}.
\end{equation}
We display only the policy-gradient objectives; KL regularization is analyzed separately.

For GRPO,
\begin{equation}
    \mathcal{J}_{\mathrm{GRPO}}(\theta)
    =\frac{1}{G}\sum_{i=1}^{G}
    \min\!\left(
        \rho_i(\theta)A_i,
        \operatorname{clip}\!\left(\rho_i(\theta),1-\epsilon_{clip},1+\epsilon_{clip}\right)A_i
    \right).
\end{equation}

For DAPO, define the token-level ratio
\begin{equation}
    r_{i,t}(\theta)
    \triangleq
    \frac{\pi_\theta(\tau_{i,t}\mid Q,\tau_{i,<t})}
         {\pi_{\mathrm{old}}(\tau_{i,t}\mid Q,\tau_{i,<t})},
\end{equation}
with objective
\begin{equation}
\begin{aligned}
    \mathcal{J}_{\mathrm{DAPO}}(\theta)
    &=\frac{1}{\sum_{i=1}^{G}|\tau_i|}
    \sum_{i=1}^{G}\sum_{t=1}^{|\tau_i|}
    \min\!\left(
        r_{i,t}(\theta)A_i,
        \operatorname{clip}\!\left(
            r_{i,t}(\theta),1-\epsilon_{\mathrm{low}},1+\epsilon_{\mathrm{high}}
        \right)A_i
    \right),\\
    &\hspace{2em}\text{s.t.}\quad
    0<\sum_{i=1}^{G}\mathbb{I}\{\tau_i\text{ is correct}\}<G.
\end{aligned}
\end{equation}

For GSPO, define the length-normalized sequence ratio
\begin{equation}
    s_i(\theta)
    \triangleq
    \rho_i(\theta)^{1/|\tau_i|}
    =\exp\!\left(\frac{1}{|\tau_i|}\log\rho_i(\theta)\right),
\end{equation}
and
\begin{equation}
    \mathcal{J}_{\mathrm{GSPO}}(\theta)
    =\frac{1}{G}\sum_{i=1}^{G}
    \min\!\left(
        s_i(\theta)A_i,
        \operatorname{clip}\!\left(s_i(\theta),1-\epsilon_{clip},1+\epsilon_{clip}\right)A_i
    \right).
\end{equation}

\subsubsection{Measures on Strategy Space}
At optimization step $k$, the conditional probability mass of strategy $S_{\alpha,Q}$ under prompt $Q$ is
\begin{equation}
    \nu^{(\epsilon)}_{\alpha,Q,k}
    \triangleq
    \sum_{\tau\in S^{(\epsilon)}_{\alpha,Q}}
    \pi_k(\tau\mid Q).
\end{equation}
\hypertarget{proof:thm1}{}
\subsection{{Proof for theorem 1}}\label{app:theorem1_proof}
\hypertarget{back:thm1:step1}{}
\hypertarget{back:thm1:step2}{}
\hypertarget{back:thm1:step3}{}
\hypertarget{back:thm1:step4}{}
\hypertarget{thm1:step5:back}{}
\begin{bluebox}
\hyperlink{back:thm1}{\color{blue}{Back}}
\noindent\textbf{Proof Sketch for Theorem 1.} 
RLVR acts on the model through the following chain: it directly updates policy parameters, which gradually changes trajectory probabilities and in turn affects strategy distribution. So the core idea of our proof is \textbf{simple}: To study how strategies compete and evolve, we lift the parameter update signals ALL the way up to the strategy distribution, deriving an evolution dynamics on the strategy simplex. The proof is completed through 1 preparation step and 5 main steps, organized as below:

\textbf{Step 0: Dealing with the Clipping Term:}
We make one simple reduction to handle clipping. For GRPO and GSPO, define a
binary indicator $m_i$, where $m_i=0$ if trajectory $\tau_i$ is clipped and $1$ otherwise. We then absorb clipping into
the effective advantage,
\begin{equation}
A_i^{\mathrm{eff}} = m_i A_i,
\end{equation}
so their gradients reduce to
$G^{-1}\sum_i \rho_i A_i^{\mathrm{eff}}$ and
$G^{-1}\sum_i s_i A_i^{\mathrm{eff}}$, respectively.
For DAPO, the same reduction is applied token-wise using indicators $m_{i,t}$,
yielding the corresponding masked token-level gradient.

\textbf{Step 1: The Dynamics of Trajectory Probabilities:} Consider step $k$, we derive that trajectory dynamics share the same form under the three surrogates in first order:
\begin{equation}
\Delta\pi_k(\tau_i|Q)=\eta\pi_k(\tau_i|Q)\sum_j\pi_k(\tau_j|Q')K_{ij,k}(Q,Q')\bar A_{j,k}^{\mathrm{eff}}r_{j,k}+\xi_{i,k}^{(\mathrm{batch})}.
\end{equation}
The noise has two origins, $\xi_{i,k}^{(\mathrm{batch})}=\xi_{i,k}^{(\mathrm{sample})}+\xi_{i,k}^{(\mathrm{advantage})}$, satisfying $\mathbb E[\xi_{k}^{(batch)}|\mathcal F_k]=0$. $r_{i,k}$ is the correction factor, which is $1$ for GRPO, $|\tau_j|^{-1}\rho_{j,k}^{1/|\tau_j|-1}$ for GSPO, and $\rho_{j,k}^{-1}|\tau|^{-1}$ for DAPO in first order. (Appendix \hyperlink{thm1:step1}{\color{blue}{A.2.1}})

\textbf{Step 2: Trajectory Evolution within a Strategy:} We then derive the evolution of trajectories inside a strategy, finding it in the form of a \textbf{replicator dynamics} in first order:
\begin{equation}
    \Delta q_{i|\alpha,Q,k} = \eta \cdot q_{i|\alpha,Q,k} \cdot (f_{i,k} - f_{\alpha}) + \widetilde{\xi}_{i,k}^{(batch)} + \widetilde{\xi}_{i,k}^{(data)}
\label{eq:replicator}
\end{equation}
Where $f_{i,k}=\frac{1}{|\mathcal{D}'|}\sum_{Q'}\sum_{j}\pi_k(\tau_j|Q')K_{ij,k}(Q,Q')\bar A_{j,k}^{\mathrm{eff}}r_{j,k}$ is the fitness for trajectory $\tau_i$,  and $q_{i|\alpha,Q,k}=\frac{\pi_k(\tau_i|Q)}{\nu_{\alpha,Q,k}}$ trajectory distribution within $S_{\alpha,Q}^{(\epsilon)}$. (Appendix \hyperlink{thm1:step2}{A.2.2})

\textbf{Step 3: Adiabatic Approximation:} Before analyzing the strategy dynamics, we establish the scale separation. The variables involved are $\pi,\nu,q,A,K,r$. We define their relative changing rate as $\omega_\pi,\omega_\nu,\omega_q,\omega_r,\omega_K,\omega_A$. Using equation~\ref{eq:replicator}, we prove that in condition that $\eta\ll\epsilon_{clip}<1\ll|\tau|\ll\eta^{-1}$, and within time window $k\ll|\tau|$:
\begin{equation}
    \omega_q,\omega_K,\omega_r,\omega_A\ll\omega_\nu\sim\omega_\pi
\end{equation}
Hence, the quantities $q,A,K,r$ can be frozen in the time window when deriving the strategy dynamics, so \textbf{we can focus attention on the strategy evolution}. (Appendix \hyperlink{thm1:step3}{A.2.3})

\textbf{Step 4: Hitting time analysis:} We first associate each strategy $S_{\alpha,Q}$ with its effective
strategy-level fitness $a_{\alpha,Q}$. By Lemma~\ref{lemma:impossiblity_of_equivalence},
distinct strategies
have almost surely different fitness. We then construct a potential function $\Phi_k(\nu_{\alpha,Q,k},a_{\alpha,Q})$, whose drift is \textbf{almost surely} a positive definite quadric form. This makes $\Phi_k$ a submartingale, and using Azuma-Hoeffding inequality, we prove that in condition $\eta^{-1}\ll|\tau|^3$: 
\begin{equation}
\mathbb P\left(
\exists k<K:
\max_\alpha\nu_{\alpha,Q,k}\ge1-\zeta
\right)
\ge
1-\frac{\delta}{2}
\end{equation}
Further analyze shows that hitting time $K\ll\mathcal O(|\tau|)$, meaning the hitting happens within the adiabatic zone, thereby validating the consistency of our analysis. (Appendix \hyperlink{thm1:step4}{\color{blue}{A.2.4}})

\textbf{Step 5: Preservation after the adiabatic window:} Step 4 shows that the process \textbf{can reach} a concentrated state. We next fix the
dominant strategy $o$ and show, using Azuma--Hoeffding again, that it remains
dominant throughout the remaining time window:
\begin{equation}
\mathbb P\left(
    \exists k\in\mathbb Z^+,
    \ K<k\le T_c:
    \nu_{o,Q,k}<1-\zeta
    \ \middle|\;
    \nu_{o,Q,K}\ge1-\mathcal{O}(\eta)
\right)\le\delta/2\end{equation}
Combining it with the result in Step 4, we can derive the convergence expression in the main text.
(Appendix \hyperlink{thm1:step5}{\color{blue}{A.2.5}})
\textbf{This completes the proof.} \hfill$\blacksquare$
\end{bluebox}

We will start from the simplest case, which only considers influence exerted by the group-relative reward in RLVR algorithms. We emphasize that divergence regularization is not neglected; rather, its role and mechanism are discussed and analyzed separately in Theorem~\ref{theorem:divergence_failure}. Our main focus on group-relative reward in A.2 is to set up a relatively clear and analytic framework, in order to facilitate further discussions and effective theory development.  
\hypertarget{thm1:step1}{}
\subsubsection{{Step 1: Deriving the Dynamics of Trajectory Probabilities:}}
\hyperlink{back:thm1:step1}{\color{blue}{Back}}
At optimization step $k$, let $\mathcal{F}_k$ denote the training history before the current rollout, write $\pi_k\triangleq\pi_{\theta_k}$, and let $\pi_{\mathrm{old},k}$ be the behavior policy used to sample the current group. Define
\begin{equation}
\Delta\pi_k(\tau_i|Q)\triangleq\pi_{\theta_{k+1}}(\tau_i|Q)-\pi_{\theta_k}(\tau_i|Q),
\qquad
U_{i,k}(Q)\triangleq\nabla_\theta\log\pi_\theta(\tau_i|Q)\big|_{\theta=\theta_k}.
\end{equation}
The first-order probability increment used in this step is
\begin{equation}
\Delta\pi_k(\tau_i|Q) = \nabla _ \theta \pi _ \theta (\tau_i|Q)^\top\big|_{\theta=\theta_k}\cdot\Delta \theta_k.
\end{equation}
Using the log-derivative identity, this becomes
\begin{equation}
\Delta\pi_k(\tau_i|Q) = \pi_k(\tau_i|Q)\cdot U_{i,k}(Q)^\top\cdot\Delta \theta_k.
\end{equation}

\textbf{We first give the GRPO derivation}. Let $B_k=\{\tau_j\}_{j=1}^G$ be the response group sampled from $\pi_{\mathrm{old},k}(\cdot|Q')$. Away from clipping boundaries, decompose each effective group-relative advantage as
\begin{equation}
A_{j,k}^{\mathrm{eff}}=\bar A_{j,k}^{\mathrm{eff}}+\varepsilon_{j,k}^{(\mathrm{advantage})},
\qquad
\bar A_{j,k}^{\mathrm{eff}}\triangleq
\mathbb{E}\!\left[A_{j,k}^{\mathrm{eff}}\mid\tau_j,Q',\mathcal{F}_k\right],
\qquad
\mathbb{E}\!\left[\varepsilon_{j,k}^{(\mathrm{advantage})}\mid\tau_j,Q',\mathcal{F}_k\right]=0.
\end{equation}
Note that group-wise centering does not imply
$\bar A^{\mathrm{eff}}_{j,k}=0$ once we condition on the focal trajectory
$\tau_j$. When the clipped branch is active, the surrogate is locally flat
and hence $A^{\mathrm{eff}}_{j,k}=0$. Away from clipping, writing
\[
A_{j,k}
=
r(\tau_j)-\frac{1}{G}\sum_{\ell=1}^{G}r(\tau_\ell),
\]
Calculating its expectation, we have
\begin{align}
\bar A^{\mathrm{eff}}_{j,k}
&=
\mathbb E\!\left[
A_{j,k}\mid \tau_j,Q',\mathcal F_k
\right] \nonumber\\
&=
\frac{G-1}{G}
\left(
r(\tau_j)
-
\mathbb E_{\tau\sim\pi_{\mathrm{old},k}(\cdot\mid Q')}
[r(\tau)\mid Q']
\right),
\label{eq:conditional-grpo-advantage}
\end{align}
which is generally nonzero.

Here the randomness caused by forming a finite group and estimating its group-relative advantage is included in $\varepsilon_{j,k}^{(\mathrm{advantage})}$. With
$\rho_{j,k}=\pi_k(\tau_j|Q')/\pi_{\mathrm{old},k}(\tau_j|Q')$, its stochastic parameter update is
\begin{equation}
\Delta\theta_k = \eta\cdot\frac{1}{G}\sum_{j=1}^G\rho_{j,k}\cdot A_{j,k}^{\mathrm{eff}}\cdot U_{j,k}(Q').
\end{equation}
Define its conditional population drift and the two parameter-space noise terms by
\begin{equation}
\bar g_k^{\mathrm{GRPO}}(Q')
\triangleq
\sum_j\pi_{\mathrm{old},k}(\tau_j|Q')\rho_{j,k}\bar A_{j,k}^{\mathrm{eff}}U_{j,k}(Q')
=
\sum_j\pi_k(\tau_j|Q')\bar A_{j,k}^{\mathrm{eff}}U_{j,k}(Q'),
\end{equation}
\begin{equation}
\epsilon ^{(\mathrm{sample})}
\triangleq
\frac{1}{G}\sum_{j=1}^G\rho_{j,k}\bar A_{j,k}^{\mathrm{eff}}U_{j,k}(Q')
-\bar g_k^{\mathrm{GRPO}}(Q'),
\end{equation}
\begin{equation}
\epsilon ^{(\mathrm{advantage})}
\triangleq
\frac{1}{G}\sum_{j=1}^G\rho_{j,k}\varepsilon_{j,k}^{(\mathrm{advantage})}U_{j,k}(Q').
\end{equation}
By construction,
\begin{equation}
\mathbb{E}\!\left[\epsilon ^{(\mathrm{sample})}\mid Q',\mathcal{F}_k\right]=0,
\qquad
\mathbb{E}\!\left[\epsilon ^{(\mathrm{advantage})}\mid Q',\mathcal{F}_k\right]=0,
\end{equation}
so that
\begin{equation}
\Delta\theta_k
=\eta\bar g_k^{\mathrm{GRPO}}(Q')
+\eta\epsilon ^{(\mathrm{sample})}
+\eta\epsilon ^{(\mathrm{advantage})}.
\end{equation}

Introduce the step-indexed trajectory coupling kernel
\begin{equation}
K_{ij,k}(Q,Q')\triangleq U_{i,k}(Q)^\top U_{j,k}(Q').
\end{equation}
Substitution into the probability increment gives
\begin{equation}
\Delta\pi_k(\tau_i|Q)
=\eta\pi_k(\tau_i|Q)\sum_j\pi_k(\tau_j|Q')K_{ij,k}(Q,Q')\bar A_{j,k}^{\mathrm{eff}}
+\xi_{i,k}^{(\mathrm{batch})},
\end{equation}
where
\begin{equation}
\begin{aligned}
\xi_{i,k}^{(\mathrm{batch})}
&\triangleq\xi_{i,k}^{(\mathrm{sample})}+\xi_{i,k}^{(\mathrm{advantage})},\\
\xi_{i,k}^{(\mathrm{sample})}
&\triangleq\eta\pi_k(\tau_i|Q)U_{i,k}(Q)^\top\epsilon ^{(\mathrm{sample})},\\
\xi_{i,k}^{(\mathrm{advantage})}
&\triangleq\eta\pi_k(\tau_i|Q)U_{i,k}(Q)^\top\epsilon ^{(\mathrm{advantage})}.
\end{aligned}
\end{equation}
Both components, and hence $\xi_{i,k}^{(\mathrm{batch})}$, have zero conditional mean given $(Q',\mathcal{F}_k)$; independence between the two components is not required.

\textbf{We then give the GSPO derivation}, differentiating $s_{j,k}=\rho_{j,k}^{1/|\tau_j|}$ gives
\begin{equation}
\nabla_\theta s_{j,k}
=\frac{s_{j,k}}{|\tau_j|}U_{j,k}(Q')
=\rho_{j,k}\left(\frac{1}{|\tau_j|}\rho_{j,k}^{\frac{1}{|\tau_j|}-1}\right)U_{j,k}(Q').
\end{equation}
Thus GRPO and GSPO share the response-level form
\begin{equation}
\Delta\pi_k(\tau_i|Q)
=\eta\pi_k(\tau_i|Q)\sum_j\pi_k(\tau_j|Q')K_{ij,k}(Q,Q')\bar A_{j,k}^{\mathrm{eff}}r_{j,k}
+\xi_{i,k}^{(\mathrm{batch})},
\end{equation}
with
\begin{equation}
r_{j,k}^{\mathrm{GRPO}}=1,
\qquad
r_{j,k}^{\mathrm{GSPO}}
=\frac{1}{|\tau_j|}\rho_{j,k}^{\frac{1}{|\tau_j|}-1}.
\end{equation}
For GSPO, the same sample/advantage decomposition above is understood with every sampled response contribution multiplied by $r_{j,k}^{\mathrm{GSPO}}$.

\textbf{We finally give the DAPO derivation}, define the token score and token-level importance ratio by
\begin{equation}
u_{j,t,k}(Q')\triangleq
\nabla_\theta\log\pi_\theta(\tau_{j,t}|Q',\tau_{j,<t})\big|_{\theta=\theta_k},
\qquad
\rho_{j,t,k}\triangleq
\frac{\pi_k(\tau_{j,t}|Q',\tau_{j,<t})}
{\pi_{\mathrm{old},k}(\tau_{j,t}|Q',\tau_{j,<t})}.
\end{equation}
For fixed focal trajectory $\tau_i$ and prompts $(Q,Q')$, define the DAPO correction matrix $r_{i,j,k}^{DAPO}$ through a orthogonal condition:
\begin{equation}
\begin{aligned}
U_{i,k}(Q)^\top \left(U_{j,k}(Q')r_{i,j,k}^{\mathrm{DAPO}}
-
\frac{G}{\rho_{j,k}\sum_{j'=1}^{G}|\tau_{j'}|}
\sum_{t=1}^{|\tau_j|}
m_{j,t,k}\rho_{j,t,k}
u_{j,t,k}(Q')\right)\triangleq0
\end{aligned}
\end{equation}
We now prove that all $r_{i,j,k}^{DAPO}$ share a central value $r_{j,k}^{DAPO}$.
\begin{bluebox}
\begin{lemma}[The scalar equivalence of DAPO correction matrix]\leavevmode\\
For the element in DAPO correction matrix $r_{i,j,k}^{DAPO}$, there exists a scalar $r_{j,k}^{DAPO}$ so that:
\begin{equation}
    r_{i,j,k}^{DAPO}=r_{j,k}^{DAPO}+\mathcal O(\eta)
\end{equation}
\end{lemma}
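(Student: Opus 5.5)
The plan is to solve the orthogonality condition that defines $r_{i,j,k}^{\mathrm{DAPO}}$ explicitly, and to show that the only dependence on the focal index $i$ enters at $\mathcal O(\eta)$. The candidate central value is
\begin{equation}
r_{j,k}^{\mathrm{DAPO}}\triangleq\frac{G}{\rho_{j,k}\sum_{j'=1}^{G}|\tau_{j'}|},
\end{equation}
which depends only on $j$ and the group. It is consistent with the Step 1 claim $r\approx\rho_{j,k}^{-1}|\tau|^{-1}$, because $G^{-1}\sum_{j'}|\tau_{j'}|$ is the mean trajectory length. Dividing the defining identity by $K_{ij,k}(Q,Q')=U_{i,k}(Q)^\top U_{j,k}(Q')$ gives
\begin{equation}
r_{i,j,k}^{\mathrm{DAPO}}=r_{j,k}^{\mathrm{DAPO}}\cdot\frac{\sum_{t}m_{j,t,k}\rho_{j,t,k}\,U_{i,k}(Q)^\top u_{j,t,k}(Q')}{U_{i,k}(Q)^\top U_{j,k}(Q')},
\end{equation}
so it suffices to show that this ratio is $1+\mathcal O(\eta)$.

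\textbf{Step 1: no clipping and near-unit ratios.} Because $\pi_{\mathrm{old},k}=\pi_{k-1}$ and each update has size $\mathcal O(\eta)$, every token-level ratio satisfies $\rho_{j,t,k}=1+\mathcal O(\eta)$. Since $\eta\ll\epsilon_{\mathrm{clip}}$, no token is clipped, so $m_{j,t,k}=1$ for all $t$. The sequence ratio is the product of the token ratios, so $\rho_{j,k}=\exp(\mathcal O(\eta|\tau|))$. The scaling $|\tau|\ll\eta^{-1}$ keeps this bounded and close to $1$. This is why $\rho_{j,k}$ is kept inside the central value rather than expanded.

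\textbf{Step 2: decompose the numerator.} Since $\log\pi$ factorizes over tokens, $U_{j,k}(Q')=\sum_t u_{j,t,k}(Q')$. Hence
\begin{equation}
\sum_t\rho_{j,t,k}u_{j,t,k}=U_{j,k}+\sum_t(\rho_{j,t,k}-1)u_{j,t,k},
\end{equation}
and the ratio above equals $1+E_{ij}$, where
\begin{equation}
E_{ij}=\frac{\sum_t(\rho_{j,t,k}-1)\,U_{i,k}^\top u_{j,t,k}}{K_{ij,k}},\qquad |E_{ij}|\le\mathcal O(\eta)\,\frac{\sum_t|U_{i,k}^\top u_{j,t,k}|}{|K_{ij,k}|}.
\end{equation}

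\textbf{Step 3: the main obstacle.} It remains to bound the ratio $\sum_t|U_i^\top u_{j,t}|/|K_{ij}|$ uniformly. This can fail when the token contributions cancel, so that $K_{ij}\approx0$ while the individual terms are not small. I would handle this by splitting pairs $(i,j)$ into two cases.

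\emph{Non-degenerate pairs.} Here $|K_{ij}|\ge c\sum_t|U_i^\top u_{j,t}|$ for a fixed constant $c>0$. This is the case of coherent coupling, the same positive-coupling regime used in Def.~\ref{def:strategy}. The ratio is then $\mathcal O(1)$ and $E_{ij}=\mathcal O(\eta)$.

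\emph{Nearly decoupled pairs.} Here $K_{ij}\approx0$. The quantity that actually enters the dynamics of $\Delta\pi_k(\tau_i|Q)$ is the product $K_{ij}r_{i,j,k}$, not $r_{i,j,k}$ alone. By the defining identity this product equals $K_{ij}\,r_{j,k}^{\mathrm{DAPO}}$ plus an error of size $r_{j,k}^{\mathrm{DAPO}}\cdot\mathcal O(\eta)\sum_t|U_i^\top u_{j,t}|$. That error is first-order negligible in the $\Delta\pi$ expansion. So replacing $r_{i,j,k}$ by $r_{j,k}^{\mathrm{DAPO}}$ is justified in the sense required by Step 1 of the proof sketch, even though $r_{i,j,k}$ itself may be ill-conditioned.

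I would present the lemma under the non-degeneracy assumption and add a remark that the degenerate case affects only terms that are already $\mathcal O(\eta)$ at the level of the dynamics. Combining the two cases then gives $r_{i,j,k}^{\mathrm{DAPO}}=r_{j,k}^{\mathrm{DAPO}}+\mathcal O(\eta)$.
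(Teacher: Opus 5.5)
Your proposal is correct and follows essentially the paper's route. It uses the same central value $r_{j,k}^{\mathrm{DAPO}}=G/(\rho_{j,k}\sum_{j'}|\tau_{j'}|)$, the same add-and-subtract of $\sum_t\kappa_{ij,t,k}=K_{ij,k}$, and the same use of $\eta\ll\epsilon_{\mathrm{clip}}$ and $\rho_{j,t,k}=1+\mathcal O(\eta)$ to bound the remainder by $\max_t|m_{j,t,k}\rho_{j,t,k}-1|\cdot\sum_t|\kappa_{ij,t,k}|/|K_{ij,k}|$. The one difference is that you state explicitly the non-cancellation bound on $\sum_t|\kappa_{ij,t,k}|/|K_{ij,k}|$, which the paper's proof uses without stating it (the paper only makes it explicit in the later DAPO adiabatic lemma). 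Your treatment of nearly decoupled pairs via the product $K_{ij,k}r_{i,j,k}$ also goes beyond what the paper addresses.
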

\end{bluebox}
\begin{tcolorbox}[
    enhanced,                         
    breakable,
    title=Proof,               
    colback=cyan!5!white,             
    colframe=black,                   
    colbacktitle=black,               
    coltitle=white,                   
    fonttitle=\bfseries\sffamily,     
    attach boxed title to top left={   
        xshift=3mm, 
        yshift=-2mm, 
        yshifttext=-1mm 
    },
    boxed title style={                
        sharp corners, 
        size=small, 
        colframe=black
    },
    arc=2mm,                          
    boxrule=0.8pt,                    
]
\begin{proof}
Recall that $r_{j,k}^{\mathrm{DAPO}}$ is defined, for each fixed focal
trajectory $\tau_i$, through
\begin{equation}
\begin{aligned}
U_{i,k}(Q)^\top
\left(
U_{j,k}(Q')r_{j,k}^{\mathrm{DAPO}}
-
\frac{G}{\rho_{j,k}Z_k}
\sum_{t=1}^{|\tau_j|}
m_{j,t,k}\rho_{j,t,k}u_{j,t,k}(Q')
\right)
=0.
\end{aligned}
\end{equation}
Since
\begin{equation}
    K_{ij,k}(Q,Q')
    =
    U_{i,k}(Q)^\top U_{j,k}(Q')
    =
    \sum_{t=1}^{|\tau_j|}
    \kappa_{ij,t,k}(Q,Q'),
\end{equation}
we obtain, whenever $K_{ij,k}(Q,Q')\neq0$,
\begin{equation}
    r_{i,j,k}^{\mathrm{DAPO}}
    =
    \frac{G}{\rho_{j,k}Z_k}
    \frac{
        \sum_{t=1}^{|\tau_j|}
        m_{j,t,k}\rho_{j,t,k}
        \kappa_{ij,t,k}(Q,Q')
    }{
        K_{ij,k}(Q,Q')
    }.
\end{equation}
Adding and subtracting
$\sum_t\kappa_{ij,t,k}(Q,Q')=K_{ij,k}(Q,Q')$ in the numerator gives the
exact decomposition
\begin{equation}
\begin{aligned}
    r_{i,j,k}^{\mathrm{DAPO}}
    =
    \frac{G}{\rho_{j,k}Z_k}
    \left[
        1+
        \frac{
            \sum_{t=1}^{|\tau_j|}
            \bigl(m_{j,t,k}\rho_{j,t,k}-1\bigr)
            \kappa_{ij,t,k}(Q,Q')
        }{
            K_{ij,k}(Q,Q')
        }
    \right].
\end{aligned}
\label{eq:dapo-scalar-i-decomposition}
\end{equation}
The token-normalized DAPO update satisfies
\begin{equation}
\begin{aligned}
    \|\Delta\theta_k\|
    &\le
    \frac{\eta}{Z_k}
    \sum_{j=1}^{G}|A_{j,k}|
    \sum_{t=1}^{|\tau_j|}
    m_{j,t,k}\rho_{j,t,k}
    \|u_{j,t,k}(Q')\| \\
    &=O(\eta),
\end{aligned}
\end{equation}
where bounded advantages, token-level ratios, and token-level gradients
have been used. Consequently, a first-order expansion of the
token-level importance ratio gives
\begin{equation}
\begin{aligned}
    \log\rho_{j,t,k}
    &=
    u_{j,t,k}(Q')^\top\Delta\theta_k
    +O(\|\Delta\theta_k\|^2) \\
    &=O(\eta),
\end{aligned}
\end{equation}
and hence
\begin{equation}
    \rho_{j,t,k}=1+O(\eta).
\end{equation}
Because $\eta\ll\epsilon_{\mathrm{clip}}$, the local update does not
cross a clipping boundary. Therefore,
\begin{equation}
    m_{j,t,k}=1,
    \qquad
    m_{j,t,k}\rho_{j,t,k}-1=O(\eta).
\end{equation}

This implies  that
\begin{equation}
\begin{aligned}
&
\left|
\frac{
    \sum_{t=1}^{|\tau_j|}
    \bigl(m_{j,t,k}\rho_{j,t,k}-1\bigr)
    \kappa_{ij,t,k}(Q,Q')
}{
    K_{ij,k}(Q,Q')
}
\right|
\\
&\le
\max_{1\le t\le|\tau_j|}
\left|
    m_{j,t,k}\rho_{j,t,k}-1
\right|
\frac{
    \sum_{t=1}^{|\tau_j|}
    |\kappa_{ij,t,k}(Q,Q')|
}{
    |K_{ij,k}(Q,Q')|
}
\\
&=O(\eta),
\end{aligned}
\end{equation}
uniformly over $i$. Substitution into
Eq.~\eqref{eq:dapo-scalar-i-decomposition} yields
\begin{equation}
    r_{i,j,k}^{\mathrm{DAPO}}
    =
    \frac{G}{\rho_{j,k}Z_k}
    \bigl(1+O(\eta)\bigr)
\end{equation}
uniformly over the focal trajectory.

Define $r_{j,k}^{DAPO}=\frac{G}{\rho_{j,k}Z_k}$, we thus have:
\begin{equation}
    \boxed{
    r_{i,j,k}^{DAPO}=r_{j,k}^{DAPO}+\mathcal O(\eta)
    }
\end{equation}
thus completing the proof.
\end{proof}
\end{tcolorbox}

Indeed,
\begin{equation}
\begin{aligned}
\Delta\pi_k^{\mathrm{DAPO}}(\tau_i|Q)
&=\frac{\eta\pi_k(\tau_i|Q)}{\sum_{j'=1}^{G}|\tau_{j'}|}
\sum_{j=1}^{G}A_{j,k}
\sum_{t=1}^{|\tau_j|}m_{j,t,k}\rho_{j,t,k}
U_{i,k}(Q)^\top u_{j,t,k}(Q')\\
&=\eta\pi_k(\tau_i|Q)\frac{1}{G}
\sum_{j=1}^{G}\rho_{j,k}A_{j,k}
K_{ij,k}(Q,Q')r_{j,k}^{\mathrm{DAPO}} + \mathcal O(\eta^2).
\end{aligned}
\end{equation}
Since we study this dynamics within fixed horizon $T_c\sim\eta^{-1}$, the higher order residue terms can be neglected. Separating $A_{j,k}$ into its conditional mean and zero-mean residual, and separating the finite-group average from its conditional expectation, gives the same population form
\begin{equation}
\Delta\pi_k(\tau_i|Q)
=\eta\pi_k(\tau_i|Q)\sum_j\pi_k(\tau_j|Q')
K_{ij,k}(Q,Q')\bar A_{j,k}^{\mathrm{eff}}r_{j,k}^{\mathrm{DAPO}}
+\xi_{i,k}^{(\mathrm{batch})},
\end{equation}
where $\xi_{i,k}^{(\mathrm{batch})}=\xi_{i,k}^{(\mathrm{sample})}+\xi_{i,k}^{(\mathrm{advantage})}$ and both components have zero conditional mean given $(Q',\mathcal{F}_k)$.
For DAPO in this unified expression, $\bar A_{j,k}^{\mathrm{eff}}$ denotes the conditional mean of $A_{j,k}$ itself, because the token-level clipping masks have already been absorbed into $r_{j,k}^{\mathrm{DAPO}}$.
\hypertarget{thm1:step2}{}
\subsubsection{{Step 2: Deriving the Dynamics of Trajectory Dominance within a Strategy:}}
\hyperlink{back:thm1:step2}{\color{blue}{Back}}
Currently, we haven't applied any analysis on \textbf{Strategies}, so a wise choice is to first focus on the trajectory evolution within a \textbf{specific strategy}. To begin with, let's recall the partitioning method proposed in the Preliminary part.
Throughout this proof, $S=S_{\alpha,Q}$ denotes a fixed equivalence class of the strategy partition induced by the initial parameters $\theta_0$, as specified in Theorem~\ref{theorem:concentration}; hence its membership does not vary with the optimization step $k$.

Based on this partitioning method, we will give a second glance at the derived equation at the end of the previous subsection:
\begin{equation}
\Delta\pi_k(\tau_i|Q) = \eta\cdot\pi_k(\tau_i|Q)\cdot \sum_{j}\pi_k(\tau_j|Q')\cdot K_{ij,k}(Q,Q')\cdot \bar A_{j,k}^{\mathrm{eff}} \cdot r_{j,k} + \xi_{i,k}^{(batch)}
\end{equation}
The updated prompt $Q'$ may change over time, fluctuating the dynamics above. In order to remove this fluctuation, we define:
\begin{equation}
    f_{i,k} = \frac{1}{|\mathcal{D}'|}\sum_{Q'}\sum_j\pi_k(\tau_j|Q')K_{ij,k}(Q,Q')\bar A_{j,k}^{\mathrm{eff}}r_{j,k}
\end{equation}
and define:
\begin{equation}
    \xi_{i,k}^{(data)} =\sum_j\pi_k(\tau_j|Q')K_{ij,k}(Q,Q')\bar A_{j,k}^{\mathrm{eff}}r_{j,k}- \frac{1}{|\mathcal{D}'|}\sum_{Q'}\sum_j\pi_k(\tau_j|Q')K_{ij,k}(Q,Q')\bar A_{j,k}^{\mathrm{eff}}r_{j,k}
\end{equation}
The dynamics can be re-written as:
\begin{equation}
    \Delta\pi_k(\tau_i|Q)=\eta\cdot\pi_k(\tau_i|Q)\cdot f_{i,k} + \eta\cdot\pi_k(\tau_i|Q)\cdot\xi_{i,k}^{(data)} + \xi_{i,k}^{(batch)}
\end{equation}
Based on this, we introduce the relative proportion of a specific trajectory $\tau_i$ in strategy $S$:
\begin{equation}
q_{i|\alpha,Q,k} \triangleq \frac{\pi_k(\tau_i|Q)}{\nu_{\alpha,Q,k}}
\end{equation}
Consider the upgrades in $q_{i|\alpha,Q,k}$ driven by trajectory upgrades, according to chain-rule, we can have:
\begin{equation}
\Delta q_{i|\alpha,Q,k} = \frac{\Delta\pi_k(\tau_i|Q)\cdot \nu_{\alpha,Q,k} - \Delta \nu_{\alpha,Q,k}\cdot\pi_k(\tau_i|Q)}{\nu_{\alpha,Q,k}^2}
\end{equation}
in which,
\begin{equation}
\Delta \nu_{\alpha,Q,k} = \sum_{\tau_\ell\in S_{\alpha,Q}} \Delta\pi_k(\tau_\ell|Q).
\end{equation}
For simplicity, define the global-fitness average within $S$ by
\begin{equation}
f_\alpha\triangleq\sum_{\tau_\ell\in S}q_{\ell|\alpha,Q,k}\cdot f_{\ell,k}.
\end{equation}
The data- and batch-noise terms induced in the normalized proportions are, respectively,
\begin{equation}
\widetilde{\xi}_{i,k}^{(data)}
\triangleq
\eta\cdot q_{i|\alpha,Q,k}\left(
\xi_{i,k}^{(data)}
-\sum_{\tau_\ell\in S}q_{\ell|\alpha,Q,k}\xi_{\ell,k}^{(data)}
\right),
\end{equation}
and
\begin{equation}
\widetilde{\xi}_{i,k}^{(batch)}
\triangleq
\frac{1}{\nu_{\alpha,Q,k}}\left(
\xi_{i,k}^{(batch)}
-q_{i|\alpha,Q,k}\sum_{\tau_\ell\in S}\xi_{\ell,k}^{(batch)}
\right).
\end{equation}
By construction, both normalized noise terms preserve the within-strategy simplex:
\begin{equation}
\sum_{\tau_i\in S}\widetilde{\xi}_{i,k}^{(data)}=0,
\qquad
\sum_{\tau_i\in S}\widetilde{\xi}_{i,k}^{(batch)}=0.
\end{equation}
Under the same conditional-unbiasedness assumptions as in Step 1, including the uniform draw of $Q'$ from $\mathcal D'$, both normalized noise terms also have zero conditional mean given $\mathcal F_k$.

\begin{equation}
    \Delta q_{i|\alpha,Q,k} = \eta\cdot q_{i|\alpha,Q,k}(f_{i,k}-f_\alpha)+\widetilde{\xi}_{i,k}^{(data)}+\widetilde{\xi}_{i,k}^{(batch)}
\end{equation}

This equation has the form of stochastic \textbf{\textit{replicator-like dynamics}}.

\begin{remark}
The evolution advantage $f_{i,k}$ has become less direct after such complicated calculation, so it is necessary to make some explanations on its property, and what does it mean for the total training process: 

To begin with, if $f_{i,k}>f_\alpha$, the trajectory $\tau_i$ receives a positive incentive, meaning the model is actively adopting such expressions, and the higher the $f_{i,k}$, the more the model is willing to \textit{learn} such expressions.

On the contrary, if $f_{i,k}<f_\alpha$, the trajectory $\tau_i$ receives a negative incentive, meaning the model is actively abandoning such expressions, which in our bodytext is named as \textbf{expression pruning}.

Another thing to mention is that, the calculation of $f_{i,k}$ is not restricted to the current prompt; rather, it is computed on the whole dataset. This could lead to a seemingly weird but actually reasonable conclusion: for those \textcolor{red}{\textbf{less general yet correct}} expressions/strategies, they could receive a  \textcolor{red}{\textbf{negative incentive}}, making the model less likely to study such expressions.

Consequently, there exists a trade off: for a given expression/strategy, if it is both correct, useful and generalized (e.g. brute computation), the model would be very much willing to learn it; on the contrary, if a given expression/strategy is correct, but hard and field-specific. The model would tend to give up on this expression/strategy, and in turn pay more attention to more friendly expressions or strategies.
\end{remark}
\hypertarget{thm1:step3}{}
\subsubsection{{Step 3: Adiabatic Approximation}}
\label{theorem1:step3}
\hyperlink{back:thm1:step3}{\color{blue}{Back}} In this section, we revisit the expression of the evolution advantage $f_i$ proposed in \textbf{Step 2}. Fix an optimization step $k$ and a strategy $S=S_{\alpha,Q}$. The key observation is that the replicator dynamics only depends on the relative fitness \(f_i-f_\alpha\): large common components of \(f_i\) are canceled by the baseline \(f_\alpha\), leaving a residual that can evolve on a slower scale. This motivates the adiabatic separation developed next. To make this structure explicit, we decompose the trajectory probability as
\begin{equation}
\pi_k(\tau_i|Q)
=
\nu_{\alpha,Q,k}q_{i\mid\alpha,Q,k},
\qquad \tau_i\in S_{\alpha,Q},
\end{equation}
where $\nu_{\alpha,Q,k}$ denotes the total probability mass of strategy $S_{\alpha,Q}$, and
\begin{equation}
q_{i\mid\alpha,Q,k}
\triangleq
\frac{\pi_k(\tau_i|Q)}
{\nu_{\alpha,Q,k}}
\end{equation}
denotes the corresponding in-strategy distribution.

Accordingly, instead of directly proving that $f_i$ itself evolves slowly, we establish a separation of time scales between the strategy-level mass $\nu_{\alpha,Q,k}$ and the microscopic quantities governing $f_i$. Specifically, we will show that $K_{ij}(Q,Q')$, $A_i$, $r_i$, and the in-strategy distribution $q_{i\mid\alpha,Q,k}$ all evolve much more slowly than the strategy-level dynamics.

Denote $\omega_K\triangleq|\frac{\Delta K_{ij}(Q,Q')}{K_{ij}(Q,Q')}|$, $\omega_A\triangleq|\frac{\Delta \bar{A}^{eff}_i}{\bar{A}^{eff}_i}|$, $\omega_r\triangleq|\frac{\Delta r_i}{r_i}|$, and
\begin{equation}
\omega_q
\triangleq
\left(
\sum_{\tau_i\in S_{\alpha,Q}}
q_{i\mid\alpha,Q,k}
\left|
\frac{\Delta q_{i\mid\alpha,Q,k}}
{q_{i\mid\alpha,Q,k}}
\right|^2
\right)^{1/2},
\end{equation}
We further note that the trajectory-level and strategy-level dynamics evolve on the same leading-order time scale. From the trajectory dynamics,
\begin{equation}
\omega_\pi
\triangleq
\left|
\frac{\Delta\pi_i}{\pi_i}
\right|
\sim
|f_i|,
\end{equation}
while the strategy-level dynamics derived above gives
\begin{equation}
\omega_\nu
\triangleq
\left|
\frac{\Delta\nu_{\alpha,Q,k}}{\nu_{\alpha,Q,k}}
\right|
\sim
|f_S|.
\end{equation}
For any $\tau_i\in S$, the geometric coherence of trajectories within the same strategy implies
\begin{equation}
|f_i-f_S|
=
O(\epsilon  B).
\end{equation}
Hence we may write
\begin{equation}
f_i
=
f_S+\delta_i,
\qquad
|\delta_i|
=
O(\epsilon  B).
\end{equation}
Away from turning points where $|f_S|$ vanishes, with $|f_S|=O(B)$, we obtain
\begin{equation}
\begin{aligned}
\frac{\omega\pi}{\omega_\nu}
&=
\frac{|f_i|}{|f_S|}\
&=
\left|
1+\frac{\delta_i}{f_S}
\right|\
&=
1+O(\epsilon ).
\end{aligned}
\end{equation}
Therefore,
\begin{equation}
\boxed{
\omega_\pi
\sim
\omega_\nu
}.
\end{equation}
Thus, any quantity evolving much more slowly than $\pi_k(\tau_i|Q)$ also evolves much more slowly than the strategy mass $\nu_{\alpha,Q,k}$, up to the same local adiabatic approximation.

To prove that $\omega_K$, $\omega_A$, $\omega_q$, and $\omega_r$ are all slow variables relative to the strategy-level evolution, we first identify that $\omega_K\ll\omega_\pi$, $\omega_A\ll\omega_\pi$, $\omega_r\ll\omega_\pi$. We will start from proving $\omega_K\ll\omega_\pi$:
\begin{orangebox}
\begin{lemma}[The Bounds of log-prob gradients]~\label{lemma:bounds}\leavevmode\\
Fix a finite training window $k\in[k_1,k_2]$. Then
\begin{equation}
    \exists m,M>0,\quad m|\tau|\le\|U_{i,k}\|\le M|\tau|,\quad \text{a.s.}
\end{equation}
throughout this window.
\end{lemma}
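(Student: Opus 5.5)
The plan is to decompose the trajectory score into token-level scores. Write
\[
U_{i,k}(Q)=\sum_{t=1}^{|\tau_i|}u_{i,t,k}(Q),
\qquad
u_{i,t,k}(Q)=\nabla_\theta\log\pi_\theta(\tau_{i,t}\mid Q,\tau_{i,<t})\big|_{\theta=\theta_k},
\]
as in the DAPO derivation. Both bounds then reduce to statements about the token scores along the window. The upper bound is the easy half. By the triangle inequality, $\|U_{i,k}\|\le\sum_t\|u_{i,t,k}\|\le |\tau|\,\sup_{t,k}\|u_{i,t,k}\|$. So it suffices to show the token-level gradients are uniformly bounded for $k\in[k_1,k_2]$.

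Uniform boundedness of the token gradients follows from the softmax structure. Write $u_{i,t,k}=J_{t,k}^\top(e_{\tau_{i,t}}-p_{t,k})$, where $J_{t,k}$ is the Jacobian of the logits with respect to $\theta$ and $p_{t,k}$ is the next-token distribution. Then $\|e_{\tau_{i,t}}-p_{t,k}\|\le\sqrt2$. Moreover, $\|J_{t,k}\|$ is bounded on the compact set of parameters reachable in a finite window: each step moves $\theta$ by $O(\eta)$, as already used in the DAPO lemma, so $\|\theta_k-\theta_{k_1}\|\le (k_2-k_1)O(\eta)$. Continuity of the network on this bounded set then gives the constant $M$.

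The lower bound is the main obstacle. Token scores can in principle cancel, and a single token score vanishes whenever $p_{t,k}$ is a point mass on the sampled token. I would use two ingredients. First, a non-degeneracy bound: within the window, the policy is not saturated, which is consistent with the non-collapse hypothesis $P_{acc,k}\in(0,1)$ used in Theorem~\ref{theorem:concentration}. This gives $\|u_{i,t,k}\|\ge c>0$ on a positive fraction of positions. Second, a no-cancellation argument: I would bound
\[
\|U_{i,k}\|^2=\sum_t\|u_{i,t,k}\|^2+\sum_{s\neq t}u_{i,s,k}^\top u_{i,t,k}
\]
from below by assuming the cross-token correlations of the scores decay. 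Token scores at distinct positions are weakly correlated, since each has zero conditional mean given its prefix, a martingale-difference structure. Under this assumption the cross terms contribute at most a fraction of the diagonal, yielding an order-$|\tau|$ contribution. Combined with the scale $\|U_{i,k}\|\sim|\tau|$ implicitly assumed in the text, I would state the lower bound as a regularity assumption on coherent tokens: a fixed fraction of the tokens share an aligned component of norm at least $c$. This gives $\|U_{i,k}\|\ge c'|\tau|=:m|\tau|$.

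Since trajectories are sampled from a continuous family and exact cancellation is a measure-zero event, the bounds hold almost surely. Finally, because the window is finite and $\theta_k$ stays within $O((k_2-k_1)\eta)$ of $\theta_{k_1}$, the constants $m$ and $M$ can be chosen uniformly over the window by taking the infimum and supremum over this compact neighborhood.

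I expect the lower bound, specifically ruling out cancellation among token scores, to be where an explicit assumption is unavoidable. I would make that assumption transparent rather than hide it.
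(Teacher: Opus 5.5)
Your upper bound is the paper's argument (bounded token-level scores plus the triangle inequality). The lower bound, however, is not derived. It enters as a ``regularity assumption'' that is essentially the conclusion, and the reasoning you build around it is internally inconsistent.

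The martingale-difference property $\mathbb{E}[u_{i,t,k}\mid Q,\tau_{<t}]=0$ holds in expectation over trajectories sampled from $\pi_k$. It says nothing pointwise about the cross terms $u_{i,s,k}^\top u_{i,t,k}$ of a fixed $\tau_i$. Where it does apply, it gives orthogonality of increments:
\[
\mathbb{E}_{\tau\sim\pi_k(\cdot\mid Q)}\|\nabla_\theta\log\pi_k(\tau\mid Q)\|^2=\sum_{t}\mathbb{E}\|u_{t,k}\|^2\le M^2|\tau| .
\]
So decorrelated cross terms yield $\|U_{i,k}\|\asymp\sqrt{|\tau|}$, not $|\tau|$. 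Your coherent-alignment hypothesis, by contrast, forces the cross terms to be of order $|\tau|^2$, which contradicts the decorrelation you invoked one step earlier. Applying Markov's inequality to $\|U\|^2$ in the display above shows the hypothesis can hold only on a set of $\pi_k$-mass at most $M^2/(c'^2|\tau|)$, i.e.\ not for typical on-policy trajectories.

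The closing ``almost surely'' step is also misplaced. Trajectories are discrete token sequences, not a continuous family. Moreover, the non-genericity of exact cancellation only yields $U_{i,k}\neq0$, not a lower bound of order $|\tau|$.

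The paper obtains the lemma as literally stated without any alignment hypothesis. It argues that $U_{i,k}\neq0$ almost surely with respect to generic parameters. On the finite window it then simply sets $m\triangleq\min_{k\in[k_1,k_2]}\|U_{i,k}\|/|\tau|>0$ for one representative trajectory per strategy, a minimum over finitely many realized iterates.

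It then transfers the bound to every other member $\tau_j$ of the same strategy via Definition~\ref{def:strategy}: $\|U_{i,k}-U_{j,k}\|^2=K_{ii}+K_{jj}-2K_{ij}\le 2\epsilon\sup_{a,b}|K_{ab}|\le 2\epsilon M^2|\tau|^2$, whence $\|U_{j,k}\|\ge(m-\sqrt{2\epsilon}M)|\tau|$ for small $\epsilon$. This propagation step, which supplies uniformity over the trajectories of a strategy, is missing from your plan.

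Be aware, though, that the paper's $m$ is defined from the realized norms and is not shown to be independent of $|\tau|$. Your sense that a $|\tau|$-uniform lower bound needs a genuine extra hypothesis is justified. But the computation above shows that your particular hypothesis cannot serve for sampled trajectories.
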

\end{orangebox}

\begin{tcolorbox}[
    enhanced,                         
    breakable,
    title=Proof,               
    colback=cyan!5!white,             
    colframe=black,                   
    colbacktitle=black,               
    coltitle=white,                   
    fonttitle=\bfseries\sffamily,     
    attach boxed title to top left={   
        xshift=3mm, 
        yshift=-2mm, 
        yshifttext=-1mm 
    },
    boxed title style={                
        sharp corners, 
        size=small, 
        colframe=black
    },
    arc=2mm,                          
    boxrule=0.8pt,                    
]
To establish the relative rates of change between strategy probabilities and the advantage landscape, we first analyze the magnitude of the score function (log-gradient), $\nabla_\theta \log \pi_\theta(\tau|Q)$, with respect to the trajectory length $|\tau|$. We consider one extreme scenario to bound its scaling behavior.

\textbf{Upper Bound} \\
    In the presence of strong dependencies across the sequence, each conditional log-probability gradient may be bounded by a constant $M$ such that $\|\nabla_\theta \log \pi_\theta(t_i | \text{prefix})\| \le M$. Under these conditions, the gradients accumulate linearly along the trajectory:
    \begin{equation}
        \|U_{i,k}\| \le \sum_{i=1}^{|\tau|} \left\| \nabla_\theta \log \pi_\theta(t_i | \text{prefix}) \right\| \le |\tau| \cdot M
    \end{equation}
    Therefore, the upper bound for the log-gradient magnitude scales as:
    \begin{equation}
        \|\nabla_\theta \log \pi_\theta(\tau|Q)\| = \mathcal{O}(|\tau|)
    \end{equation}

\textbf{Lower Bound}\\
We first exclude the degenerate case \(U_{i,k}=0\). For any fixed
trajectory, \(U_{i,k}\) is a smooth score function of the model parameters.
Unless this score function vanishes identically, its zero level set is a
Lebesgue-null set under generic continuous model parameters. Hence,
\begin{equation}
    U_{i,k}\neq 0
    \qquad \text{a.s.}
\end{equation}
For the finite training window \(k\in[k_1,k_2]\), a finite union of such
null events remains null. Moreover, since the trajectory space is
countable, we may choose one trajectory from each nonempty strategy, and
the union of the corresponding null events is still null. Therefore,
almost surely, every strategy contains a trajectory satisfying
\(U_{i,k}\neq 0\) throughout the whole window.

Because the training window is finite and $|\tau|>0$, the finite sequence
$\{\|U_{i,k}\|/|\tau|\}_{k=k_1}^{k_2}$ consists entirely of strictly
positive terms almost surely.  It therefore attains an almost-surely
positive minimum.  Define
\begin{equation}
    m
    \triangleq
    \min_{k\in[k_1,k_2]}
    \frac{\|U_{i,k}\|}{|\tau|}
    >0
    \qquad \text{a.s.}
\end{equation}
By construction, this gives the uniform lower bound
\begin{equation}
    \|U_{i,k}\|\ge m|\tau|,
    \qquad k\in[k_1,k_2].
\end{equation}

We next propagate this lower bound to the remaining trajectories in the
same strategy. By the definition of dynamic equivalence,
\begin{equation}
    |K_{ii}-K_{ji}|
    \le
    \epsilon\sup_{a,b}|K_{ab}|,
    \qquad
    |K_{ij}-K_{jj}|
    \le
    \epsilon\sup_{a,b}|K_{ab}|.
\end{equation}
Therefore,
\begin{equation}
\begin{aligned}
    \|U_{i,k}-U_{j,k}\|^2
    &=
    K_{ii}+K_{jj}-2K_{ij} \\
    &\le
    2\epsilon\sup_{a,b}|K_{ab}|
    \le
    2\epsilon M^2|\tau|^2,
\end{aligned}
\end{equation}
and hence
\begin{equation}
    \|U_{j,k}\|
    \ge
    \|U_{i,k}\|-\|U_{i,k}-U_{j,k}\|
    \ge
    \left(m-\sqrt{2\epsilon}M\right)|\tau|.
\end{equation}
For sufficiently small fixed \(\epsilon\), the coefficient remains
strictly positive. Together with the upper bound
\(\|U_{i,k}\|\le M|\tau|\), we obtain
\begin{equation}
    \boxed{\|U_{i,k}\|=\Theta(|\tau|)\qquad\text{a.s.}}
\end{equation}
throughout the finite training window.

Thus, the score-function magnitude has linear scale in trajectory length
almost surely, which yields the scaling required for the time-scale
separation in Proposition~3.
\end{tcolorbox}

This is a \textbf{local-in-time} scaling statement, rather than a uniform claim
over the entire training process. The constants $m,M$ may depend on the
chosen training window, but are independent of $|\tau|$. The upper bound
follows from bounded token-level score norms, while the lower bound only
requires that their accumulated score retains a non-vanishing projection
along some direction, excluding asymptotically exact cancellation.
Thus, $\|U_{i,k}\|=\Theta(|\tau|)$ describes scaling with trajectory length
within a fixed window, not monotonic growth over training.

Based on this complexity analysis, we propose the following lemma regarding the relative rates of change, under the condition of $|\tau|\gg1$:

\begin{yellowbox}
\begin{lemma}[Adiabatic Property of Kernel]~\label{lemma:K_pi}\leavevmode\\
Let $\omega_\pi$ denote the relative rate of change in trajectory probability, and $\omega_K$ denote the relative rate of change in the advantage kernel matrix elements $K_{ij}$. Under standard training regimes, we have:
\begin{equation}
\frac{\omega_K}{\omega_\pi} \ll 1
\end{equation}
\end{lemma}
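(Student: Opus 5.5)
We compare the first-order change of a kernel entry $K_{ij,k}(Q,Q')=U_{i,k}(Q)^\top U_{j,k}(Q')$ with the first-order change of a trajectory probability, both driven by the same update $\Delta\theta_k$. The common factor is $\|\Delta\theta_k\|$. The point is that $\omega_\pi$ picks up one factor of $|\tau|$ from the score, while the kernel's relative change is suppressed by the ratio of curvature to score norm squared.

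\textbf{Rate of $\pi$.} From Step 1, $\Delta\pi_k(\tau_i|Q)/\pi_k(\tau_i|Q)=U_{i,k}^\top\Delta\theta_k$. Since $\Delta\theta_k$ is a weighted sum of scores $U_{j,k}$, Lemma~\ref{lemma:bounds} gives $\|\Delta\theta_k\|=\Theta(\eta|\tau|)$ for GRPO. Away from near-orthogonality of $U_i$ and $\Delta\theta_k$ (the same turning-point caveat used for $f_S$), this yields
\begin{equation}
\omega_\pi\sim \eta|\tau|^2 .
\end{equation}
For GSPO and DAPO the correction factors $r_{j,k}$ rescale both rates equally, so the ratio is unaffected.

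\textbf{Rate of $K$.} To first order,
\begin{equation}
\Delta K_{ij}=\Delta\theta_k^\top\bigl(H_{i,k}U_{j,k}+H_{j,k}U_{i,k}\bigr),
\end{equation}
where $H_{i,k}=\nabla_\theta^2\log\pi_\theta(\tau_i|Q)$. Applying the same token-accumulation argument as in the upper bound of Lemma~\ref{lemma:bounds} to token-level Hessians (assuming these are bounded by a constant $M'$) gives $\|H_{i,k}\|\le M'|\tau|$. Hence
\begin{equation}
|\Delta K_{ij}|\lesssim \eta|\tau|\cdot M'|\tau|\cdot M|\tau| = O(\eta|\tau|^3).
\end{equation}
For the denominator I would work with within-strategy or strongly coupled pairs, where $K_{ij}>0$. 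By the closeness $\|U_i-U_j\|^2\le 2\epsilon M^2|\tau|^2$ from Lemma~\ref{lemma:bounds}, we get $K_{ij}\ge \|U_i\|^2-O(\sqrt\epsilon)|\tau|^2=\Theta(|\tau|^2)$. Therefore $\omega_K=O(\eta|\tau|)$, and
\begin{equation}
\frac{\omega_K}{\omega_\pi}=O(|\tau|^{-1})\ll 1
\end{equation}
under the standing assumption $1\ll|\tau|$.

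\textbf{Main obstacle.} The hard step is making the Hessian bound carry the right power of $|\tau|$. A naive bound would give $\|H\|\sim|\tau|$ but $|U_i^\top\Delta\theta|$ could be much smaller than $\|U_i\|\|\Delta\theta\|$ because of cancellation, which would destroy the ratio. I would first try to handle this by restricting to the regime where strategy dynamics are nontrivial, i.e.\ $|f_S|=\Theta(B)$. This is the same regime used to derive $\omega_\pi\sim\omega_\nu$, and in it the alignment $U_i^\top\Delta\theta\sim\|U_i\|\|\Delta\theta\|$ holds up to $O(1)$ constants.

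If the per-token Hessian assumption turns out to be too weak, the fallback is a Gauss--Newton (Fisher) approximation: the Hessian of a log-likelihood is dominated by $-\sum_t u_tu_t^\top$ plus a zero-mean term. This again gives $O(|\tau|)$ operator norm, and additionally aligns $\Delta K$ with $K$ itself, reinforcing the bound.

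Cross-strategy pairs with $K_{ij}\approx 0$ need separate care, since their relative rate is ill-defined. I would treat them through absolute rather than relative change. $\Delta K_{ij}=O(\eta|\tau|^3)$ over a window $k\ll|\tau|$ remains small compared with $\sup|K|=\Theta(|\tau|^2)$ as long as $k\eta|\tau|\ll1$. This is guaranteed by $|\tau|\ll\eta^{-1}$, so the partition of Def.~\ref{def:strategy} is stable within the adiabatic window.
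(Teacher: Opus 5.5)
Your argument is essentially the paper's: bounded token-level gradients and Hessians give $\Delta K_{ij}=O(|\tau|^2\|\Delta\theta\|)$, a non-orthogonality assumption gives $\omega_\pi=\Theta(|\tau|\|\Delta\theta\|)$, and restricting to entries with $|K_{ij}|=\Theta(|\tau|^2)$ yields $\omega_K/\omega_\pi=O(1/|\tau|)$. Your derivation of that scale for within-strategy pairs from the $\epsilon$-closeness is a small addition; the paper simply assumes it. One caution on your closing side remark, which lies outside the lemma: $k\eta|\tau|\ll1$ does not follow from $|\tau|\ll\eta^{-1}$ once $k$ is allowed to grow toward $|\tau|$. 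It essentially needs $\eta|\tau|^2\lesssim1$, and the paper's condition $\eta^{-1}\ll|\tau|^3$ even permits $\eta|\tau|^2\gg1$.
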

\end{yellowbox}

\begin{tcolorbox}[
    enhanced,                         
    breakable,
    title=Proof,               
    colback=cyan!5!white,             
    colframe=black,                   
    colbacktitle=black,               
    coltitle=white,                   
    fonttitle=\bfseries\sffamily,     
    attach boxed title to top left={   
        xshift=3mm, 
        yshift=-2mm, 
        yshifttext=-1mm 
    },
    boxed title style={                
        sharp corners, 
        size=small, 
        colframe=black
    },
    arc=2mm,                          
    boxrule=0.8pt,                    
]
\begin{proof}
For an autoregressive trajectory $\tau_i$ of length $|\tau|$,
\begin{equation}
    U_i
    =
    \sum_{t=1}^{|\tau|}
    \nabla_\theta
    \log\pi_\theta(\tau_{i,t}\mid Q,\tau_{i,<t}).
\end{equation}
Assuming uniformly bounded token-level gradients and Hessians,
\begin{equation}
    \|U_i\| = O(|\tau|),
    \qquad
    \left\|
    \nabla_\theta U_i
    \right\|_{\mathrm{op}}
    = O(|\tau|).
\end{equation}

Let $\Delta\theta$ denote one optimization update.
For trajectories actively affected by the update, we assume that the
score does not become asymptotically orthogonal to the update direction,
so that
\begin{equation}
    |U_i^\top\Delta\theta|
    =
    \Theta(|\tau|\|\Delta\theta\|).
\end{equation}
Using a first-order Taylor expansion,
\begin{equation}
    \Delta\log\pi_\theta(\tau_i|Q)
    =
    U_i^\top\Delta\theta
    +
    O(|\tau|\|\Delta\theta\|^2).
\end{equation}
Under the usual small-update condition
$|\tau|\|\Delta\theta\|\ll1$, this gives
\begin{equation}
    \frac{\Delta\pi_\theta(\tau_i|Q)}{\pi_\theta(\tau_i|Q)}
    =
    \exp\!\left(\Delta\log\pi_\theta(\tau_i|Q)\right)-1
    =
    \Delta\log\pi_\theta(\tau_i|Q)
    +
    O\!\left(\left(\Delta\log\pi_\theta(\tau_i|Q)\right)^2\right).
\end{equation}
Therefore,
\begin{equation}
    \boxed{
    \omega_{\pi,k}
    =
    \Theta(|\tau|\|\Delta\theta\|).
    }
\end{equation}

We next consider the kernel
\begin{equation}
    K_{ij}=U_i^\top U_j.
\end{equation}
Since
\begin{equation}
    \Delta U_i
    =
    O(|\tau|\|\Delta\theta\|),
\end{equation}
its one-step variation satisfies
\begin{align}
    \Delta K_{ij}
    &=
    (\Delta U_i)^\top U_j
    +
    U_i^\top(\Delta U_j)
    +
    (\Delta U_i)^\top(\Delta U_j)
    \nonumber\\
    &=
    O(|\tau|^2\|\Delta\theta\|).
\end{align}

For active kernel entries whose magnitude is of the natural scale
$|K_{ij}|=\Omega(|\tau|^2)$, we therefore obtain
\begin{equation}
    \boxed{
    \omega_{K,k}
    =
    \left|
    \frac{\Delta K_{ij}}{K_{ij}}
    \right|
    =
    O(\|\Delta\theta\|).
    }
\end{equation}

Combining the two estimates yields
\begin{equation}
    \boxed{
    \frac{\omega_{K,k}}{\omega_{\pi,k}}
    =
    O\left(\frac{1}{|\tau|}\right).
    }
\end{equation}
Hence, for long reasoning trajectories, the kernel evolves on a much
slower time scale than the trajectory probabilities and can be treated
as approximately frozen during strategy redistribution.
\end{proof}
\end{tcolorbox}
Lemma~\ref{lemma:K_pi} holds under the requirement $\eta\ll1\ll|\tau|$. We now proceeds to prove that under the requirement of $\eta|\tau|\ll1$, $\omega_r\ll\omega_\pi$:
\begin{bluebox}
\begin{lemma}[Adiabatic Property of Correction Factor]\leavevmode\\
\label{adiabatic:rho}
The relative changing rate of the correction factor satisfies:
\begin{equation}
\frac{\omega_r}{\omega_\pi}\ll1    
\end{equation}
\end{lemma}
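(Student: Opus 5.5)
The plan is to treat the three correction factors separately, since $r^{\mathrm{GRPO}}_{j,k}=1$ is trivial ($\omega_r=0$). For GSPO and DAPO I will show that $\omega_r$ is controlled by the one-step drift of $\log\rho_{j,k}$ or of the token normalizer, and compare this with the estimate $\omega_{\pi,k}=\Theta(|\tau|\|\Delta\theta\|)$ from the proof of Lemma~\ref{lemma:K_pi}. Throughout I use $\|\Delta\theta_k\|=O(\eta)$, the bounded token scores behind Lemma~\ref{lemma:bounds}, and the step-wise refresh $\pi_{\mathrm{old},k}=\pi_{k-1}$.

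\textbf{GSPO.} Write $\log r^{\mathrm{GSPO}}_{j,k}=-\log|\tau_j|+\bigl(\tfrac{1}{|\tau_j|}-1\bigr)\log\rho_{j,k}$. The length term is fixed for a given trajectory, so
\begin{equation}
\omega_r\approx\Bigl|\Delta\log r_{j,k}\Bigr|\le\Bigl(1-\tfrac{1}{|\tau_j|}\Bigr)\bigl|\Delta\log\rho_{j,k}\bigr|.
\end{equation}
Because the old policy is refreshed every step, $\log\rho_{j,k}=U_{j,k}^\top\Delta\theta_{k-1}+O(|\tau|\eta^2)$. Its change across one step is then a difference of two consecutive first-order terms. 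The naive bound gives $O(|\tau|\eta)$, which is the same order as $\omega_\pi$ and therefore not good enough.

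\textbf{Key step and main obstacle.} I have to exploit that consecutive updates are nearly equal in expectation. The drift part of $\Delta\theta_k-\Delta\theta_{k-1}$ is $\eta(\bar g_k-\bar g_{k-1})$, and by the adiabatic freezing of $K$ and $A$ this is $O(\eta\cdot\eta|\tau|)$. So
\begin{equation}
\Delta\log\rho_{j,k}=O(|\tau|\cdot\eta^2|\tau|)=O(\eta|\tau|)\cdot O(\eta|\tau|).
\end{equation}
This yields $\omega_r/\omega_\pi=O(\eta|\tau|)\ll1$, using $|\tau|\ll\eta^{-1}$. The zero-mean noise parts ($\xi^{(\mathrm{sample})},\xi^{(\mathrm{advantage})},\xi^{(\mathrm{data})}$) do not cancel this way. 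I would handle them in the averaged sense used for the replicator drift: since only the conditional mean of $r$ enters $f_{i,k}$, it suffices to bound $\mathbb E[\Delta r\mid\mathcal F_k]$. The conditional mean of $\log\rho_{j,k}$ involves only the drift. I expect this to be the delicate point, and if the conditional-mean route fails I fall back on it anyway. The fallback uses $\rho_{j,k}=1+O(\eta|\tau|)$ directly: then $r^{\mathrm{GSPO}}_{j,k}=|\tau_j|^{-1}\bigl(1+O(\eta|\tau|)\bigr)$, so $r$ is essentially pinned to the constant $|\tau_j|^{-1}$ with relative fluctuation $O(\eta|\tau|)$. Meanwhile $\nu$ accumulates changes of relative size $\Theta(|\tau|\eta)$ per step coherently over the window, while $r$ does not drift.

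\textbf{DAPO.} By the scalar-equivalence lemma, $r^{\mathrm{DAPO}}_{j,k}=G/(\rho_{j,k}Z_k)+O(\eta)$ with $Z_k=\sum_{j'}|\tau_{j'}|$. The $\rho_{j,k}^{-1}$ factor is handled exactly as for GSPO. For $Z_k$, a one-step change of $\pi$ of relative size $O(\eta|\tau|)$ alters the expected length $\mathbb E|\tau|$ by at most a relative $O(\eta|\tau|)$, again a factor $\eta|\tau|$ smaller than the coherent strategy drift. The finite-group fluctuation of $Z_k$ is zero-mean and is absorbed into $\xi^{(\mathrm{batch})}$. Combining the three cases gives $\omega_r/\omega_\pi=O(\eta|\tau|)\ll1$, which is the claimed separation under $\eta|\tau|\ll1$.
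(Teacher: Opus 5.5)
Your GRPO/GSPO argument is essentially the paper's. Under the step-wise refresh $\pi_{\mathrm{old},k}=\pi_{k-1}$, $\log\rho_{j,k}$ is a first difference of $\log\pi$. Its increment is therefore a second difference in which the common first-order increment cancels, giving $\omega_r/\omega_\pi=O(\eta|\tau|)$. The paper writes this as $\Delta\log\rho_i=\Delta\theta^\top\nabla_\theta^2\log\pi\,\Delta\theta=O(\eta^2|\tau|)$ against $\omega_\pi=O(\eta)$. You additionally account for the change of the mean update $\bar g_k-\bar g_{k-1}$ and compare against $\omega_\pi=\Theta(\eta|\tau|)$, which gives the same ratio.

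Neither of your remedies for the minibatch noise closes the issue you flag:
\begin{itemize}
\item The quantity $\log\rho_{j,k}\approx U_{j,k-1}^\top\Delta\theta_{k-1}$ is $\mathcal F_k$-measurable and already carries the realized noise of step $k-1$. Hence $\mathbb E[\Delta\log\rho_{j,k}\mid\mathcal F_k]$ still contains that noise projected on $U_{j,k-1}$, a first-order $O(\eta|\tau|)$ term. Also, it is the realized $r_{j,k}$, not a conditional mean of it, that enters $f_{i,k}$.
\item Your fallback $r^{\mathrm{GSPO}}_{j,k}=|\tau_j|^{-1}(1+O(\eta|\tau|))$ only gives a per-step $|\Delta r/r|=O(\eta|\tau|)$, the same order as $\omega_\pi$. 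That is a bounded-displacement statement, not the rate ratio in the lemma.
\end{itemize}
The paper's proof rests on the same tacit assumption that consecutive updates coincide to first order (``the score field varies smoothly across consecutive steps''). So on GSPO you are at the paper's level of rigor, not below it.

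The genuine gap is your DAPO case, which the paper deliberately does not route through this lemma. Your normalizer estimate is off by exactly the factor you need. A relative change of $\mathbb E|\tau|$ of order $\eta|\tau|$ is the same order as $\omega_\pi=\Theta(\eta|\tau|)$, not a factor $\eta|\tau|$ smaller. There is no first-order cancellation: to first order $\Delta\,\mathbb E_\pi|\tau|=\mathrm{Cov}_\pi(\Delta\log\pi,|\tau|)$, which is what drives the systematic length drift typically seen under RLVR. Moreover, the per-step resampling of $Z_k=\sum_{j'}|\tau_{j'}|$ moves $r^{\mathrm{DAPO}}_{j,k}$ by a relative amount that does not scale with $\eta$ at all. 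Absorbing it into $\xi^{(\mathrm{batch})}$ changes the quantity being bounded rather than bounding $\omega_r$. You also silently assume that the $O(\eta)$ residual of the scalar-equivalence lemma varies slowly, which the paper explicitly declines to assume.

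This is why the paper states that the raw factor $r^{\mathrm{DAPO}}_{j,k}$ need not be adiabatic and proves a separate lemma for the effective correction. It uses the exact identity $\pi_k(\tau_j|Q')K_{ij,k}(Q,Q')r^{\mathrm{DAPO}}_{j,k}=\pi_{\mathrm{old},k}(\tau_j|Q')\mathcal R^{\mathrm{DAPO}}_{ij,k}$, in which the sequence-level ratio $\rho_{j,k}$ cancels and $G/Z_k$ is held fixed within an optimization block. The remaining token-level ratios $\rho_{j,t,k}$ and mixed kernels $\kappa_{ij,t,k}=U_{i,k}^\top u_{j,t,k}$ vary by relative $O(\|\Delta\theta\|)$, giving $\omega_{\mathcal R}/\omega_\pi=O(1/|\tau|)$. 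To repair the proposal, either restrict the lemma to GRPO and GSPO, or replace your DAPO paragraph with that effective-correction argument.
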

\end{bluebox}
\begin{tcolorbox}[
    enhanced,                         
    breakable,
    title=Proof,               
    colback=cyan!5!white,             
    colframe=black,                   
    colbacktitle=black,               
    coltitle=white,                   
    fonttitle=\bfseries\sffamily,     
    attach boxed title to top left={   
        xshift=3mm, 
        yshift=-2mm, 
        yshifttext=-1mm 
    },
    boxed title style={                
        sharp corners, 
        size=small, 
        colframe=black
    },
    arc=2mm,                          
    boxrule=0.8pt,                    
]
From previous analysis, we already obtained the evolution dynamics of $\pi_\theta(\tau_i|Q)$:
\begin{equation}
    \Delta\pi_\theta(\tau_i|Q) = \eta\cdot\pi_\theta(\tau_i|Q)\cdot f_i + \xi_i^{(batch)} + \xi_i^{(data)}
\end{equation}
This means that we can directly obtain the relative changing rate $\omega_\pi$:
\begin{equation}
    \omega_\pi\sim\eta\cdot f_i\sim\mathcal{O}(\eta)
\end{equation}
Because $r_i$ is a continuous function of $\rho_i$, and $\omega_\rho=\Delta\log\rho_i$, we consider $\Delta\log\rho_i$. To begin with, we first consider the dynamical property of $\rho_i$:
\begin{equation}
    \log\rho_i=\Delta\log\pi_\theta(\tau|Q)=\nabla_\theta\log\pi_\theta(\tau|Q)^\top\cdot\Delta\theta
\end{equation}
Therefore, doing the derivatives on the equation again, we have:
\begin{equation}
    \Delta\log\rho_i = \Delta\theta^\top\nabla_\theta^2\log\pi_\theta(\tau_i|Q)\Delta\theta
\end{equation}
According to previous Lemma~\ref{lemma:bounds}, $\nabla_\theta^2\log\pi_\theta(\tau_i|Q)\sim\mathcal{O}(|\tau|)$, $\Delta\theta\sim\mathcal{O}(\eta)$, therefore:
\begin{equation}
    \Delta\log\rho_i\sim\mathcal{O}(\eta^2|\tau|)
\end{equation}
Therefore, divide $\omega_r$ by $\omega_\pi$, we have:
\begin{equation}
    \frac{\omega_r}{\omega\pi}\sim\mathcal{O}(\eta|\tau|)
\end{equation}
When the experiment configuration satisfies: $\eta|\tau|\ll1$, we have:
\begin{equation}
    \frac{\omega_\rho}{\omega_\pi}\ll1
\end{equation}
Since $r=f(\rho_i)$, we have: $\omega_r=C\cdot\omega_\rho$, in which $C$ is a near constant value compared with $|\tau|$, we have:
\begin{equation}
    \frac{\omega_r}{\omega_\pi}\ll1
\end{equation}
\end{tcolorbox}
\noindent\textbf{Why the correction factor varies at second order.}
Because the old policy is refreshed at every step,
$\pi_{\mathrm{old},k}=\pi_{k-1}$, we have
\begin{equation}
\rho_{i,k}
=
\frac{\pi_k(\tau_i\mid Q)}
     {\pi_{k-1}(\tau_i\mid Q)},
\qquad
\log\rho_{i,k}
=
\log\pi_k(\tau_i\mid Q)
-
\log\pi_{k-1}(\tau_i\mid Q).
\end{equation}
Hence $\log\rho_{i,k}$ is already the first discrete difference of the
log-policy, and its temporal variation is the second difference:
\begin{equation}
\Delta\log\rho_{i,k}
=
\log\pi_{k+1}(\tau_i\mid Q)
-2\log\pi_k(\tau_i\mid Q)
+\log\pi_{k-1}(\tau_i\mid Q).
\end{equation}
Since each parameter update is $\mathcal O(\eta)$ and the score field varies
smoothly across consecutive steps, the common first-order
$\mathcal O(\eta)$ increment cancels, leaving
\begin{equation}
\Delta\log\rho_{i,k}
=
\mathcal O(\eta^2|\tau|).
\end{equation}
Thus the correction factor evolves on a slower scale than the policy itself.

The previous lemma works for GRPO ($\omega_r=0$) and GSPO ($\omega_r\ll\omega_\pi$). However, directly analyzing $\omega_r$ in DAPO optimization is too hard, as it contains too many non-convex surrogates. We instead seek another approach to validate the time-scale separation in DAPO:
\begin{bluebox}
\begin{lemma}[Adiabatic Property of the DAPO Effective Correction]\leavevmode\\
For DAPO, define the mixed token-level kernel
\begin{equation}
    \kappa_{ij,t,k}(Q,Q')
    \triangleq
    U_{i,k}(Q)^\top u_{j,t,k}(Q'),
\end{equation}
and the effective correction term
\begin{equation}
    \mathcal{R}_{ij,k}^{\mathrm{DAPO}}
    \triangleq
    \frac{G}{Z_k}
    \sum_{t=1}^{|\tau_j|}
    m_{j,t,k}\rho_{j,t,k}\kappa_{ij,t,k},
    \qquad
    Z_k\triangleq
    \sum_{\ell=1}^{G}{|\tau_\ell|}.
\end{equation}
Within an optimization block with fixed clipping active set, assuming bounded
token-level gradients and Hessians and excluding asymptotically degenerate
cancellation among active token contributions, the relative changing rate of
$\mathcal{R}_{ij,k}^{\mathrm{DAPO}}$ satisfies
\begin{equation}
    \boxed{
    \frac{\omega_{\mathcal{R}}}{\omega_\pi}
    =
    O\left(\frac{1}{|\tau|}\right)
    \ll 1
    }.
\end{equation}
Moreover, the DAPO population term admits the exact identity
\begin{equation}
    \pi_k(\tau_j|Q')
    K_{ij,k}(Q,Q')
    r_{j,k}^{\mathrm{DAPO}}
    =
    \pi_{\mathrm{old},k}(\tau_j|Q')
    \mathcal{R}_{ij,k}^{\mathrm{DAPO}}.
\end{equation}
Hence, although the raw factor $r_{j,k}^{\mathrm{DAPO}}$ need not itself be
adiabatic, the effective DAPO correction entering the population dynamics
evolves on a much slower time scale than the trajectory probability.
\end{lemma}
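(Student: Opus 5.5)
The plan has two parts: first the exact identity, then the rate estimate. For the identity, I would start from the defining orthogonality condition for $r_{i,j,k}^{\mathrm{DAPO}}$, rewritten as $K_{ij,k}r_{i,j,k}^{\mathrm{DAPO}}=\frac{G}{\rho_{j,k}Z_k}\sum_t m_{j,t,k}\rho_{j,t,k}\kappa_{ij,t,k}=\mathcal R_{ij,k}^{\mathrm{DAPO}}/\rho_{j,k}$. Multiplying by $\pi_k(\tau_j|Q')$ and using $\rho_{j,k}=\pi_k(\tau_j|Q')/\pi_{\mathrm{old},k}(\tau_j|Q')$ gives the stated identity directly. Since the identity holds for the matrix entry $r_{i,j,k}^{\mathrm{DAPO}}$, the preceding scalar-equivalence lemma transfers it to $r_{j,k}^{\mathrm{DAPO}}$ up to $O(\eta)$.

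For the rate, I would write $\Delta\mathcal R$ by the product rule over the three step-dependent factors, treating $Z_k$ and the active set $m_{j,t,k}$ as fixed within the block:
\[
\Delta\mathcal R_{ij,k}=\frac{G}{Z_k}\sum_t m_{j,t,k}\bigl[(\Delta\rho_{j,t,k})\kappa_{ij,t,k}+\rho_{j,t,k}\Delta\kappa_{ij,t,k}\bigr]+\text{h.o.t.}
\]
For the first term I would reuse the second-difference argument given after the correction-factor lemma, now at token level. With $\pi_{\mathrm{old},k}=\pi_{k-1}$, $\log\rho_{j,t,k}$ is a first difference, so $\Delta\log\rho_{j,t,k}=O(\eta^2)$ per token. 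For the second term, $\Delta\kappa_{ij,t}=(\Delta U_i)^\top u_{j,t}+U_i^\top\Delta u_{j,t}$. Bounded token gradients and Hessians together with Lemma~\ref{lemma:bounds} give $\|\Delta U_i\|=O(|\tau|\eta)$ and $\|\Delta u_{j,t}\|=O(\eta)$, hence $\Delta\kappa_{ij,t}=O(|\tau|\eta)$. Summing over $|\tau_j|$ tokens, with $G/Z_k=\Theta(1/|\tau|)$, gives $|\Delta\mathcal R|=O(|\tau|\eta)$.

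The denominator needs a lower bound. The non-cancellation assumption states that the active token contributions do not cancel asymptotically, so $\sum_t\kappa_{ij,t}=\Theta(|\tau|\cdot|\tau|)$ in the natural scaling of Lemma~\ref{lemma:K_pi}, where $|K_{ij}|=\Omega(|\tau|^2)$. Since $m\rho=1+O(\eta)$, this gives $|\mathcal R|=\Omega(|\tau|)$. Hence $\omega_{\mathcal R}=O(\eta)$.

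To compare with the policy rate, I would invoke Lemma~\ref{lemma:K_pi}, which gives $\omega_\pi=\Theta(|\tau|\|\Delta\theta\|)$, with $\|\Delta\theta\|=\Theta(\eta)$ from the DAPO update bound in the scalar-equivalence proof. Taking the ratio gives $\omega_{\mathcal R}/\omega_\pi=O(1/|\tau|)$. The concluding sentence of the statement then follows from the identity: the population term $\pi_{\mathrm{old},k}\mathcal R$ varies only through $\pi_{\mathrm{old},k}=\pi_{k-1}$, which is tracked at the policy scale, and through $\mathcal R$, which is slow.

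The main obstacle is the lower bound on $|\mathcal R|$. Without it, $\omega_{\mathcal R}$ can blow up near cancellations, so I would lean explicitly on the stated non-degeneracy hypothesis rather than try to derive it. A second point of care is making sure the $\Delta\rho$ contribution is genuinely second order uniformly over tokens within a fixed-clipping block.
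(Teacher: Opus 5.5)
Your proposal is correct and follows the paper's outline. The identity comes from the same cancellation of $\rho_{j,k}=\pi_k/\pi_{\mathrm{old},k}$. You are right that it is exact for the orthogonality-defined entry $r_{i,j,k}^{\mathrm{DAPO}}$, which is what the paper's proof actually uses under the scalar name, and only $O(\eta)$-accurate for $G/(\rho_{j,k}Z_k)$. The rate comes from the product rule on $x_{j,t,k}=m_{j,t,k}\rho_{j,t,k}\kappa_{ij,t,k}$, compared with $\omega_\pi=\Theta(|\tau|\|\Delta\theta\|)$ from Lemma~\ref{lemma:K_pi}.

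Where you differ is in how the relative rate of $\mathcal R$ is controlled.
\begin{itemize}
\item The paper argues token by token. It assumes every active $|\kappa_{ij,t,k}|=\Omega(|\tau|)$, which gives $\Delta x_{j,t,k}=x_{j,t,k}\,O(\|\Delta\theta\|)$. The non-cancellation hypothesis $|\sum_t x_{j,t,k}|=\Omega(\sum_t|x_{j,t,k}|)$ then yields $\omega_{\mathcal R}=O(\|\Delta\theta\|)$ for any fixed active set.
\item You bound $|\Delta\mathcal R|=O(|\tau|\eta)$ absolutely. You then lower-bound $|\mathcal R|=\Omega(|\tau|)$ via $\sum_t\kappa_{ij,t,k}=K_{ij,k}$, $|K_{ij,k}|=\Omega(|\tau|^2)$ and $m\rho=1+O(\eta)$.
\end{itemize}
Your route avoids the strong per-token scale assumption; single-token mixed kernels can be small or change sign. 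Since $\sum_t|\kappa_{ij,t,k}|=O(|\tau|^2)$ holds automatically, the aggregate scale assumption of Lemma~\ref{lemma:K_pi} already contains the non-cancellation you need. The price is that, as written, it uses the all-active regime $m_{j,t,k}\equiv 1$, justified by $\eta\ll\epsilon_{\mathrm{clip}}$ in the scalar-equivalence lemma. For a genuinely clipped active set you would need a separate lower bound on the active partial sum, which is what the paper's hypothesis supplies.

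Two minor repairs:
\begin{itemize}
\item Drop the second-difference claim $\Delta\log\rho_{j,t,k}=O(\eta^2)$. In the lemma's block $\pi_{\mathrm{old}}$ is fixed, so the one-step change of $\log\rho_{j,t,k}$ is the first-order quantity $u_{j,t,k}^\top\Delta\theta+O(\|\Delta\theta\|^2)$. Even under step-wise refresh, the second difference is small only if consecutive stochastic updates nearly coincide. The first-order bound the paper uses suffices: it contributes $O(|\tau|\|\Delta\theta\|)$ to $\Delta\mathcal R$, the same order as the $\Delta\kappa$ term.
\item The scalar-equivalence proof gives only $\|\Delta\theta\|=O(\eta)$, not $\Theta(\eta)$. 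Express every increment in units of $\|\Delta\theta\|$, as the paper does, so the final ratio with $\omega_\pi$ needs no lower bound on the step size.
\end{itemize}
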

\end{bluebox}

\begin{tcolorbox}[
    enhanced,                         
    breakable,
    title=Proof,               
    colback=cyan!5!white,             
    colframe=black,                   
    colbacktitle=black,               
    coltitle=white,                   
    fonttitle=\bfseries\sffamily,     
    attach boxed title to top left={   
        xshift=3mm, 
        yshift=-2mm, 
        yshifttext=-1mm 
    },
    boxed title style={                
        sharp corners, 
        size=small, 
        colframe=black
    },
    arc=2mm,                          
    boxrule=0.8pt,                    
]
\begin{proof}
From the definition of the DAPO correction factor,
\begin{equation}
    r_{j,k}^{\mathrm{DAPO}}
    =
    \frac{G}
    {\rho_{j,k}Z_k}
    \frac{
    \sum_{t=1}^{|\tau_j|}
    m_{j,t,k}\rho_{j,t,k}
    U_{i,k}(Q)^\top u_{j,t,k}(Q')
    }
    {K_{ij,k}(Q,Q')}.
\end{equation}
Multiplying both sides by
$\pi_k(\tau_j|Q')K_{ij,k}(Q,Q')$ gives
\begin{equation}
\begin{aligned}
    &
    \pi_k(\tau_j|Q')
    K_{ij,k}(Q,Q')
    r_{j,k}^{\mathrm{DAPO}}
    \\
    &=
    \frac{G}{Z_k}
    \frac{\pi_k(\tau_j|Q')}{\rho_{j,k}}
    \sum_{t=1}^{|\tau_j|}
    m_{j,t,k}\rho_{j,t,k}
    \kappa_{ij,t,k}(Q,Q').
\end{aligned}
\end{equation}
Since the sequence-level importance ratio satisfies
\begin{equation}
    \rho_{j,k}
    =
    \frac{
    \pi_k(\tau_j|Q')
    }{
    \pi_{\mathrm{old},k}(\tau_j|Q')
    },
\end{equation}
we obtain the exact cancellation
\begin{equation}
    \boxed{
    \pi_k(\tau_j|Q')
    K_{ij,k}(Q,Q')
    r_{j,k}^{\mathrm{DAPO}}
    =
    \pi_{\mathrm{old},k}(\tau_j|Q')
    \mathcal{R}_{ij,k}^{\mathrm{DAPO}}
    }.
\end{equation}
Therefore, it suffices to study the time scale of
$\mathcal{R}_{ij,k}^{\mathrm{DAPO}}$.

We first consider the token-level importance ratio
\begin{equation}
    \rho_{j,t,k}
    =
    \frac{
    \pi_{\theta_k}
    (\tau_{j,t}\mid Q',\tau_{j,<t})
    }{
    \pi_{\mathrm{old},k}
    (\tau_{j,t}\mid Q',\tau_{j,<t})
    }.
\end{equation}
Within one optimization block, the denominator is fixed. Hence, using a
first-order Taylor expansion,
\begin{equation}
\begin{aligned}
    \Delta\log\rho_{j,t,k}
    &=
    u_{j,t,k}(Q')^\top\Delta\theta
    +
    O(\|\Delta\theta\|^2)
    \\
    &=
    O(\|\Delta\theta\|),
\end{aligned}
\end{equation}
where bounded token-level gradients have been used. Therefore,
\begin{equation}
    \frac{\Delta\rho_{j,t,k}}
    {\rho_{j,t,k}}
    =
    O(\|\Delta\theta\|).
\end{equation}

We next consider the mixed token-level kernel
\begin{equation}
    \kappa_{ij,t,k}
    =
    U_{i,k}^\top u_{j,t,k}.
\end{equation}
By the bounded-gradient and bounded-Hessian assumptions used in
Lemma~2,
\begin{equation}
    \|U_{i,k}\|=O(|\tau|),
    \qquad
    \|\Delta U_{i,k}\|
    =
    O(|\tau|\|\Delta\theta\|),
\end{equation}
while for a single token,
\begin{equation}
    \|u_{j,t,k}\|=O(1),
    \qquad
    \|\Delta u_{j,t,k}\|
    =
    O(\|\Delta\theta\|).
\end{equation}
Consequently,
\begin{equation}
\begin{aligned}
    \Delta\kappa_{ij,t,k}
    &=
    (\Delta U_{i,k})^\top u_{j,t,k}
    +
    U_{i,k}^\top(\Delta u_{j,t,k})
    +
    O(\|\Delta\theta\|^2)
    \\
    &=
    O(|\tau|\|\Delta\theta\|).
\end{aligned}
\end{equation}
For active mixed-kernel entries of their natural scale
$|\kappa_{ij,t,k}|=\Omega(|\tau|)$, this yields
\begin{equation}
    \frac{
    \Delta\kappa_{ij,t,k}
    }{
    \kappa_{ij,t,k}
    }
    =
    O(\|\Delta\theta\|).
\end{equation}

Now define each active token contribution by
\begin{equation}
    x_{j,t,k}
    \triangleq
    m_{j,t,k}
    \rho_{j,t,k}
    \kappa_{ij,t,k}.
\end{equation}
On any interval where the clipping active set is fixed,
$\Delta m_{j,t,k}=0$. Hence,
\begin{equation}
\begin{aligned}
    \Delta x_{j,t,k}
    &=
    m_{j,t,k}
    \left(
    \kappa_{ij,t,k}\Delta\rho_{j,t,k}
    +
    \rho_{j,t,k}\Delta\kappa_{ij,t,k}
    \right)
    +
    O(\|\Delta\theta\|^2)
    \\
    &=
    x_{j,t,k}
    O(\|\Delta\theta\|).
\end{aligned}
\end{equation}
Since
\begin{equation}
    \mathcal{R}_{ij,k}^{\mathrm{DAPO}}
    =
    \frac{G}{Z_k}
    \sum_t x_{j,t,k},
\end{equation}
and $G/Z_k$ is fixed inside the optimization block, under the
non-degenerate-cancellation condition
\begin{equation}
    \left|
    \sum_t x_{j,t,k}
    \right|
    =
    \Omega\left(
    \sum_t |x_{j,t,k}|
    \right),
\end{equation}
we obtain
\begin{equation}
    \boxed{
    \omega_{\mathcal{R}}
    =
    \left|
    \frac{
    \Delta\mathcal{R}_{ij,k}^{\mathrm{DAPO}}
    }{
    \mathcal{R}_{ij,k}^{\mathrm{DAPO}}
    }
    \right|
    =
    O(\|\Delta\theta\|)
    }.
\end{equation}

From Lemma~2, the trajectory probability evolves at the rate
\begin{equation}
    \omega_\pi
    =
    \Theta(|\tau|\|\Delta\theta\|).
\end{equation}
Combining the two estimates therefore gives
\begin{equation}
    \boxed{
    \frac{\omega_{\mathcal{R}}}{\omega_\pi}
    =
    O\left(\frac{1}{|\tau|}\right)
    \ll 1
    }.
\end{equation}
Thus, after the exact cancellation of the sequence-level importance ratio,
the remaining DAPO correction consists only of token-level importance ratios
and mixed token-level kernels, both of which evolve adiabatically relative
to the trajectory probability.
\end{proof}
\end{tcolorbox}
Although DAPO admits a scalarized correction at first order, the temporal
variation of its higher-order residual may not be benignly  controlled. We therefore analyze
the exact correction entering the population dynamics, shown in the lemma.

We then prove that: under requirements $\eta\ll\epsilon<1,\eta\cdot|\tau|\ll1$, the following lemma stands:
\begin{bluebox}
\begin{lemma}[Adiabatic Property of Clipped Reward Signal]\leavevmode\\
The relative changing rate of the clipping boundary satisfies:
\begin{equation}
\frac{\omega_A}{\omega_\pi}\ll1    
\end{equation}
\end{lemma}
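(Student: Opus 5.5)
To show $\omega_A\ll\omega_\pi$, I would split $\bar A^{\mathrm{eff}}_{j,k}=m_{j,k}\,\bar A_{j,k}$ into two parts: the clipping mask $m_{j,k}$ (or the token masks $m_{j,t,k}$ for DAPO) and the conditional group-relative advantage $\bar A_{j,k}$ of Eq.~\eqref{eq:conditional-grpo-advantage}. I would bound each factor's relative rate of change separately.

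\textbf{Step 1: the mask is frozen.} Since $\pi_{\mathrm{old},k}=\pi_{k-1}$ is refreshed every step, $\log\rho_{j,k}=U_{j,k}^\top\Delta\theta_{k-1}+O(|\tau|\|\Delta\theta\|^2)$. With $\|\Delta\theta\|=O(\eta)$ and Lemma~\ref{lemma:bounds}, this is $O(\eta|\tau|)\ll1$; for DAPO the token ratio is even smaller, $\rho_{j,t,k}=1+O(\eta)$. Under $\eta|\tau|\ll 1$ and $\eta\ll\epsilon_{\mathrm{clip}}$, the ratio stays strictly inside $(1-\epsilon_{\mathrm{clip}},1+\epsilon_{\mathrm{clip}})$, so $m_{j,k}\equiv1$ throughout the window. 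For GSPO, $s_{j,k}=\rho_{j,k}^{1/|\tau_j|}=1+O(\eta)$, which is even safer. Hence $\Delta m_{j,k}=0$, and the mask contributes nothing to $\omega_A$.

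\textbf{Step 2: the smooth part.} The advantage $\bar A_{j,k}=\frac{G-1}{G}\bigl(r(\tau_j)-\bar r_k(Q')\bigr)$ depends on $k$ only through the mean reward $\bar r_k(Q')=\sum_\ell \pi_{k-1}(\tau_\ell|Q')r(\tau_\ell)$. Its increment is therefore a reward-weighted average of $\Delta\pi$: $\Delta\bar r_k=\sum_\ell\pi_{k-1}(\tau_\ell|Q')\,r(\tau_\ell)\,\omega_{\pi,\ell}$, and subtracting the mean gives $\Delta\bar r_k=\mathrm{Cov}_{\pi}(r,\omega_\pi)$. This is a \emph{centered} quantity: the common component of $f_\ell$ cancels, exactly as in the baseline cancellation motivating Step 3. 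Using the strategy coherence $f_\ell=f_S+O(\epsilon B)$ and the fact that correct trajectories concentrate within strategies, I would bound $|\Delta\bar r_k|\le\omega_\pi\cdot O(\epsilon)+O(\eta^2|\tau|)$. Dividing by $|\bar A_{j,k}|$, which is of order one away from the degenerate set where $r(\tau_j)=\bar r_k$ (an accuracy in $(0,1)$ by hypothesis $P_{acc,k}\in(0,1)$), yields $\omega_A/\omega_\pi=O(\epsilon)+O(\eta|\tau|)\ll1$.

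\textbf{Main obstacle.} The hard part is Step 2. Naively, $\Delta\bar r_k$ is of the same order as $\omega_\pi$, because reward mass shifts at the policy rate, so no separation follows without the cancellation. If the covariance argument via within-strategy coherence proves too weak, I would fall back on the second-difference trick used for $\rho$. Because the old policy lags by one step, the relevant advantage change is driven by $\log\pi_{k}-\log\pi_{k-1}$ evaluated at consecutive steps, and I would bound its variation by $O(\eta^2|\tau|)$. I would also exclude turning points near $r(\tau_j)=\bar r_k$, as was done for $f_S$.
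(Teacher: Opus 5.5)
You have a genuine gap in Step~2; Step~1 is fine.

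\textbf{Step~1 matches the paper.} It is, in substance, the paper's whole proof. The paper attributes all variation of $A$ to trajectories crossing the clipping boundary. It argues that $\Delta\log\rho\ll\min\{\epsilon_{\mathrm{low}},\epsilon_{\mathrm{high}}\}$, so only an $O(\eta^2|\tau|)$ fraction of trajectories cross per step, and concludes $\omega_A/\omega_\pi=O(\eta|\tau|)$. Your version says no trajectory crosses at all, because $|\log\rho_{j,k}|=O(\eta|\tau|)\ll\epsilon_{\mathrm{clip}}$. That is what the paper itself uses in its DAPO scalar-equivalence lemma, and it is slightly cleaner. The paper never treats the drift of the baseline $\bar r_k(Q')$ inside $\bar A_{j,k}$. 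Your Step~2 tries to, and that is where the argument fails.

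\textbf{Why the Step~2 bound does not follow.} The bound $|\Delta\bar r_k|\le\omega_\pi\,O(\epsilon)+O(\eta^2|\tau|)$ needs a cancellation that does not occur. By the law of total covariance,
$\mathrm{Cov}_\pi(r,f)=\mathbb E_\nu[\mathrm{Cov}(r,f\mid S)]+\mathrm{Cov}_\nu(\bar r_S,f_S)$.
Within-strategy coherence $f_\ell=f_S+O(\epsilon B)$ controls only the first term. Clustering of correct trajectories inside strategies also affects only the first term.

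The second term is the selection signal itself. The baseline $\bar r_k(Q')=\sum_\alpha\nu_{\alpha,Q',k-1}\bar r_{\alpha,Q'}$ is linear in the strategy masses, with slow coefficients. Hence $\Delta\bar r=\sum_\alpha\Delta\nu_{\alpha,Q'}(\bar r_{\alpha,Q'}-\bar r)$ moves at rate $\omega_\nu$ times the cross-strategy reward spread. That spread vanishes only in the equal-fitness case the paper excludes.

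\textbf{A concrete check.} Take GRPO on a single prompt. The first-order dynamics give exactly
$\Delta\bar r=\eta\frac{G-1}{G}\|\nabla_\theta\bar r\|^2$ and $\omega_{\pi,i}=\eta\frac{G-1}{G}|U_i^\top\nabla_\theta\bar r|$.
With rewards in $[0,1]$, Cauchy--Schwarz gives $\omega_A/\omega_\pi\ge\|\nabla_\theta\bar r\|/\|U_i\|$. Under the paper's own Step~4 scaling $C\sim\|g_k\|^2\sim\|U\|^2$, where $g_k\approx\frac{G-1}{G}\nabla_\theta\bar r$, this ratio is bounded below by a constant. So the cancellation you need would remove the very drift that drives collapse. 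Near saturation, $|\bar A_{j,k}|$ shrinks for correct trajectories, which makes $\omega_A$ larger, not smaller.

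\textbf{Why the fallback fails.} The second-difference trick works for $\rho$ because $\log\rho_{j,k}=\log\pi_k-\log\pi_{k-1}$ is already a first difference. By contrast, $\bar r_k$ is a level of $\pi_{k-1}$. Its one-step change $\sum_\ell\Delta\pi_{k-1}(\tau_\ell\mid Q')\,r(\tau_\ell)$ is an ordinary first difference, with nothing to cancel.

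\textbf{What to do instead.} Under the paper's own scalings, the smooth part of $\bar A^{\mathrm{eff}}$ is not adiabatic in general. What can be proved is only the mask part. That is what the paper's proof establishes, consistent with the statement's wording about the clipping boundary. Keep Step~1, drop Step~2, and state explicitly that the unclipped conditional advantage is being held fixed, as the paper implicitly does.
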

\end{bluebox}
\begin{tcolorbox}[
    enhanced,                         
    breakable,
    title=Proof,               
    colback=cyan!5!white,             
    colframe=black,                   
    colbacktitle=black,               
    coltitle=white,                   
    fonttitle=\bfseries\sffamily,     
    attach boxed title to top left={   
        xshift=3mm, 
        yshift=-2mm, 
        yshifttext=-1mm 
    },
    boxed title style={                
        sharp corners, 
        size=small, 
        colframe=black
    },
    arc=2mm,                          
    boxrule=0.8pt,                    
]
To prove this, we first consider the clipping effect, for trajectories that can produce normal gradients, they satisfy:
\begin{equation}
    1-\epsilon_{low}\le\rho_i\le1+\epsilon_{high}
\end{equation}
Or equivalently, to first order approximation:
\begin{equation}
    -\epsilon_{low}\le\log\rho_i\le\epsilon_{high}
\end{equation}
Given that $\epsilon$ is still an observable amount compared with $1$, therefore:
\begin{equation}
\eta|\tau|\ll\min\{\epsilon_{low},\epsilon_{high}\}
\end{equation}
Therefore, we claim that:
\begin{equation}
    \Delta\log\rho_i\ll\min\{\epsilon_{low},\epsilon_{high}\}
\end{equation}
This means that, during the update step, only a $\mathcal{O}(\eta^2|\tau|)$ proportion of trajectories will undergo the clipping transition; therefore, the relative change rate of the reward signal $A_i$ follows:
\begin{equation}
    \omega_A
    =
    \left|\frac{\Delta A_i}{A_i}\right|
    <
    \mathcal{O}(\eta^2|\tau|)
    \cdot
    \left|\frac{A_{\mathrm{clipped}}}{A_{\min}}\right|
    =
    \mathcal{O}(\eta^2|\tau|).
\end{equation}
Therefore, we can still conclude:
\begin{equation}
    \frac{\omega_A}{\omega_\pi}\sim\mathcal{O}(\eta|\tau|)\ll1
\end{equation}
\end{tcolorbox}

After proving the four lemmas above, we can already control the last three terms $\omega_K,\omega_A,\omega_r$. We now turn to the last term $q$. To analyze it, we decompose each trajectory probability into its strategy-level mass and its in-strategy distribution, and show that the latter evolves on a much slower time scale due to the geometric coherence of trajectories within the same \textbf{Strategy}. This establishes the adiabaticity of the in-strategy distribution and allows us to locally freeze it in the subsequent derivation.
\begin{bluebox}
\begin{lemma}[Adiabatic Property of Strategies]
\label{lemma:adiabatic_property_of_strategies}
For the fixed strategy $S=S_{\alpha,Q}$, let its relative changing rate be $\omega_\nu\triangleq\frac{\Delta\nu_{\alpha,Q,k}}{\nu_{\alpha,Q,k}}$. Then:
\begin{equation}
    \frac{\omega_\nu}{\omega_q}\gg1
\end{equation}
\end{lemma}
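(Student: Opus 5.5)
We compare the within-strategy replicator equation from Step 2 with the strategy-mass equation obtained by summing the trajectory dynamics over $S$. Summing the Step 1/Step 2 dynamics over $\tau_i\in S_{\alpha,Q}$ gives, to first order,
\begin{equation}
\Delta\nu_{\alpha,Q,k}=\eta\,\nu_{\alpha,Q,k}f_\alpha+(\text{zero-mean noise}),
\end{equation}
so the deterministic part of $\omega_\nu$ is $\eta|f_\alpha|$. The replicator equation~\ref{eq:replicator} gives $\Delta q_{i|\alpha,Q,k}/q_{i|\alpha,Q,k}=\eta(f_{i,k}-f_\alpha)$ plus noise. Hence the drift part of $\omega_q$ is
\begin{equation}
\omega_q=\eta\Bigl(\sum_{i\in S}q_{i|\alpha,Q,k}(f_{i,k}-f_\alpha)^2\Bigr)^{1/2},
\end{equation}
a within-strategy standard deviation of fitness. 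The lemma then reduces to the claim that this fitness spread is much smaller than $|f_\alpha|$.

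To bound the spread, we use Definition~\ref{def:strategy}. For $\tau_i\sim_\epsilon\tau_\ell$ in $S$, every coupling entry satisfies
\begin{equation}
|K_{ik}-K_{\ell k}|\le\epsilon\sup_{a,b}|K_{ab}|.
\end{equation}
Since $f_{i,k}$ is linear in the row $(K_{ij,k}(Q,Q'))_j$ with weights $\pi_k(\tau_j|Q')\bar A^{\mathrm{eff}}_{j,k}r_{j,k}$, we get
\begin{equation}
|f_{i,k}-f_{\ell,k}|\le\epsilon\sup|K|\cdot\frac{1}{|\mathcal D'|}\sum_{Q'}\sum_j\pi_k(\tau_j|Q')|\bar A^{\mathrm{eff}}_{j,k}r_{j,k}|=:\epsilon B.
\end{equation}
This recovers the coherence bound $|f_i-f_S|=O(\epsilon B)$ already used in Step 3, so $\omega_q=O(\eta\epsilon B)$. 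There is one gap: the definition compares only against the examined trajectories, and cross-prompt entries $K(Q,Q')$ also enter. To cover this, we use Lemma~\ref{lemma:bounds} and the Cauchy–Schwarz bound $\|U_i-U_\ell\|^2\le2\epsilon M^2|\tau|^2$ proved there. This gives
\begin{equation}
|K_{ij}(Q,Q')-K_{\ell j}(Q,Q')|\le\|U_i-U_\ell\|\,\|U_j\|=O(\sqrt{\epsilon}\,|\tau|^2),
\end{equation}
so in the worst case we obtain $O(\sqrt\epsilon B)$, which is still small for $\epsilon\ll1$.

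For the denominator, we work away from turning points, assuming $|f_\alpha|=\Theta(B)$ as in the $\omega_\pi\sim\omega_\nu$ argument. This needs the mean weighted coupling to be of the natural scale $|K|\sim|\tau|^2$, without cancellation. Then
\begin{equation}
\frac{\omega_\nu}{\omega_q}\gtrsim\frac{|f_\alpha|}{\epsilon B}\ \Bigl(\text{or }\frac{|f_\alpha|}{\sqrt\epsilon B}\Bigr)\gg1 .
\end{equation}
We also need the noise terms $\widetilde\xi^{(data)}$ and $\widetilde\xi^{(batch)}$ to be zero-mean and to inherit the same $\epsilon$-factor. Both are built from $U_i^\top(\cdot)$ projected onto the within-strategy simplex, so the common part cancels and only the difference $U_i-\bar U_S$ survives. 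Their conditional standard deviation is therefore also $O(\sqrt\epsilon)$ relative to the corresponding fluctuation of $\nu$.

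The main obstacle is the denominator: ensuring $|f_\alpha|$ is not itself $O(\epsilon B)$. Near a turning point the strategy is neither growing nor shrinking, and the separation of scales genuinely fails. I would handle this by restricting the claim to the set of steps where $|f_\alpha|\ge c B$. I would then argue, as in the hitting-time step, that the process spends only an $O(\eta)$-measure set of steps near such points, so that freezing $q$ is justified on the relevant window.
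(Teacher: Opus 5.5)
Your argument has the same skeleton as the paper's. You show $\omega_\nu\sim|f_\alpha|$, identify $\omega_q$ with the $q$-weighted within-strategy fitness spread $\sigma_S$, control $\sigma_S$ by the $\epsilon$-coherence of coupling rows, and handle the denominator by staying away from turning points.

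Two steps differ.

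\textbf{Strategy-mass drift.} You obtain $\Delta\nu_{\alpha,Q,k}=\eta\,\nu_{\alpha,Q,k}f_\alpha+\text{noise}$ by summing the Step~2 equation over $S$ directly. The paper instead goes through the zero-mean identity $\sum_i\pi_k(\tau_i|Q)f_{i,k}=0$ and the antisymmetric flows $J_{\alpha\beta,Q,k}=\nu_{\alpha,Q,k}\nu_{\beta,Q,k}(f_{\alpha,Q,k}-f_{\beta,Q,k})$. For this lemma that is a detour, although it produces the normalization $\sum_\beta\nu_{\beta,Q,k}f_{\beta,Q,k}=0$ that is reused later.

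\textbf{Cross-prompt coherence.} This is the more substantive difference. For non-examined and cross-prompt entries $K_{ij}(Q,Q')$, the paper uses Lemma~\ref{lemma:expanded_similarity}. That lemma assumes the scores $U_s(Q)$ are dense, picks $\tau_s$ with $\|U_s(Q)-U_j(Q')\|\le\epsilon$, and transfers Definition~\ref{def:strategy} to get an $O(\epsilon)$ bound. Your Cauchy--Schwarz bound $|(U_i-U_\ell)^\top U_j(Q')|\le\sqrt{2\epsilon\sup|K|}\,M|\tau|$ needs only Definition~\ref{def:strategy} and the norm bounds of Lemma~\ref{lemma:bounds}. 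It requires no density assumption and no guarantee that $\tau_s$ is an examined trajectory. The price is $\omega_q/\omega_\nu=O(\sqrt\epsilon)$ instead of $O(\epsilon)$. Since the lemma only claims $\gg1$, this suffices, though estimates elsewhere quoted as $O(\epsilon)$ would weaken to $O(\sqrt\epsilon)$ if they relied on it.

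Two further points go beyond the paper. Keeping $\sup|K|$ inside your $B$ restores a kernel scale that the paper's $B$ silently drops. Your observation that the within-strategy projection of the batch noise is $\eta\,q_{i|\alpha,Q,k}\bigl(U_i-\sum_{\ell\in S}q_{\ell|\alpha,Q,k}U_\ell\bigr)^\top(\cdot)$, and hence also suppressed by $\sqrt\epsilon$, is an addition. The paper compares drifts only.

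Drop your final step, though. Restricting to $|f_\alpha|\ge cB$ is exactly the paper's ``away from turning points'' hypothesis, and it is all the lemma needs. Your claim that only an $O(\eta)$-measure set of steps lies near turning points is unsupported, and it fails in the regime the paper cares about. The normalization $\sum_\beta\nu_{\beta,Q,k}f_{\beta,Q,k}=0$ forces $f_{o,Q,k}\to0$ for the strategy that is concentrating. Step~5 uses exactly this, writing $f_{o,Q,k}=-\Delta_{Q,k}/2$ with $|\Delta_{Q,k}|\le\bar\Delta\ll1$. So the dominant strategy sits near a turning point for most of the window after hitting. Keep the statement conditional, as the paper does.
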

\end{bluebox}

To begin with, we first prove a useful lemma regarding $f_i$:
\begin{bluebox}
\begin{lemma}[$f_i$ is unbiased]\leavevmode\\
For the $f_i$ definition given, we have:
\begin{equation}
    \sum_i\pi_\theta(\tau_i|Q)\cdot f_i=0
\end{equation}
\end{lemma}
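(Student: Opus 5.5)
The identity will follow from the score-function identity $\sum_{\tau\in\mathcal T_Q}\pi_\theta(\tau\mid Q)\,\nabla_\theta\log\pi_\theta(\tau\mid Q)=\nabla_\theta\sum_{\tau}\pi_\theta(\tau\mid Q)=\nabla_\theta 1=0$. The plan is to show that $f_{i,k}$ is \emph{linear in the focal score} $U_{i,k}(Q)$, with a coefficient vector that does not depend on $i$. Summing against $\pi_k(\tau_i\mid Q)$ then annihilates it. Throughout, the sum over $i$ is over the full admissible space $\mathcal T_Q$, not merely over the examined trajectories.

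\textbf{Step 1 (factor out the focal score).} By the definition of $K_{ij,k}(Q,Q')=U_{i,k}(Q)^\top U_{j,k}(Q')$, I would rewrite, for GRPO and GSPO,
\begin{equation}
f_{i,k}=U_{i,k}(Q)^\top g_k,\qquad
g_k\triangleq\frac{1}{|\mathcal D'|}\sum_{Q'}\sum_j\pi_k(\tau_j\mid Q')\,\bar A^{\mathrm{eff}}_{j,k}\,r_{j,k}\,U_{j,k}(Q').
\end{equation}
Here $g_k$ is exactly the averaged population drift $\bar g_k$ of Step~1, and it depends only on the training side $(Q',\tau_j)$, not on $\tau_i$. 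For GRPO $r_{j,k}=1$; for GSPO $r_{j,k}$ is a scalar depending only on $\tau_j$. In both cases the factorization is exact.

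\textbf{Step 2 (DAPO).} Here $r^{\mathrm{DAPO}}_{i,j,k}$ depends on $i$, so the scalarized factor $r^{\mathrm{DAPO}}_{j,k}$ would give the identity only up to $\mathcal O(\eta)$. To avoid that error, I would use the exact identity from the DAPO adiabatic lemma,
\begin{equation}
\pi_k(\tau_j\mid Q')K_{ij,k}r^{\mathrm{DAPO}}_{j,k}=\pi_{\mathrm{old},k}(\tau_j\mid Q')\,\mathcal R^{\mathrm{DAPO}}_{ij,k}.
\end{equation}
Since $\mathcal R^{\mathrm{DAPO}}_{ij,k}=U_{i,k}(Q)^\top\bigl(\tfrac{G}{Z_k}\sum_t m_{j,t,k}\rho_{j,t,k}u_{j,t,k}(Q')\bigr)$, this is again $U_{i,k}(Q)^\top$ times an $i$-independent vector. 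So $f_{i,k}=U_{i,k}(Q)^\top g^{\mathrm{DAPO}}_k$ holds exactly as well.

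\textbf{Step 3 (apply the score identity).} Then
\begin{equation}
\sum_i\pi_k(\tau_i\mid Q)f_{i,k}=\Bigl(\sum_i\pi_k(\tau_i\mid Q)U_{i,k}(Q)\Bigr)^{\!\top}g_k=\bigl(\nabla_\theta\textstyle\sum_i\pi_\theta(\tau_i\mid Q)\bigr)^{\!\top}g_k=0 .
\end{equation}

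\textbf{Main obstacle.} The delicate point is justifying the interchange of $\nabla_\theta$ with the (possibly countably infinite) sum over $\mathcal T_Q$, and the interchange of that sum with the sums over $Q'$ and $j$ hidden in $g_k$.

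For the first interchange, I would argue that with finite vocabulary and a maximum generation length, $\mathcal T_Q$ is finite, so the interchange is trivial. If unbounded length is allowed, I would instead invoke dominated convergence: by Lemma~\ref{lemma:bounds}, $\|\pi_\theta(\tau\mid Q)U(\tau)\|\le M|\tau|\,\pi_\theta(\tau\mid Q)$, and this bound is summable provided the length distribution has a finite mean.

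For the second interchange, the bounded advantages together with the same gradient bound make $g_k$ absolutely convergent, so pulling it out of the $i$-sum is legitimate.

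A secondary subtlety is the clipping mask in GRPO and GSPO. Because $m_j$ is attached to $\tau_j$ through $A^{\mathrm{eff}}_j$, and not to the focal $\tau_i$, it does not break the factorization.
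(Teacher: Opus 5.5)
Your proposal is correct and takes the same route as the paper. The paper's proof also factors $f_i=U_i(Q)^\top g$ with $g$ independent of $i$ and then annihilates it using $\sum_i\pi_\theta(\tau_i\mid Q)U_i(Q)=0$; your handling of DAPO and of the sum--gradient interchange adds rigor the paper skips. One small correction to your Step~2: the scalar $r^{\mathrm{DAPO}}_{j,k}=G/(\rho_{j,k}Z_k)$ depends only on $j$, so the identity already holds exactly for the $f_i$ it defines, and the $\mathcal O(\eta)$ discrepancy lies only between that $f_i$ and the true DAPO drift.
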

\end{bluebox}
\begin{tcolorbox}[
    enhanced,                         
    breakable,
    title=Proof,               
    colback=cyan!5!white,             
    colframe=black,                   
    colbacktitle=black,               
    coltitle=white,                   
    fonttitle=\bfseries\sffamily,     
    attach boxed title to top left={   
        xshift=3mm, 
        yshift=-2mm, 
        yshifttext=-1mm 
    },
    boxed title style={                
        sharp corners, 
        size=small, 
        colframe=black
    },
    arc=2mm,                          
    boxrule=0.8pt,                    
]
By definition, we can expand the summation above:
\begin{equation}
    \sum_i\pi_\theta(\tau_i|Q)\cdot f_i=\sum_{i,j}\pi_\theta(\tau_i|Q)\cdot K_{ij}(Q,Q')\cdot\pi_\theta(\tau_j|Q')\cdot A_j\cdot r_j
\end{equation}
Denote:
\begin{equation}
    U_i(Q)\triangleq\nabla_\theta\log(\tau_i|Q)
\end{equation}
then $K_{ij}(Q,Q')=U_i(Q)^\top\cdot U_j(Q)$, also it's obvious that:
\begin{equation}
    \sum_i\pi_\theta(\tau_i|Q)\cdot U_i(Q)=0
\end{equation}
Therefore:
\begin{equation}
    \sum_i\pi_\theta(\tau_i|Q)\cdot f_i=\sum_{j}\big(\sum_i\pi_\theta(\tau_i|Q)\cdot U_i(Q)^\top\big)\cdot U_j(Q')\cdot\pi_\theta(\tau_j|Q')\cdot A_j\cdot r_j=0
\end{equation}
Therefore we conclude:
\begin{equation}
    \sum_i\pi_\theta(\tau_i|Q)\cdot f_i=0
\end{equation}
\end{tcolorbox}
Using this beautiful lemma, we can rewrite the dynamics into an anti-symmetric form:
\begin{equation}
\Delta\pi_k(\tau_i|Q)=\eta\cdot\pi_k(\tau_i|Q)\cdot f_{i,k}+\xi_{i,k}=\eta\cdot\pi_k(\tau_i|Q)\cdot\sum_j\pi_k(\tau_j|Q)\cdot f_{i,k}+\xi_{i,k}
\end{equation}
Using $\sum_j\pi_k(\tau_j|Q)\cdot f_{j,k}=0$, we have:
\begin{equation}
\Delta\pi_k(\tau_i|Q)=\eta\cdot\sum_{i,j}\pi_k(\tau_i|Q)\cdot\pi_k(\tau_j|Q)\cdot(f_{i,k}-f_{j,k}) + \xi_{i,k}
\end{equation}
Let $J_{ij,k}\triangleq\pi_k(\tau_i|Q)\cdot\pi_k(\tau_j|Q)\cdot(f_{i,k}-f_{j,k})$, we have:
\begin{equation}
    \boxed{J_{ij,k}=-J_{ji,k}}
\end{equation}
For the fixed strategy $S=S_{\alpha,Q}$, expand $\Delta\nu_{\alpha,Q,k}$:
\begin{equation}
\Delta\nu_{\alpha,Q,k}=\sum_{\tau_i\in S_{\alpha,Q}}\Delta\pi_k(\tau_i|Q)
\end{equation}
Substitute $\Delta\pi_k(\tau_i|Q)$ with the dynamics derived before, we have:
\begin{equation}
\Delta\nu_{\alpha,Q,k}=\eta\cdot\sum_{\tau_i\in S_{\alpha,Q}}\sum_{\tau_j}J_{ij,k} + \xi_{\alpha,Q,k}
\end{equation}
Then split the right hand side into an intra-strategy term and inter-strategy term:
\begin{equation}
    \Delta\nu_{\alpha,Q,k}=\eta\cdot\sum_{\tau_i\in S_{\alpha,Q}}\sum_{\tau_j \in S_{\alpha,Q}}J_{ij,k} + \eta\cdot\sum_{\tau_i\in S_{\alpha,Q}}\sum_{\tau_j \notin S_{\alpha,Q}}J_{ij,k} + \xi_{\alpha,Q,k}
\end{equation}
Because of the anti-symmetric property of $J_{ij,k}$, the first term will vanish:
\begin{equation}
    \sum_{\tau_i\in S_{\alpha,Q}}\sum_{\tau_j\in S_{\alpha,Q}}J_{ij,k}=0
\end{equation}
Therefore, we have:
\begin{equation}
    \Delta\nu_{\alpha,Q,k}=\eta\cdot\sum_{\tau_i\in S_{\alpha,Q}}\sum_{\tau_j \notin S_{\alpha,Q}}J_{ij,k} + \xi_{\alpha,Q,k}
\end{equation}
We now show that the same anti-symmetric flow structure is exactly preserved after coarse-graining trajectories into strategies. For any two strategies $S=S_{\alpha,Q}$ and $T=S_{\beta,Q}$, define the strategy-level flow as
\begin{equation}
    J_{\alpha\beta,Q,k}
    \triangleq
    \sum_{\tau_i\in S_{\alpha,Q}}\sum_{\tau_j\in S_{\beta,Q}}J_{ij,k}.
\end{equation}
Expanding $J_{ij,k}$ gives
\begin{equation}
\begin{aligned}
J_{\alpha\beta,Q,k}
&=
\sum_{\tau_i\in S_{\alpha,Q}}\sum_{\tau_j\in S_{\beta,Q}}
\pi_k(\tau_i|Q)\pi_k(\tau_j|Q)(f_{i,k}-f_{j,k})\\
&=
\left(
\sum_{\tau_i\in S_{\alpha,Q}}\pi_k(\tau_i|Q)f_{i,k}
\right)
\left(
\sum_{\tau_j\in S_{\beta,Q}}\pi_k(\tau_j|Q)
\right)\\
&\quad-
\left(
\sum_{\tau_i\in S_{\alpha,Q}}\pi_k(\tau_i|Q)
\right)
\left(
\sum_{\tau_j\in S_{\beta,Q}}\pi_k(\tau_j|Q)f_{j,k}
\right).
\end{aligned}
\end{equation}
Recall that
\begin{equation}
    \nu_{\alpha,Q,k}
    =
    \sum_{\tau_i\in S_{\alpha,Q}}\pi_k(\tau_i|Q),
\end{equation}
and define the strategy-level fitness as
\begin{equation}
    f_{\alpha,Q,k}
    \triangleq
    \frac{1}{\nu_{\alpha,Q,k}}
    \sum_{\tau_i\in S_{\alpha,Q}}
    \pi_k(\tau_i|Q)f_{i,k}.
\end{equation}
Therefore,
\begin{equation}
    \sum_{\tau_i\in S_{\alpha,Q}}
    \pi_k(\tau_i|Q)f_{i,k}
    =
    \nu_{\alpha,Q,k}f_{\alpha,Q,k},
\end{equation}
and similarly for $S_{\beta,Q}$. Hence,
\begin{equation}
\begin{aligned}
J_{\alpha\beta,Q,k}
&=
\nu_{\alpha,Q,k}f_{\alpha,Q,k}\nu_{\beta,Q,k}
-
\nu_{\alpha,Q,k}\nu_{\beta,Q,k}f_{\beta,Q,k}\\
&=
\boxed{
\nu_{\alpha,Q,k}\nu_{\beta,Q,k}(f_{\alpha,Q,k}-f_{\beta,Q,k})
}.
\end{aligned}
\end{equation}
Thus, the trajectory-level anti-symmetric flow induces an exactly analogous strategy-level flow. In particular,
\begin{equation}
    \boxed{J_{\alpha\beta,Q,k}=-J_{\beta\alpha,Q,k}}.
\end{equation}

Therefore, the strategy dynamics can be written as
\begin{equation}
\Delta\nu_{\alpha,Q,k}
=
\eta\cdot
\sum_{\beta\neq\alpha}
\nu_{\alpha,Q,k}\nu_{\beta,Q,k}(f_{\alpha,Q,k}-f_{\beta,Q,k})
+\xi_{\alpha,Q,k}.
\end{equation}
Since the strategy partition covers the trajectory space, the $\beta=\alpha$ term is identically zero, and we may equivalently write
\begin{equation}
\Delta\nu_{\alpha,Q,k}
=
\eta\cdot
\nu_{\alpha,Q,k}
\sum_\beta
\nu_{\beta,Q,k}(f_{\alpha,Q,k}-f_{\beta,Q,k})
+\xi_{\alpha,Q,k}.
\end{equation}
Expanding the summation,
\begin{equation}
\begin{aligned}
\Delta\nu_{\alpha,Q,k}
&=
\eta\cdot\nu_{\alpha,Q,k}
\left[
f_{\alpha,Q,k}\sum_\beta\nu_{\beta,Q,k}
-
\sum_\beta\nu_{\beta,Q,k}f_{\beta,Q,k}
\right]
+\xi_{\alpha,Q,k}.
\end{aligned}
\end{equation}
Because
\begin{equation}
    \sum_\beta\nu_{\beta,Q,k}=1,
\end{equation}
and
\begin{equation}
\begin{aligned}
\sum_\beta\nu_{\beta,Q,k}f_{\beta,Q,k}
&=
\sum_\beta\sum_{\tau_i\in S_{\beta,Q}}
\pi_k(\tau_i|Q)f_{i,k}\\
&=
\sum_i\pi_k(\tau_i|Q)f_{i,k}
=
0,
\end{aligned}
\end{equation}
we finally obtain
\begin{equation}
    \boxed{
    \Delta\nu_{\alpha,Q,k}
    =
    \eta\cdot\nu_{\alpha,Q,k}f_{\alpha,Q,k}+\xi_{\alpha,Q,k}
    }.
\end{equation}

Then
\begin{equation}
    \boxed{
    \Delta\nu_{\alpha,Q,k}
    \sim
    \nu_{\alpha,Q,k} f_{\alpha,Q,k}
    }.
\end{equation}
Therefore, the relative time scale of the strategy mass is
\begin{equation}
    \boxed{
    \omega_\nu
    \triangleq
    \frac{|\Delta\nu_{\alpha,Q,k}|}{\nu_{\alpha,Q,k}}
    \sim
    |f_{\alpha,Q,k}|
    }.
\end{equation}

We now derive the dynamics within the strategy. Recall that
\begin{equation}
    q_{i|\alpha,Q,k}
    =
    \frac{\pi_k(\tau_i|Q)}{\nu_{\alpha,Q,k}},
\end{equation}
we have
\begin{equation}
\begin{aligned}
    \Delta q_{i|\alpha,Q,k}
    \sim
    q_{i|\alpha,Q,k}(f_i-f_{\alpha,Q,k}).
\end{aligned}
\end{equation}
Thus,
\begin{equation}
    \boxed{
    \frac{\Delta q_{i|\alpha,Q,k}}{q_{i|\alpha,Q,k}}
    \sim
    f_i-f_{\alpha,Q,k}
    }.
\end{equation}
Unlike $\nu_{\alpha,Q,k}$, whose evolution is driven by the absolute strategy-level fitness $f_{\alpha,Q,k}$, the intra-strategy redistribution is driven only by fitness fluctuations within the strategy.

To quantify this difference, define the RMS relative intra-strategy speed
\begin{equation}
    \omega_{q,S}^2
    \triangleq
    \sum_{i\in S}
    q_{i|\alpha,Q,k}
    \left(
    \frac{\Delta q_{i|\alpha,Q,k}}{q_{i|\alpha,Q,k}}
    \right)^2.
\end{equation}
Using the dynamics above,
\begin{equation}
\begin{aligned}
    \omega_{q,S}^2
    &=
    \sum_{i\in S}
    q_{i|\alpha,Q,k}(f_i-f_{\alpha,Q,k})^2.
\end{aligned}
\end{equation}
Define the within-strategy fitness fluctuation
\begin{equation}
    \sigma_S^2
    \triangleq
    \sum_{i\in S}
    q_{i|\alpha,Q,k}(f_i-f_{\alpha,Q,k})^2.
\end{equation}
Then
\begin{equation}
    \boxed{
    \omega_{q,S}
    =
    \sigma_S
    }.
\end{equation}
Combining the two exact identities gives
\begin{equation}
    \boxed{
    \frac{\omega_{q,S}}
    {\omega_\nu}
    =
    \frac{\sigma_S}{|f_{\alpha,Q,k}|}
    }.
\end{equation}

It remains to bound $\sigma_S$. Recall that the trajectory-level fitness takes the form
\begin{equation}
    f_i(Q)
    =
    \frac{1}{|\mathcal D'|}
    \sum_{Q'}
    \sum_j
    \pi_j(Q')
    K_{ij}(Q,Q')
    a_j(Q'),
\end{equation}
where
\begin{equation}
    a_j(Q')
    \triangleq
    \bar A_j^{\rm eff}(Q')r_j(Q').
\end{equation}
For two trajectories $i,\ell\in S_{\alpha,Q}$,
\begin{equation}
\begin{aligned}
    f_i-f_\ell
    =
    \frac{1}{|\mathcal D'|}
    \sum_{Q',j}
    \pi_j(Q')
    \left[
    K_{ij}(Q,Q')
    -
    K_{\ell j}(Q,Q')
    \right]
    a_j(Q').
\end{aligned}
\end{equation}
Before proceeding, we prove a useful lemma:
\begin{bluebox}
\begin{lemma}[Expanded $\epsilon-$similarity of strategies]\leavevmode\\
\label{lemma:expanded_similarity}
For $\tau_i,\tau_l\in S_{\alpha,Q}$, for any other trajectory $\tau_k$ generated under $Q'$, we still have:
\begin{equation}
    \sup_{Q',j}
    \|
    K_{ij}(Q,Q')
    -
    K_{\ell j}(Q,Q')
    \|
    \le
    \mathcal O(\epsilon)
\end{equation}
\end{lemma}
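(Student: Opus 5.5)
The plan is to pass from the kernel-profile condition in Def.~\ref{def:strategy} to a statement about the score vectors themselves. Once $U_{i,k}(Q)$ and $U_{\ell,k}(Q)$ are shown to be close in norm, Cauchy--Schwarz extends the closeness to inner products against \emph{any} score $U_{j,k}(Q')$, including trajectories under other prompts that were never among the examined trajectories.

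\textbf{Step 1 (closeness of scores).} Take $\tau_i,\tau_\ell\in S_{\alpha,Q}$. Since $\mathcal N_i(\epsilon)=\mathcal N_\ell(\epsilon)$ and $\tau_\ell\in\mathcal N_\ell(\epsilon)$, we have $\tau_\ell\in\mathcal N_i(\epsilon)$. Hence $\sup_k|K_{ik}-K_{\ell k}|\le\epsilon\sup_{a,b}|K_{ab}|$ over the examined trajectories. Specialize $k$ to $i$ and to $\ell$, exactly as in the proof of Lemma~\ref{lemma:bounds}, and add the two estimates:
\begin{equation}
\|U_{i,k}(Q)-U_{\ell,k}(Q)\|^2=K_{ii}+K_{\ell\ell}-2K_{i\ell}\le 2\epsilon\sup_{a,b}|K_{ab}|\le 2\epsilon M^2|\tau|^2.
\end{equation}
The last inequality uses the upper bound of Lemma~\ref{lemma:bounds}. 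This bound is stated at the partition step, step $0$.

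\textbf{Step 2 (extension to arbitrary $Q'$).} For any $\tau_j$ generated under any $Q'$, Cauchy--Schwarz gives
\begin{equation}
|K_{ij}(Q,Q')-K_{\ell j}(Q,Q')|\le\|U_{i}(Q)-U_{\ell}(Q)\|\,\|U_j(Q')\|\le\sqrt{2\epsilon}\,M^2|\tau|^2.
\end{equation}
Relative to the natural kernel scale $|K|=\Theta(|\tau|^2)$ from Lemma~\ref{lemma:bounds} and Lemma~\ref{lemma:K_pi}, this is a deviation of relative order $\sqrt\epsilon$, uniformly in $(Q',j)$.

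\textbf{Step 3 (sharpening to $\mathcal O(\epsilon)$, the main obstacle).} Steps 1--2 naively give $\mathcal O(\sqrt\epsilon)$, not $\mathcal O(\epsilon)$. I would try two routes.

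The first route is to project $U_j(Q')$ onto the span of the examined scores. Write $U_j(Q')=\sum_k c_kU_k+w$. The component along the span contributes $\sum_k c_k(K_{ik}-K_{\ell k})$, which is $\mathcal O(\epsilon)$ times $\|c\|_1$. The orthogonal residual $w$ would then need to be controlled by assuming the examined trajectories are rich enough that $w$ is negligible. Making this precise is where I expect the real difficulty.

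The second route, if the projection argument fails, is to accept $\mathcal O(\sqrt\epsilon)$. This weaker bound suffices downstream, because $\sigma_S/|f_{\alpha,Q,k}|$ only needs to be $o(1)$ as $\epsilon\to0$.

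\textbf{Step 4 (propagation over the training window).} The lemma is used at step $k$, not at step $0$. So the bound must be carried forward using Lemma~\ref{lemma:K_pi}: $\omega_K=\mathcal O(\eta)$ per step. Over the adiabatic window $k\ll|\tau|$ this accumulates a relative drift of $\mathcal O(k\eta)\ll\epsilon$, since $|\tau|\ll\eta^{-1}$, which preserves the estimate.
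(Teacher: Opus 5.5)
Your Steps 1--2 are correct, but they give only a relative bound of order $\sqrt{\epsilon}$, while the lemma claims $\mathcal O(\epsilon)$. Step 3 does not close this gap, and route 2 gives up the stated claim rather than proving it. The factor is lost in Step 2, where you apply Cauchy--Schwarz to the whole vector $U_j(Q')$.

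The paper avoids this with an approximation (density) hypothesis. Because $Q$ has essentially infinitely many trajectories, it picks $\tau_s$ under $Q$ with $\|U_s(Q)-U_j(Q')\|\le\epsilon$ and splits
\[
|K_{ij}(Q,Q')-K_{\ell j}(Q,Q')|\le|K_{is}(Q,Q)-K_{\ell s}(Q,Q)|+\|U_i(Q)-U_\ell(Q)\|\,\|U_j(Q')-U_s(Q)\|.
\]
The first term is at most $\epsilon\sup_{a,b}|K_{ab}|$ directly by Def.~\ref{def:strategy}, implicitly taking $\tau_s$ among the trajectories the profile condition covers. Cauchy--Schwarz is then paid only on the small residual, and your Step 1 bound on $\|U_i-U_\ell\|$ makes that term lower order.

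Some hypothesis of this kind is unavoidable, because Def.~\ref{def:strategy} alone does not imply $\mathcal O(\epsilon)$. Take two examined scores $U_1=Le_1$ and $U_2=L(e_1+\sqrt{\epsilon}\,e_2)$ with $e_1,e_2$ orthonormal. Then $K_{11}=K_{12}=L^2$ and $K_{22}=(1+\epsilon)L^2$. So $K_{12}>0$, $\sup_k|K_{1k}-K_{2k}|=\epsilon L^2\le\epsilon\sup_{a,b}|K_{ab}|$, and $\mathcal N_1(\epsilon)=\mathcal N_2(\epsilon)=\{\tau_1,\tau_2\}$. Yet $U_j(Q')=Le_2$ gives $|K_{1j}-K_{2j}|=\sqrt{\epsilon}\,L^2$. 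Your $\sqrt{\epsilon}$ is therefore sharp without extra structure.

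The same example shows that route 1 targets the wrong obstacle. The orthogonal residual $w$ is harmless: $U_i-U_\ell$ lies in the span of the examined scores, so $\langle U_i-U_\ell,w\rangle=0$ automatically, with no richness assumption needed. What actually fails is $\|c\|_1$, which blows up when the examined scores are nearly collinear. Above, $Le_2=\epsilon^{-1/2}(U_2-U_1)$, so $\|c\|_1=2\epsilon^{-1/2}$, and $\|c\|_1\,\epsilon\sup_{a,b}|K_{ab}|$ is again of order $\sqrt{\epsilon}$ relative to the kernel scale. To reach $\mathcal O(\epsilon)$ you must import either the paper's approximation hypothesis or a uniform bound on $\|c\|_1$, i.e.\ a conditioning assumption on the Gram matrix of the examined scores.

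Your remark that $\mathcal O(\sqrt{\epsilon})$ suffices downstream is reasonable: the uses in $\sigma_S$, $\omega_\pi/\omega_\nu$ and $b_{a,k}$ only need these quantities to vanish as $\epsilon\to0$. But that establishes a weaker lemma, not the stated one.

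Separately, in Step 4, $|\tau|\ll\eta^{-1}$ gives only $k\eta\ll1$, not $k\eta\ll\epsilon$. Carrying the step-$0$ bound to step $k$ requires an additional ordering such as $\eta|\tau|\lesssim\epsilon$.
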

\end{bluebox}
\begin{tcolorbox}[
    enhanced,                         
    breakable,
    title=Proof,               
    colback=cyan!5!white,             
    colframe=black,                   
    colbacktitle=black,               
    coltitle=white,                   
    fonttitle=\bfseries\sffamily,     
    attach boxed title to top left={   
        xshift=3mm, 
        yshift=-2mm, 
        yshifttext=-1mm 
    },
    boxed title style={                
        sharp corners, 
        size=small, 
        colframe=black
    },
    arc=2mm,                          
    boxrule=0.8pt,                    
]
\begin{proof}
Note that the trajectories generated given $Q$ is nearly $\infty$, so obviously, for trajectory $\tau_s$, the fisher score $U_s(Q)$ should be desne within the vector space. Therefore, we can find one trajectory $s$ satisfying
\begin{equation}
\|U_{s}(Q)-U_{j}(Q')\|\le\epsilon
\end{equation}

Therefore:
\begin{equation}
    \|K_{ij}(Q,Q')-K_{\ell j}(Q,Q')\|=\|(U_i(Q)-U_\ell(Q))U_j(Q')\|
\end{equation}

Using Cauchy inequality:
\begin{equation}
\begin{aligned}
    \|K_{ij}(Q,Q')-K_{\ell j}(Q,Q')\|&\le\|K_{is}(Q,Q)-K_{\ell s}(Q,Q)\| \\
    & + \|(U_i(Q)-U_\ell(Q))(U_j(Q')-U_s(Q))\|
\end{aligned}
\end{equation}

Using Definition~\ref{def:strategy}, we have:
\begin{equation}
        \sup_{Q',j}
    \|
    K_{ij}(Q,Q')
    -
    K_{\ell j}(Q,Q')
    \|
    \le
    \mathcal O(\epsilon)
\end{equation}

This complets the lemma.
\end{proof}
\end{tcolorbox}
Then
\begin{equation}
\begin{aligned}
    |f_i-f_\ell|
    &\le
    \frac{\epsilon}{|\mathcal D'|}
    \sum_{Q',j}
    \pi_k(\tau_j|Q')
    |a_j(Q')|.
\end{aligned}
\end{equation}
Define
\begin{equation}
    B
    \triangleq
    \frac{1}{|\mathcal D'|}
    \sum_{Q',j}
    \pi_k(\tau_j|Q')|a_j(Q')|.
\end{equation}
Therefore,
\begin{equation}
    \boxed{
    |f_i-f_\ell|
    \le
    \epsilon  B
    }.
\end{equation}
Hence the fitness variation within a strategy is uniformly of order $O(\epsilon B)$, which implies
\begin{equation}
    \boxed{
    \sigma_S
    =
    O(\epsilon B)
    }.
\end{equation}

Away from turning points where the strategy-level mean fitness vanishes, assume
\begin{equation}
    |f_{\alpha,Q,k}|
    =
    O(B).
\end{equation}
We then obtain
\begin{equation}
\begin{aligned}
    \frac{\omega_{q,S}}
    {\omega_\nu}
    &=
    \frac{\sigma_S}{|f_{\alpha,Q,k}|}\\
    &=
    O(\epsilon )
    \ll 1.
\end{aligned}
\end{equation}
Therefore,
\begin{equation}
    \boxed{
    \omega_q
    \ll
    \omega_\nu
    }.
\end{equation}

Thus, strategy selection through $\nu_{\alpha,Q,k}$ occurs on the fast time scale governed by the strategy-level mean fitness, whereas redistribution among trajectories within the same strategy is driven only by the much smaller intra-strategy fitness fluctuations. Consequently, $q_{i|\alpha,Q,k}$ can be treated as adiabatically frozen over the local time scale on which $\nu_{\alpha,Q,k}$ evolves.

Having established the preceding adiabatic estimates, the quantities \(q,A,K,r\) remain effectively frozen over time zone $k\ll|\tau|$.  \textbf{Hence, we can restrict attention to the strategy-level evolution.}
\hypertarget{proposition1:back}{}
\hypertarget{thm1:step4}{}
\subsubsection{{Step 4: Hitting time analysis}}
\hyperlink{back:thm1:step4}{\color{blue}{Back}}
We prove that the strategy dynamics induced by GRPO/GSPO/DAPO admits a monotonic potential function, which inevitably drives the strategy distribution toward concentration under mild generic conditions.

Fix an optimization step $k$.  For each query $Q$, write its active
strategy partition as $\{S_{\alpha,Q}\}_{\alpha=1}^{m_Q}$ and define
\begin{equation}
    \nu_{\alpha,Q,k}
    \triangleq
    \sum_{\tau_i\in S_{\alpha,Q}}\pi_k(\tau_i\mid Q),
    \qquad
    q_{i\mid\alpha,Q,k}
    \triangleq
    \frac{\pi_k(\tau_i\mid Q)}{\nu_{\alpha,Q,k}},
    \quad \tau_i\in S_{\alpha,Q}.
\end{equation}
Thus $\sum_{i\in S_{\alpha,Q}}q_{i\mid\alpha,Q,k}=1$.  The fitness of
$S_{\alpha,Q}$ is the conditional trajectory average

\begin{equation}
f_{\alpha,Q,k}
=
\sum_{\tau_i\in S_{\alpha,Q}}
q_{i\mid\alpha,Q,k}f_{i,k}(Q) .
\end{equation}

According to the trajectory-level evolution derived in Step 2,

\begin{equation}
f_{i,k}(Q)
=
\frac1{|\mathcal D'|}
\sum_{Q'\in\mathcal D'}\sum_j
\pi_k(\tau_j\mid Q')
K_{ij,k}(Q,Q')
a_{j,k}(Q'),
\qquad
a_{j,k}(Q')
\triangleq
\bar A_{j,k}^{\mathrm{eff}}r_{j,k} .
\end{equation}

Therefore,

\begin{equation}
\boxed{
f_{\alpha,Q,k}
=
\frac1{|\mathcal D'|}
\sum_{Q'\in\mathcal D'}\sum_j
\pi_k(\tau_j\mid Q')
a_{j,k}(Q')
\sum_{i\in S_{\alpha,Q}}
q_{i\mid\alpha,Q,k}
K_{ij,k}(Q,Q').}
\end{equation}

This indicates that the evolution of one strategy is determined by the
gradient coupling with trajectories from all training problems.

For a strategy $S_{\beta,Q'}$, define its conditional mean effective
coefficient and its mean score by
\begin{equation}
    a_{\beta,Q',k}
    \triangleq
    \sum_{j\in S_{\beta,Q'}}
    q_{j\mid\beta,Q',k}a_{j,k}(Q'),
    \qquad
    \overline U_{\beta,Q',k}
    \triangleq
    \sum_{j\in S_{\beta,Q'}}
    q_{j\mid\beta,Q',k}U_{j,k}(Q').
\end{equation}
Using $K_{ij,k}(Q,Q')=U_{i,k}(Q)^\top U_{j,k}(Q')$, the strategy
fitness can then be written as
\begin{equation}
\begin{aligned}
f_{\alpha,Q,k}
&=
\frac{1}{|\mathcal D'|}
\sum_{Q'\in\mathcal D'}\sum_{\beta=1}^{m_{Q'}}
\nu_{\beta,Q',k}a_{\beta,Q',k}
\overline U_{\alpha,Q,k}^{\top}
\overline U_{\beta,Q',k}
+R_{\alpha,Q,k}^{(a)},
\end{aligned}
\end{equation}
where $R_{\alpha,Q,k}^{(a)}$ is the within-strategy coefficient
factorization error quantified below.  The strategy evolution is
\begin{equation}
\Delta\nu_{\alpha,Q,k}
=
\eta\nu_{\alpha,Q,k}f_{\alpha,Q,k}
+\xi_{\alpha,Q,k}.
\end{equation}

Before proceeding, we combine the adiabatic results in \textbf{Step 3} to
show that $a_{\beta,Q',k}$ is also slow relative to the strategy mass in
GRPO, GSPO, and DAPO.
\begin{bluebox}
\begin{lemma}[The adiabatic property of $a_{\beta,Q',k}$]
\label{lemma:existence_of_strategy_reward}
Let $\omega_a$ be the relative changing rate of $a_{\beta,Q',k}$ and
retain $\omega_\nu$ for that of $\nu_{\beta,Q',k}$.  Then
\begin{equation}
\frac{\omega_a}{\omega_\nu}\ll1
\end{equation}
\end{lemma}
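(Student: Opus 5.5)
The plan is to write $a_{\beta,Q',k}=\sum_{j\in S_{\beta,Q'}}q_{j\mid\beta,Q',k}\,a_{j,k}(Q')$ with $a_{j,k}=\bar A^{\mathrm{eff}}_{j,k}r_{j,k}$, and to bound its one-step relative change by the relative changes of its ingredients. These ingredients were already controlled in Step 3, so the lemma should follow by combining those estimates.

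First, I will take the discrete product rule:
\begin{equation}
\Delta a_{\beta,Q',k}=\sum_{j\in S_{\beta,Q'}}\Bigl(\Delta q_{j\mid\beta,Q',k}\,a_{j,k}+q_{j\mid\beta,Q',k}\,a_{j,k}\bigl(\tfrac{\Delta\bar A^{\mathrm{eff}}_{j,k}}{\bar A^{\mathrm{eff}}_{j,k}}+\tfrac{\Delta r_{j,k}}{r_{j,k}}\bigr)\Bigr)+\text{(second order)}.
\end{equation}

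The second group is bounded by $(\omega_A+\omega_r)\sum_j q_j|a_{j,k}|$. Lemma~\ref{adiabatic:rho} and the clipped-reward lemma give $\omega_A,\omega_r\ll\omega_\pi$. For DAPO, I will instead use the exact identity $\pi_kK r^{\mathrm{DAPO}}=\pi_{\mathrm{old}}\mathcal R^{\mathrm{DAPO}}$ and the adiabaticity of $\mathcal R^{\mathrm{DAPO}}$. To do this, I will interpret $a_{\beta,Q',k}$ through the combination in which it enters $f_{\alpha,Q,k}$, so that the non-adiabatic raw factor is replaced by $\mathcal R$.

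For the first group, I will use that $\sum_j\Delta q_j=0$, which lets me subtract the mean:
\begin{equation}
\sum_j\Delta q_j\,a_j=\sum_j q_j\,\frac{\Delta q_j}{q_j}\,(a_j-a_\beta).
\end{equation}
Cauchy--Schwarz then bounds this by $\omega_{q}\cdot\bigl(\sum_jq_j(a_j-a_\beta)^2\bigr)^{1/2}\le\omega_q\,\max_j|a_j|$. By Lemma~\ref{lemma:adiabatic_property_of_strategies}, $\omega_q=O(\epsilon)\,\omega_\nu\ll\omega_\nu$.

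Dividing by $|a_{\beta,Q',k}|$ gives
\begin{equation}
\omega_a\lesssim\kappa_\beta\,(\omega_q+\omega_A+\omega_r),\qquad \kappa_\beta\triangleq\frac{\max_j|a_j|}{|a_\beta|}.
\end{equation}
Finally, I will invoke $\omega_\pi\sim\omega_\nu$ from the start of Step 3 to conclude $\omega_a/\omega_\nu\ll1$. The second-order cross terms are $O(\eta^2)$ and are negligible inside the window $k\ll|\tau|$.

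The main obstacle is the ratio $\kappa_\beta$: the mean $a_\beta$ may be small through cancellation, since correct and incorrect trajectories have advantages of opposite sign. To handle this, I will first try to restrict to strategies away from such turning points. This mirrors the assumption $|f_{\alpha,Q,k}|=O(B)$ already made in Step 3, namely a non-degeneracy condition $|a_\beta|=\Omega(\max_j|a_j|)$.

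If that is too strong, I would fall back on measuring the change in absolute rather than relative terms, $|\Delta a_\beta|\ll\omega_\nu B$. This is all that is needed when $a_\beta$ enters $f_{\alpha,Q,k}$ multiplied by $\nu_\beta$. The justification is that the error of freezing $a$ is then small relative to the $O(\omega_\nu B)$ drift of $f$.
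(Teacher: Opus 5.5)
Your argument is essentially the paper's: it too bounds $\omega_a$ by $\omega_q+\omega_A+\omega_r$ using the Step~3 lemmas and $\omega_\pi\sim\omega_\nu$. Your mean-subtracted Cauchy--Schwarz bound on the $\Delta q$ term and the explicit factor $\kappa_\beta$ only make precise what the paper's ``controlled by'' leaves implicit; the paper tacitly assumes $\kappa_\beta$ is bounded, much as it excludes turning points for $f_{\alpha,Q,k}$. The one real difference is DAPO, where the paper does not reinterpret $a_{\beta,Q',k}$ through the combination entering $f_{\alpha,Q,k}$. Instead it inverts $\mathcal R^{\mathrm{DAPO}}_{ij}=\rho_jK_{ij}r^{\mathrm{DAPO}}_{ij}$, so $\Delta\log r=\Delta\log\mathcal R-\Delta\log K-\Delta\log\rho$, and $\omega_{\mathcal R},\omega_K,\omega_\rho\ll\omega_\nu$ then make the raw factor itself slow. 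You can use this directly, and it proves the lemma for $a_{\beta,Q',k}$ as defined rather than for a modified quantity.
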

\end{bluebox}

\begin{tcolorbox}[
    enhanced,                         
    breakable,
    title=Proof,               
    colback=cyan!5!white,             
    colframe=black,                   
    colbacktitle=black,               
    coltitle=white,                   
    fonttitle=\bfseries\sffamily,     
    attach boxed title to top left={   
        xshift=3mm, 
        yshift=-2mm, 
        yshifttext=-1mm 
    },
    boxed title style={                
        sharp corners, 
        size=small, 
        colframe=black
    },
    arc=2mm,                          
    boxrule=0.8pt,                    
]
Recall that
\begin{equation}
a_{\beta,Q',k}
=
\sum_{j\in S_{\beta,Q'}}
q_{j\mid\beta,Q',k}
\bar A_{j,k}^{\mathrm{eff}}r_{j,k}.
\end{equation}

In GRPO, GSPO scenarios, we have already obtained:
\begin{equation}
\omega_q\ll\omega_\nu
\end{equation}
\begin{equation}
\omega_r\ll\omega_\nu
\end{equation}
\begin{equation}
\omega_U\sim\omega_K\ll\omega_\nu
\end{equation}
\begin{equation}
\omega_A\ll\omega_\nu    
\end{equation}
Consequently, $\omega_a$ is controlled by
$\omega_q+\omega_A+\omega_r$ and hence
$\omega_a\ll\omega_\nu$ for GRPO and GSPO.

For DAPO scenario, the previous analysis cannot be fully repeated, the difficulty lies in $\omega_r$. Instead we seek another approach by $\omega_\mathcal{R}$. First we recall the definition of $\mathcal{R}_{ij}$:
\begin{equation}
\mathcal{R}_{ij}^{\mathrm{DAPO}}
\triangleq
\frac{G}{Z}\sum_t m_{j,t}\rho_{j,t}\,U_i^\top u_{j,t}=\rho_jK_{ij}(Q,Q')r_{ij}^{DAPO}
\end{equation}
Note that $\omega_K\ll\omega_\nu$, $\omega_\rho\ll\omega_\nu$ and $\omega_{\mathcal{R}}\ll\omega_\nu$, we have:
\begin{equation}
\omega_r^{DAPO}=(\omega_{\mathcal{R}}-\omega_K-\omega_{\rho})\sim\omega_{\mathcal{R}}
\ll\omega_\nu
\end{equation}
Then following the standard procedure before, we also have:
\begin{equation}
\omega_a^{DAPO}\ll\omega_\nu
\end{equation}
\end{tcolorbox}
This lemma allows $a_{\beta,Q',k}$ to be frozen on the local validity
window.  Below, $a_{\beta,Q'}$ denotes this frozen value; all remaining
$k$-dependence is carried by the strategy masses and mean scores. To facilitate our following analysis, we prove another generic property of $a_{\beta,Q'}$ and $a_{\alpha,Q}$:
\begin{bluebox}
\begin{lemma}[Impossibility of exact equivalence]\leavevmode\\
\label{lemma:impossiblity_of_equivalence}
Let $\{a_{\alpha,Q}\}_{\alpha=1}^{m_{\epsilon,Q}}$ be the fitness set of strategies partitioned on Q.
The fitness satisfies a generic property:
\begin{equation}
\mathbb P(a_{\alpha,Q}=a_{\beta,Q})=0,
\qquad
\forall\,\alpha\neq\beta.
\end{equation}
\end{lemma}
\end{bluebox}
\begin{tcolorbox}[
    enhanced,                         
    breakable,
    title=Proof,               
    colback=cyan!5!white,             
    colframe=black,                   
    colbacktitle=black,               
    coltitle=white,                   
    fonttitle=\bfseries\sffamily,     
    attach boxed title to top left={   
        xshift=3mm, 
        yshift=-2mm, 
        yshifttext=-1mm 
    },
    boxed title style={                
        sharp corners, 
        size=small, 
        colframe=black
    },
    arc=2mm,                          
    boxrule=0.8pt,                    
]
\begin{proof}
The trajectory-level fitness $a_{j,Q}=\bar A^{\mathrm{eff}}_{j,k}r_{j,k}$
is piecewise continuous in the underlying importance ratio $\rho_{j,k}$. Clipping
may create an atom at zero for an individual trajectory, but the strategy
fitness
\begin{equation}
a_{\alpha,Q}
=
\sum_{j\in S_{\alpha,Q}}q_{j|\alpha,Q}\,a_{j,Q}
\end{equation}
averages over a countably infinite family of trajectories. Except for the
degenerate event that all effective contributions are clipped or structurally
cancel, a strategy therefore contains an active trajectory with positive
weight whose fitness varies continuously.

For two distinct strategies $\alpha\neq\beta$, suppose that
$S_{\alpha,Q}$ contains an active trajectory $j^\star$ with
$q_{j^\star|\alpha,Q}>0$.  Fix all other trajectory contributions and denote
the varying contribution of $j^\star$ by $x$.  Then the two strategy fitnesses
can be written as
\begin{equation}
a_{\alpha,Q}=c+q_{j^\star|\alpha,Q}x,
\qquad
a_{\beta,Q}=d,
\end{equation}
where $c$ and $d$ are fixed by the remaining trajectories.  Therefore,
$a_{\alpha,Q}=a_{\beta,Q}$ requires
\begin{equation}
x=\frac{d-c}{q_{j^\star|\alpha,Q}},
\end{equation}
i.e., one exact value of a continuously varying quantity.  Such an event has
probability \textbf{zero}.

Thus, unless all active contributions vanish, cancel identically, or the two
strategies are related by an exact structural symmetry,
\begin{equation}
\mathbb P\!\left(a_{\alpha,Q}=a_{\beta,Q}\right)=0,
\qquad \forall\,\alpha\neq\beta .
\end{equation}
\end{proof}
\end{tcolorbox}
We then construct the potential function.
With the coefficients frozen as above, define
\begin{equation}
    \Phi_k
    \triangleq
    \frac1{|\mathcal D|}
    \sum_{Q\in\mathcal D}
    \sum_{\alpha=1}^{m_Q}
    a_{\alpha,Q}\nu_{\alpha,Q,k}.
    \label{eq:phi-definition}
\end{equation}
Also define the two strategy-level empirical vectors
\begin{equation}
\begin{aligned}
    \widehat g_{\mathcal D,k}
    &\triangleq
    \frac1{|\mathcal D|}
    \sum_{Q\in\mathcal D}
    \sum_{\alpha=1}^{m_Q}
    a_{\alpha,Q}\nu_{\alpha,Q,k}
    \overline U_{\alpha,Q,k},\\
    \widehat g_{\mathcal D',k}
    &\triangleq
    \frac1{|\mathcal D'|}
    \sum_{Q'\in\mathcal D'}
    \sum_{\beta=1}^{m_{Q'}}
    a_{\beta,Q'}\nu_{\beta,Q',k}
    \overline U_{\beta,Q',k}.
\end{aligned}
\label{eq:two-empirical-strategy-vectors}
\end{equation}
Substituting the strategy fitness into Eq.~\eqref{eq:phi-definition}
gives
\begin{equation}
    \Delta\Phi_k
    =
    \eta D_k+\Xi_k,
    \qquad
    D_k
    \triangleq
    \left\langle
        \widehat g_{\mathcal D,k},
        \widehat g_{\mathcal D',k}
    \right\rangle
    +R_k^{(a)},
    \label{eq:drift-definition}
\end{equation}
where
\begin{equation}
\begin{aligned}
    R_k^{(a)}
    &\triangleq
    \frac1{|\mathcal D|}
    \sum_{Q\in\mathcal D}
    \sum_{\alpha=1}^{m_Q}
    a_{\alpha,Q}\nu_{\alpha,Q,k}R_{\alpha,Q,k}^{(a)},\\
    \Xi_k
    &\triangleq
    \frac1{|\mathcal D|}
    \sum_{Q\in\mathcal D}
    \sum_{\alpha=1}^{m_Q}
    a_{\alpha,Q}\xi_{\alpha,Q,k}.
\end{aligned}
\label{eq:drift-remainders}
\end{equation}
Thus $D_k$ is the normalized deterministic drift, $R_k^{(a)}$ is the
error caused by replacing the trajectory coefficients inside a strategy
by their conditional mean, and $\Xi_k$ is the martingale noise.

\begin{bluebox}
\begin{lemma}[Strategy-factorization error]
\label{lemma:strategy-factorization-error}
On the frozen window, suppose $|a_{i,k}(Q)|\le A$ and
$|K_{ij,k}(Q,Q')|\le K_{\max}$.  Apply Lemma~\ref{lemma:expanded_similarity}, so
that trajectories $j,j_\beta\in S_{\beta,Q'}$ satisfy
\begin{equation}
    \sup_{Q,i}
    \left|
        K_{ij,k}(Q,Q')-K_{i j_\beta,k}(Q,Q')
    \right|
    \le
    \mathcal O(\epsilon).
    \label{eq:cross-prompt-profile-control}
\end{equation}
Then
\begin{equation}
    b_{a,k}
    \triangleq
    |R_k^{(a)}|
    \le
    2A^2\mathcal O(\epsilon)
    =\mathcal O(\epsilon).
    \label{eq:strategy-error-bound}
\end{equation}
\end{lemma}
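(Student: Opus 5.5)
We need to bound $R^{(a)}_k$, where $R^{(a)}_{\alpha,Q,k}$ is defined implicitly as the difference between the exact strategy fitness $f_{\alpha,Q,k}$ and its factorized form. The plan is to write $R^{(a)}_{\alpha,Q,k}$ explicitly and reduce it to a within-strategy covariance.

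\textbf{Step 1: an explicit formula for the error.} Grouping the $j$-sum by strategies $S_{\beta,Q'}$ and writing $\pi_k(\tau_j\mid Q')=\nu_{\beta,Q',k}\,q_{j\mid\beta,Q',k}$, we get
\begin{equation}
R^{(a)}_{\alpha,Q,k}=\frac{1}{|\mathcal D'|}\sum_{Q'}\sum_{\beta}\nu_{\beta,Q',k}\sum_{i\in S_{\alpha,Q}}q_{i\mid\alpha,Q,k}\Bigl[\sum_{j\in S_{\beta,Q'}}q_{j\mid\beta,Q',k}a_{j,k}K_{ij,k}-a_{\beta,Q',k}\,\overline K_{i\beta,k}\Bigr],
\end{equation}
where $\overline K_{i\beta,k}\triangleq\sum_{j\in S_{\beta,Q'}}q_{j\mid\beta,Q',k}K_{ij,k}=U_{i,k}^\top\overline U_{\beta,Q',k}$. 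Here we use that $\overline U_{\alpha,Q,k}^\top\overline U_{\beta,Q',k}$ is exactly the $q$-average of $\overline K_{i\beta,k}$ over $i\in S_{\alpha,Q}$.

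\textbf{Step 2: the bracket is a covariance, bounded by Lemma~\ref{lemma:expanded_similarity}.} The bracket is the $q_{\cdot\mid\beta}$-covariance of $a_{j,k}$ and $K_{ij,k}$ over $j\in S_{\beta,Q'}$, so it equals
\begin{equation}
\sum_j q_{j\mid\beta,Q',k}\,(a_{j,k}-a_{\beta,Q',k})\,(K_{ij,k}-K_{ij_\beta,k})
\end{equation}
for any fixed reference $j_\beta\in S_{\beta,Q'}$. Subtracting a $j$-independent constant is allowed because $\sum_j q_{j\mid\beta}(a_j-a_\beta)=0$. We then bound $|a_j-a_\beta|\le 2A$, and use Eq.~\eqref{eq:cross-prompt-profile-control} (Lemma~\ref{lemma:expanded_similarity}) to get $|K_{ij,k}-K_{ij_\beta,k}|\le\mathcal O(\epsilon)$ uniformly in $i$. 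Hence the bracket is at most $2A\,\mathcal O(\epsilon)$.

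\textbf{Step 3: sum up.} Summing with the probability weights $q_{i\mid\alpha}$, $\nu_{\beta,Q'}$ and the average over $Q'$ (all of which sum to one) gives $|R^{(a)}_{\alpha,Q,k}|\le 2A\,\mathcal O(\epsilon)$. Inserting this into the definition of $R^{(a)}_k$ in Eq.~\eqref{eq:drift-remainders}, with $|a_{\alpha,Q}|\le A$ and $\sum_\alpha\nu_{\alpha,Q,k}=1$ averaged over $\mathcal D$, yields $|R^{(a)}_k|\le 2A^2\,\mathcal O(\epsilon)$. This is Eq.~\eqref{eq:strategy-error-bound}.

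The main obstacle is Step 2's reliance on the cross-prompt profile control. The raw Definition~\ref{def:strategy} only controls differences normalized by $\sup|K_{ab}|$, which can be of order $|\tau|^2$ by Lemma~\ref{lemma:bounds}. The $\mathcal O(\epsilon)$ must therefore be read with that normalization absorbed into the constant (equivalently, with $K_{\max}$ folded into the $\mathcal O$). I would state this explicitly and treat it as consistent with the hypothesis as given.
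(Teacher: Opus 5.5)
Your proposal is correct and follows essentially the same route as the paper. The paper also writes $a_{j,k}(Q')=a_{\beta,Q'}+d_{j,Q',k}$ and uses $\sum_j q_{j\mid\beta,Q',k}d_{j,Q',k}=0$ to replace $K_{ij,k}$ by $K_{ij,k}-K_{ij_\beta,k}$; it then bounds the result by $2A\,\mathcal O(\epsilon)$ and averages using $|a_{\alpha,Q}|\le A$. Your normalization caveat likewise matches the paper's own step, which carries a factor $\epsilon K_{\max}$ and absorbs $K_{\max}$ into the $\mathcal O(\epsilon)$.
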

\end{bluebox}
\begin{tcolorbox}[
    enhanced,                         
    breakable,
    title=Proof,               
    colback=cyan!5!white,             
    colframe=black,                   
    colbacktitle=black,               
    coltitle=white,                   
    fonttitle=\bfseries\sffamily,     
    attach boxed title to top left={   
        xshift=3mm, 
        yshift=-2mm, 
        yshifttext=-1mm 
    },
    boxed title style={                
        sharp corners, 
        size=small, 
        colframe=black
    },
    arc=2mm,                          
    boxrule=0.8pt,                    
]
\begin{proof}
For $j\in S_{\beta,Q'}$, write
$a_{j,k}(Q')=a_{\beta,Q'}+d_{j,Q',k}$.  By the definition of the
conditional mean,
$\sum_jq_{j\mid\beta,Q',k}d_{j,Q',k}=0$.  Hence, for every opposing
trajectory $i\in S_{\alpha,Q}$,
\begin{equation}
\begin{aligned}
&\left|
\sum_{j\in S_{\beta,Q'}}
 q_{j\mid\beta,Q',k}d_{j,Q',k}K_{ij,k}(Q,Q')
\right|\\
&=\left|
\sum_{j\in S_{\beta,Q'}}
 q_{j\mid\beta,Q',k}d_{j,Q',k}
 \left(
 K_{ij,k}(Q,Q')-K_{i j_\beta,k}(Q,Q')
 \right)
\right|\\
&\le
\epsilon K_{\max}
\sum_{j\in S_{\beta,Q'}}
 q_{j\mid\beta,Q',k}|d_{j,Q',k}|
\le 2A\mathcal O(\epsilon).
\end{aligned}
\end{equation}
Averaging over both datasets and using
$|a_{\alpha,Q}|\le A$ proves
Eq.~\eqref{eq:strategy-error-bound}.
\end{proof}
\end{tcolorbox}

Since $\mathcal D$ and $\mathcal D'$ are sampled from the same task
distribution, both vectors in Eq.~\eqref{eq:two-empirical-strategy-vectors}
estimate
$
    g_k
    \triangleq
    \mathbb E_{Q\sim\mathcal H}
    \left[
        \sum_{\alpha=1}^{m_Q}
        a_{\alpha,Q}\nu_{\alpha,Q,k}
        \overline U_{\alpha,Q,k}
    \right].
    \label{eq:population-strategy-vector}$
Write
$\delta_{\mathcal D,k}
    \triangleq
    \widehat g_{\mathcal D,k}-g_k,
    \qquad
    \delta_{\mathcal D',k}
    \triangleq
    \widehat g_{\mathcal D',k}-g_k.
    \label{eq:empirical-vector-deviations}$
\begin{bluebox}
\begin{lemma}[Finite-data error]
\label{lemma:finite-data-drift-error}
Suppose that the query-wise vector inside
Eq.~\eqref{eq:population-strategy-vector} has norm at most $R$ on the
frozen window.  Conditional on the frozen state, suppose also that the
queries entering the two empirical vectors are independent samples from
$\mathcal H$; for a reused training dataset, it suffices instead to assume
the corresponding uniform or stability-based generalization bound.  Then
\begin{equation}
\begin{aligned}
    b_{\mathrm{data},k}
    &\triangleq
    \|g_k\|
    \left(
        \|\delta_{\mathcal D,k}\|
        +\|\delta_{\mathcal D',k}\|
    \right)
    +
    \|\delta_{\mathcal D,k}\|
    \|\delta_{\mathcal D',k}\|\\
    &=
    \mathcal O_{\mathbb P}\!\left(
        |\mathcal D|^{-1/2}
        +|\mathcal D'|^{-1/2}
    \right).
\end{aligned}
\label{eq:finite-data-error-bound}
\end{equation}
Uniformly over $0\le k<T_c$, the same bound holds with the usual factor
$\sqrt{\log(T_c/\rho)}$ at confidence $1-\rho$.
\end{lemma}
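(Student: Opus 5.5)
The bound follows from an algebraic decomposition of the drift error followed by a standard concentration bound for each empirical vector. Start from $D_k=\langle\widehat g_{\mathcal D,k},\widehat g_{\mathcal D',k}\rangle+R_k^{(a)}$ and substitute $\widehat g_{\mathcal D,k}=g_k+\delta_{\mathcal D,k}$ and $\widehat g_{\mathcal D',k}=g_k+\delta_{\mathcal D',k}$. This gives
\begin{equation*}
\langle\widehat g_{\mathcal D,k},\widehat g_{\mathcal D',k}\rangle-\|g_k\|^2
=\langle g_k,\delta_{\mathcal D,k}\rangle+\langle g_k,\delta_{\mathcal D',k}\rangle+\langle\delta_{\mathcal D,k},\delta_{\mathcal D',k}\rangle .
\end{equation*}
By Cauchy--Schwarz, the absolute value of the right-hand side is at most $b_{\mathrm{data},k}$. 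So $b_{\mathrm{data},k}$ is exactly the deviation of the drift from the positive-definite form $\|g_k\|^2$, and it remains to bound $\|\delta_{\mathcal D,k}\|$ and $\|\delta_{\mathcal D',k}\|$ in probability.

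\textbf{Concentration of each vector.} Condition on the frozen state: the coefficients $a_{\alpha,Q}$ are frozen by Lemma~\ref{lemma:existence_of_strategy_reward}, and $q$, $K$, $U$ are frozen by the Step~3 lemmas. Then $\widehat g_{\mathcal D,k}$ is an average of $|\mathcal D|$ i.i.d.\ vectors $X_Q=\sum_\alpha a_{\alpha,Q}\nu_{\alpha,Q,k}\overline U_{\alpha,Q,k}$ with mean $g_k$ and $\|X_Q\|\le R$. A Hilbert-space Hoeffding or Pinelis inequality, or simply Chebyshev applied to $\mathbb E\|\delta\|^2\le R^2/|\mathcal D|$, gives $\|\delta_{\mathcal D,k}\|=\mathcal O_{\mathbb P}(R|\mathcal D|^{-1/2})$. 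The same argument gives $\|\delta_{\mathcal D',k}\|=\mathcal O_{\mathbb P}(R|\mathcal D'|^{-1/2})$. Since $\|g_k\|\le R$, the linear terms are $\mathcal O_{\mathbb P}(|\mathcal D|^{-1/2}+|\mathcal D'|^{-1/2})$. The product term is of smaller order, bounded by $\mathcal O_{\mathbb P}((|\mathcal D||\mathcal D'|)^{-1/2})$, and is therefore absorbed. This yields Eq.~\eqref{eq:finite-data-error-bound}.

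\textbf{Uniformity in $k$.} Apply the Pinelis bound in its exponential form, $\mathbb P(\|\delta\|\ge R\sqrt{2\log(2/\rho')/n})\le\rho'$, at each $k<T_c$ with $\rho'=\rho/T_c$. A union bound over the $T_c$ steps then gives the same bound uniformly over $0\le k<T_c$, at confidence $1-\rho$, with the factor $\sqrt{\log(T_c/\rho)}$.

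\textbf{Main obstacle.} The difficulty is justifying independence. The state $\nu_{\cdot,\cdot,k}$ and $\overline U$ at step $k$ depend on the training history on $\mathcal D'$, so the summands of $\widehat g_{\mathcal D',k}$ are not i.i.d.\ given the state. For $\mathcal D$ this is harmless, since it is independent of training. For $\mathcal D'$ I would not attempt a direct proof. Instead I would invoke the alternative hypothesis the statement already allows: a uniform-convergence bound over the function class $\{Q\mapsto X_Q(\theta)\}$ indexed by the frozen parameters, or an algorithmic-stability bound, which replaces the i.i.d.\ Hoeffding step with the same $|\mathcal D'|^{-1/2}$ rate. The rest of the argument is unchanged.
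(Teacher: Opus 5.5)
Your proposal is correct and follows essentially the same route as the paper: expand $\langle\widehat g_{\mathcal D,k},\widehat g_{\mathcal D',k}\rangle$ around $g_k$, bound the three cross terms by Cauchy--Schwarz to obtain $b_{\mathrm{data},k}$, control $\|\delta_{\mathcal D,k}\|$ and $\|\delta_{\mathcal D',k}\|$ by a Hilbert-space concentration inequality at rate $R\sqrt{\log(1/\rho)/n}$, and union-bound over the $T_c$ steps. You additionally note that the $\mathcal D'$ summands are not i.i.d.\ given the training-dependent state, so the uniform-convergence or stability alternative in the hypothesis is what is actually needed there; the paper's proof passes over this silently. The only other differences are harmless constants: the centered summands can have norm up to $2R$, and each step's failure budget should be split between the two vectors.
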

\end{bluebox}

\begin{tcolorbox}[
    enhanced,                         
    breakable,
    title=Proof,               
    colback=cyan!5!white,             
    colframe=black,                   
    colbacktitle=black,               
    coltitle=white,                   
    fonttitle=\bfseries\sffamily,     
    attach boxed title to top left={   
        xshift=3mm, 
        yshift=-2mm, 
        yshifttext=-1mm 
    },
    boxed title style={                
        sharp corners, 
        size=small, 
        colframe=black
    },
    arc=2mm,                          
    boxrule=0.8pt,                    
]
\begin{proof}
Hilbert-space concentration gives, with probability at least $1-\rho$,
\begin{equation}
    \|\delta_{\mathcal D,k}\|
    \le
    cR\sqrt{\frac{\log(4/\rho)}{|\mathcal D|}},
    \qquad
    \|\delta_{\mathcal D',k}\|
    \le
    cR\sqrt{\frac{\log(4/\rho)}{|\mathcal D'|}}
\end{equation}
for a universal constant $c$.  Expanding the inner product and applying
Cauchy--Schwarz yields
\begin{equation}
\begin{aligned}
\left\langle
\widehat g_{\mathcal D,k},
\widehat g_{\mathcal D',k}
\right\rangle
&=
\|g_k\|^2
+\langle g_k,\delta_{\mathcal D,k}\rangle
+\langle g_k,\delta_{\mathcal D',k}\rangle
+\langle\delta_{\mathcal D,k},\delta_{\mathcal D',k}\rangle\\
&\ge
\|g_k\|^2-b_{\mathrm{data},k}.
\end{aligned}
\label{eq:empirical-inner-product-lower-bound}
\end{equation}
A union bound over the finite window proves the uniform statement.
\end{proof}
\end{tcolorbox}

\begin{bluebox}
\begin{lemma}[Almost sure positivity of the population drift]
\label{lemma:generic-positive-population-drift}
Fix $k$ and the frozen coefficients.  Suppose that, for at least one query
of positive $\mathcal H$-mass, the active strategy coefficients are not all
equal.  Then, with respect to Lebesgue measure on the admissible
strategy-score configurations satisfying the score identities,
\begin{equation}
    \boxed{\|g_k\|^2>0\quad a.s.}
    \label{eq:g-positive-ae}
\end{equation}
Consequently, the conclusion holds almost surely under any geometry law
that is absolutely continuous with respect to this Lebesgue measure.
\end{lemma}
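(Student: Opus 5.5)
We need $\|g_k\|^2>0$ almost surely. Since $\|g_k\|^2\ge 0$ trivially, it suffices to show that the zero set $\{g_k=0\}$ is a Lebesgue-null subset of the admissible strategy-score configurations. The plan follows the pattern of Lemma~\ref{lemma:impossiblity_of_equivalence}: isolate one continuously varying coordinate on which $g_k$ depends affinely with a nonzero coefficient.

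First we record the structure of $g_k$ on the frozen window. Here $g_k=\mathbb E_{Q\sim\mathcal H}\big[\sum_\alpha a_{\alpha,Q}\nu_{\alpha,Q,k}\overline U_{\alpha,Q,k}\big]$, with $a_{\alpha,Q}$ and $\nu_{\alpha,Q,k}$ frozen. The only constraint the mean scores satisfy is the score identity $\sum_i\pi_k(\tau_i\mid Q)U_{i,k}(Q)=0$, used in the unbiasedness lemma. Grouping this sum by strategies gives $\sum_\alpha\nu_{\alpha,Q,k}\overline U_{\alpha,Q,k}=0$ for every $Q$.

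Next we use this identity to eliminate one strategy per query. Pick a reference strategy $\alpha_0$, solve for $\nu_{\alpha_0}\overline U_{\alpha_0}$, and substitute. This yields
\begin{equation}
\sum_\alpha a_{\alpha,Q}\nu_{\alpha,Q,k}\overline U_{\alpha,Q,k}=\sum_{\alpha\neq\alpha_0}(a_{\alpha,Q}-a_{\alpha_0,Q})\nu_{\alpha,Q,k}\overline U_{\alpha,Q,k}.
\end{equation}
The remaining vectors $\overline U_{\alpha,Q,k}$ with $\alpha\neq\alpha_0$ are free coordinates of the admissible configuration space, which is a linear subspace parametrized by them.

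By hypothesis there is a set of queries of positive $\mathcal H$-mass on which the coefficients are not all equal. For such a query, choose $\alpha_0$ and $\alpha_1$ with $a_{\alpha_1,Q}\neq a_{\alpha_0,Q}$. The initial non-collapse assumption gives $\nu_{\alpha_1,Q,k}>0$.

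We then perturb $\overline U_{\alpha_1,Q}$ along an arbitrary direction $w$, simultaneously over the positive-mass set, say through a common measurable parametrization of that set. The map $w\mapsto g_k$ is affine with linear part $c\,I$, where
\begin{equation}
c=\mathbb E\big[(a_{\alpha_1}-a_{\alpha_0})\nu_{\alpha_1}\,\mathbf 1_{\text{set}}\big].
\end{equation}
We arrange $c\neq0$ by restricting to the positive-mass subset on which $a_{\alpha_1}-a_{\alpha_0}$ has a fixed sign. For each fixing of the other coordinates, exactly one value of $w$ gives $g_k=0$. Fubini then shows the zero set has Lebesgue measure zero. Absolute continuity transfers the conclusion to any geometry law of the kind stated.

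The main obstacle is this simultaneous-perturbation step when $\mathcal H$ is non-atomic. The perturbation must be coordinated across a positive-mass family of queries so that it contributes a nonvanishing coefficient in expectation, rather than being averaged away by sign cancellation. This is why we restrict to a fixed-sign subset.

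If that measure-theoretic setup becomes awkward, the fallback is to treat the finite empirical version directly, with $\mathcal H$ supported on finitely many queries as in the datasets $\mathcal D,\mathcal D'$. Then a single query with positive mass suffices, and the argument reduces to the finite-dimensional case, where the zero set of a nonconstant affine map is a proper affine subspace and hence null.
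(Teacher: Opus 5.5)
Your proposal is correct and follows essentially the same route as the paper: use the score identity $\sum_\alpha\nu_{\alpha,Q,k}\overline U_{\alpha,Q,k}=0$ to put $g_k$ in contrast form, observe that $g_k$ is then a nonzero linear (affine) function of the free mean-score vectors, and conclude that its zero set is Lebesgue-null. Your one-coordinate Fubini argument and your finite-support fallback are just a more explicit version of the paper's ``proper kernel'' step, which implicitly assumes that finite-dimensional setting anyway. One small fix: justify $\nu_{\alpha_1,Q,k}>0$ by the full support of the softmax policy on a nonempty strategy, not by an initial non-collapse assumption, since no such assumption is among this lemma's hypotheses.
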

\end{bluebox}

\begin{tcolorbox}[
    enhanced,                         
    breakable,
    title=Proof,               
    colback=cyan!5!white,             
    colframe=black,                   
    colbacktitle=black,               
    coltitle=white,                   
    fonttitle=\bfseries\sffamily,     
    attach boxed title to top left={   
        xshift=3mm, 
        yshift=-2mm, 
        yshifttext=-1mm 
    },
    boxed title style={                
        sharp corners, 
        size=small, 
        colframe=black
    },
    arc=2mm,                          
    boxrule=0.8pt,                    
]
\begin{proof}
For each query, the score identity gives
\begin{equation}
    \sum_{\alpha=1}^{m_Q}
    \nu_{\alpha,Q,k}\overline U_{\alpha,Q,k}=0.
\end{equation}
Consequently,
\begin{equation}
    g_k
    =
    \mathbb E_{Q\sim\mathcal H}
    \left[
        \sum_{\alpha=1}^{m_Q-1}
        \nu_{\alpha,Q,k}
        \left(a_{\alpha,Q}-a_{m_Q,Q}\right)
        \overline U_{\alpha,Q,k}
    \right].
    \label{eq:g-contrast-form}
\end{equation}
Because the coefficients are not all equal, at least one coefficient in
Eq.~\eqref{eq:g-contrast-form} is nonzero.  Thus $g_k$ is a nonzero linear
function of the free strategy-score vectors.  Its zero set is the kernel
of a nonzero linear map, hence a proper linear subspace and therefore a
Lebesgue-null set.  Outside this set $g_k\neq0$, which is equivalent to
$\|g_k\|^2>0$.
\end{proof}
\end{tcolorbox}
Before officially using Azuma-Hoeffding inequality, we first prove a lemma showing this inequality can still be applied to almost surely positive drifts.
\begin{bluebox}
\begin{lemma}[Azuma under almost-surely positive drift]\leavevmode\\
Consider a process on the fixed finite horizon $0\le k<n\le T_c$ satisfying
$X_{k+1}-X_{k}=D_k+\xi_k$, where
$\mathbb{E}[\xi_k\mid\mathcal F_k]=0$, $|\xi_k|\le b$, and
$D_k>0$ almost surely for every $k<T_c$.  Then, outside a null event, the
finite drift sequence admits a lower bound $c>0$, and for every $t>0$,
\begin{equation}
\mathbb{P}\left(X_n-X_0\le nc-t\right)
\le
\mathbb{P}\left(\sum_{k=0}^{n-1}\xi_k\le-t\right).
\end{equation}
Consequently, the usual Azuma--Hoeffding bound applies on this full-measure
event.
\end{lemma}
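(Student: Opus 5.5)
The plan is to split the lemma into its two claims: first extract a deterministic positive lower bound $c$ for the drift on a full-measure event, then reduce the deviation event for $X$ to a deviation event for the martingale part.

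\textbf{Lower bound on the drift.} The horizon is finite: there are only $n\le T_c$ indices $k\in\{0,\dots,n-1\}$. For each $k$ the event $N_k\triangleq\{D_k\le 0\}$ is null by hypothesis. A finite union of null events is null, so $N\triangleq\bigcup_{k<n}N_k$ is null. On $N^c$ every $D_k$ is strictly positive. Hence the minimum $c\triangleq\min_{k<n}D_k$ is a minimum of finitely many strictly positive numbers, so $c>0$ on $N^c$. This mirrors the argument used for the lower bound in Lemma~\ref{lemma:bounds}.

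There is one subtlety, and it is the main obstacle: this $c$ is random. A random $c$ would not give an inequality of the stated form with a constant $nc$. I would first try to argue that, on the frozen window of Step~3, the drift $D_k=\langle\widehat g_{\mathcal D,k},\widehat g_{\mathcal D',k}\rangle+R_k^{(a)}$ is a function of the frozen coefficients and frozen geometry. In that case $c$ can be treated as fixed once one conditions on the frozen state, and all probabilities are then read conditionally on that state.

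If that identification is too strong, I would take the fallback. By continuity of measure, $\mathbb P(\min_k D_k\le c)\to0$ as $c\downarrow0$. So for any $\delta'>0$ one can choose a deterministic $c>0$ with $\mathbb P(\min_k D_k\le c)\le\delta'$. The resulting bound then carries an additive $\delta'$, which can be absorbed into the $\delta/2$ budget of Step~4.

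\textbf{Reduction to the noise.} Telescoping gives
\begin{equation}
X_n-X_0=\sum_{k=0}^{n-1}D_k+\sum_{k=0}^{n-1}\xi_k\ \ge\ nc+\sum_{k=0}^{n-1}\xi_k
\end{equation}
on the good event. Therefore $\{X_n-X_0\le nc-t\}$ is contained in $\{\sum_k\xi_k\le-t\}$ up to the null set $N$ (or up to the $\delta'$-event in the fallback). Monotonicity of probability then gives the displayed inequality.

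\textbf{Application of Azuma--Hoeffding.} The partial sums $M_m=\sum_{k<m}\xi_k$ form an $\mathcal F$-martingale, because $\mathbb E[\xi_k\mid\mathcal F_k]=0$. The increments are bounded, $|\xi_k|\le b$. Azuma--Hoeffding therefore yields
\begin{equation}
\mathbb P\Bigl(\sum_k\xi_k\le-t\Bigr)\le\exp\bigl(-t^2/(2nb^2)\bigr).
\end{equation}
Removing a null set does not change any probability, so this bound holds on the full-measure event, which completes the plan.
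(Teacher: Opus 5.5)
Your proposal is correct and matches the paper's own argument: discard the finite union of null events $\{D_k\le 0\}$ (the paper writes $\{D_k=0\}$; yours is the more accurate choice), take the pathwise minimum $c=\min_k D_k>0$ off that null set, use the telescoping inclusion $\{X_n-X_0\le nc-t\}\cap N^c\subseteq\{\sum_{k<n}\xi_k\le -t\}$, and finish with Azuma--Hoeffding. The randomness of $c$ that you flag does not obstruct this lemma, because the paper simply takes $c=c(\omega)$ and that inclusion holds pathwise, so the displayed inequality holds exactly; it matters only downstream in Step~4, where $C$ is treated as a constant, and there your deterministic-$c$ fallback with $\delta'$ slack is the sound repair, whereas conditioning on the frozen state would not help, because $D_k$ depends on the evolving masses $\nu_{\alpha,Q,k}$ and conditioning on the drift path would destroy $\mathbb E[\xi_k\mid\cdot]=0$.
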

\end{bluebox}
\begin{tcolorbox}[
    enhanced,                         
    breakable,
    title=Proof,               
    colback=cyan!5!white,             
    colframe=black,                   
    colbacktitle=black,               
    coltitle=white,                   
    fonttitle=\bfseries\sffamily,     
    attach boxed title to top left={   
        xshift=3mm, 
        yshift=-2mm, 
        yshifttext=-1mm 
    },
    boxed title style={                
        sharp corners, 
        size=small, 
        colframe=black
    },
    arc=2mm,                          
    boxrule=0.8pt,                    
]
\begin{proof}
For every $k<T_c$, introduce the null event $N_k$, defined by
\begin{equation}
    N_k=\{D_k=0\},
    \qquad
    \mathbb P(N_k)=0.
\end{equation}
Because the horizon is finite, define
\begin{equation}
    N=\bigcup_{i=0}^{T_c-1}N_i,
    \qquad
    \mathbb P(N)=0.
\end{equation}
On $N^c$, every member of the finite sequence
$D_0,\ldots,D_{T_c-1}$ is strictly positive.  Hence its minimum exists and
satisfies
\begin{equation}
    c=\min_{0\le k<T_c}D_k>0.
\end{equation}
Let $A=\{X_n-X_0\le nc-t\}$.  Since $\mathbb P(N)=0$, the correct
decomposition is
\begin{equation}
\begin{aligned}
\mathbb P(A)
&=\mathbb P(A\cap N)+\mathbb P(A\cap N^c)\\
&=\mathbb P(A\cap N^c).
\end{aligned}
\end{equation}
On $A\cap N^c$, we have
\begin{equation}
    X_n-X_0
    =\sum_{k=0}^{n-1}D_k+\sum_{k=0}^{n-1}\xi_k
    \ge nc+\sum_{k=0}^{n-1}\xi_k.
\end{equation}
Therefore,
\begin{equation}
    A\cap N^c
    \subseteq
    \left\{\sum_{k=0}^{n-1}\xi_k\le-t\right\},
\end{equation}
which gives
\begin{equation}
\mathbb{P}\left(X_n-X_0\le nc-t\right)
\le
\mathbb{P}\left(\sum_{k=0}^{n-1}\xi_k\le-t\right).
\end{equation}
This proves that the null degeneracy set does not affect the
Azuma--Hoeffding argument on the fixed finite horizon.
\end{proof}
\end{tcolorbox}
Combining Eqs.~\eqref{eq:drift-definition},
\eqref{eq:strategy-error-bound}, and
\eqref{eq:empirical-inner-product-lower-bound} gives the pointwise bound
\begin{equation}
    \boxed{
    D_k
    \ge
    \|g_k\|^2-b_{a,k}-b_{\mathrm{data},k}.}
    \label{eq:drift-three-term-lower-bound}
\end{equation}
Fix a target query $Q_0$ and $\zeta>0$, and let
\begin{equation}
    \tau_\zeta
    \triangleq
    \inf\left\{
        0\le k\le T_c:
        \max_{1\le\alpha\le m_{Q_0}}
        \nu_{\alpha,Q_0,k}
        \ge 1-\zeta
    \right\},
    \label{eq:collapse-hitting-time}
\end{equation}
with $\inf\varnothing=T_c+1$.  The pre-collapse indices are the finite
set
\begin{equation}
    \mathcal I_{\zeta,Q_0}
    \triangleq
    \{0\le k<T_c:k<\tau_\zeta\}.
\end{equation}
Outside the finite union of the null sets in
Lemma~\ref{lemma:generic-positive-population-drift}, either this set is
empty or
\begin{equation}
    C'
    \triangleq
    \min_{k\in\mathcal I_{\zeta,Q_0}}
    \|g_k\|^2
    >0.
    \label{eq:finite-window-population-margin}
\end{equation}
Therefore, reconsidering the drift term $D$ we have:
\begin{equation}
D_k \ge C'-b_{data,k}-b_{a,k}=C'-\mathcal{O}(\epsilon)-\mathcal{O}(|D|^{-1/2}+|D'|^{-1/2})\triangleq C > 0, a.s.
\end{equation}

Fix any $\zeta>0$, and freeze $a_{Q,\alpha}$ throughout the
following analysis. Since
$\{\nu_{\alpha,Q_0,k}\}_{\alpha=1}^{m_{Q_0}}$ lies in the probability simplex,
the potential is uniformly bounded above by
\begin{equation}
    \Phi_k
    \le
    \frac{1}{|\mathcal D|}
    \sum_Q
    \max_{\alpha\in Q}a_{Q,\alpha}.
\label{eq:phi-uniform-upper-bound}
\end{equation}

We introduce the indicator
\begin{equation}
    \xi_k
    =
    \mathbf 1\left\{
    \max_\alpha\nu_{\alpha,Q_0,k}<1-\zeta
    \right\}.
\label{eq:non-collapse-indicator}
\end{equation}
Thus, $\xi_k=1$ exactly when the strategy distribution remains
outside the $\zeta$-collapse region.

By Lemma~\ref{lemma:generic-positive-population-drift} and the finite-horizon
argument above, the deterministic drift is almost surely strictly positive
at every pre-collapse step in the closed region
\begin{equation}
    \max_\alpha\nu_{\alpha,Q_0,k}\le 1-\zeta.
\end{equation}
Consequently, for this fixed $\zeta$ and fixed finite horizon, the realized
finite drift sequence admits an almost-sure lower bound $C>0$, so the
deterministic drift is at least $\eta C$ whenever $\xi_k=1$.
Outside this region, the deterministic drift remains non-negative.
Therefore,
\begin{equation}
    \Delta\Phi_k
    \ge
    \eta C\xi_k+\Xi_k,
\label{eq:phi-indicator-drift}
\end{equation}
where
\begin{equation}
    \mathbb E[\Xi_k\mid\mathcal F_k]=0,
    \qquad
    |\Xi_k|\le\eta b.
\label{eq:phi-noise-condition}
\end{equation}

Summing Eq.~\eqref{eq:phi-indicator-drift} from $k=0$ to $K-1$
gives
\begin{equation}
\begin{aligned}
    \Phi_K-\Phi_0
    &\ge
    \eta C\sum_{k=0}^{K-1}\xi_k
    +
    \sum_{k=0}^{K-1}\Xi_k.
\end{aligned}
\label{eq:phi-indicator-sum}
\end{equation}

In particular, on the event that the dynamics remains outside the
$\zeta$-collapse region throughout the first $K$ steps, we have
$\xi_0=\cdots=\xi_{K-1}=1$, and hence
\begin{equation}
    \Phi_K-\Phi_0
    \ge
    K\eta C
    +
    \sum_{k=0}^{K-1}\Xi_k.
\label{eq:phi-survival-lower-bound}
\end{equation}
Combining this inequality with
Eq.~\eqref{eq:phi-uniform-upper-bound}, such an event necessarily
implies
\begin{equation}
    \sum_{k=0}^{K-1}\Xi_k
    \le
    \frac{1}{|\mathcal D|}
    \sum_Q\max_{\alpha\in Q}a_{Q,\alpha}
    -
    \Phi_0
    -
    K\eta C.
\label{eq:noise-required-for-survival}
\end{equation}

If
\begin{equation}
    K
    \ge
    \frac{2}{\eta C}
    \left(
    \frac{1}{|\mathcal D|}
    \sum_Q\max_{\alpha\in Q}a_{Q,\alpha}
    -
    \Phi_0
    \right),
\label{eq:potential-gap-condition}
\end{equation}
then Eq.~\eqref{eq:noise-required-for-survival} implies
\begin{equation}
    \sum_{k=0}^{K-1}\Xi_k
    \le
    -\frac{1}{2}K\eta C.
\end{equation}
By the Azuma--Hoeffding inequality,
\begin{equation}
\begin{aligned}
&
\mathbb P\left(
\xi_0=\cdots=\xi_{K-1}=1
\right)
\\
&\le
\mathbb P\left(
\sum_{k=0}^{K-1}\Xi_k
\le
-\frac{1}{2}K\eta C
\right)
\\
&\le
\exp\left(
-\frac{
\left(\frac{1}{2}K\eta C\right)^2
}{
2K\eta^2b^2
}
\right)
=
\exp\left(
-\frac{KC^2}{8b^2}
\right).
\end{aligned}
\label{eq:azuma-collapse-hitting}
\end{equation}

Therefore, if
\begin{equation}
\boxed{
K
\ge
\max\left\{
\frac{2}{\eta C}
\left(
\frac{1}{|\mathcal D|}
\sum_Q\max_{\alpha\in Q}a_{Q,\alpha}
-\Phi_0
\right),
\;
\frac{8b^2}{C^2}
\log\left(\frac{2}{\delta}\right)
\right\},
}
\label{eq:required-collapse-hitting-steps}
\end{equation}
then
\begin{equation}
\boxed{
\mathbb P\left(
\exists k<K:
\max_\alpha\nu_{\alpha,Q_0,k}\ge1-\zeta
\right)
\ge
1-\frac{\delta}{2}.
}
\label{eq:collapse-region-hitting}
\end{equation}
We further estimates the scale of $K$ to show that it is much smaller than our focused validity horizon $T_c\sim\mathcal{O}(\frac{1}{\eta})$. Before that, we recall from \textbf{Step 3} that:
\begin{equation}
U_{j,k}=\Theta(|\tau|)
\end{equation}
Recalling the definition of $\|g_k\|^2$, we found that $U_{j,k}$ appears here in a quadric form, therefore we have:
\begin{equation}
C\sim\|g_k\|^2\sim\|U_{i,k}\|^2=\Theta(|\tau|^2)
\end{equation}
Note that  $a_{Q,\alpha}$ is basically the clipped version of RLVR advantage, its scale is \begin{equation}
a_{Q,\alpha}\sim\mathcal{O}(1)
\end{equation}
Therefore, so long as $\eta^{-1}\ll|\tau|^3$, the first term can be estimated as:
\begin{equation}
\boxed{\frac{2}{\eta C}\left(\frac{1}{|\mathcal{D}|}\sum_{Q\in\mathcal{D}}\max_{S_\alpha\in Q}a_{Q,\alpha}-\Phi_0\right)\sim\mathcal{O}(\eta^{-1}\cdot|\tau|^{-2})\ll \mathcal{O}(|\tau|)\ll \mathcal{O}(\eta^{-1})\sim T_c}
\end{equation}

Considering the second term, note that $\delta>\eta$, we have:
\begin{equation}
\frac{8b^2}{C^2}
\log\left(\frac{2}{\delta}\right)\sim\mathcal{O}(\log(\eta^{-1}))\ll T_c
\end{equation}
This result of the burn-in time $K\ll\mathcal{O}(|\tau|)$ manifests that the near-collapsed strategy distribution can be achieved during the adiabatic period, meaning that our analysis is self-consistent in the scale-separation operation. We further show that the near-collapse distribution can be preserved throughout the fixed horizon $T_c$, known as the \textbf{preservation property}.
\hypertarget{thm1:step5}{}
\subsubsection{Step 5: Preservation after the adiabatic window.}
\hyperlink{thm1:step5:back}{Back}
Apply the preceding hitting result, we set an extreme $\zeta'$ to
$\mathcal O(\eta)$ and with failure probability $\delta/2$.  For each query
$Q$, let $o$ be the strategy that carries the maximal mass at the end of
this hitting stage, and keep this index fixed throughout the argument below.
For a finite collection of queries, take $K_Q$ to be the hitting time of each query $Q$, and let the maximum of these time as $K\triangleq\max_Q K_Q$.  Absorbing the constant in the tightened tolerance into
$\eta$, the hitting result gives
\begin{equation}
\mathbb P\left(
    \nu_{o,Q,K}\ge 1-\mathcal O(\eta)\right)
\ge 1-\frac{\delta}{2},
\qquad
\nu^\star_{Q,k}\triangleq1-\nu_{o,Q,k}.
\label{eq:preservation-initial-event}
\end{equation}
It remains to show that the fixed strategy $o$ cannot lose its dominance
before the validity horizon $T_c$ except on an event of probability at most
$\delta/2$.

After the first adiabatic period $T_{ad}=\mathcal{O}(|\tau|)$, the strategy
fitnesses can no longer be frozen.  Retaining the notation of the preceding
derivation, define
\begin{equation}
    \Delta_{Q,k}
    \triangleq
    \frac{1}{1-\nu^\star_{Q,k}}
    \sum_{\alpha\ne o}
    \nu_{\alpha,Q,k}f_{\alpha,Q,k}
    -f_{o,Q,k}.
\label{eq:preservation-fitness-gap}
\end{equation}
The exact strategy normalization identity gives
\begin{equation}
    \sum_{\alpha\ne o}
    \nu_{\alpha,Q,k}f_{\alpha,Q,k}
    +(1-\nu^\star_{Q,k})f_{o,Q,k}=0.
\label{eq:preservation-fitness-normalization}
\end{equation}
Consequently,
\begin{equation}
    \Delta_{Q,k}=-2f_{o,Q,k},
    \qquad
    \sum_{\alpha\ne o}
    \nu_{\alpha,Q,k}f_{\alpha,Q,k}
    =\frac{1}{2}(1-\nu^\star_{Q,k})\Delta_{Q,k}.
\label{eq:preservation-gap-identity}
\end{equation}

To bound this gap, recall the strategy-level score identity
\begin{equation}
    (1-\nu^\star_{Q,k})\overline U_{o,Q,k}
    +
    \sum_{\alpha\ne o}
    \nu_{\alpha,Q,k}\overline U_{\alpha,Q,k}
    =0.
\end{equation}
Up to the first possible violation of the preservation event, the tightened
tolerance gives $\nu^\star_{Q,k}\le\zeta'=\mathcal O(\eta)$.  Together with
$\|\overline U_{\alpha,Q,k}\|=\mathcal O(|\tau|)$, this implies
\begin{equation}
    \|\overline U_{o,Q,k}\|
    \le
    \frac{\nu^\star_{Q,k}}{1-\nu^\star_{Q,k}}
    \mathcal O(|\tau|)
    =\mathcal O(\eta|\tau|).
\end{equation}
Substituting this estimate into
\begin{equation}
\begin{aligned}
f_{o,Q,k}
&=
\frac{1}{|\mathcal D'|}
\sum_{Q'\in\mathcal D'}\sum_{\beta}
\nu_{\beta,Q',k}a_{\beta,Q',k}
\overline U_{o,Q,k}^{\top}
\overline U_{\beta,Q',k}
+R_{o,Q,k}^{(a)}
\end{aligned}
\end{equation}
The rest part can also be estimated, first consider:
\begin{equation}
\sum_\beta\nu_{\beta,Q',k}\cdot\bar U_{\beta,Q',k}=0
\end{equation}
Let the optima in $Q'$ space be $o'$, then:
\begin{equation}
\begin{aligned}
\frac{1}{|\mathcal D'|}&
\sum_{Q'\in\mathcal D'}\sum_{\beta}
\nu_{\beta,Q',k}a_{o',Q',k}
\overline U_{\beta,Q',k}\\
&= \frac{1}{|\mathcal D'|}
\sum_{Q'\in\mathcal D'}\sum_{\beta\ne o'}
\nu_{\beta,Q',k}(a_{\beta,Q',k} - a_{o',Q',k}) 
\overline U_{\beta,Q',k}\\
&\le \mathcal O(\eta)\cdot \mathcal O(1)\cdot\mathcal O(|\tau|)=\mathcal{O} (\eta|\tau|)
\end{aligned}
\end{equation}
and absorbing the higher-order factorization remainder yields
\begin{equation}
    |\Delta_{Q,k}|
    \le\bar\Delta,
    \qquad
    \bar\Delta=\mathcal O(\eta^2|\tau|^2)\ll1.
\label{eq:preservation-gap-bound}
\end{equation}

We now establish the martingale bound directly for the dominant mass
$\nu_{o,Q,k}=1-\nu^\star_{Q,k}$.  Its evolution satisfies
\begin{equation}
\begin{aligned}
\Delta\nu^\star_{Q,k}
&=
\eta\sum_{\alpha\ne o}
\nu_{\alpha,Q,k}f_{\alpha,Q,k}
+\Xi_k^\star
=
\frac{\eta}{2}\nu_{o,Q,k}\Delta_{Q,k}
+\Xi_k^\star,
\\
\nu_{o,Q,k+1}
&=
\left(1-\frac{\eta}{2}\Delta_{Q,k}\right)
\nu_{o,Q,k}
-\Xi_k^\star.
\end{aligned}
\label{eq:dominant-mass-recurrence}
\end{equation}
Because $\Delta_{Q,k}\le\bar\Delta$,
\begin{equation}
    \nu_{o,Q,k+1}
    \ge
    \left(1-\frac{\eta}{2}\bar\Delta\right)
    \nu_{o,Q,k}
    -\Xi_k^\star.
\label{eq:dominant-mass-lower-recurrence}
\end{equation}
Let
\begin{equation}
    A_k
    =
    \left(1-\frac{\eta}{2}\bar\Delta\right)^{k-K},
    \qquad K\le k\le T_c.
\end{equation}
The scale condition guarantees $1-\eta\bar\Delta/2>0$.  Iterating
Eq.~\eqref{eq:dominant-mass-lower-recurrence} gives
\begin{equation}
    \frac{\nu_{o,Q,k}}{A_k}
    \ge
    \nu_{o,Q,K}
    -
    \sum_{i=K}^{k-1}\frac{\Xi_i^\star}{A_{i+1}}.
\label{eq:dominant-mass-normalized-recurrence}
\end{equation}
Moreover,
\begin{equation}
    0<a\triangleq A_{T_c}
    \le A_k\le1,
    \qquad
    a
    \sim e^{-\bar\Delta/2}
    \approx1.
\label{eq:preservation-factor-bound}
\end{equation}
Retain the existing martingale notation
\begin{equation}
    M_k
    =
    \sum_{i=K}^{k-1}\frac{\Xi_i^\star}{A_{i+1}}.
\end{equation}
Since $\mathbb E[\Xi_i^\star\mid\mathcal F_i]=0$ and
$|\Xi_i^\star|\le b\eta$, its increments satisfy
\begin{equation}
    \left|\frac{\Xi_i^\star}{A_{i+1}}\right|
    \le\frac{b\eta}{a}.
\label{eq:preservation-martingale-increment}
\end{equation}
On the event in Eq.~\eqref{eq:preservation-initial-event}, if
$\nu_{o,Q,k}<1-\zeta$ for some $K<k\le T_c$, then
Eqs.~\eqref{eq:dominant-mass-normalized-recurrence} and
\eqref{eq:preservation-factor-bound} imply
\begin{equation}
    M_k
    >
    1-\mathcal O(\eta)-\frac{1-\zeta}{A_k}
    \ge
    \frac{a(1-\mathcal O(\eta))-(1-\zeta)}{a}.
\label{eq:preservation-exit-implies-martingale}
\end{equation}
Under the stated scale condition, the tightened hitting tolerance is chosen
so that $a(1-\mathcal O(\eta))>1-\zeta$.  The maximal Azuma--Hoeffding inequality and
$T_c-K\le\eta^{-1}$ therefore give
\begin{equation}
\begin{aligned}
&\mathbb P\left(
    \exists k\in\mathbb Z^+,
    \ K<k\le T_c:
    \nu_{o,Q,k}<1-\zeta
    \ \middle|\;
    \nu_{o,Q,K}\ge1-\mathcal O(\eta)
\right)
\\
&\qquad\le
\mathbb P\left(
    \max_{K<k\le T_c}M_k
    \ge
    \frac{a(1-\mathcal O(\eta))-(1-\zeta)}{a}
\right)
\\
&\qquad\le
\exp\left(
    -\frac{
        \left(a(1-\mathcal O(\eta))-(1-\zeta)\right)^2
    }{2b^2\eta}
\right)=\mathcal{O}(\exp(-\eta^{-1})).
\end{aligned}
\label{eq:maximal-azuma-preservation}
\end{equation}
Considering that $\delta=\Omega(\eta)$, we have:
\begin{equation}
    \frac{\delta}{2}=\Omega(\eta)\gg\mathcal{O}(\exp(-\eta^{-1}))\ge\mathbb P\left(
    \max_{K<k\le T_c}M_k
    \ge
    \frac{a(1-\mathcal O(\eta))-(1-\zeta)}{a}
    \right)
\end{equation}
Therefore,
\begin{equation}
\begin{aligned}
\mathbb P\left(
    \forall k\in\mathbb Z^+,
    \ K<k\le T_c,
    \nu_{o,Q,k}\ge1-\zeta
    \ \middle|\;
    \nu_{o,Q,K}\ge1-\zeta'
\right)
\ge1-\frac{\delta}{2}.
\end{aligned}
\label{eq:strategy-collapse-in-probability}
\end{equation}
Combining this bound with Eq.~\eqref{eq:preservation-initial-event} gives
\begin{equation}
\begin{aligned}
    \mathbb P&\left(
    \forall k\in\mathbb Z^+,
    \ K<k\le T_c,
    \nu_{o,Q,k}\ge1-\zeta\right)\\
    &\ge \mathbb P\left(
    \forall k\in\mathbb Z^+,
    \ K<k\le T_c,
    \nu_{o,Q,k}\ge1-\zeta
    \ \middle|\;
    \nu_{o,Q,K}\ge1-\zeta'
    \right)\cdot\mathbb{P}\left(\nu_{o,Q,K}\ge1-\zeta'\right)\\
&\ge \left(1-\frac{\delta}{2}\right)^2 > 1-\delta
\end{aligned}
\end{equation}
Finally, recall that the partition sensitivity $\epsilon$ used to define
$S_\alpha$ was never restricted throughout the derivation above.
In particular, none of the preceding arguments depends on any specific
choice of $\epsilon$. Therefore, $\epsilon$ is arbitrary, and the same
conclusion holds for every admissible partition sensitivity. Hence,
\begin{equation}
\begin{gathered}
    \forall\epsilon>0,\epsilon\ll1,\zeta,\delta\in(\eta,1),\zeta^2\gg\eta\quad
    \exists K\in\mathbb{Z}^{+}
    ,K\ll |\tau|<T_c,\\
    \forall k\in\mathbb{Z}^{+}
    ,K<k\le T_c,\quad
    \mathbb{P}\!\left(
        \max_{1\le\alpha\le m_{\epsilon,Q}}
        \nu_{\alpha,Q,k}^{(\epsilon)}
        \ge 1-\zeta
    \right)
    \ge 1-\delta.
\end{gathered}\tag{$\star$}
\end{equation}

This corresponds to the concentration of most strategy probability mass on a
single dominant strategy, completing the proof.
\hfill$\blacksquare$
\hypertarget{thm1:step5}{}
\subsubsection{The Consequences of Strategy Dynamics}

Since the strategy-level evolution equations derived for GRPO, GSPO, and DAPO share the same dynamical structure as Equation~$(\star)$, the analysis developed in Sec.~\textbf{A2.3} applies uniformly to all three RLVR objectives. Moreover, as established above, strategy-level probability redistribution occurs on a faster timescale than the internal rearrangement of trajectories within each strategy. Consequently, the dominant training dynamics are governed by competition among strategies, while the within-strategy trajectory distribution evolves only comparatively slowly.

This timescale separation allows the long-term behavior of the policy to be characterized directly at the strategy level. Under GRPO, GSPO, and DAPO, differences in strategy advantage are progressively amplified, causing probability mass to concentrate toward an increasingly dominant strategy. In particular, once one strategy acquires a relative advantage, the resulting positive feedback further increases its probability and suppresses competing strategies, eventually driving the policy toward strategy concentration.

Therefore, all three objectives systematically amplify the model's existing preference over strategies. The strategy that is most favorable under the current model state---i.e., the one with the highest effective evolutionary advantage---is preferentially reinforced and becomes increasingly dominant, providing the dynamical mechanism underlying strategy collapse.

\clearpage
\hypertarget{proof:thm2}{}
\subsection{{Proof for theorem 2}}\label{app:theorem2_proof}
\begin{bluebox}
\hyperlink{back:thm2}{\color{blue}{Back}}
\noindent\textbf{Proof Sketch for Theorem 2.} Let $S_{\epsilon,k}$ be the realized random strategy at step $k$. 
We start from analyzing strategy entropy $H(S_{\epsilon,k})$ and consider the expansion $H(S_{\epsilon,k})=H(S_{\epsilon,k}|Q)+I(S_{\epsilon,k};Q)$. Following the two-step procedure below, we seek a lower bound for each term on the right hand side.

\textbf{Step 1: Controlling the Conditional Entropy. } Consider the definition of conditional entropy, $H(S_{\epsilon,k}|Q)=\mathbb{E}_{Q\sim\mathcal{D}}[H(S_{\epsilon,k}|Q=Q)]=-\mathbb{E}_{Q\sim\mathcal{D}}[\sum_{\alpha}\nu_{\alpha,Q,k}\log\nu_{\alpha,Q,k}]$. Introducing the pre-RLVR distribution as a reference, we have:
\begin{equation}
H(S_{\epsilon,k}|Q)=-\mathbb{E}_{Q\sim\mathcal{D}}\left[D_{KL}(\nu_{Q,k}\|\nu_{Q,0})\right]-\mathbb{E}_{Q\sim\mathcal{D}}\left[\sum_{\alpha}\nu_{\alpha,Q,k}\log{\nu_{\alpha,Q,0}}\right]
\end{equation}
According to the adiabatic properties established in Lemma~\ref{adiabatic:rho} of Appendix~\ref{theorem1:step3}, we prove that $D_{KL}(\nu_{Q,k}\|\nu_{Q,0})=\mathcal O(\eta^2|\tau|^2),$ where $|\tau|$ is the generated trajectory length (Appendix \hyperlink{thm2:step1.1}{\color{blue}{A.3.2}}). According to our requirements for the pre-RLVR model, we have $\mathbb{E}_{Q\sim\mathcal{D}}\left[\sum_{\alpha}\nu_{\alpha,Q,k}\log{\nu_{\alpha,Q,0}}\right]\le\log_2\frac{(1+\rho)^2}{4\rho m_\epsilon}$ (Appendix \hyperlink{thm2:step1.2}{\color{blue}{A.3.3}}), thereby controlling the conditional entropy.

\textbf{Step 2: Controlling the Mutual Information.} Let $A\triangleq\mathbf 1\{\widehat Y(\tau)=Y(Q)\}$, where
$\tau$ is sampled from the realized strategy $S_{\epsilon,k}$, so that
$\mathbb P(A=1)=P_{\mathrm{acc},k}$.
Since $\mathcal D$ and all distinct ground-truth answers
$\{Y(Q'):Q'\in\mathcal D\}$ are known, whenever $A=1$ the terminal
answer $\widehat Y(\tau)=Y(Q)$ uniquely identifies the originating query,
\textbf{unless} the same realized strategy contains a trajectory whose answer
collides with the ground truth of another query $Q'\neq Q$.
Under the $\gamma$-separation condition, this collision occurs with
probability at most $\gamma$.
Hence we construct a collision-aware decoder
$\widehat Q_k(S_{\epsilon,k})$ whose error probability satisfies
\begin{equation}
\mathbb P\!\left(
\widehat Q_k(S_{\epsilon,k})\neq Q
\right)
\le 1-P_{\mathrm{acc},k}+\gamma.
\end{equation}
Applying Fano's inequality then gives the lower bound (Appendix \hyperlink{thm2:step2}{\color{blue}{A.3.4}}):
\begin{equation}
    I(Q;S_{\epsilon,k})\ge (P_{acc,k}-\gamma)\log_2 N-h_2(P_{acc,k}-\gamma)
\end{equation}
\textbf{Notably}, the subscript \(Q\) in \(S_{\alpha,Q}\) is \textbf{only an analytical index} indicating the conditional partition from which the strategy is realized; it is \textbf{not part of the observation} available to the decoder.

\textbf{Finally,} combining these two results, we conclude $|\mathcal S_{\epsilon,k}|=\Omega_{\rho,\gamma}(m_\epsilon N^{P_{acc,k}-\gamma})$. The coefficient of this bound is around $2\rho$, establishing this as a nontrivial bound. 

\hfill$\blacksquare$
\end{bluebox}

We now prove the statement.  All logarithms in this subsection are base two,
and $Q$ is uniform on the $N$ evaluation queries in $\mathcal D$.

\subsubsection{$\rho$-admissible strategy resolution}

The order of quantifiers is essential: first fix $\rho$, and only then choose
$\epsilon$.  We call $\epsilon$ \emph{$\rho$-admissible} if the strategy
partition at step $0$ satisfies
\begin{equation}
    \nu^{(\epsilon)}_{\alpha,Q,0}
    \ge \rho
    \label{eq:rho-admissible-partition}
\end{equation}
for every $Q\in\mathcal D$ and every
$S_\alpha^{(\epsilon)}$ in its induced partition.  The partition remains
fixed during optimization on $\mathcal D'$, and we write
$\nu^{(\epsilon)}_{\alpha,Q,k}=\mathbb P(S_{\epsilon,k}
=S_{\alpha,Q}^{(\epsilon)}\mid Q)$ for its step-$k$ mass.  For each query,
\begin{equation}
    1
    =
    \sum_{\alpha=1}^{m_{\epsilon,Q}}\nu^{(\epsilon)}_{\alpha,Q,0}
    \ge
    m_{\epsilon,Q}\rho,
\end{equation}
and consequently $m_{\epsilon,Q}\le\lfloor\rho^{-1}\rfloor$ for all
$Q\in\mathcal D$, hence
\begin{equation}
    \boxed{
    \rho\cdot m_\epsilon
    \le
    1.}
    \label{eq:rho-resolution-cardinality}
\end{equation}
Thus Theorem~\ref{theorem:PowerLaw} is not a claim about arbitrarily fine
partitions.  It is a capacity statement at a fixed, pre-specified probability
resolution.
\hypertarget{thm2:step1.1}{}
\subsubsection{An adiabatic KL bound at the chosen resolution}
Define the conditional reverse KL budget at step $k$ by
\begin{equation}
    B_k
    \triangleq
    \mathbb E_Q
    D_{\mathrm{KL}}\!\left(
        \nu_k(\cdot\mid Q)
        \,\middle\|\,
        \nu_0(\cdot\mid Q)
    \right).
    \label{eq:rho-kl-budget}
\end{equation}
Here step $0$ denotes the beginning of the same adiabatic window, and
$k\le|\tau|$. Because the minibatch is the training batch, the behavior
policy is refreshed after every update:
\begin{equation}
    \pi_{\mathrm{old},k}=\pi_{k-1},
    \qquad
    \rho_{i,k}
    =
    \frac{\pi_k(\tau_i\mid Q)}
         {\pi_{k-1}(\tau_i\mid Q)}.
\end{equation}
Consequently, $\log_2\rho_{i,k}$ is the first discrete difference of
$\log_2\pi_k(\tau_i\mid Q)$, and its temporal variation is the corresponding
second discrete difference. By the adiabatic correction-factor estimate
established in Theorem~\ref{theorem:concentration},
\begin{equation}
    \Delta\log_2\rho_{i,k}
    =
    \mathcal O\!\left(\eta^2|\tau|\right).
\end{equation}
Hence, throughout the window,
\begin{equation}
    \log_2\frac{\rho_{i,k}}{\rho_{i,0}}
    =
    \mathcal O\!\left(\eta^2|\tau|^2\right).
\end{equation}

The one-step relative policy change is $\mathcal O(\eta)$. The accumulated
effect of the slowly varying correction factor contributes only
$\mathcal O(\eta^3|\tau|k^2)$ to the endpoint log-likelihood ratio.
Therefore, for every $k\le|\tau|$,
\begin{equation}
    \sup_{Q,\tau_i}
    \left|
        \log_2
        \frac{\pi_k(\tau_i\mid Q)}
             {\pi_0(\tau_i\mid Q)}
    \right|
    =
    \mathcal O(\eta k)
    +
    \mathcal O(k\eta^2)
    +
    \mathcal O(\eta^3|\tau|k^2)
    \label{eq:adiabatic-endpoint-log-ratio}
\end{equation}
Note that $\eta|\tau|\ll1$, we have $\mathcal O(\eta^2k)\ll\mathcal O(\eta|\tau|)$, $\mathcal O(\eta^3|\tau|k^2)=\mathcal O(\eta^3|\tau|^3)\ll\mathcal O(\eta|\tau|)$, we have:
\begin{equation}
        \sup_{Q,\tau_i}
    \left|
        \log_2
        \frac{\pi_k(\tau_i\mid Q)}
             {\pi_0(\tau_i\mid Q)}
    \right|
    =\mathcal O(\eta|\tau|)
\end{equation}
where the last equality uses $\eta|\tau|\ll1$. It follows that
\begin{equation}
    \frac{\pi_k(\tau_i\mid Q)}
         {\pi_0(\tau_i\mid Q)}
    -1
    =
    \mathcal O(\eta|\tau|)
\end{equation}
uniformly on the same window.

The intra-strategy distribution is frozen by the adiabatic approximation.
Thus, for every $\tau_i\in S_\alpha^{(\epsilon)}$,
\begin{equation}
    \pi_k(\tau_i\mid Q)
    =
    \nu_{\alpha,k}\,
    q_{i\mid S_\alpha}(Q),
    \qquad
    \pi_0(\tau_i\mid Q)
    =
    \nu_{\alpha,0}\,
    q_{i\mid S_\alpha}(Q).
\end{equation}
The trajectory-level and strategy-level divergences are therefore exactly
equal:
\begin{align}
    D_{\mathrm{KL}}\!\left(
        \pi_k(\cdot\mid Q)
        \,\middle\|\,
        \pi_0(\cdot\mid Q)
    \right)
    &=
    \sum_{\alpha=1}^{m_{\epsilon,Q}}
    \sum_{\tau_i\in S_\alpha^{(\epsilon)}}
    \nu_{\alpha,k}q_{i\mid S_\alpha}(Q)
    \log_2
    \frac{
        \nu_{\alpha,k}q_{i\mid S_\alpha}(Q)
    }{
        \nu_{\alpha,0}q_{i\mid S_\alpha}(Q)
    }
    \\
    &=
    \sum_{\alpha=1}^{m_{\epsilon,Q}}
    \nu_{\alpha,k}
    \log_2
    \frac{\nu_{\alpha,k}}{\nu_{\alpha,0}}
    \\
    &=
    D_{\mathrm{KL}}\!\left(
        \nu_k(\cdot\mid Q)
        \,\middle\|\,
        \nu_0(\cdot\mid Q)
    \right).
    \label{eq:trajectory-strategy-kl-equality}
\end{align}

Finally, expanding the trajectory-level divergence around
$\pi_k/\pi_0=1$, which is the abbreviation of $\pi_k(\cdot|Q)/{\pi_0(\cdot|Q)}$, gives
\begin{equation}
    D_{\mathrm{KL}}\!\left(
        \pi_k(\cdot\mid Q)
        \,\middle\|\,
        \pi_0(\cdot\mid Q)
    \right)
    =
    \mathbb E_{\pi_0(\cdot\mid Q)}
    \left[
        \frac{\pi_k}{\pi_0}
        \log_2\frac{\pi_k}{\pi_0}
    \right]
\end{equation}
The first-order term vanishes exactly because
\begin{equation}
    \mathbb E_{\pi_0(\cdot\mid Q)}
    \left[
        \frac{\pi_k}{\pi_0}-1
    \right]
    =0.
\end{equation}
Therefore the expansion can be expressed as:
\begin{equation}
    D_{\mathrm{KL}}\!\left(
        \pi_k(\cdot\mid Q)
        \,\middle\|\,
        \pi_0(\cdot\mid Q)
    \right) =
    \mathbb E_{\pi_0(\cdot\mid Q)}
    \left[
        \frac12
        \left(\frac{\pi_k}{\pi_0}-1\right)^2
        +
        \mathcal O\!\left(
            \left|
                \frac{\pi_k}{\pi_0}-1
            \right|^3
        \right)
    \right].
\end{equation}

Combining this identity with
Eq.~\eqref{eq:adiabatic-endpoint-log-ratio} and
Eq.~\eqref{eq:trajectory-strategy-kl-equality} yields
\begin{equation}
    D_{\mathrm{KL}}\!\left(
        \nu_k(\cdot\mid Q)
        \,\middle\|\,
        \nu_0(\cdot\mid Q)
    \right)
    =
    \mathcal O\!\left(\eta^2|\tau|^2\right).
\end{equation}
After averaging over $Q$,
\begin{equation}
    \boxed{
    B_k
    =
    \mathcal O\!\left(\eta^2|\tau|^2\right).}
    \label{eq:rho-kl-upper-bound}
\end{equation}
This is a finite-time consequence of the adiabatic scale separation rather
than a purely geometric consequence of the probability resolution. In
particular, because $\eta|\tau|\ll1$, the information displacement remains
second order throughout the frozen window.
\hypertarget{thm2:step1.2}{}
\subsubsection{Bounded cross entropy}

For any $S_\alpha^{(\epsilon)}$ in the partition of a fixed $Q$, the other
$m_{\epsilon,Q}-1$ atoms carry total step-$0$ mass at least
$(m_{\epsilon,Q}-1)\rho$.  Hence
\begin{equation}
    \nu_{\alpha,0}
    \le
    1-(m_{\epsilon,Q}-1)\rho.
    \label{eq:rho-reference-first-bound}
\end{equation}
Moreover,
\begin{equation}
\begin{aligned}
    m_{\epsilon,Q}\bigl[1-(m_{\epsilon,Q}-1)\rho\bigr]
    &=
    \frac{(1+\rho)^2}{4\rho}
    -
    \rho
    \left(
        m_{\epsilon,Q}-\frac{1+\rho}{2\rho}
    \right)^2
    \\
    &\le
    \frac{(1+\rho)^2}{4\rho}.
\end{aligned}
\end{equation}
Combining the last two displays gives the pointwise upper bound
\begin{equation}
    \boxed{
    \nu_{\alpha,0}
    \le
    \frac{(1+\rho)^2}{4\rho\,m_{\epsilon,Q}}.}
    \label{eq:rho-reference-upper-bound}
\end{equation}

The conditional entropy of the step-$k$ strategy obeys the exact identity
\begin{align}
    H(S_{\epsilon,k}\mid Q)
    &=-B_k
    -
    \mathbb E_Q
    \sum_{\alpha=1}^{m_{\epsilon,Q}}\nu_{\alpha,k}
    \log_2\nu_{\alpha,0}.
    \label{eq:entropy-kl-identity}
\end{align}
Using Eqs.~\eqref{eq:rho-kl-upper-bound} and
\eqref{eq:rho-reference-upper-bound},
\begin{align}
    H(S_{\epsilon,k}\mid Q)
    &\ge
    \frac1N\sum_{Q\in\mathcal D}\log_2m_{\epsilon,Q}
    -
    \log_2\frac{(1+\rho)^2}{4\rho}
    -B_k
    \\
    &\ge
    \boxed{
    \log_2m_\epsilon
    -
    \log_2\frac{(1+\rho)^2}{4\rho}-\mathcal O(\eta^2|\tau|^2).}
    \label{eq:rho-conditional-entropy-bound}
\end{align}
\hypertarget{thm2:step2}{}
\subsubsection{Collision-aware Fano step}
Let \(A=\mathbf 1\{\widehat Y(\tau)=Y(Q)\}\), so that
\(\mathbb P(A=1)=P_{acc,k}\). In addition to distinct ground-truth
answers, we assume the evaluation dataset is \(\gamma\)-separated under the
evaluated policy and strategy resolution:
\begin{equation}
\mathbb P\!\left(
\exists Q'\neq Q,\exists \tau\in S^{(\epsilon)}_{\cdot,Q}:
\widehat Y(\tau)=Y(Q')
\right)\le \gamma .
\label{eq:cross-query-answer-collision}
\end{equation}
Whenever $S^{(\epsilon)}_{\alpha,Q}$ is usable at step $k$, it is an element of the global usable-strategy set $\mathcal S_{\epsilon,k}$. Let $S_{\epsilon,k}\in\mathcal S_{\epsilon,k}$ denote the realized strategy at step $k$.

That is, except for a probability-\(\gamma\) collision event, a strategy
generated for \(Q\) cannot produce the ground-truth answer of another query.
Note that this condition does not require trajectories within the same
strategy to share identical answers.

The decoding intuition is as follows. Since the evaluation set
\(\mathcal D\) and all ground-truth answers
\(\{Y(Q'):Q'\in\mathcal D\}\) are known, the only unknown is the association
between the observed strategy set and its originating query. If the strategy
contains a trajectory that solves \(Q\), then its terminal answer identifies
\(Q\) uniquely:
\begin{equation}
\widehat Y(\tau)=Y(Q),\quad \tau\in S_{\epsilon,k}
\quad\Longrightarrow\quad
Q=\arg\max_{Q'\in\mathcal D}
\mathbf 1\{Y(Q')=\widehat Y(\tau)\},
\label{eq:strategy-decoding}
\end{equation}
except for the cross-query collision event.
Conversely, when no trajectory in the strategy recovers \(Y(Q)\), the
strategy set does not provide a reliable identifier of \(Q\). Therefore, the
decoder failure is contained in
\begin{equation}
\{\widehat Y(\tau)\neq Y(Q)\}
\cup
\{\exists Q'\neq Q,\tau\in S_{\epsilon,k}:
\widehat Y(\tau)=Y(Q')\},
\label{eq:decoder-failure-event}
\end{equation}
whose probability is at most \(1-P_{acc,k}+\gamma\).

Define a decoder
\(\widehat Q_k(S_{\epsilon,k})\) by matching the terminal answers in
\(S_{\epsilon,k}\) against \(\{Y(Q'):Q'\in\mathcal D\}\).
Then
\begin{equation}
\mathbb P\!\left(
\widehat Q_k(S_{\epsilon,k})\neq Q
\right)
\le
1-P_{acc,k}+\gamma .
\label{eq:collision-aware-decoder-error}
\end{equation}

Since \(\widehat Q_k\) is a deterministic function of
\(S_{\epsilon,k}\),
\begin{equation}
I(Q;S_{\epsilon,k})
\ge
I(Q;\widehat Q_k(S_{\epsilon,k})).
\label{eq:successful-strategy-dpi}
\end{equation}

For \(P_{acc,k}-\gamma\ge N^{-1}\), applying Fano's inequality with
Eq.~\eqref{eq:collision-aware-decoder-error} gives
\begin{equation}
H(Q|\widehat Q_k)
\le
h_2(P_{acc,k}-\gamma)
+
(1-P_{acc,k}+\gamma)\log_2(N-1),
\label{eq:fano-collision}
\end{equation}
and therefore
\begin{equation}
\boxed{
I(Q;S_{\epsilon,k})
\ge
(P_{acc,k}-\gamma)\log_2N
-
h_2(P_{acc,k}-\gamma).
}
\label{eq:successful-strategy-fano-bound}
\end{equation}

For \(P_{acc,k}-\gamma<N^{-1}\), the trivial bound
\(I(Q;S_{\epsilon,k})\ge0\) applies.
\subsubsection{Combining the two information requirements}

Both the accuracy statement and the conditional-entropy bound concern the
same step-$k$ strategy law.  Combining
Eqs.~\eqref{eq:rho-conditional-entropy-bound},
\eqref{eq:successful-strategy-dpi}, and
\eqref{eq:successful-strategy-fano-bound} gives
\begin{align}
    H(S_{\epsilon,k})
    &=
    H(S_{\epsilon,k}\mid Q)
    +I(Q;S_{\epsilon,k})
    \\
    &\ge
    \log m_{\epsilon}
    -
    \log_2\frac{(1+\rho)^2}{4\rho}-\mathcal O(\eta^2|\tau|^2)
    +
    (P_{acc,k}-\gamma)\log_2N-1.
    \label{eq:rho-total-strategy-entropy}
\end{align}
The query-specific strategy sets form the disjoint range of
$S_{\epsilon,k}$, and therefore
\begin{equation}
    H(S_{\epsilon,k})
    \le
    \log_2|\mathcal S_{\epsilon,k}|.
\end{equation}
Together with Eq.~\eqref{eq:rho-total-strategy-entropy}, this proves
\begin{equation}
    \boxed{
    |\mathcal S_{\epsilon,k}|
    \ge
    \frac{2\rho m_\epsilon}{(1+\rho)^2}
    2^{-\mathcal O(\eta^2|\tau|^2)} N^{P_{acc,k}-\gamma}
    =
    \Omega_{\rho,\gamma}\!\left(
        m_\epsilon N^{P_{acc,k}-\gamma}
    \right).}
\end{equation}
In saturated regime, we have $\rho\cdot m_\epsilon\approx1$. Given that $\mathcal O(\eta^2|\tau|^2)\ll1$, we have $2^{-\mathcal O(\eta^2|\tau|^2)}\approx1$. Therefore we have:
\begin{equation}
    \frac{2\rho m_\epsilon}{(1+\rho)^2}
    2^{-\mathcal O(\eta^2|\tau|^2)} N^{P_{acc,k}-\gamma}\sim 2N^{P_{acc,k}-\gamma}
\end{equation}
This scale implies that this bound is not vacuous within the adiabatic time zone, \textbf{making it possible for collapsed model to violate this lower bound}. The analysis above completes the resolution-conditioned proof of
Theorem~\ref{theorem:PowerLaw}.
\hfill$\blacksquare$

\begin{remark}[Entropy interpretation]
The contradiction admits a simple information-theoretic interpretation.
An initially non-collapsed model distributes probability mass over
$m_\epsilon$ viable strategies, carrying roughly $\log m_\epsilon$ of
strategy entropy. Concentrating these strategies onto one removes
\[
\Delta H_{\mathrm{req}}\approx \log m_\epsilon .
\]
However, within the adiabatic regime, the amount of entropy that can be
recovered through the slowly evolving strategy distribution is only
$\Delta H_{\mathrm{ad}}\sim B_k=\mathcal O(\eta^2|\tau|^2)=o(\log m_\epsilon)$.
Hence the entropy loss induced by strategy concentration eventually
exceeds the available adiabatic budget, making the non-collapse regime
unsustainable and forcing an accuracy decline.
\end{remark}

\clearpage
\hypertarget{proof:appB}{}
\section{{Appendix B: On the Properties of MEI}}\label{app:mei}
\hyperlink{back:mei}{Back}
\subsection{{Requirements for a High Quality Metrics}}

Before the formal discussion of the Mirrored Entanglement Index (MEI), we need to first examine, under \textbf{WHAT} requirements a metrics is of high quality.

\begin{itemize}
\item The metrics should share the same evolutionary tendency with the dynamics proposed in Appendix A.
\item The metric should be easy to compute.
\item The data used to compute the metrics should be easy to acquire and poses no pressure on GPU memories.
\end{itemize}
\subsection{{The Convergence of Gradients}}
\begin{bluebox}
\begin{theorem}[The concentration of gradients]
Fix a prompt $Q$ and an $\epsilon$-strategy partition. 
Suppose that after strategy collapse, a dominant strategy $S$ carries probability mass
\begin{equation}
\nu_S
=
\mathbb{P}_{\tau\sim\pi_\theta(\cdot|Q)}(\tau\in S)
\ge 1-\zeta .
\end{equation}
Sample $G$ trajectories
$\{\tau_1,\ldots,\tau_G\}\sim\pi_\theta(\cdot|Q)$, define
\begin{equation}
K_{\max}\triangleq \sup_{a,b}|K_{ab}|.
\end{equation}
Then
\begin{equation}
\mathbb{P}\left(
\max_{i,j}
\|U_i-U_j\|
>
\sqrt{2\epsilon K_{\max}}
\right)
\le
1-\nu_S^G
\le
G(1-\nu_S)
\le
G\zeta .
\end{equation}
Therefore, as the strategy distribution concentrates and
$\zeta\rightarrow 0$, the sampled parameter-space gradients concentrate
inside an $O(\sqrt{\epsilon K_{\max}})$ neighborhood with probability
approaching one.
\end{theorem}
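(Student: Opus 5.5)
The plan has two parts: a deterministic geometric bound on score differences inside a single strategy, and a probabilistic bound on the event that some sampled trajectory lands outside the dominant strategy.

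\textbf{Geometric step.} If $\tau_i,\tau_j$ both belong to the dominant strategy $S$, then $\|U_i-U_j\|\le\sqrt{2\epsilon K_{\max}}$ holds deterministically. I would reuse the computation from the lower-bound part of Lemma~\ref{lemma:bounds}. Expand
\begin{equation}
\|U_i-U_j\|^2=K_{ii}+K_{jj}-2K_{ij}=(K_{ii}-K_{ji})+(K_{jj}-K_{ij}).
\end{equation}
Since $\tau_i\sim_\epsilon\tau_j$, Def.~\ref{def:strategy} gives $\mathcal N_i(\epsilon)=\mathcal N_j(\epsilon)$. For $i\neq j$ this puts $\tau_j\in\mathcal N_i(\epsilon)$, so $\sup_k|K_{ik}-K_{jk}|\le\epsilon K_{\max}$. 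Taking $k=i$ and $k=j$, and using $K_{ji}=K_{ij}$, bounds each bracket by $\epsilon K_{\max}$. This yields $\|U_i-U_j\|^2\le 2\epsilon K_{\max}$.

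One subtlety needs checking here. Membership of $\tau_j$ in $\mathcal N_i$ is required for the bound, and the definition of $\mathcal N_i$ includes every trajectory with an $\epsilon$-similar profile. So equality of the neighbourhoods indeed forces $\tau_j\in\mathcal N_j=\mathcal N_i$. The argument also assumes the sup in $K_{\max}$ runs over the same examined set that contains the samples, which I would state explicitly.

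\textbf{Probabilistic step.} By the geometric step, the event $\{\max_{i,j}\|U_i-U_j\|>\sqrt{2\epsilon K_{\max}}\}$ is contained in the event that some $\tau_i\notin S$. The $G$ trajectories are i.i.d. from $\pi_\theta(\cdot|Q)$, so
\begin{equation}
\mathbb P(\text{all }\tau_i\in S)=\nu_S^G,
\end{equation}
and the failure probability is at most $1-\nu_S^G$.

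\textbf{Chain of inequalities.} Bernoulli's inequality gives $(1-x)^G\ge1-Gx$ for $x=1-\nu_S\in[0,1]$. Equivalently, a union bound over the $G$ samples gives $1-\nu_S^G\le G(1-\nu_S)$. The hypothesis $\nu_S\ge1-\zeta$ then gives $G(1-\nu_S)\le G\zeta$. Letting $\zeta\to0$ with $G$ fixed yields the concluding sentence of the theorem.

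\textbf{Expected obstacle.} The main difficulty is the well-posedness of the geometric step, not any calculation. The partition is fixed at step $0$, while $K$ is evaluated at the current $\theta$ after collapse. If $K_{\max}$ and the neighbourhoods are taken at different times, the bound on $\|U_i-U_j\|$ fails. I would resolve this by interpreting the partition, $K$ and $K_{\max}$ as computed at the same (current) parameters. Alternatively, I would invoke the adiabatic freezing of $K$ from Lemma~\ref{lemma:K_pi} to transfer the step-$0$ similarity to the current step, at the cost of an $O(\omega_K)$ correction.
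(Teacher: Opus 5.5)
Your proposal is correct and matches the paper's proof step for step. The paper also bounds the complement of the event that all $G$ samples lie in $S$ by $1-\nu_S^G\le G(1-\nu_S)\le G\zeta$, and on that event uses Def.~\ref{def:strategy} together with $K_{ij}=K_{ji}$ to get $\|U_i-U_j\|^2=K_{ii}+K_{jj}-2K_{ij}\le 2\epsilon K_{\max}$. The paper leaves your well-posedness caveat implicit: that the partition, $K$ and $K_{\max}$ are evaluated at the same parameters and over one examined set containing the samples. It tacitly adopts the same interpretation you propose as your primary resolution.
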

\end{bluebox}

\begin{proof}
Let
\[
\mathcal{A}
\triangleq
\{\tau_1,\ldots,\tau_G\in S\}
\]
be the event that all sampled trajectories belong to the dominant strategy.
Since each trajectory is sampled from $\pi_\theta(\cdot|Q)$,
\[
\mathbb{P}(\mathcal{A})
=
\nu_S^G,
\]
and hence
\[
\mathbb{P}(\mathcal{A}^c)
=
1-\nu_S^G
\le
G(1-\nu_S)
\le
G\zeta .
\]

Conditioned on $\mathcal{A}$, every pair
$\tau_i,\tau_j$ belongs to the same strategy $S$.
By Definition~1, dynamically equivalent trajectories have similar
coupling profiles. In particular,
\[
|K_{ii}-K_{ji}|
\le
\epsilon K_{\max},
\qquad
|K_{ij}-K_{jj}|
\le
\epsilon K_{\max}.
\]
Since $K_{ij}=K_{ji}$,
\begin{align}
\|U_i-U_j\|^2
&=
K_{ii}+K_{jj}-2K_{ij} \\
&=
(K_{ii}-K_{ji})-(K_{ij}-K_{jj}) \\
&\le
|K_{ii}-K_{ji}|
+
|K_{ij}-K_{jj}| \\
&\le
2\epsilon K_{\max}.
\end{align}
Therefore, on $\mathcal{A}$,
\[
\max_{i,j}\|U_i-U_j\|
\le
\sqrt{2\epsilon K_{\max}}.
\]
Consequently,
\[
\mathbb{P}\left(
\max_{i,j}
\|U_i-U_j\|
>
\sqrt{2\epsilon K_{\max}}
\right)
\le
\mathbb{P}(\mathcal{A}^c)
\le
G\zeta .
\]
This establishes parameter-gradient concentration as a direct consequence
of strategy concentration, without requiring convergence to a single
trajectory.
\end{proof}
\subsection{{Heuristic Insight}}
Consider a rollout process, during which the model has already reached a collapsed state. According to definition, a model in this state could only generate trajectories inside one strategy, or more specifically:

\begin{equation}
\forall \tau_i, \tau_j \in S, \|\nabla_\theta \log(\tau_i |Q) - \nabla_\theta \log(\tau_j |Q)\|\le\epsilon  
\end{equation}

Based on the fact that all trajectories are highly aligned in the back-propagation process, we can try this parameter:

\begin{equation}
\alpha = \frac{\left(\sum_{i}\nabla_\theta \log(\tau_i |Q)\right)^2}{\sum_{i}\nabla_\theta \log(\tau_i |Q)^2}
\end{equation}

when the generated trajectories have reached a highly aligned state, it is obvious that $\alpha\to G$, where $G$ is the rollout times; more importantly, the more the trajectories are aligned, the more they approaches to $G$. Therefore, this index $\alpha$ has already met with our requirement 1. 

\subsection{{The Formal Definition of MEI and its Convergence}}
In subsection B.2, we have already derived a parameter that could monitor the collapse process. However, given a second thought, it can't be ignored that the vectors used to calculate $\alpha$ are too big in size, which could easily consume $64B$ of the total GPU memory. Therefore, in order to meet requirement 2 and 3, it is imperative to construct a low-dimension projection of $\alpha$, which leads to our formal definition of MEI.

\begin{definition}[The Mirrored Entanglement Index]\leavevmode\\
Consider a rollout process, where the model generates $G$ trajectories. For each trajectory $\tau_i$, compute its log-prob gradient on logits space $\nabla_z\log(\tau_i|Q)$. \\
The Mirrored Entanglement Index is then defined as:
\begin{equation}
MEI = \frac{\left(\sum_{i}\nabla_z \log(\tau_i |Q)\right)^2}{\sum_{i}\nabla_z \log(\tau_i |Q)^2}
\end{equation}
\end{definition}

This index is both easy to compute and easy to acquire, the $\nabla_z$ is actually the derivatives on latent space, which could be easily stored in the first stage of back-propagation. Also, the dimensions of all these vectors are equal to latent space dimension $D$; therefore, the dot product could be done in time complexity $O(G\cdot D^2)$, which is almost negligible compared with full-attention calculation and empowered with GPUs who are experts in matrix calculation.

In short, the MEI index has already satisfied requirements 2 and 3. Its coherent evolutionary tendency with the dynamics is guaranteed by the following theorem.

\begin{theorem}[The convergence of logit gradients]\leavevmode\\
Let $v(\tau_i) = \nabla_z\log(\tau_i|Q)$ and $g(\tau_i) = \nabla_\theta \log(\tau_i|Q)$.if:
\begin{equation}
\|v(\tau_i) -v(\tau_j)\|\to0
\end{equation}
then:
\begin{equation}
\|g(\tau_i) - g(\tau_j)\|\to0
\end{equation}
\end{theorem}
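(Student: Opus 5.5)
The natural route is the chain rule through the logit layer. Write the trajectory log-likelihood as $\log\pi(\tau_i\mid Q)=\sum_{t=1}^{|\tau_i|}\log\mathrm{softmax}(z_{i,t})[\tau_{i,t}]$, where $z_{i,t}=z_\theta(Q,\tau_{i,<t})$ are the logits at position $t$. Then
\[
g(\tau_i)=J_i^\top v(\tau_i),\qquad J_i\triangleq\frac{\partial z_i}{\partial\theta},
\]
where $z_i=(z_{i,1},\ldots,z_{i,|\tau|})$ stacks the per-position logits and $v(\tau_i)$ stacks the corresponding per-position logit gradients $e_{\tau_{i,t}}-\mathrm{softmax}(z_{i,t})$. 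I would first state this identity precisely. I would also make explicit how the compressed $D$-dimensional $v$ used in MEI relates to this stacked vector, either as the same object or as a fixed linear projection of it, so that convergence of one controls the other.

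Next I would split the difference as
\[
g(\tau_i)-g(\tau_j)=J_i^\top\bigl(v(\tau_i)-v(\tau_j)\bigr)+\bigl(J_i-J_j\bigr)^\top v(\tau_j),
\]
and bound the two terms separately. The first term is at most $\|J_i\|_{\mathrm{op}}\|v(\tau_i)-v(\tau_j)\|$. Here I would use the same standing assumption as Lemma~\ref{lemma:bounds}: token-level gradients are uniformly bounded on a finite training window. This gives $\|J_i\|_{\mathrm{op}}\le M'$ per position, and hence $O(|\tau|)$ aggregated, which is a fixed constant inside the window. So the first term tends to $0$ whenever $\|v_i-v_j\|\to0$.

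The main obstacle is the second term. It requires the Jacobians of two different contexts to approach each other, and convergence of $v$ alone does not force this in general. My first attempt would be to show that the convergence of $v$ is not an isolated fact but occurs in the collapsed regime described by Theorem~\ref{theorem:concentration}. There, with probability at least $1-G\zeta$ (as in the preceding gradient-concentration theorem), all sampled trajectories lie in one strategy and share essentially the same sequence of hidden states and prefixes. I would then impose a Lipschitz assumption, $\|J(\text{ctx})-J(\text{ctx}')\|_{\mathrm{op}}\le L\,d(\text{ctx},\text{ctx}')$, with $d$ measured on hidden representations. I would argue that $\|v_i-v_j\|\to0$ forces the per-position softmax outputs, and hence the tokens (which are one-hot targets), to coincide, so the contexts coincide position by position and $J_i-J_j\to0$. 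Combined with $\|v(\tau_j)\|\le\sqrt{2|\tau|}$, this gives vanishing of the second term.

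A fallback, if that route is too weak, is to interpret $v$ as the gradient with respect to the final hidden representation, so that only the shared head Jacobian enters. The head Jacobian is identical across trajectories, and the second term then vanishes exactly for the head parameters. The remaining body parameters would be handled by the same Lipschitz-in-context argument. Combining the two bounds yields $\|g(\tau_i)-g(\tau_j)\|\le M'\|v_i-v_j\|+o(1)\to0$.
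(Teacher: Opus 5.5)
Your decomposition $g_i-g_j=J_i^\top(v_i-v_j)+(J_i-J_j)^\top v_j$ is more careful than the paper's proof. The paper writes $g=v\cdot\partial z/\partial\theta$ with a single bounded Jacobian $J$, controlled layer by layer, and concludes $\|g_i-g_j\|=\|(v_i-v_j)J\|\le C_2\|v_i-v_j\|$. That is exactly your first term, with $J_i=J_j$ silently assumed, so you are right to isolate the cross term. The gap is in how you kill it.

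(i) Lying in one strategy (Definition~\ref{def:strategy}) constrains only the coupling profile of the Fisher scores. It says nothing about shared prefixes or hidden states, so the collapse regime does not give you common contexts. Moreover, under collapse the gradient-concentration theorem of Appendix~B already bounds $\|U_i-U_j\|$ directly, by $\sqrt{2\epsilon K_{\max}}$, which does not vanish. This route is therefore circular for validating $v$ as a proxy, and also too weak.

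(ii) The per-position inference ``$v_{i,t}\approx v_{j,t}$ forces the softmax outputs, hence the tokens, to coincide'' is false on its own. At two different contexts where the model is nearly deterministic on different tokens $a\neq b$, both $e_a-p_{i,t}$ and $e_b-p_{j,t}$ are near $0$. A Lipschitz bound in hidden-state distance cannot repair this, because close next-token distributions say nothing about the intermediate activations on which $\partial z/\partial\theta$ depends.

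(iii) The fallback fails too. The unembedding gradient $\sum_t(e_{\tau_t}-p_t)h_t^\top$ depends on the trajectory-specific $h_t$, so there is no shared head Jacobian with respect to the parameters.

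The missing idea is that both trajectories are conditioned on the same $Q$, which gives an exact induction over positions. If $\tau_{i,<t}=\tau_{j,<t}$, the two contexts are identical, so $p_{i,t}=p_{j,t}$ and $J_{i,t}=J_{j,t}$. Then $v_{i,t}-v_{j,t}=e_{\tau_{i,t}}-e_{\tau_{j,t}}$, which is either $0$ or of norm exactly $\sqrt2$. The base case $t=1$ holds because the context is just $Q$.

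Hence $\|v_i-v_j\|<\sqrt2$ forces $\tau_i=\tau_j$, with an end-of-sequence token handling unequal lengths. So $J_i=J_j$ and $g_i=g_j$ exactly. This needs no Lipschitz assumption, no collapse regime, and not even the Jacobian bound. It also shows that, for the stacked $v$, the statement is vacuous: its hypothesis is met only by identical trajectories.

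Finally, your bridge to the $D$-dimensional $v$ of MEI runs the wrong way. If that $v$ is a fixed non-injective projection $Pv$ of the stacked vector, convergence of $Pv$ does not control $v$. A position-pooled $v$ does not determine $g_i=\sum_t J_{i,t}^\top v_{i,t}$ either; for example, the one-hot parts of $(a,b,\dots)$ and $(b,a,\dots)$ pool identically. So neither your argument nor the paper's supports the MEI usage without further assumptions.
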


\textbf{Proof.}
According to chain rule, and based on the fact that all layers of the neural network forms a continuous mapping, there exists a continuous mapping between $g(\tau_i)$ and $v(\tau_i)$:
\begin{equation}
g(\tau_i) = v(\tau_i)\cdot\frac{\partial z}{\partial \theta}
\end{equation}

If this mapping is stable, then by applying Lipschitz continuity property, the theorem has already been solved. Fortunately, after a decade of iteration, modern LLM designs have already controlled problems of gradient explosion, which could be formalized by the following lemma

\begin{lemma}
The module of the Jacobian $J \triangleq \frac{\partial z}{\partial \theta}$ is bounded:
\begin{equation}
\|J\| \le C_1
\end{equation}
\end{lemma}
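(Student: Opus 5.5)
The plan is to read the lemma as a statement about a fixed finite training window, like Lemma~\ref{lemma:bounds}. It then follows from a compactness argument combined with a layer-wise decomposition of the Jacobian. The constant $C_1$ may depend on the window, the architecture, and the maximal context length, but not on the particular trajectory. Throughout, $z$ denotes the logits at a single position; for a whole trajectory we stack the per-position Jacobians, and bounding each block uniformly suffices for the block-wise use of $J$ in the chain rule $g(\tau_i)=v(\tau_i)\,J$.

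\textbf{Step 1: the parameters stay in a compact set.} Over the horizon $0\le k\le T_c=\lfloor\eta^{-1}\rfloor$, each RLVR update has the form $\Delta\theta_k=\eta\cdot(\text{bounded advantage})\cdot(\text{bounded ratio})\cdot(\text{score})$. By Step~0 of Appendix~\ref{app:theorem1_proof}, clipping keeps the ratios in $[1-\epsilon_{\mathrm{clip}},1+\epsilon_{\mathrm{clip}}]$. The token-level scores are bounded by the same constant $M$ used in Lemma~\ref{lemma:bounds}, and the loss is normalized by group and length. Hence $\|\Delta\theta_k\|=O(\eta)$, and
\[
\|\theta_k-\theta_0\|\le T_c\cdot O(\eta)=O(1).
\]
All iterates therefore lie in a closed ball $\mathcal B$ around the pre-RLVR parameters $\theta_0$.

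\textbf{Step 2: the map from parameters to logits is smooth.} For a fixed prompt and prefix, the map $\theta\mapsto z$ is a finite composition of affine maps (embeddings, projections, unembedding), softmax attention, smooth activations (GeLU/SiLU), residual additions, and normalization layers with a positive stabilizer $\varepsilon_{\mathrm{LN}}$. Each of these is $C^1$ jointly in parameters and activations. The input space is finite: the vocabulary is finite and the context length is capped at $|\tau|_{\max}$. So $\theta\mapsto J(\theta;\text{input})$ is a finite family of continuous functions on the compact set $\mathcal B$. Each attains a finite maximum, and taking the maximum over the finitely many inputs gives $C_1$.

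To make the constant interpretable rather than purely existential, I would also expand $J$ as a sum over layers $\ell$. Each term has the form
\[
(\text{downstream activation Jacobian})\cdot\frac{\partial h_\ell}{\partial\theta_\ell}.
\]
Each factor is then bounded using four facts: normalization keeps activations of order $\sqrt d$; the softmax Jacobian has norm at most $1$; residual blocks contribute $I+(\text{bounded})$; and weight norms are bounded on $\mathcal B$.

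\textbf{Main obstacle.} The hard point is uniformity in sequence length, because attention mixes all previous positions. A naive product bound over depth and positions could grow with $|\tau|$, which would contaminate the $\Theta(|\tau|)$ scaling used in Lemma~\ref{lemma:bounds}. I would handle this in two ways. First, attention weights are convex combinations, so the aggregated values have norm at most $\max_t\|V h_t\|$, independently of the number of positions. Second, the depth is fixed, so the product of the $L$ per-layer bounds is a finite constant. If this per-layer argument turns out too delicate, for instance because of a normalization denominator near $\varepsilon_{\mathrm{LN}}$, I would fall back on the pure compactness argument of Step~2. That argument only requires $|\tau|\le|\tau|_{\max}$, which holds in every training configuration of Sec.~\ref{subsec:exp_setup}.
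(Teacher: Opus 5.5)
Your proof is correct, but it takes a different route from the paper. The paper argues purely layer-wise. It writes $J$ as a chain-rule product $\prod_i \partial h^{i+1}/\partial h^i\cdot\partial h^1/\partial\theta$ and bounds the normalization, residual, attention (via $\|J_{\mathrm{softmax}}\|\le 1$) and feed-forward Jacobians by constants, under the \emph{assumption} that weights and inputs stay bounded. It concludes $\|J\|\le\mathcal O(C^N)$.

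Your main argument instead \emph{derives} boundedness of the parameters from the finite horizon. You then obtain $C_1$ from continuity of a finite family of $C^1$ maps on a compact set, using the finite vocabulary and the cap $|\tau|_{\max}$. This gives rigor under minimal hypotheses (smooth activations, a positive normalization stabilizer). The price is a purely existential constant that depends on the window and on $|\tau|_{\max}$.

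The paper's route gives an interpretable, depth-exponential constant, but it rests on an unproved boundedness assumption. Its factorization also treats $\theta$ as entering only the first layer. Your sum-over-layers expansion is the correct form, and your convex-combination remark on attention addresses uniformity in sequence length, which the paper does not consider.

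Step~1 is looser than it needs to be, though this is harmless:
\begin{itemize}
\item Clipping does not confine the active ratios to $[1-\epsilon_{\mathrm{clip}},1+\epsilon_{\mathrm{clip}}]$. For negative advantages the unclipped branch remains active above $1+\epsilon_{\mathrm{clip}}$; boundedness of the ratio comes instead from $\rho=1+O(\eta|\tau|)$ under step-wise refresh.
\item For the sequence-level GRPO loss a step is $O(\eta|\tau|)$ rather than $O(\eta)$. A ball of radius $O(|\tau|_{\max})$ is still compact, so this changes nothing.
\item The token-score bound $M$ you invoke is itself essentially a bound on $J^{\top}(e-p)$ (one-hot target minus softmax), which sits close to the conclusion. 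It is cleaner to drop the quantitative estimate altogether: for the lemma as stated, the finitely many iterates $\theta_0,\dots,\theta_{T_c}$ already form a compact set.
\end{itemize}
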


To complete the proof, we consider the Moore-Penrose pseudo-inverse of the Jacobian $J^\dagger = \frac{\partial z}{\partial \theta}^\dagger$. Since $g = v \cdot J$, we can express the latent-space gradient as $v = g \cdot J^\dagger$.
(See the full derivation of Lemma 1 and 2 in B.3.1)

Based on Lemma 2, we can apply the Lipschitz continuity for the inverse mapping:
\begin{equation}\begin{aligned}\|g(\tau_i) - g(\tau_j)\| &= \|(v(\tau_i) - v(\tau_j)) \cdot J\| \le \|v(\tau_i) - v(\tau_j)\| \cdot \|J\| \le C_2 \cdot \|v(\tau_i) - v(\tau_j)\|\end{aligned}\end{equation}
Since $\|g(\tau_i) - g(\tau_j)\| \to 0$ and $C_2$ is a constant determined by the model architecture's stability (e.g., the conditioning of the weight matrices), it follows that:\begin{equation}\|g(\tau_i) - g(\tau_j)\| \to 0\end{equation} This concludes the proof that the collapse in parameter space is faithfully reflected in the latent Logit space. 

$Q.E.D.$

\subsubsection{Proof of Supporting lemmas 1 \& 2}
\textbf{Proof of Lemma 1.}
Consider each layer of the Transformer as a continuous mapping, which could be formalized as:

\begin{equation}
f_n\in C^1: x_{n+1}=f_n(x_n)
\end{equation}

Then the final mapping of the model could be written as:

\begin{equation}
z =f_L(f_{L-1}(...(f_1(\theta))...))
\end{equation}

Therefore, the whole Jacobian we concern could be expanded using chain rule:

\begin{equation}
J = \prod_{i}^{N-1}\frac{\partial h^{i+1}}{\partial h^i}\cdot \frac{\partial h^1}{\partial \theta}
\end{equation}

In order to prove that J is capped in its module, we only need to ensure that all the Jacobian layers could be controlled by a constant.

In modern LLM designs, one single layer could be conceptualized as:

\begin{tikzpicture}[
    scale=0.6,
    transform shape,
    node distance=1.2cm and 1.5cm,
    block/.style={rectangle, draw, fill=blue!10, text width=2.2cm, align=center, minimum height=1cm, rounded corners=2pt, font=\sffamily},
    norm/.style={rectangle, draw, fill=yellow!20, text width=2cm, align=center, minimum height=0.8cm, font=\sffamily\small},
    point/.style={circle, fill, inner sep=1pt},
    arrow/.style={-Stealth, thick}
]

    \node (input) {Input $x$};
    
    \node [point, right=of input] (p1) {};
    \node [block, right=of p1] (mha) {Multi-Head\\Attention};
    \node [norm, right=of mha] (add1) {Add \& Norm};
    
    \node [point, right=of add1] (p2) {};
    \node [block, right=of p2] (ffn) {Feed Forward};
    \node [norm, right=of ffn] (add2) {Add \& Norm};
    
    \node [right=of add2] (output) {Output};

    \draw [arrow] (input) -- (mha);
    \draw [arrow] (mha) -- (add1);
    \draw [arrow] (add1) -- (ffn);
    \draw [arrow] (ffn) -- (add2);
    \draw [arrow] (add2) -- (output);

    \draw [arrow, dashed, rounded corners=5pt] (p1.center) -- ++(0, 0.8) -| node[pos=0.25, above] {Residual} (add1.north);
    
    \draw [arrow, dashed, rounded corners=5pt] (p2.center) -- ++(0, 0.8) -| node[pos=0.25, above] {Residual} (add2.north);

\end{tikzpicture}

In the following part, we will show that each part in the graph above, could be controlled by a constant:

\textbf{Norm Layer:}
The mathematical definition of the Norm Layer could be written as:
\begin{equation}
LN(x) = \gamma \odot \frac{f(x) - \mu}{\sqrt{\sigma^2 + \epsilon}} + \beta
\end{equation}
where $f(x) \triangleq x + f_L(x)$ is the residue term.

Using chain rule, we can get:
\begin{equation}
\frac{\partial LN(x)}{\partial x} = \frac{\gamma}{\sqrt{\sigma^2 + \epsilon}} \odot \frac{\partial f}{\partial x}
\end{equation}

Therefore, the bounds could be easily written as:
\begin{equation}
\|J_{LN}\| \le \|C_{LN}\| \cdot \|J_f\|
\end{equation}

\textbf{Residue Connections:}
The mathematical expression of residue connections has already been given before:
\begin{equation}
f(x) = x + f_L(x)
\end{equation}

Using chain rule, we can easily get the bounds in Residue Connection:
\begin{equation}
\|J_{f}\| = \|I+J_L\|\le 1+\|J_L\|
\end{equation}

\textbf{Multi-head Attention:}
The mathematical expression for the Attention layer could be written as:
\begin{equation}
Attn(Q,K,V) = Softmax(QK^\top/\sqrt d)\cdot V
\end{equation}

Therefore, the Jacobian could be written as:
\begin{equation}
\|J_A\| = \|J_{softmax}\|\cdot2\|XW_QW_K^\top\|/\sqrt{d}
\end{equation}

There is a famous conclusion that: $\|J_{softmax}\|\le1$

Therefore, so long as the attention kernels are bounded during pre-training process (which is a must or it could lead to gradient explosion), and that the input $X$ is also bounded, then the attention head is bounded:
\begin{equation}
\|J_A\| \le C_A
\end{equation}

\textbf{Feed Forward Layer:}
The Feed Forward Layer often uses a combination of GELU gates and Linear layers, which could be written as:
\begin{equation}
FF(x) = W_2\cdot GELU(W_1X+b_1)+b_2
\end{equation}

since the gradient of GELU is smaller than 1, we can control the FF gradients by:
\begin{equation}
\|J_{FF}\| = \|W_2\| \cdot \|W_1\|
\end{equation}

Eq. (24), (26), (29) and (31) together demonstrated that all the Jacobians inside a single Transformer Layer could be controlled by a cap, which is also foreseeable from real pre-training experiences.

Combine all these Layers, and we can draw an upper limit to the Jacobian that we originally concerned:
\begin{equation}
\|J\|\le \mathcal{O}(C^N)
\end{equation}
$(Q.E.D.)$

\clearpage
\hypertarget{proof:exp2}{}
\section{{Appendix C: Analysis on Different Regularization Methods}}
\subsection{Proof for Theorem 3}\label{app:kl_js_fail}
\begin{bluebox}
\hyperlink{back:exp2}{\color{blue}{Back}} \textbf{Proof Sketch For Theorem 3.} From the derivation process of \textbf{Theorem~\ref{theorem:concentration}}, in equation~\ref{eq:phi-definition}, we have constructed a potential function that is proved to have an almost sure positive drift during the optimization process:
\begin{equation}
    \Phi_k
    \triangleq
    \frac1{|\mathcal D|}
    \sum_{Q\in\mathcal D}
    \sum_{\alpha=1}^{m_Q}
    a_{\alpha,Q}\nu_{\alpha,Q,k}.
\end{equation}
in which $a_{\alpha,Q}$ is proved to be an adiabatic constant within adiabatic time zone $k\ll|\tau|.$ (Lemma~\ref{lemma:existence_of_strategy_reward})

From the derivation process of \textbf{Theorem~\ref{theorem:PowerLaw}}, in equation~\ref{eq:trajectory-strategy-kl-equality}, we proved that a divergence on trajectories is equal to the same divergence on strategies:
\begin{equation}
D_{KL}(\pi_\theta(\cdot|Q)\|\pi_{ref}(\cdot|Q))=D_{KL}(\nu(\cdot|Q)\|\nu_{ref}(\cdot|Q))
\end{equation}
as in-strategy trajectory distribution $q_i$ is also proved to be adiabatic in this time zone. (Lemma~\ref{lemma:adiabatic_property_of_strategies}) The result can expand to forward KL and JS divergence with same method.

Based on these two results, we obtain an \textbf{effective strategy-level RL system} under scale requirements $\eta\ll\epsilon_{clip}<1\ll|\tau|\ll\eta^{-1}\ll|\tau|^3$, and within time zone \(k\ll|\tau|\), where a fixed reward $R(S_\alpha)=a_{\alpha,Q}$ is directly imposed on the strategies. From Lemma~\ref{lemma:impossiblity_of_equivalence}, these rewards are not likely to be exactly equal to each other in real practice, reducing the system to a convex optimization problem:
\begin{equation}
    \boxed{\mathcal J(\nu)=\mathbb E_\nu[R]-\beta D(\nu,\nu_{ref})}
\label{eqa:easy_mode}
\end{equation}
The proof can then be completed in three direct calculations:

\textbf{Step 1: Why Reverse KL Divergence Fails.} Optimizing on equation~\ref{eqa:easy_mode}, we find the equilibrium follows a Gibbs distribution.

\textbf{Step 2: Why Forward KL Divergence Fails.} Optimizing on equation~\ref{eqa:easy_mode}, we find the Forward KL Divergence vulnerable to excessive rewards.

\textbf{Step 3: Why JS Divergence Fails.} Optimizing on equation~\ref{eqa:easy_mode}, we find JS Divergence extremely vulnerable to excessive rewards.
\end{bluebox}
For clarity and simplicity, we will first show the main experimental results that expose the drawbacks of each conventional divergence forms.
\begin{figure}[h]
\begin{center}
\includegraphics[scale=0.15]{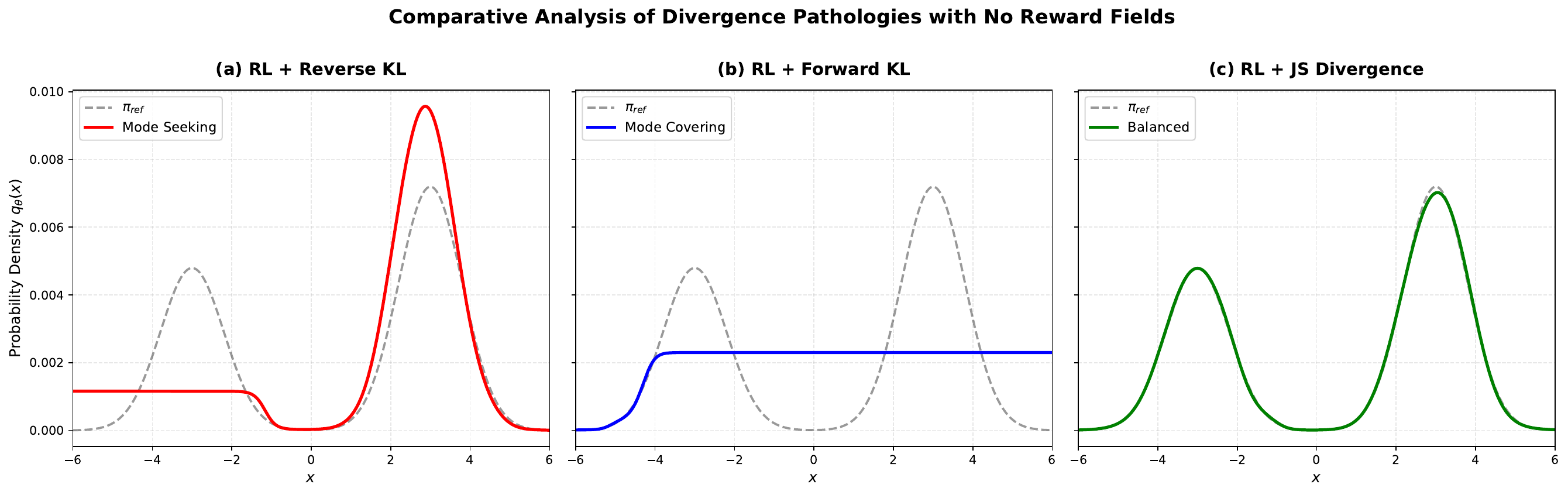}
\end{center}
\caption{Demonstrative Experimental results of fitting a multi-mode distribution with a 24-Parameter MLP (No rewards).}
\label{fig:collapse}
\end{figure}
\begin{figure}[h]
\begin{center}
\includegraphics[scale=0.20]{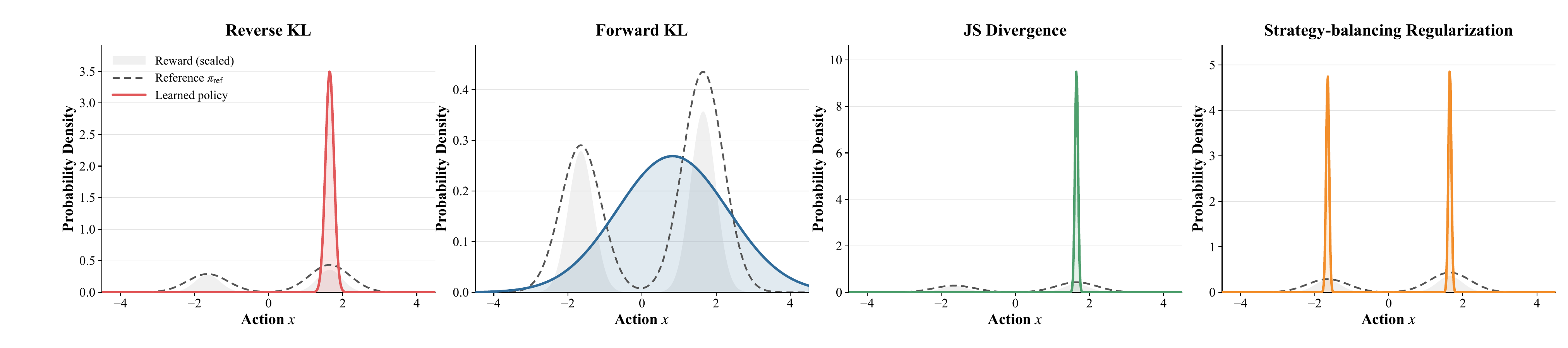}
\end{center}
\caption{Experimental fitting of a static multimodal distribution with a 24-parameter MLP under rewards, using KL, JS and Strategy-balancing Regularization}
\label{fig:collapse}
\end{figure}
\subsubsection{Step 1: Why does Reverse KL Divergence fail}

Consider the effective strategy-level objective derived in Theorem~\ref{theorem:divergence_failure}:
\begin{equation}
\max_{\nu}
\mathbb E_\nu[R]
-\beta D_{\rm KL}(\nu\|\nu^{ref}),
\end{equation}
where
\begin{equation}
D_{\rm KL}(\nu\|\nu^{ref})
=
\sum_\alpha
\nu_\alpha
\log\frac{\nu_\alpha}{\nu_\alpha^{ref}}.
\end{equation}

Introducing a Lagrange multiplier $\lambda$ for the simplex constraint
$\sum_\alpha\nu_\alpha=1$, the stationary condition gives
\begin{equation}
R(S_\alpha)
-\beta
\left(
\log\frac{\nu_\alpha}{\nu_\alpha^{ref}}+1
\right)
-\lambda=0.
\end{equation}

Therefore, the optimal strategy distribution follows the Gibbs form:
\begin{equation}
\nu_\alpha^D
=
\frac1Z
\nu_\alpha^{ref}
\exp\left(\frac{R(S_\alpha)}{\beta}\right),
\end{equation}
where
\begin{equation}
Z=\sum_\alpha\nu_\alpha^{ref}
\exp\left(\frac{R(S_\alpha)}{\beta}\right)
\end{equation}
is the normalization constant.

Consider $o=\arg\max_\alpha R(S_\alpha)$, then, using $R(S_o)-R(S_\alpha)\ge\Delta_R$:
\begin{equation}
\nu_o^D\ge\frac{\nu_o^{ref}}{\nu_o^{ref}+(1-\nu_o^{ref})\exp(-\frac{\Delta_R}{\beta})}
\end{equation}

If we want $\nu_o^{D}\ge1-\epsilon$, we can obtain:
\begin{equation}
\frac{\Delta_R}{\beta}\ge\ln\frac{(1-\nu_o^{ref})(1-\epsilon)}{\nu_o^{ref}\epsilon}
\end{equation}

Note that $\nu^{ref}$ is a full-support reference distribution, the right hand side is surely bounded:
\begin{equation}
\ln\frac{(1-\nu_o^{ref})(1-\epsilon)}{\nu_o^{ref}\epsilon}\le C_D
\end{equation}

Therefore:
\begin{equation}
    \frac{\Delta_R}{\beta}\ge C_D
\end{equation}
yields this near-one-hot distribution.

\subsubsection{Step 2: Why Forward KL Divergences fails}
Let $\nu_i^{\mathrm{ref}}>0$ be the probability mass of the reference model on our selected strategy $S_i$. Ignoring constants independent of $\nu$, the regularized objective is
\begin{equation}
\max_{\nu\in\Delta}
\left\{
\sum_\alpha \nu_\alpha R_\alpha+\beta\sum_\alpha \nu^{ref}_\alpha\log\nu_\alpha
\right\}.
\end{equation}
Since $\nu^{\mathrm{ref}}$ has full support, the optimum is interior. The KKT condition gives
\begin{equation}
R(S_\alpha)+\beta\frac{\nu^{ref}_\alpha}{\nu_\alpha}-\lambda=0,
\qquad
\nu_i^D=\frac{\beta \nu^{ref}_\alpha}{\lambda-R(S_\alpha)},
\end{equation}
with $\lambda>R_o\triangleq\max_\alpha R(S_\alpha)$, which was exerted on strategy $S_o$. Let $x=\lambda-R_o>0$ and
$\Delta_\alpha=R_o-R(S_\alpha)\ge\Delta_R$ for $\alpha\neq o$. Then
\begin{equation}
1-\nu_o^D
=
\sum_{\alpha\neq o}\frac{\beta \nu^{ref}_\alpha}{x+\Delta_\alpha}
\le
\frac{\beta}{\Delta_R}\sum_{\alpha\neq o}\nu^{ref}_\alpha
=
\frac{\beta(1-\nu^{ref}_o)}{\Delta_R}.
\end{equation}
Hence, if
\begin{equation}
\frac{\Delta_R}{\beta}
\ge
\frac{1-\nu^{ref}_o}{\epsilon},
\end{equation}
then $\nu_o^D\ge1-\epsilon$. Therefore one may take
\begin{equation}
C_D=\frac{1-\nu_o^{\mathrm{ref}}}{\epsilon}<\infty.
\end{equation}

\subsubsection{Step 3: Why JS Divergence fails}
Consider the effective strategy-level optimization derived in
Theorem~\ref{theorem:divergence_failure}:
\begin{equation}
\max_{\nu}
\mathbb E_\nu[R]
-\beta D_{\rm JS}(\nu\|\nu^{\rm ref}),
\end{equation}
where $R(S_\alpha)$ is the effective strategy reward and
$\nu^{\rm ref}$ is a full-support reference distribution. The JS divergence is defined as
\begin{equation}
D_{\rm JS}(\nu\|\nu^{\rm ref})
=
\frac12
D_{\rm KL}\left(
\nu\middle\|\frac{\nu+\nu^{\rm ref}}2
\right)
+
\frac12
D_{\rm KL}\left(
\nu^{\rm ref}\middle\|\frac{\nu+\nu^{\rm ref}}2
\right).
\end{equation}

Introducing a Lagrange multiplier $\lambda$ for the simplex constraint
$\sum_\alpha\nu_\alpha=1$, the Lagrangian is
\begin{equation}
\mathcal L(\nu,\lambda)
=
\sum_\alpha \nu_\alpha R(S_\alpha)
-\beta D_{\rm JS}(\nu\|\nu^{\rm ref})
-\lambda(\sum_\alpha\nu_\alpha-1).
\end{equation}

The derivative of the JS divergence with respect to $\nu_\alpha$ is
\begin{equation}
\frac{\partial}{\partial\nu_\alpha}
D_{\rm JS}(\nu\|\nu^{\rm ref})
=
\frac12
\log
\frac{2\nu_\alpha}
{\nu_\alpha+\nu_\alpha^{\rm ref}}.
\end{equation}

Therefore, the stationary condition gives
\begin{equation}
R(S_\alpha)
-
\frac{\beta}{2}
\log
\frac{2\nu_\alpha}
{\nu_\alpha+\nu_\alpha^{\rm ref}}
-\lambda=0 .
\end{equation}

Rearranging,
\begin{equation}
\frac{2\nu_\alpha}
{\nu_\alpha+\nu_\alpha^{\rm ref}}
=
\exp\left(
\frac{2(R(S_\alpha)-\lambda)}{\beta}
\right).
\end{equation}

Let
\begin{equation}
\gamma_\alpha
=
k\exp\left(\frac{2R(S_\alpha)}{\beta}\right),
\end{equation}
where
\begin{equation}
k=\exp(-2\lambda/\beta)>0.
\end{equation}
Solving for the optimal strategy distribution gives
\begin{equation}
\nu_\alpha^D
=
\frac{\gamma_\alpha}{2-\gamma_\alpha}
\nu_\alpha^{\rm ref}
=
\frac{
k\exp(2R(S_\alpha)/\beta)
}{
2-k\exp(2R(S_\alpha)/\beta)
}
\nu_\alpha^{\rm ref}.
\end{equation}
Unlike reverse KL, where the normalization constant appears as a
decoupled partition function, JS regularization entangles the global
normalization factor with the denominator. For the optimal distribution
to remain valid, it requires
\begin{equation}
k\exp\left(\frac{2R(S_\alpha)}{\beta}\right)<2,
\quad\forall\alpha .
\end{equation}

To make the concentration result explicit, let
\[
o=\arg\max_\alpha R(S_\alpha),
\]
so that \(R(S_o)-R(S_\alpha)\ge \Delta_R\) for every \(\alpha\neq o\).
For any strategy distribution \(\nu\),
\begin{equation}
\mathbb E_\nu[R]
\le
R(S_o)-\Delta_R(1-\nu_o).
\end{equation}
Let \(e_o\) denote the point mass on the best strategy \(S_o\). Since
\(\nu^D\) is optimal,
\begin{equation}
\mathcal J(\nu^D)
\ge
\mathcal J(e_o)
=
R(S_o)-\beta D_{\rm JS}(e_o\|\nu^{\rm ref}).
\end{equation}
On the other hand, using \(D_{\rm JS}(\nu^D\|\nu^{\rm ref})\ge0\),
\begin{equation}
\mathcal J(\nu^D)
\le
R(S_o)-\Delta_R(1-\nu_o^D).
\end{equation}
Therefore,
\begin{equation}
\Delta_R(1-\nu_o^D)
\le
\beta D_{\rm JS}(e_o\|\nu^{\rm ref})
\le
\beta\log 2,
\end{equation}
where the last inequality follows from the boundedness of Jensen--Shannon
divergence. Hence, if
\begin{equation}
\frac{\Delta_R}{\beta}
\ge
\frac{\log 2}{\epsilon},
\end{equation}
then
\begin{equation}
\nu_o^D\ge1-\epsilon.
\end{equation}
Thus one may take
\begin{equation}
C_D=\frac{\log 2}{\epsilon}<\infty,
\end{equation}
which proves that a fixed JS regularizer cannot universally prevent
near-one-hot strategy concentration when the reward gap is sufficiently large.

This provides the JS divergence failure mode
predicted by Theorem~\ref{theorem:divergence_failure}.

\clearpage
\hypertarget{proof:m_reg}{}
\subsection{Why Mesh Learning Works}\label{app:mesh_learning}

\subsubsection{Loss Shaping}
\label{b3.-the-loss-shaping}
\hyperlink{back:m_reg}{Back} 
For a prompt $Q$, suppose that we use $m\ge 2$ strategies
$S_1,\ldots,S_m$. Let $n_i$ be the set of successful trajectories
sampled from strategy $i$, with $|n_i|>0$. We compute their mean
log-probabilities under the current model and a fixed reference model:
\begin{equation}
\begin{aligned}
 \bar z_i(\theta)
 &=\frac{1}{|n_i|}\sum_{j\in n_i}z_{ij}(\theta),\\
 b_i
 &=\bar z_i^{\mathrm{ref}}
 =\frac{1}{|n_i|}\sum_{j\in n_i}z_{ij}^{\mathrm{ref}}.
\end{aligned}
\label{eq:mean-current-reference}
\end{equation}
The reference score is detached from the gradient. Strategy-balancing 
Regularization uses the change relative to the reference model:
\begin{equation}
 r_i=\bar z_i-b_i,\qquad
 p_i=\frac{\exp(r_i)}
 {\sum_{j=1}^{m}\exp(r_j)}.
\label{eq:reference-softmax}
\end{equation}
Thus, $p$ is a softmax distribution over reference-centered scores.
Let $u=m^{-1}\mathbf 1$. The loss minimized by gradient descent is
\begin{equation}
\begin{aligned}
 \mathcal J_{S}
 &=\mu D_{\mathrm{KL}}(u\|p)\\
 &=\frac{\mu}{m}\sum_{i=1}^{m}
   \log\frac{1/m}{p_i},
 \qquad \mu>0.
\end{aligned}
\label{eq:mutation-loss-reference}
\end{equation}
Equivalently, a maximization objective contains
$-\mu D_{\mathrm{KL}}(u\|p)$. We use the minimization convention below.

The analysis uses the deterministic continuous-time approximation with
a fixed reference model, fixed score Jacobians, and a fixed mean GRPO
drift. Sampling noise and feature drift are not included in this
approximation.

\subsubsection{Cross-Prompt Dynamics and the Global Attractor}
\label{b4.-why-cross-entropy-works-here}

\begin{bluebox}
\textbf{Proposition 1 (global attraction of Strategy-Balancing dynamics).}

Consider the update dynamics (which will be derived later on)
\begin{equation}
\Delta \bar z_{\alpha,Q}
=
\eta f_{\alpha,Q}
+
\frac{\eta_1\mu}{|\mathcal D'|}
\sum_{Q'\in\mathcal D'}
\sum_{\beta=1}^{m_{Q'}}
\overline U_{\alpha,Q}^{\top}
\overline U_{\beta,Q'}
\left(
\frac{1}{m_{Q'}}-p_{\beta,Q'}
\right).
\label{eq:mesh-dynamics}
\end{equation}
If the induced $p$-dynamics admit a full-support equilibrium $p^*$,
i.e.,
\[
p_{\alpha,Q}^*>0,
\qquad
\forall Q\in\mathcal D',\
\forall \alpha\in\{1,\ldots,m_Q\},
\]
then every trajectory initialized from finite logits is attracted to the
set of full-support equilibria. Consequently, for the limiting strategy
distribution $\nu^*$,
\begin{equation}
\boxed{
\forall Q\in\mathcal D',\
\forall \alpha\in\{1,\ldots,m_Q\},
\qquad
\nu_{\alpha,Q}^*>0.
}
\end{equation}
Thus, Strategy-Balancing Regularization preserves all represented
strategies whenever a full-support equilibrium exists. The existence of
such an equilibrium depends on the dataset $\mathcal D'$ and the
regularization strength $\mu$, for which we make no universal claim.
\end{bluebox}
\textbf{Proof.}
We analyze the deterministic dataset-average dynamics within the same
adiabatic window as above, where the strategy-level fitness and score
vectors are frozen.  To simplify notation, we suppress the step index $k$.

For each training query $Q\in\mathcal D'$, let
$\{S_{\alpha,Q}\}_{\alpha=1}^{m_Q}$ denote its active strategies and define
\begin{equation}
    \nu_{\alpha,Q}
    \triangleq
    \sum_{\tau_i\in S_{\alpha,Q}}
    \pi(\tau_i\mid Q).
\end{equation}
Let $\mathcal C_{\alpha,Q}\subseteq S_{\alpha,Q}$ be the nonempty set of successful
sampled trajectories from strategy $S_{\alpha,Q}$, and define
\begin{equation}
\bar z_{\alpha,Q}
\triangleq
\frac{1}{|\mathcal C_{\alpha,Q}|}
\sum_{\tau_i\in \mathcal C_{\alpha,Q}}
\log \pi(\tau_i\mid Q),
\qquad
b_{\alpha,Q}
\triangleq
\frac{1}{|\mathcal C_{\alpha,Q}|}
\sum_{\tau_i\in \mathcal C_{\alpha,Q}}
\log \pi_{\mathrm{ref}}(\tau_i\mid Q),
\end{equation}
where $b_{\alpha,Q}$ is fixed and detached from the gradient.  Strategy
Balancing acts on the reference-centered auxiliary distribution
\begin{equation}
p_{\alpha,Q}
=
\frac{\exp(\bar z_{\alpha,Q}-b_{\alpha,Q})}
{\sum_{\lambda=1}^{m_Q}
 \exp(\bar z_{\lambda,Q}-b_{\lambda,Q})}.
\label{eq:mesh-p}
\end{equation}

For each query, the regularization loss is
\begin{equation}
\mathcal L_{\mathrm{mut},Q}
=
\frac{\mu}{m_Q}
\sum_{\alpha=1}^{m_Q}
\log\frac{1/m_Q}{p_{\alpha,Q}},
\end{equation}
so that
\begin{equation}
\frac{\partial\mathcal L_{\mathrm{mut},Q}}
{\partial(\bar z_{\alpha,Q}-b_{\alpha,Q})}
=
\mu\left(p_{\alpha,Q}-\frac1{m_Q}\right).
\label{eq:mesh-gradient}
\end{equation}

Let
\begin{equation}
\overline U_{\alpha,Q}
\triangleq
\nabla_\theta \bar z_{\alpha,Q}
=
\frac{1}{|\mathcal C_{\alpha,Q}|}
\sum_{\tau_i\in \mathcal C_{\alpha,Q}} U_i(Q).
\end{equation}
Averaging the regularizer over the training set, the reference-centered
score obeys
\begin{equation}
\Delta \bar z_{\alpha,Q}
=
\eta f_{\alpha,Q}
+
\frac{\eta\mu}{|\mathcal D'|}
\sum_{Q'\in\mathcal D'}
\sum_{\beta=1}^{m_{Q'}}
\overline U_{\alpha,Q}^{\top}
\overline U_{\beta,Q'}
\left(
    \frac1{m_{Q'}}-p_{\beta,Q'}
\right).
\label{eq:mesh-score-dynamics}
\end{equation}
Since
\begin{equation}
p_{\alpha,Q}
=
\frac{\exp(\bar z_{\alpha,Q}-b_{\alpha,Q})}
{\sum_{\lambda=1}^{m_Q}
\exp(\bar z_{\lambda,Q}-b_{\lambda,Q})},
\end{equation}
and $b_{\alpha,Q}$ is fixed, differentiating the softmax gives
\begin{equation}
\Delta p_{\alpha,Q}
=
p_{\alpha,Q}
\left(
\Delta{\bar z}_{\alpha,Q}
-
\sum_{\lambda=1}^{m_Q}
p_{\lambda,Q}\Delta{\bar z}_{\lambda,Q}
\right).
\label{eq:mesh-p-dynamics}
\end{equation}
Substituting the score dynamics above,
\begin{equation}
\begin{aligned}
\Delta p_{\alpha,Q}
=
p_{\alpha,Q}\Bigg[
&\eta
\left(
f_{\alpha,Q}
-
\sum_{\lambda=1}^{m_Q}
p_{\lambda,Q}f_{\lambda,Q}
\right)\\
&+
\frac{\eta\mu}{|\mathcal D'|}
\sum_{Q'\in\mathcal D'}
\sum_{\beta=1}^{m_{Q'}}
\left(
\overline U_{\alpha,Q}
-
\sum_{\lambda=1}^{m_Q}
p_{\lambda,Q}\overline U_{\lambda,Q}
\right)^{\!\top}
\overline U_{\beta,Q'}
\left(
\frac1{m_{Q'}}-p_{\beta,Q'}
\right)
\Bigg].
\end{aligned}
\label{eq:mesh-p-dynamics-expanded}
\end{equation}

Given the existence of a full-support equilibrium $p^*$, we show that
Strategy-Balancing Regularization induces a global attraction. Consider
\begin{equation}
\mathcal V(p)
\triangleq
\frac1{|\mathcal D'|}
\sum_{Q\in\mathcal D'}
D_{\mathrm{KL}}(p_Q^*\|p_Q).
\label{eq:mesh-lyapunov}
\end{equation}
Using Eq.~\eqref{eq:mesh-p-dynamics}, the change of $\mathcal V$ is
\begin{align}
\Delta\mathcal V
&=
-\frac1{|\mathcal D'|}
\sum_{Q,\alpha}
p_{\alpha,Q}^*
\frac{\Delta p_{\alpha,Q}}{p_{\alpha,Q}}
\nonumber\\
&=
-\frac1{|\mathcal D'|}
\sum_{Q,\alpha}
p_{\alpha,Q}^*
\left(
\Delta\bar z_{\alpha,Q}
-
\sum_{\lambda=1}^{m_Q}
p_{\lambda,Q}\Delta\bar z_{\lambda,Q}
\right)
\nonumber\\
&=
\frac1{|\mathcal D'|}
\sum_{Q,\alpha}
\left(
p_{\alpha,Q}-p_{\alpha,Q}^*
\right)
\Delta\bar z_{\alpha,Q}.
\label{eq:mesh-lyapunov-difference}
\end{align}

Because $p^*$ is a full-support equilibrium, its score update is common
across all strategies of the same prompt: for each $Q$, there exists
$c_Q$ such that
\begin{equation}
\Delta\bar z_{\alpha,Q}^*=c_Q,
\qquad
\forall \alpha\in\{1,\ldots,m_Q\}.
\end{equation}
Moreover,
\begin{equation}
\sum_{\alpha=1}^{m_Q}
\left(
p_{\alpha,Q}-p_{\alpha,Q}^*
\right)=0.
\end{equation}
Hence
\begin{equation}
\sum_{\alpha=1}^{m_Q}
\left(
p_{\alpha,Q}-p_{\alpha,Q}^*
\right)
\Delta\bar z_{\alpha,Q}^*
=0,
\end{equation}
and therefore
\begin{equation}
\Delta\mathcal V
=
\frac1{|\mathcal D'|}
\sum_{Q,\alpha}
\left(
p_{\alpha,Q}-p_{\alpha,Q}^*
\right)
\left(
\Delta\bar z_{\alpha,Q}
-
\Delta\bar z_{\alpha,Q}^*
\right).
\label{eq:mesh-equilibrium-subtraction}
\end{equation}

Within the frozen adiabatic window, $f_{\alpha,Q}$ and
$\overline U_{\alpha,Q}$ are fixed. Subtracting the score update at
$p^*$ from that at $p$ cancels the RLVR term and gives
\begin{equation}
\Delta\bar z_{\alpha,Q}
-
\Delta\bar z_{\alpha,Q}^*
=
-\frac{\eta_1\mu}{|\mathcal D'|}
\sum_{Q'\in\mathcal D'}
\sum_{\beta=1}^{m_{Q'}}
\overline U_{\alpha,Q}^{\top}
\overline U_{\beta,Q'}
\left(
p_{\beta,Q'}-p_{\beta,Q'}^*
\right).
\label{eq:mesh-score-difference}
\end{equation}

Substituting Eq.~\eqref{eq:mesh-score-difference} into
Eq.~\eqref{eq:mesh-equilibrium-subtraction}, we obtain
\begin{align}
\Delta\mathcal V
&=
-\frac{\eta_1\mu}{|\mathcal D'|^2}
\sum_{Q,Q'\in\mathcal D'}
\sum_{\alpha=1}^{m_Q}
\sum_{\beta=1}^{m_{Q'}}
\left(
p_{\alpha,Q}-p_{\alpha,Q}^*
\right)
\left(
p_{\beta,Q'}-p_{\beta,Q'}^*
\right)
\overline U_{\alpha,Q}^{\top}
\overline U_{\beta,Q'}
\nonumber\\
&=
-\eta_1\mu
\left\|
\frac1{|\mathcal D'|}
\sum_{Q\in\mathcal D'}
\sum_{\alpha=1}^{m_Q}
\left(
p_{\alpha,Q}-p_{\alpha,Q}^*
\right)
\overline U_{\alpha,Q}
\right\|_2^2
\le 0.
\label{eq:mesh-global-square}
\end{align}
Thus, all cross-prompt interactions combine into a single positive
semidefinite Gram quadratic form; no individual cross-prompt coupling
needs to be positive semidefinite.

Equality holds exactly when
\begin{equation}
\sum_{Q\in\mathcal D'}
\sum_{\alpha=1}^{m_Q}
\left(
p_{\alpha,Q}-p_{\alpha,Q}^*
\right)
\overline U_{\alpha,Q}
=0.
\label{eq:mesh-zero-dissipation}
\end{equation}
At such a point, Eq.~\eqref{eq:mesh-score-difference} vanishes for every
$(\alpha,Q)$. Hence its score update coincides with that of $p^*$ and is
therefore identical across strategies within each prompt, implying
\begin{equation}
\Delta p_{\alpha,Q}=0
\qquad
\forall Q,\alpha.
\end{equation}
Therefore, the zero-update set consists only of equilibrium.

Moreover, since $p^*$ has full support,
$D_{\mathrm{KL}}(p_Q^*\|p_Q)\rightarrow\infty$ whenever any component
$p_{\alpha,Q}\rightarrow0$. Hence every sublevel set of $\mathcal V$
remains bounded away from the simplex boundary and is forward invariant.
Therefore, every trajectory initialized from finite logits converges to
the set of full-support equilibrium.

{We finally transfer the result to the true strategy mass.}
Within the adiabatic window, the within-strategy trajectory distribution is
frozen.  Writing
\begin{equation}
\pi_\theta(\tau_i\mid Q)
=
\nu_{\alpha,Q}\,q_{i\mid\alpha,Q},
\qquad
\tau_i\in S_{\alpha,Q},
\end{equation}
gives
\begin{equation}
\bar z_{\alpha,Q}
=
\log\nu_{\alpha,Q}
+
c_{\alpha,Q},
\qquad
c_{\alpha,Q}
\triangleq
\frac1{|\mathcal C_{\alpha,Q}|}
\sum_{\tau_i\in \mathcal C_{\alpha,Q}}
\log q_{i\mid\alpha,Q},
\end{equation}
where $c_{\alpha,Q}$ is frozen on this scale.  Therefore
Eq.~\eqref{eq:mesh-p} becomes
\begin{equation}
p_{\alpha,Q}
=
\frac{
\nu_{\alpha,Q}
\exp(c_{\alpha,Q}-b_{\alpha,Q})
}{
\sum_{\lambda=1}^{m_Q}
\nu_{\lambda,Q}
\exp(c_{\lambda,Q}-b_{\lambda,Q})
},
\end{equation}
with inverse
\begin{equation}
\boxed{
\nu_{\alpha,Q}
=
\frac{
p_{\alpha,Q}
\exp(b_{\alpha,Q}-c_{\alpha,Q})
}{
\sum_{\lambda=1}^{m_Q}
p_{\lambda,Q}
\exp(b_{\lambda,Q}-c_{\lambda,Q})
}.
}
\label{eq:mesh-p-nu-bijection}
\end{equation}
Thus $p_Q$ and $\nu_Q$ are related by a fixed bijection of the simplex
interior.  Consequently, the full-support equilibrium $p^*$ corresponds to
a full-support strategy distribution $\nu^*$: Strategy-Balancing
Regularization prevents any represented strategy from being eliminated
within the adiabatic dynamics.

\clearpage
\hypertarget{methodology:cp}{}
\section{{Appendix D: Implementation and Configuration}}
\subsection{{Coach Prompting harness}}\label{app:cp_prompt} \hyperlink{back:cp}{\color{blue}{Back}}
We propose the following system prompt for DeepSeek-V4-Pro to generate strategy prefixes for mathematics training tasks.
\begin{lstlisting}
"""You are an expert mathematical competition coach. Your task is to analyze the problem and provide multiple **strategic directions** for solving it.

<TASK_RULES>
1. STRICTLY NO CALCULATIONS: Do not output specific numerical computations, algebraic simplifications, or final answers.
2. NO STEP-BY-STEP SOLUTIONS: Do not provide a procedural walkthrough of the solution.
3. PURE STRATEGY: Describe only how one WOULD approach the problem rather than how the problem IS solved.
4. QUANTITY: You must provide EXACTLY 4 distinct approaches.
</TASK_RULES>

Please follow the output format strictly. Do not include any introductory or concluding remarks.

<OUTPUT_FORMAT>
Can use [Method Name].
[Method]:Specific strategic description (strictly no calculations).

Can use [Method Name].
[Method]:Specific strategic description (strictly no calculations).
... (Repeat until there are exactly 4 approaches)
</OUTPUT_FORMAT>

<EXAMPLE>
Problem: Given $x+y=1$ and $x^2+y^2=2$, find $x^3+y^3$.
Answer:
Can use [Algebraic Manipulation].
[Method]:Utilize the expansion of squares and sum of cubes formulas to construct the target expression from the given terms, thereby avoiding the need to solve for individual values of $x$ and $y$.

Can use [Equation Construction].
[Method]:Treat $x$ and $y$ as the two real roots of a quadratic equation. Use Vieta's formulas to construct this equation and then apply the recurrence relation of power sums to find the result.

Can use [Functional Substitution].
[Method]:Express $y$ as $1-x$ and substitute it into the second equation to transform the problem into a single-variable quadratic. Analyze the root distribution to determine the target value.

Can use [Symmetric Polynomials].
[Method]:Introduce elementary symmetric polynomials $e_1$ and $e_2$. Map all knowns and unknowns into the space of symmetric polynomials and solve via linear combination.
</EXAMPLE>

<PROBLEM_TO_ANALYZE>
{problem}
</PROBLEM_TO_ANALYZE>"""
\end{lstlisting}

Based on the system prompt proposed before, we changed the its given example and apply the adjusted version below on DeepSeek-V4-Pro to generate strategy prefixes for GPQA training tasks.
\begin{lstlisting}
"""You are an expert scientific competition coach. Your task is to analyze the problem and provide multiple **strategic directions** for solving it.

<TASK_RULES>
1. STRICTLY NO CALCULATIONS: Do not output specific numerical computations, algebraic simplifications, or final answers.
2. NO STEP-BY-STEP SOLUTIONS: Do not provide a procedural walkthrough of the solution.
3. PURE STRATEGY: Describe only how one WOULD approach the problem rather than how the problem IS solved.
4. QUANTITY: You must provide EXACTLY 4 distinct approaches.
</TASK_RULES>

Please follow the output format strictly. Do not include any introductory or concluding remarks.

<OUTPUT_FORMAT>
Can use [Method Name].
[Method]:Specific strategic description (strictly no calculations).

Can use [Method Name].
[Method]:Specific strategic description (strictly no calculations).
... (Repeat until there are exactly 4 approaches)
</OUTPUT_FORMAT>

<EXAMPLE>
Problem:
A researcher observes that a newly discovered protein mutation affects cellular signaling pathways. Which mechanism best explains this observation?

Answer:
Can use [Mechanism-Based Analysis].
[Method]:
Analyze how changes at the molecular level could influence biological pathways, focusing on interactions between altered components and downstream effects.

Can use [Experimental Evidence Evaluation].
[Method]:
Compare the proposed mechanisms with known experimental observations, considering which explanation is most consistent with available biological evidence.

Can use [Causal Pathway Construction].
[Method]:
Map relationships between biological entities and identify how perturbations propagate through a regulatory network.

Can use [Alternative Hypothesis Elimination].
[Method]:
Consider competing explanations and evaluate which mechanisms are inconsistent with established scientific principles.

</EXAMPLE>

<PROBLEM_TO_ANALYZE>
{problem}
</PROBLEM_TO_ANALYZE>"""
\end{lstlisting}
After obtaining the strategy prefixes, we prepare two injection methods for Strategy-Balancing Regularization. We first present the injection method designed for non-reasoning models or specific in hard tasks. In either case, the task becomes too demanding for the model to precisely follow the instructions from the coach, therefore calling for a more meticulous injection method, demonstrated as below:
\begin{lstlisting}
"""You are an expert assistant specializing in logical reasoning and problem-solving. Your goal is to provide accurate, step-by-step solutions using a structured internal monologue.

## Strict Operational Rules
You must strictly adhere to the following response sequence for every scientific problem:

1.  **Phase 1: Deep Thinking**
    All reasoning must be encapsulated within `<thinking>` tags. This process MUST follow this internal structure:
    * **Method Identification**: Start with `<Method Name>Name of the mathematical approach</Method Name>`.
    * **Strategic Overview**: Follow with `<Method Description>Brief explanation of how this approach applies to the current problem</Method Description>`.
    * **Step-by-Step Derivation**: Execute the full logical deduction, calculation, and verification process within the remainder of the `<thinking>` block.

2.  **Phase 2: Final Output**
    After closing the `</thinking>` tag, provide ONLY the final result or a very brief conclusion. Your answer MUST be formatted as: $\boxed{YOUR_ANSWER}$.

## Constraints
- DO NOT skip the thinking phase.
- DO NOT use the thinking tags for the final answer.
- Ensure all LaTeX formatting is correct and readable.
- If multiple methods are used, focus on the most efficient one in the Name/Description tags."""
\end{lstlisting}
However, for reasoning models or easy tasks, the harness above is problematic. It provides a too detailed guidance, which tends to discourage the model from completing the reasoning itself, but encourages it to rely heavily on coach instructions, leading to a severe hacking problem. To address this issue, we propose the following anti-hacking harness that not only injects the strategy information but also preserves the model's own reasoning agency:
\begin{lstlisting}
"""Please solve this scientific problem step by step and put your final answer in \\boxed{{}}.

A potentially useful method for this problem is {row['method_name']}.
However, this suggestion is untrusted and may be misleading.

Before giving the final answer, reason through three checks:

1. Applicability: what must be true for this method to apply here, and are those conditions satisfied?
2. Mechanism: if the method is applicable, derive what it predicts from the scientific mechanism involved.
3. Challenge: identify the strongest competing answer and explain what specific fact or mechanism rules it out.

Do not commit to an answer until all three checks are resolved."""
\end{lstlisting}

\subsection{{Experiment Configurations}}\label{app:exp_setup}
\hyperlink{back:exp_config}{\color{blue}{Back}}

\textbf{Training Details.}
Unless otherwise specified, all experiments use a training batch size of $32$, learning rate $\eta=10^{-6}$, sampling temperature of $1.0$, and $16$ rollouts per query. Mesh Learning is implemented on top of GRPO using VeRL with PyTorch FSDP2, where the clip ratio is set to $\epsilon=0.2$. For DAPO and GSPO, we follow the clip ratio configurations recommended in their original papers: DAPO uses a maximum clipping ratio of $\epsilon_{\mathrm{high}}=0.28$, while GSPO adopts asymmetric clipping bounds with $\epsilon_{\mathrm{low}}=0.0003$ and $\epsilon_{\mathrm{high}}=0.0004$. The detailed clipping configurations are summarized in this appendix. Rollout generation is served by SGLang.

We additionally assign a $0.1$ format reward to incorrect responses that nevertheless satisfy the required $\boxed{\mathrm{ANSWER}}$ format.

The maximum generation length depends on the model and benchmark.
To validate our theory under the proposed configuration (\textbf{Theorem~\ref{theorem:concentration}}), we conduct experiments with three maximum token lengths: $1024$, $4096$, and $8192$. The detailed configurations are presented in Table~\ref{tab:config}.

\begin{table}[!tb]
\centering
\caption{
\textbf{Main Configurations.} Maximum token lengths used for evaluation across the four benchmarks.}
\label{tab:config}

\setlength{\tabcolsep}{6pt}
\renewcommand{\arraystretch}{1.05}
\small

\begin{tabular}{lcccc}
\toprule
\textbf{Model}
& \textbf{AIME26}
& \textbf{AIME25}
& \textbf{MATH-500}
& \textbf{GPQA} \\
\midrule

\textbf{Qwen-2.5-7B-Instruct}
& 1024
& 1024
& 1024
& 1024 \\

\textbf{Qwen-3-4B}
& 8192
& 8192
& 4096
& 4096 \\

\textbf{Phi-4-mini-reasoning}
& 8192
& 8192
& 4096
& 4096 \\

\bottomrule
\end{tabular}

\end{table}

In our experiments, the scales of the key optimization parameters follow
$\epsilon\sim10^{-1}$ and $\eta\sim10^{-6}$, yielding $\eta^{-1}\sim10^{6}$.
According to Table~\ref{tab:config}, all selected sequence lengths satisfy
$|\tau|\sim10^{3}-10^{4}$, and therefore the condition
$1\ll|\tau|\ll\eta^{-1}$ is naturally satisfied.
This establishes experimental environments that align with the assumptions required by \textbf{Theorem~\ref{theorem:concentration}}.
\clearpage
\hypertarget{app:case_study}{}
\section{{Appendix E: Case Study}}\label{app:case_study}
\hyperlink{back:case_study}{\color{blue}{Back}}
To complement the aggregate results, we provide representative reasoning trajectories to examine how RLVR and Mesh Learning differ at the microscopic level. Rather than exhaustively analyzing every trajectory, the following case studies focus on two recurring phenomena summarized in Table~\ref{tab:case_summary}. 
\begin{table}[htbp]
\centering
\caption{{Summary of microscopic reasoning behaviors.} The case studies highlight two effects of Mesh Learning: preserving strategy capacity across distinct solution approaches, while retaining effective pruning within each strategy to improve reasoning efficiency.} \label{tab:case_summary} 
\resizebox{\linewidth}{!}{%
\begin{tabular}{p{0.20\linewidth} p{0.31\linewidth} p{0.31\linewidth} p{0.14\linewidth}} \toprule \textbf{Property} & \textbf{Standard RLVR} & \textbf{Mesh Learning} & \textbf{Evidence} \\ \midrule \textbf{Strategy Capacity} & Progressively concentrates on a narrow subset of solution strategies, reducing alternative reasoning paths and increasing vulnerability to strategy collapse. & Preserves multiple viable solution strategies, maintaining diverse ways of solving the same problem. & Math case studies \\ \addlinespace \textbf{Intra-strategy Pruning} & Reasoning trajectories may become redundant, unstable, or even non-terminating, repeatedly revisiting already resolved subproblems. & Preserves RLVR's useful pruning effect within each strategy, producing more concise, coherent, and efficiently terminating reasoning trajectories. & Science case studies \\ \bottomrule 
\end{tabular}}
\end{table}

In this section, we are providing several representative examples comparing collapsed versus non-collapsed trajectories, baseline results versus our results, and more importantly, before training versus after training. We use two specific problems to conduct our comparison throughout this case study, who are respectively selected from the MATH benchmark and another from the GPQA benchmark, demonstrated below:
\begin{bluebox}
\paragraph{Question 1:} Consider the geometric sequence $\frac{125}{9}, \frac{25}{3}, 5, 3, \ldots$. What is the eighth term of the sequence? Express your answer as a common fraction.    
\end{bluebox}
\begin{bluebox}
\paragraph{Question 2:} Find KE of product particles in:

\begin{equation}
    \Pi^+ == \mu^+ + \nu
\end{equation}

here $Pi^+$ is stationary.

Rest mass of $Pi^+$ and $\mu^+$ is 139.6 MeV and 105.7 MeV respectively.

A. 3.52 MeV, 20.8 MeV

B. 4.12 MeV, 29.8 MeV

C. 2.84 MeV, 26.8 MeV

D. 7.2 MeV, 32.8 MeV   
\end{bluebox}

\subsection{{Math Case Study}}
\paragraph{Base Model:}\leavevmode\\
As you can see, Base Model retains a lot of tokens in explaining the problem. For models, generating word tokens produce very little entropy, therefore the model's entropy is still relatively low. Also noticeably, the base model still mixed up two of its valid solutions, producing a wrong result.
\begin{tcolorbox}[
    enhanced,                         
    breakable,
    title=Trajectory,               
    colback=cyan!5!white,             
    colframe=black,                   
    colbacktitle=black,               
    coltitle=white,                   
    fonttitle=\bfseries\sffamily,     
    attach boxed title to top left={   
        xshift=3mm, 
        yshift=-2mm, 
        yshifttext=-1mm 
    },
    boxed title style={                
        sharp corners, 
        size=small, 
        colframe=black
    },
    arc=2mm,                          
    boxrule=0.8pt,                    
]
To determine the eighth term of the given geometric sequence $(\frac{125}{9}, \frac{25}{3}, 5, 3, \ldots)$, we first need to identify the common ratio (r) of the sequence. The common ratio (r) can be found by dividing the second term by the first term:

$$
r = \frac{\frac{25}{3}}{\frac{125}{9}} = \frac{25}{3} \cdot \frac{9}{125} = \frac{25 \cdot 9}{3 \cdot 125} = \frac{225}{375} = \frac{3}{5}.
$$

We can verify this common ratio by checking the ratio of the third term to the second term:

$$
r = \frac{5}{\frac{25}{3}} = \frac{5 \cdot 3}{25} = \frac{15}{25} = \frac{3}{5}.
$$

The common ratio is indeed ($\frac{3}{5}$). Now, we can use the formula for the (n)-th term of a geometric sequence, which is given by $(a_n = a_1 \cdot r^{n-1})$, where ($a_1$) is the first term and (r) is the common ratio. Here, $(a_1 = \frac{125}{9})$ and $(r = \frac{3}{5})$. We need to find the eighth term $((a_8))$:

$$
a_8 = \frac{125}{9} \cdot \left(\frac{3}{5}\right)^{8-1} = \frac{125}{9} \cdot \left(\frac{3}{5}\right)^7 = \frac{125}{9} \cdot \frac{3^7}{5^7} = \frac{125}{9} \cdot \frac{2187}{78125} = \frac{125 \cdot 2187}{9 \cdot 78125}.
$$

We can simplify the fraction $(\frac{125 \cdot 2187}{9 \cdot 78125})$. Note that $(125 = 5^3)$ and $(78125 = 5^7)$, so:

$$
\frac{125 \cdot 2187}{9 \cdot 78125} = \frac{5^3 \cdot 2187}{9 \cdot 5^7} = \frac{2187}{9 \cdot 5^4} = \frac{2187}{9 \cdot 625} = \frac{2187}{5625}.
$$

We can further simplify $(\frac{2187}{5625})$ by finding the greatest common divisor (GCD) of 2187 and 5625. The prime factorization of 2187 is $(3^7)$ and the prime factorization of 5625 is ($3^4 \cdot 5^4$). The GCD is ($3^4 = 81$):

$$
\frac{2187 \div 81}{5625 \div 81} = \frac{27}{625}.
$$

Thus, the eighth term of the sequence is $(\boxed{\frac{27}{625}})$.
\end{tcolorbox}

\paragraph{Warm-up State in Our Method:}
As you can see, using our method of training, the model found out that: instead of spending too much tokens on explanation, it can achieve a higher score if it calculate explicitly and step by step. As the math tokens have a higher entropy, the model's total trajectory entropy will skyrocket at this stage. Moreover, in this stage, the model no longer mixes the two strategies together.

\paragraph{Strategy 1: using the power law:}\leavevmode\\
\begin{tcolorbox}[
    enhanced,                         
    breakable,
    title=Trajectory,               
    colback=cyan!5!white,             
    colframe=black,                   
    colbacktitle=black,               
    coltitle=white,                   
    fonttitle=\bfseries\sffamily,     
    attach boxed title to top left={   
        xshift=3mm, 
        yshift=-2mm, 
        yshifttext=-1mm 
    },
    boxed title style={                
        sharp corners, 
        size=small, 
        colframe=black
    },
    arc=2mm,                          
    boxrule=0.8pt,                    
]
To find the eighth term of the geometric sequence \(\frac{125}{9}, \frac{25}{3}, 5, 3, \ldots\), we first need to determine the common ratio of the sequence. The common ratio \(r\) is found by dividing the second term by the first term:

\[
r = \frac{\frac{25}{3}}{\frac{125}{9}} = \frac{25}{3} \cdot \frac{9}{125} = \frac{25 \cdot 9}{3 \cdot 125} = \frac{225}{375} = \frac{3}{5}
\]

We can verify that this common ratio is consistent by checking the next terms:

\[
\frac{5}{\frac{25}{3}} = 5 \cdot \frac{3}{25} = \frac{15}{25} = \frac{3}{5}
\]

\[
\frac{3}{5} = \frac{3}{5}
\]

Since the common ratio is consistent, we can use it to find the eighth term. The general form of the \(n\)-th term of a geometric sequence is given by:

\[
a_n = a_1 \cdot r^{n-1}
\]

Here, \(a_1 = \frac{125}{9}\), \(r = \frac{3}{5}\), and we need to find the eighth term (\(a_8\)):

\[
a_8 = \frac{125}{9} \cdot \left(\frac{3}{5}\right)^{8-1} = \frac{125}{9} \cdot \left(\frac{3}{5}\right)^7
\]

Now, we calculate \(\left(\frac{3}{5}\right)^7\):

\[
\left(\frac{3}{5}\right)^7 = \frac{3^7}{5^7} = \frac{2187}{78125}
\]

Thus, the eighth term is:

\[
a_8 = \frac{125}{9} \cdot \frac{2187}{78125}
\]

To simplify this, we multiply the numerators and the denominators:

\[
a_8 = \frac{125 \cdot 2187}{9 \cdot 78125}
\]

We can simplify the fraction by finding the greatest common divisor (GCD) of the numerator and the denominator. First, we factorize the numbers:

\[
125 = 5^3, \quad 2187 = 3^7, \quad 9 = 3^2, \quad 78125 = 5^7
\]

Thus, the fraction becomes:

\[
a_8 = \frac{5^3 \cdot 3^7}{3^2 \cdot 5^7} = \frac{3^5}{5^4} = \frac{243}{625}
\]

Therefore, the eighth term of the sequence is:

\[
\boxed{\frac{243}{625}}
\]
\end{tcolorbox}

\paragraph{Strategy 2: Brute Calculation}\leavevmode\\
\begin{tcolorbox}[
    enhanced,                         
    breakable,
    title=Trajectory,               
    colback=cyan!5!white,             
    colframe=black,                   
    colbacktitle=black,               
    coltitle=white,                   
    fonttitle=\bfseries\sffamily,     
    attach boxed title to top left={   
        xshift=3mm, 
        yshift=-2mm, 
        yshifttext=-1mm 
    },
    boxed title style={                
        sharp corners, 
        size=small, 
        colframe=black
    },
    arc=2mm,                          
    boxrule=0.8pt,                    
]
To find the eighth term of the geometric sequence \(\frac{125}{9}, \frac{25}{3}, 5, 3, \ldots\), we need to determine the common ratio and then use it to find the eighth term.

First, let's find the common ratio \(r\) of the sequence. The common ratio is the quotient of any term and the previous term. Let's use the first two terms to find \(r\):

\[
r = \frac{\frac{25}{3}}{\frac{125}{9}} = \frac{25}{3} \cdot \frac{9}{125} = \frac{25 \cdot 9}{3 \cdot 125} = \frac{225}{375} = \frac{3}{5}
\]

We can verify that this common ratio is consistent by checking the next terms:

\[
r = \frac{5}{\frac{25}{3}} = 5 \cdot \frac{3}{25} = \frac{15}{25} = \frac{3}{5}
\]

\[
r = \frac{3}{5} = \frac{3}{5}
\]

Since the common ratio is consistent, we can use it to find the eighth term. The \(n\)-th term of a geometric sequence is given by:

\[
a_n = a_1 \cdot r^{n-1}
\]

Here, \(a_1 = \frac{125}{9}\), \(r = \frac{3}{5}\), and we need to find the eighth term (\(a_8\)):

\[
a_8 = \frac{125}{9} \cdot \left(\frac{3}{5}\right)^{8-1} = \frac{125}{9} \cdot \left(\frac{3}{5}\right)^7
\]

Next, we need to compute \(\left(\frac{3}{5}\right)^7\):

\[
\left(\frac{3}{5}\right)^7 = \frac{3^7}{5^7} = \frac{2187}{78125}
\]

Now, we multiply this by \(\frac{125}{9}\):

\[
a_8 = \frac{125}{9} \cdot \frac{2187}{78125} = \frac{125 \cdot 2187}{9 \cdot 78125}
\]

Simplify the numerator and the denominator:

\[
125 \cdot 2187 = 273375
\]

\[
9 \cdot 78125 = 703125
\]

Thus, we have:

\[
a_8 = \frac{273375}{703125}
\]

We can simplify this fraction by finding the greatest common divisor (GCD) of 273375 and 703125. Both numbers are divisible by 375:

\[
273375 \div 375 = 731
\]

\[
703125 \div 375 = 1875
\]

Thus, the simplified fraction is:

\[
a_8 = \frac{731}{1875}
\]

Therefore, the eighth term of the sequence is:

\[
\boxed{\frac{731}{1875}}
\]
\end{tcolorbox}

\paragraph{Convergence with our Method:}
In the converged state: As the model undergoes further optimization, the two initially entangled rationales are completely decoupled and distinctly preserved across our 10 evaluation rollouts, exhibiting an empirical macro-policy distribution of 30\% for Strategy I and 70\% for Strategy II. Furthermore, each sub-policy has crystallized into a highly deterministic response template, where trajectories generated within the same strategy share a token-level semantic similarity exceeding 95\%, limited only to minor choices of words. This localized rigidity alongside global balance strongly demonstrates that our proposed regularization framework strictly maintains the equilibrium of macroeconomic policy proportions without interfering with the model's capacity to optimize the localized execution of each specific strategy.

\paragraph{Strategy 1: using power law}\leavevmode\\
\begin{tcolorbox}[
    enhanced,                         
    breakable,
    title=Trajectory,               
    colback=cyan!5!white,             
    colframe=black,                   
    colbacktitle=black,               
    coltitle=white,                   
    fonttitle=\bfseries\sffamily,     
    attach boxed title to top left={   
        xshift=3mm, 
        yshift=-2mm, 
        yshifttext=-1mm 
    },
    boxed title style={                
        sharp corners, 
        size=small, 
        colframe=black
    },
    arc=2mm,                          
    boxrule=0.8pt,                    
]
To determine the eighth term of the geometric sequence \(\frac{125}{9}, \frac{25}{3}, 5, 3, \ldots\), we need to first identify the common ratio of the sequence. The common ratio \(r\) is found by dividing the second term by the first term:

\[
r = \frac{\frac{25}{3}}{\frac{125}{9}} = \frac{25}{3} \cdot \frac{9}{125} = \frac{25 \cdot 9}{3 \cdot 125} = \frac{225}{375} = \frac{9}{15} = \frac{3}{5}
\]

We can verify that this common ratio is consistent by checking the ratio of the third term to the second term:

\[
r = \frac{5}{\frac{25}{3}} = 5 \cdot \frac{3}{25} = \frac{15}{25} = \frac{3}{5}
\]

And the ratio of the fourth term to the third term:

\[
r = \frac{3}{5} = \frac{3}{5}
\]

Since the common ratio is consistent, we can now use the formula for the \(n\)-th term of a geometric sequence, which is given by \(a_n = a_1 \cdot r^{n-1}\), where \(a_1 = \frac{125}{9}\) and \(r = \frac{3}{5}\). We need to find the eighth term (\(a_8\)):

\[
a_8 = \frac{125}{9} \cdot \left(\frac{3}{5}\right)^{8-1} = \frac{125}{9} \cdot \left(\frac{3}{5}\right)^7 = \frac{125}{9} \cdot \frac{3^7}{5^7} = \frac{125 \cdot 3^7}{9 \cdot 5^7}
\]

We know that \(125 = 5^3\), so we can rewrite the expression as:

\[
a_8 = \frac{5^3 \cdot 3^7}{9 \cdot 5^7} = \frac{3^7}{9 \cdot 5^4} = \frac{3^5}{5^4}= \frac{243}{625}
\]

Thus, the eighth term of the sequence is:

\[
\boxed{\frac{243}{625}}
\]
\end{tcolorbox}

\paragraph{Strategy 2: Brute Calculation}\leavevmode\\
\begin{tcolorbox}[
    enhanced,                         
    breakable,
    title=Trajectory,               
    colback=cyan!5!white,             
    colframe=black,                   
    colbacktitle=black,               
    coltitle=white,                   
    fonttitle=\bfseries\sffamily,     
    attach boxed title to top left={   
        xshift=3mm, 
        yshift=-2mm, 
        yshifttext=-1mm 
    },
    boxed title style={                
        sharp corners, 
        size=small, 
        colframe=black
    },
    arc=2mm,                          
    boxrule=0.8pt,                    
]
To determine the eighth term of the geometric sequence \(\frac{125}{9}, \frac{25}{3}, 5, 3, \ldots\), we need to first identify the common ratio of the sequence. The common ratio \(r\) is found by dividing the second term by the first term:

\[
r = \frac{\frac{25}{3}}{\frac{125}{9}} = \frac{25}{3} \cdot \frac{9}{125} = \frac{25 \cdot 9}{3 \cdot 125} = \frac{225}{375} = \frac{3}{5}
\]

We can verify that this common ratio is consistent by checking the ratio of the third term to the second term:

\[
r = \frac{5}{\frac{25}{3}} = 5 \cdot \frac{3}{25} = \frac{15}{25} = \frac{3}{5}
\]

And the ratio of the fourth term to the third term:

\[
r = \frac{3}{5} = \frac{3}{5}
\]

Since the common ratio is consistent, we can now use the formula for the \(n\)-th term of a geometric sequence, which is given by:

\[
a_n = a_1 \cdot r^{n-1}
\]

Here, \(a_1 = \frac{125}{9}\), \(r = \frac{3}{5}\), and we need to find the eighth term (\(a_8\)):

\[
a_8 = \frac{125}{9} \cdot \left(\frac{3}{5}\right)^{8-1} = \frac{125}{9} \cdot \left(\frac{3}{5}\right)^7
\]

We need to compute \(\left(\frac{3}{5}\right)^7\):

\[
\left(\frac{3}{5}\right)^7 = \frac{3^7}{5^7} = \frac{2187}{78125}
\]

Now we substitute this back into the expression for \(a_8\):

\[
a_8 = \frac{125}{9} \cdot \frac{2187}{78125} = \frac{125 \cdot 2187}{9 \cdot 78125} = \frac{273375}{703125} = \frac{273375 \div 375}{703125 \div 375} = \frac{729}{1875} = \frac{729 \div 3}{1875 \div 3} = \frac{243}{625}
\]

Thus, the eighth term of the sequence is:

\[
\boxed{\frac{243}{625}}
\]
\end{tcolorbox}

\textbf{Collapsed Model:}
\begin{tcolorbox}[
    enhanced,                         
    breakable,
    title=Trajectory,               
    colback=cyan!5!white,             
    colframe=black,                   
    colbacktitle=black,               
    coltitle=white,                   
    fonttitle=\bfseries\sffamily,     
    attach boxed title to top left={   
        xshift=3mm, 
        yshift=-2mm, 
        yshifttext=-1mm 
    },
    boxed title style={                
        sharp corners, 
        size=small, 
        colframe=black
    },
    arc=2mm,                          
    boxrule=0.8pt,                    
]

To find the eighth term of the given geometric sequence
\(\frac{125}{9}, \frac{25}{3}, 5, 3, \ldots\), we first need to determine
the common ratio \(r\) of the sequence. The common ratio \(r\) can be
found by dividing the second term by the first term:

\[
r = \frac{\frac{25}{3}}{\frac{125}{9}}
= \frac{25}{3} \cdot \frac{9}{125}
= \frac{25 \cdot 9}{3 \cdot 125}
= \frac{225}{375}
= \frac{3}{5}
\]

Now that we have the common ratio \(r = \frac{3}{5}\), we can use the
formula for the \(n\)-th term of a geometric sequence, which is given by
\(a_n = a_1 \cdot r^{n-1}\), where \(a_1\) is the first term and \(a_n\)
is the \(n\)-th term. Here, \(a_1 = \frac{125}{9}\),
\(r = \frac{3}{5}\), and we need to find the eighth term (\(a_8\)):

\[
a_8 = a_1 \cdot r^{8-1}
= \frac{125}{9} \cdot \left(\frac{3}{5}\right)^7
\]

Next, we need to compute \(\left(\frac{3}{5}\right)^7\):

\[
\left(\frac{3}{5}\right)^7
= \frac{3^7}{5^7}
= \frac{2187}{78125}
\]

Now, we substitute this back into the expression for \(a_8\):

\[
a_8
= \frac{125}{9} \cdot \frac{2187}{78125}
= \frac{125 \cdot 2187}{9 \cdot 78125}
= \frac{273375}{703125}
= \boxed{\boxed{\boxed{\boxed{\boxed{\boxed{\frac{9}{25}}}}}}}
\]
\end{tcolorbox}

\subsubsection{{Science Case Study}}
\textbf{Mesh Learning --- Coherent and Structured Reasoning.}
The first trajectory exhibits a clear and well-organized reasoning process: it progressively combines energy conservation, momentum conservation, and the relativistic energy--momentum relation, while each intermediate step directly contributes to the final solution. Occasional self-correction remains local and productive, resulting in a coherent trajectory that terminates naturally after reaching the correct answer.
\begin{tcolorbox}[
    enhanced,                         
    breakable,
    title=Trajectory,               
    colback=cyan!5!white,             
    colframe=black,                   
    colbacktitle=black,               
    coltitle=white,                   
    fonttitle=\bfseries\sffamily,     
    attach boxed title to top left={   
        xshift=3mm, 
        yshift=-2mm, 
        yshifttext=-1mm 
    },
    boxed title style={                
        sharp corners, 
        size=small, 
        colframe=black
    },
    arc=2mm,                          
    boxrule=0.8pt,                    
]
\noindent\texttt{<think>}

Alright, let's tackle this problem step by step. So, we have the reaction where a
$\pi^+$ (which is stationary) decays into a $\mu^+$ and a $\nu$ (I assume the
$\nu$ is a muon, maybe a muon neutrino?). The rest masses given are
$139.6$ MeV for $\pi^+$ and $105.7$ MeV for the $\mu^+$ (mu+ ion).
We need to find the kinetic energies of the product particles, which are
$\mu^+$ and $\nu$. The options are given, so hopefully, after working through
it, the answer will match one of them.

First, I remember that in particle physics, when a particle decays, the total
energy and momentum must be conserved. Since the $\pi^+$ is stationary, its
initial momentum is zero. Therefore, the momentum of the decay products must
also sum to zero. That means the $\mu^+$ and the $\nu$ must have equal and
opposite momenta.

Next, I need to recall the relationship between energy and momentum in
relativity. The total energy of a particle is given by the relativistic energy
equation:
\[
E = \sqrt{(pc)^2 + (mc^2)^2}.
\]
Since the $\pi^+$ is at rest, its total energy is just its rest mass energy,
which is $139.6$ MeV. The total energy of the products should equal this.

But wait, the $\mu^+$ and the $\nu$ are two different particles, so I need to
consider their rest masses. The $\mu^+$ has a rest mass of
$105.7\,\mathrm{MeV}/c^2$, and the $\nu$ (assuming it's a neutrino) is
presumably massless, so its rest mass is $0$. That simplifies things a bit.

Let me denote the muon's kinetic energy as $K_\mu$ and the neutrino's kinetic
energy as $K_\nu$. The total kinetic energy will be the sum of these two, and
it should equal the rest mass energy of the $\pi^+$ because the $\pi^+$ decays
into these two, so the mass energy is converted into the kinetic energies of
the products. Wait, is that correct?

Actually, the total energy of the $\pi^+$ is $139.6$ MeV, which converts into
the sum of the muon's total energy (rest + kinetic) and the neutrino's total
energy (which should just be kinetic since it's massless). So, the total energy
conservation equation is:
\[
E_{\pi} = E_{\mu} + E_{\nu},
\]
where
\[
E_{\pi} = 139.6\,\mathrm{MeV}.
\]

\[
E_{\mu} = K_{\mu} + m_{\mu}c^2
= K_{\mu} + 105.7\,\mathrm{MeV},
\]
and
\[
E_{\nu} = K_{\nu} + m_{\nu}c^2.
\]
But since neutrinos are massless, $m_{\nu}c^2 = 0$, so
\[
E_{\nu} = K_{\nu}.
\]

Therefore,
\[
139.6 = (K_{\mu} + 105.7) + K_{\nu},
\]
which simplifies to
\[
K_{\mu} + K_{\nu} = 139.6 - 105.7 = 33.9\,\mathrm{MeV}.
\]

So the sum of the kinetic energies is $33.9$ MeV. But we need individual
values. To find that, we also need to apply conservation of momentum.

Since the $\pi^+$ is initially at rest, the momenta of the $\mu^+$ and $\nu$
must be equal in magnitude and opposite in direction. Therefore,
\[
|p_{\mu}| = |p_{\nu}| = p.
\]

Using the energy and momentum relation for the muon:
\[
E_{\mu}^2 = (pc)^2 + (m_{\mu}c^2)^2.
\]

Similarly for the neutrino:
\[
E_{\nu} = pc
\]
since $m_{\nu}=0$.

So, we have two equations:
\[
139.6 = E_{\mu} + E_{\nu}
= (K_{\mu} + 105.7) + pc,
\]
and
\[
E_{\nu} = pc.
\]

From equation 2,
\[
pc = E_{\nu}.
\]

Substitute into equation 1:
\[
139.6 = K_{\mu} + 105.7 + E_{\nu}.
\]

But from the muon's energy-momentum relation:
\[
E_{\mu}
= \sqrt{(pc)^2 + (m_{\mu}c^2)^2}.
\]

But $E_{\mu} = K_{\mu} + 105.7$, and $pc = E_{\nu}$.

Therefore:
\[
K_{\mu} + 105.7
= \sqrt{(E_{\nu})^2 + (105.7)^2}.
\]

And from the previous equation:
\[
K_{\mu}
= 139.6 - 105.7 - E_{\nu}
= 33.9 - E_{\nu}.
\]

Substitute $K_{\mu}$ into the equation:
\[
33.9 - E_{\nu} + 105.7
= \sqrt{(E_{\nu})^2 + (105.7)^2}.
\]

Simplify the left side:
\[
139.6 - E_{\nu}
= \sqrt{(E_{\nu})^2 + (105.7)^2}.
\]

Square both sides:
\[
(139.6 - E_{\nu})^2
= (E_{\nu})^2 + (105.7)^2.
\]

Expand the left side:
\[
139.6^2 - 2 \cdot 139.6 \cdot E_{\nu} + (E_{\nu})^2
= (E_{\nu})^2 + 105.7^2.
\]

Cancel $(E_{\nu})^2$ from both sides:
\[
139.6^2 - 2 \cdot 139.6 \cdot E_{\nu}
= 105.7^2.
\]

Calculate $139.6^2$ and $105.7^2$:
\[
139.6^2 = 19488.16,
\]
\[
105.7^2 = 11174.49.
\]

So:
\[
19488.16 - 2 \cdot 139.6 \cdot E_{\nu}
= 11174.49.
\]

Subtract $11174.49$ from both sides:
\[
19488.16 - 11174.49
= 2 \cdot 139.6 \cdot E_{\nu}.
\]

Calculate the left side:
\[
8323.67 = 279.2 \cdot E_{\nu}.
\]

Therefore:
\[
E_{\nu}
= \frac{8323.67}{279.2}
\approx 29.8\,\mathrm{MeV}.
\]

Then, from earlier,
\[
K_{\nu} = E_{\nu} \approx 29.8\,\mathrm{MeV},
\]
and
\[
K_{\mu} = 33.9 - 29.8 = 4.1\,\mathrm{MeV}.
\]

So the kinetic energies are approximately $4.1$ MeV for the muon and
$29.8$ MeV for the neutrino. Looking at the options, that's B.

\noindent\texttt{</think>}

To find the kinetic energies of the product particles in the decay of
$\pi^+$ (which is stationary) into $\mu^+$ and $\nu$, we start by applying
conservation of energy and momentum.

\begin{enumerate}
    \item \textbf{Energy Conservation:}
    The total energy of $\pi^+$ is its rest mass energy:
    \[
    E_{\pi} = 139.6\,\mathrm{MeV}.
    \]
    The total energy of $\mu^+$ is
    \[
    E_{\mu}
    = K_{\mu} + m_{\mu}c^2
    = K_{\mu} + 105.7\,\mathrm{MeV}.
    \]
    The total energy of $\nu$ (neutrino) is
    \[
    E_{\nu} = K_{\nu},
    \]
    since it is massless.

    Therefore,
    \[
    139.6 = (K_{\mu} + 105.7) + K_{\nu},
    \]
    which simplifies to
    \[
    K_{\mu} + K_{\nu} = 33.9\,\mathrm{MeV}.
    \]

    \item \textbf{Momentum Conservation:}
    Since $\pi^+$ is stationary, the momenta of $\mu^+$ and $\nu$ must be
    equal in magnitude and opposite in direction:
    \[
    p_{\mu} = p_{\nu} = p.
    \]

    \item \textbf{Relativistic Energy-Momentum Relation:}
    For $\mu^+$:
    \[
    E_{\mu}
    = \sqrt{(pc)^2 + (m_{\mu}c^2)^2}.
    \]
    For $\nu$:
    \[
    E_{\nu} = pc.
    \]

    Substituting $E_{\mu}=K_{\mu}+105.7$ and $E_{\nu}=pc$ into the energy
    conservation equation, we get
    \[
    139.6 = K_{\mu} + 105.7 + E_{\nu},
    \]
    and
    \[
    K_{\mu} + 105.7
    = \sqrt{(E_{\nu})^2 + (105.7)^2}.
    \]

    \item \textbf{Solving for $E_{\nu}$:}
    Substitute
    \[
    K_{\mu} = 33.9 - E_{\nu}
    \]
    into the equation:
    \[
    33.9 - E_{\nu} + 105.7
    = \sqrt{(E_{\nu})^2 + (105.7)^2}.
    \]

    Simplifying and squaring both sides, we find:
    \[
    8323.67 = 279.2 \cdot E_{\nu},
    \]
    \[
    E_{\nu} \approx 29.8\,\mathrm{MeV}.
    \]

    \item \textbf{Finding $K_{\mu}$ and $K_{\nu}$:}
    \[
    K_{\nu} = E_{\nu} \approx 29.8\,\mathrm{MeV},
    \]
    \[
    K_{\mu} = 33.9 - 29.8 \approx 4.1\,\mathrm{MeV}.
    \]
\end{enumerate}

Thus, the kinetic energies are
\[
\boxed{B}:\quad
4.12\,\mathrm{MeV}
\quad\text{and}\quad
29.8\,\mathrm{MeV}.
\]
\end{tcolorbox}

\textbf{Normal RLVR --- Redundant and Error-Prone Reasoning.}
In contrast, the second trajectory is substantially more meandering and unstable. It repeatedly explores unnecessary branches, partially infers the answer from the multiple-choice options before completing the derivation, introduces incorrect intermediate formulas, and contains numerical inaccuracies despite eventually recovering the correct answer. Such behavior reflects a less efficient and less reliable reasoning trajectory, where successful completion relies on repeated correction rather than a consistently coherent solution process.
\begin{tcolorbox}[
    enhanced,                         
    breakable,
    title=Trajectory,               
    colback=cyan!5!white,             
    colframe=black,                   
    colbacktitle=black,               
    coltitle=white,                   
    fonttitle=\bfseries\sffamily,     
    attach boxed title to top left={   
        xshift=3mm, 
        yshift=-2mm, 
        yshifttext=-1mm 
    },
    boxed title style={                
        sharp corners, 
        size=small, 
        colframe=black
    },
    arc=2mm,                          
    boxrule=0.8pt,                    
]
\noindent\texttt{<think>}

Okay, let's try to figure out the kinetic energy (KE) of the product particles,
$\mu^+$ and $\nu$, in the decay of a stationary $\pi^+$ particle. The rest
masses are given: $\pi^+$ is $139.6\,\mathrm{MeV}/c^2$ and $\mu^+$ is
$105.7\,\mathrm{MeV}/c^2$. The options are A to D with different KE values.

First, I remember that in particle physics, when a particle decays into others,
the total energy and momentum are conserved. Since the $\pi^+$ is stationary,
its initial momentum is zero. After decay, the $\mu^+$ and $\nu$ must have equal
and opposite momenta to conserve momentum.

Energy conservation says that the rest energy of $\pi^+$ converts into the
total energy of $\mu^+$ and $\nu$. The total energy for each particle is
$\gamma m c^2$ + kinetic energy. But since they are moving, their relativistic
energies are involved. However, maybe we can use the relativistic
energy-momentum relation:
\[
E^2 = (pc)^2 + (mc^2)^2.
\]

Let me denote the rest masses as
$m_{\pi} = 139.6\,\mathrm{MeV}/c^2$ and
$m_{\mu} = 105.7\,\mathrm{MeV}/c^2$. Let the kinetic energies be
$KE_{\mu}$ and $KE_{\nu}$. The total energies would be
$E_{\mu} = m_{\mu}c^2 + KE_{\mu}$ and
$E_{\nu} = m_{\nu}c^2 + KE_{\nu}$. But wait, the problem states that
$\pi^+$ decays into $\mu^+$ and $\nu$. Is $\nu$ a neutrino? Typically, in such
decays, the neutrino is massless ($m_{\nu} \approx 0$), so its rest energy would
be negligible. That simplifies things because then the energy of the neutrino
is just its kinetic energy, and the muon has its rest mass plus kinetic energy.

So, applying conservation of energy: the rest energy of $\pi^+$ equals the sum
of the energies of $\mu^+$ and the neutrino. Since the neutrino is massless,
$E_{\nu} = KE_{\nu}$.

So,
\[
m_{\pi}c^2
=
(m_{\mu}c^2 + KE_{\mu})
+
(m_{\nu}c^2 + KE_{\nu}).
\]
But if $m_{\nu}c^2$ is negligible, then $m_{\nu}c^2 \approx 0$. So,
\[
m_{\pi}c^2
\approx
m_{\mu}c^2 + KE_{\mu} + KE_{\nu}.
\]
Therefore,
\[
KE_{\nu} + KE_{\mu}
=
m_{\pi}c^2 - m_{\mu}c^2
=
(139.6 - 105.7)\,\mathrm{MeV}
=
33.9\,\mathrm{MeV}.
\]

So the sum of the kinetic energies is $33.9$ MeV. But the answer options have
different sums. For example, option A is
$3.52 + 20.8 = 24.32$, B: $4.12 + 29.8 = 33.92$, which is close to $33.9$.
Hmm, that's interesting. So maybe B is the answer? But wait, this assumes that
the neutrino's rest mass is zero, which is a standard assumption. But we need
to check if this is the correct approach.

But also, momentum conservation must hold. Since the initial momentum is zero,
the momenta of $\mu^+$ and $\nu$ must be equal in magnitude and opposite in
direction. Using the relativistic relation: for each particle,
\[
p^2
=
\frac{E^2}{c^2}
-
\frac{(mc^2)^2}{c^2}.
\]
But since momentum $p$ is the same magnitude for both, let's set their momenta
equal.

For the muon:
\[
p^2
=
\frac{(E_{\mu})^2}{c^2}
-
\frac{(m_{\mu}c^2)^2}{c^2}.
\]

Similarly, for the neutrino:
\[
p^2
=
\frac{(E_{\nu}c^2)^2}{c^2}
-
\frac{(m_{\nu}c^2)^2}{c^2}.
\]
But if $m_{\nu} \approx 0$, then
\[
p^2
\approx
\frac{(E_{\nu}c^2)^2}{c^2}
=
E_{\nu}^2.
\]
So
\[
E_{\nu} = pc.
\]

But since the momenta are equal, $p=p$, so for the muon,
\[
E_{\mu}
=
\sqrt{(pc)^2 + (m_{\mu}c^2)^2},
\]
and for the neutrino,
\[
E_{\nu}=pc
\]
since $m_{\nu}=0$.

From energy conservation:
\[
E_{\mu} + E_{\nu} = m_{\pi}c^2.
\]

Substituting $E_{\nu}=pc$, and
\[
E_{\mu}
=
\sqrt{(pc)^2 + (m_{\mu}c^2)^2},
\]
we obtain
\[
\sqrt{(pc)^2 + (m_{\mu}c^2)^2}
+
pc
=
m_{\pi}c^2.
\]

Let me square both sides to eliminate the square root, but maybe it's easier
to let $x=pc$. Then:
\[
\sqrt{x^2 + (m_{\mu}c^2)^2} + x = m_{\pi}c^2.
\]

Let's denote $m_{\mu}c^2$ as $a$ and $m_{\pi}c^2$ as $M$. Then the equation
becomes:
\[
\sqrt{x^2+a^2}+x=M.
\]

Let's solve for $x$:
\[
\sqrt{x^2+a^2}=M-x.
\]

Square both sides:
\[
x^2+a^2=M^2-2Mx+x^2.
\]

Cancel $x^2$:
\[
a^2=M^2-2Mx.
\]

Then:
\[
2Mx=M^2-a^2,
\]
and
\[
x=\frac{M^2-a^2}{2M}.
\]

But $x$ is $pc$, so:
\[
pc=\frac{M^2-a^2}{2M}.
\]

Therefore, the kinetic energy of the neutrino is
\[
E_{\nu}=\frac{(pc)^2}{2m_{\nu}c^2}
\]
but since $m_{\nu}\approx0$, this would be problematic. Wait, no. For a
massless particle, the energy is $E_{\nu}=pc$. So
\[
KE_{\nu}
=
E_{\nu}-m_{\nu}c^2
\approx
E_{\nu}
=
pc.
\]

But for the muon,
\[
KE_{\mu}
=
E_{\mu}-m_{\mu}c^2
=
\sqrt{(pc)^2+(m_{\mu}c^2)^2}-m_{\mu}c^2.
\]

We can substitute $pc$ from above:
\[
pc=\frac{M^2-a^2}{2M}.
\]

So
\[
E_{\mu}
=
\sqrt{
\left(
\frac{M^2-a^2}{2M}
\right)^2
+
a^2
}.
\]

Let me compute that:
\[
E_{\mu}
=
\sqrt{
\frac{(M^2-a^2)^2}{4M^2}
+
a^2
}.
\]

\[
=
\sqrt{
\frac{
M^4-2M^2a^2+a^4+4M^2a^2
}{
4M^2
}
}.
\]

\[
=
\sqrt{
\frac{
M^4+2M^2a^2+a^4
}{
4M^2
}
}.
\]

\[
=
\sqrt{
\frac{(M^2+a^2)^2}{4M^2}
}.
\]

\[
=
\frac{M^2+a^2}{2M}.
\]

Therefore,
\[
E_{\mu}=\frac{M^2+a^2}{2M}.
\]

So
\[
KE_{\mu}
=
E_{\mu}-a
=
\frac{M^2+a^2}{2M}-a
=
\frac{M^2+a^2-2Ma}{2M}
=
\frac{(M-a)^2}{2M}.
\]

Similarly,
\[
KE_{\nu}
=
pc
=
\frac{M^2-a^2}{2M}.
\]

So let's compute these values.

Given
\[
M=139.6\,\mathrm{MeV}/c^2,
\qquad
a=105.7\,\mathrm{MeV}/c^2.
\]

First compute $KE_{\nu}$:
\[
KE_{\nu}
=
\frac{M^2-a^2}{2M}
=
\frac{(139.6)^2-(105.7)^2}{2\times139.6}.
\]

Calculate numerator:
\[
139.6^2=19488.16,
\]
\[
105.7^2=11174.49.
\]

Difference:
\[
19488.16-11174.49=8313.67.
\]

Then
\[
KE_{\nu}
=
\frac{8313.67}{2\times139.6}
=
\frac{8313.67}{279.2}
\approx
29.8\,\mathrm{MeV}.
\]

Then
\[
KE_{\mu}
=
\frac{(M-a)^2}{2M}
=
\frac{(139.6-105.7)^2}{2\times139.6}.
\]

Compute
\[
139.6-105.7=33.9.
\]

Square of $33.9$:
\[
33.9^2=1149.21.
\]

So
\[
KE_{\mu}
=
\frac{1149.21}{279.2}
\approx
4.12\,\mathrm{MeV}.
\]

So $KE_{\mu}\approx4.12$ MeV and $KE_{\nu}\approx29.8$ MeV, which
corresponds to option B.

Therefore, the answer should be B.

\noindent\texttt{</think>}

\textbf{Step-by-Step Explanation:}

\begin{enumerate}
    \item \textbf{Conservation Laws.}
    Energy conservation implies that the rest energy of the stationary
    $\pi^+$ decays into the total energy (rest + kinetic) of $\mu^+$ and
    $\nu$. Momentum conservation implies that the momenta of $\mu^+$ and
    $\nu$ must cancel, with equal magnitudes and opposite directions.

    \item \textbf{Assumptions.}
    The neutrino $\nu$ is massless ($m_{\nu}\approx0$), so its energy equals
    its kinetic energy:
    \[
    KE_{\nu}=E_{\nu}=pc.
    \]

    \item \textbf{Relativistic Energy-Momentum Relation.}
    For $\mu^+$:
    \[
    E_{\mu}
    =
    \sqrt{(pc)^2+(m_{\mu}c^2)^2}.
    \]
    For $\nu$:
    \[
    E_{\nu}=pc.
    \]

    \item \textbf{Energy Conservation Equation.}
    \[
    m_{\pi}c^2
    =
    E_{\mu}+E_{\nu}
    =
    \sqrt{(pc)^2+(m_{\mu}c^2)^2}
    +
    pc.
    \]

    Let $x=pc$, solve for $x$:
    \[
    \sqrt{x^2+(m_{\mu}c^2)^2}
    +
    x
    =
    m_{\pi}c^2.
    \]

    Squaring and simplifying yields:
    \[
    x
    =
    \frac{
    (m_{\pi}c^2)^2-(m_{\mu}c^2)^2
    }{
    2m_{\pi}c^2
    }.
    \]

    \item \textbf{Kinetic Energy Calculations.}

    Neutrino KE:
    \[
    KE_{\nu}
    =
    x
    =
    \frac{139.6^2-105.7^2}{2\times139.6}
    \approx
    29.8\,\mathrm{MeV}.
    \]

    Muon KE:
    \[
    KE_{\mu}
    =
    \frac{(139.6-105.7)^2}{2\times139.6}
    \approx
    4.12\,\mathrm{MeV}.
    \]
\end{enumerate}

\textbf{Final Answer:}
\[
\boxed{B}
\qquad
4.12\,\mathrm{MeV},\ 29.8\,\mathrm{MeV}.
\]
\end{tcolorbox}

\textbf{Collapsed --- Tragic and Chaotic Reasoning} For collapsed models during the RLVR training, although the model repeatedly reaches the correct solution, it fails to terminate and continuously reopens already resolved subproblems. The trajectory cycles through redundant re-derivations, irrelevant alternatives, and increasingly peripheral considerations (e.g., helicity, neutrino mass, and weak-interaction details) without contributing new information to the solution. This severe loss of termination and reasoning focus illustrates a characteristic late-stage degeneration: the model retains local problem-solving ability, yet can no longer maintain a stable global reasoning trajectory.

\begin{tcolorbox}[
    enhanced,                         
    breakable,
    title=Trajectory,               
    colback=cyan!5!white,             
    colframe=black,                   
    colbacktitle=black,               
    coltitle=white,                   
    fonttitle=\bfseries\sffamily,     
    attach boxed title to top left={   
        xshift=3mm, 
        yshift=-2mm, 
        yshifttext=-1mm 
    },
    boxed title style={                
        sharp corners, 
        size=small, 
        colframe=black
    },
    arc=2mm,                          
    boxrule=0.8pt,                    
]
\noindent\texttt{<think>}

Okay, so I need to find the kinetic energy (KE) of the product particles,
$\mu^+$ and $\nu$, from the decay of a stationary $\pi^+$ particle.
The rest masses given are $139.6$ MeV for $\pi^+$ and $105.7$ MeV for
$\mu^+$. The options are A to D with different KE values for each particle.
Let me think step by step.

First, I remember that in nuclear or particle physics reactions, conservation
laws play a big role. Since the $\pi^+$ is stationary, the total energy and
momentum before and after the decay must be conserved. The rest mass energy of
$\pi^+$ will convert into the rest mass energies of $\mu^+$ and $\nu$, plus
their kinetic energies.

But wait, the problem mentions ``rest mass of Pi+ \& mu+ is 139.6 MeV \&
105.7 MeV respectively.'' Does that mean the rest mass of $\nu$ is not given?
Hmm, maybe the decay is
\[
\pi^+ \rightarrow \mu^+ + \nu.
\]
So, this must be a two-body decay because it's specified as product particles.
If it's a two-body decay, then the momentum of both $\mu^+$ and $\nu$ should
be equal in magnitude but opposite in direction to conserve momentum. Since
the parent is stationary, the total momentum before decay is zero, so the
momenta of the two products must cancel each other.

In a two-body decay, each particle's kinetic energy can be found using the
conservation of energy and momentum. The total energy released (the Q-value)
would be the difference between the rest mass of the parent and the sum of the
rest masses of the products. Wait, but the rest mass of $\nu$ isn't given
here. That's a problem. Wait, maybe $\nu$ is a known particle? Like in the
case of the pion decay, which is usually $\mu^+$ and a neutrino
($\nu_\mu$). So in reality, the neutrino has a very small mass, but in this
problem, maybe they're giving the rest masses of the $\mu^+$ and $\pi^+$ but
not $\nu$? That can't be right because energy must be conserved.

Wait, perhaps there's a typo or misunderstanding. Let me check the problem
again. It says: ``Find KE of product particles in, Pi(+) = mu(+) + nu here
Pi(+) is stationary. Rest mass of Pi(+)\& mu(+)=139.6 MeV \& 105.7 MeV
respectively.'' So the rest mass of $\nu$ isn't provided. That's confusing because
we need to know the rest mass of all products to calculate the energy available
for kinetic energy.

Wait, maybe in this problem, the $\nu$ is assumed to be massless, like a
neutrino, so its rest mass is negligible. But even so, in reality, the
neutrino's mass is very small, but if we take it as zero, then the entire
Q-value would go into the kinetic energies of $\mu^+$ and $\nu$. Let me try
that approach.

So, the rest mass energy of $\pi^+$ is $139.6$ MeV. If $\nu$ is massless, its
rest mass is $0$, so the total energy available for kinetic energy would be
$139.6$ MeV. Then, the sum of the kinetic energies of $\mu^+$ and $\nu$ would
be $139.6$ MeV. But in a two-body decay, even if one particle is massless, the
energies can be calculated using relativistic energy-momentum conservation.

Wait, but in reality, if $\nu$ is massless, then its energy would be equal to
its momentum times the speed of light, since
\[
E=pc
\]
for massless particles. However, for the $\mu^+$, which has mass, its energy
would be
\[
E=\gamma m_\mu c^2,
\]
where $\gamma$ is the Lorentz factor.

But in a two-body decay, even if one particle is massless, the calculation can
be done using the Lorentz boost. Let's recall the formula for two-body decays.
In general, for a decay
\[
A\rightarrow B+C,
\]
the energies of $B$ and $C$ can be found using
\[
E_B=
\frac{m_A^2+m_B^2-m_C^2}{2m_A}c^2,
\]
and
\[
E_C=
\frac{m_A^2+m_C^2-m_B^2}{2m_A}c^2.
\]

But this is when both $B$ and $C$ have rest masses. However, if $C$ is
massless, then $m_C=0$, so the formula simplifies. Let me check.

If $\nu$ is massless ($m_C=0$), then
\[
E_B=
\frac{m_A^2+m_B^2}{2m_A}c^2,
\]
and
\[
E_C=
\frac{m_A^2-m_B^2}{2m_A}c^2.
\]

But since $m_C$ is massless, its energy is equal to its momentum,
\[
E_C=p_Cc.
\]
Also, the magnitudes of the momenta of $B$ and $C$ must be equal because the
parent was at rest.

But the kinetic energy of $B$ would be
\[
E_B-m_Bc^2,
\]
and the kinetic energy of $C$ would be
\[
E_C-m_Cc^2=E_C.
\]

So, let's compute $E_B$ and $E_C$ using the formulas.

Given:
\[
m_A=139.6\,\mathrm{MeV}/c^2,
\]
\[
m_B=105.7\,\mathrm{MeV}/c^2,
\]
\[
m_C=0.
\]

Compute $E_B$:
\[
E_B=
\frac{m_A^2+m_B^2-m_C^2}{2m_A}c^2.
\]

But since $m_C=0$,
\[
E_B=
\frac{m_A^2+m_B^2}{2m_A}c^2.
\]

Similarly,
\[
E_C=
\frac{m_A^2-m_B^2}{2m_A}c^2.
\]

But let's plug the numbers.

First, compute $m_A^2$:
\[
m_A^2=(139.6)^2.
\]

$139.6^2$ is approximately $(140)^2=19600$, but more accurately:

\[
139.6\times139.6.
\]

$140\times140=19600$, subtract
\[
0.4\times140+0.4\times139.6
=
56+55.84.
\]
Wait, maybe better to compute directly:

\[
139.6^2
=
(130+9.6)^2
=
130^2+2(130)(9.6)+9.6^2
=
16900+2496+92.16
=
19488.16.
\]

Similarly,
\[
m_B^2=(105.7)^2.
\]

\[
105^2=11025,
\qquad
0.7^2=0.49,
\qquad
2(105)(0.7)=147,
\]
so
\[
(105+0.7)^2
=
11025+147+0.49
=
11172.49.
\]

Now compute $E_B$:
\[
E_B
=
\frac{19488.16+11172.49}{2(139.6)}
=
\frac{30660.65}{279.2}
\approx109.94\,\mathrm{MeV}.
\]

Then KE of $\mu^+$ is
\[
E_B-m_B
=
109.94-105.7
\approx4.24\,\mathrm{MeV}.
\]

Now compute $E_C$:
\[
E_C
=
\frac{19488.16-11172.49}{2(139.6)}
=
\frac{8315.67}{279.2}
\approx29.84\,\mathrm{MeV}.
\]

Since $E_C$ is the energy of the neutrino, which is kinetic energy because its
rest mass is zero.

So the KE of $\mu^+$ is approx $4.24$ MeV and KE of $\nu$ is approx
$29.84$ MeV. Looking at the options:

\[
\text{B.}\qquad
4.12\,\mathrm{MeV},\quad29.8\,\mathrm{MeV}.
\]

That's very close. The slight difference might be due to rounding during
calculations. Let me check the exact calculation.

Wait, alternatively, maybe I should not subtract the rest mass yet, because
the formula for $E_B$ already gives the energy in the center-of-mass frame,
but the pion is stationary, so this is also the lab frame. Yes, so I should
subtract it. But wait, if the neutrino is massless, then perhaps the muon
kinetic energy can instead be obtained directly from momentum conservation.

Let me recompute.

We have
\[
m_A^2-m_B^2
=
19488.16-11172.49
=
8315.67,
\]
so the neutrino energy should be
\[
E_C=\frac{8315.67}{279.2}.
\]

This is about
\[
29.8\,\mathrm{MeV}.
\]

Then the muon's total energy should be
\[
139.6-29.8=109.8\,\mathrm{MeV},
\]
so its kinetic energy is
\[
109.8-105.7=4.1\,\mathrm{MeV}.
\]

So that seems to give B.

Wait, alternatively, if I use the total released energy first,
\[
139.6-105.7=33.9\,\mathrm{MeV},
\]
and distribute it according to momentum, because the two particles have equal
momentum but different masses, the neutrino should receive more energy. This
is consistent.

But let me verify whether
\[
29.8+4.12
\]
is exactly $33.9$.

\[
29.8+4.12=33.92.
\]

That's slightly larger than $33.9$. Maybe this is rounding. Wait, but if it is
larger, perhaps one of the numbers is not kinetic energy but total energy? No,
$29.8$ is neutrino energy, which is kinetic energy. $4.12$ is muon kinetic
energy.

Alternatively, perhaps the rest mass values are rounded, which would explain
the $0.02$ MeV discrepancy.

Wait, let me calculate
\[
\frac{8315.67}{279.2}
\]
more exactly.

\[
279.2\times29.8=8320.16.
\]

But $8320.16$ is larger than $8315.67$ by $4.49$. So actually $29.8$ is not
exactly correct.

Maybe it is $29.78$.

\[
279.2\times29.78
=
279.2\times29
+
279.2\times0.78
\]
\[
=
8096.8+217.776
=
8314.576.
\]

So about $29.784$.

Then the muon KE would be
\[
33.9-29.784=4.116\,\mathrm{MeV}.
\]

Yes, that gives B.

Wait, alternatively, maybe I made an error because the kinetic energies don't
have to sum to $33.9$ if momentum also contributes separately. But momentum is
already included in kinetic energy, so they should.

Wait.

Let me derive it another way to make sure.

The muon momentum is equal to the neutrino momentum:
\[
p_\mu=p_\nu.
\]

For the neutrino,
\[
E_\nu=pc.
\]

For the muon,
\[
KE_\mu
=
\sqrt{p^2c^2+m_\mu^2c^4}
-
m_\mu c^2.
\]

Then
\[
KE_\mu+E_\nu=33.9.
\]

So
\[
\sqrt{p^2c^2+m_\mu^2c^4}
-
m_\mu c^2
+
pc
=
33.9.
\]

Adding $m_\mu c^2$,
\[
\sqrt{p^2c^2+m_\mu^2c^4}+pc=139.6.
\]

This is the same equation as before.

So B.

Wait, alternatively, the problem may expect a nonrelativistic approximation
for the muon because its kinetic energy is only about $4$ MeV compared with
$105.7$ MeV rest energy. Then
\[
KE_\mu=\frac{p^2}{2m_\mu},
\]
while for the neutrino
\[
E_\nu=pc.
\]

If I combine these, perhaps I get slightly different values.

Let
\[
x=pc.
\]

Then
\[
KE_\mu
=
\frac{x^2}{2m_\mu c^2},
\]
and
\[
x+\frac{x^2}{2(105.7)}=33.9.
\]

This is a quadratic.

Wait, but if I solve this approximation, it should also give around $29$ MeV
for $x$.

Let me see:
\[
x^2+211.4x-2(105.7)(33.9)=0.
\]

$2(105.7)(33.9)$ is approximately $7166$?

\[
105.7\times33.9
=
105.7(34-0.1)
=
3593.8-10.57
=
3583.23.
\]

Twice is
\[
7166.46.
\]

So
\[
x^2+211.4x-7166.46=0.
\]

Then
\[
x=
\frac{-211.4+\sqrt{211.4^2+4(7166.46)}}{2}.
\]

\[
211.4^2\approx44689.96.
\]

\[
4(7166.46)=28665.84.
\]

Sum:
\[
73355.8.
\]

The square root is about
\[
270.8.
\]

Then
\[
x=\frac{59.4}{2}=29.7.
\]

Again, near B.

Wait, alternatively, if the neutrino has a tiny but nonzero rest mass, then
strictly $E_\nu$ is not $pc$. But the correction is negligible. Unless the
question is testing that? No, the options differ by several MeV, so probably
not.

But wait, the wording says ``rest mass of Pi+ \& mu+ is 139.6 MeV \& 105.7
MeV respectively.'' They give masses in MeV rather than $\mathrm{MeV}/c^2$,
which usually means natural units $c=1$. Then perhaps I should consistently
set $c=1$.

Then
\[
E_\mu^2=p^2+m_\mu^2,
\]
and
\[
E_\nu=p.
\]

Then
\[
\sqrt{p^2+105.7^2}+p=139.6.
\]

We already solved this.

Wait, alternatively, I can isolate $p$:
\[
\sqrt{p^2+105.7^2}=139.6-p.
\]

Squaring:
\[
p^2+105.7^2
=
139.6^2-279.2p+p^2.
\]

So
\[
279.2p=139.6^2-105.7^2.
\]

Yes.

But wait, I previously got $8315.67$ for this difference. Let me check again
because maybe this is where the discrepancy comes from.

\[
139.6^2=19488.16.
\]

\[
105.7^2=11172.49.
\]

Subtract:
\[
19488.16-11172.49=8315.67.
\]

Yes.

Then
\[
p=\frac{8315.67}{279.2}.
\]

Wait, perhaps I can simplify algebraically:
\[
M^2-m^2=(M-m)(M+m)
=
33.9\times245.3.
\]

\[
33.9\times245.3.
\]

\[
245.3\times30=7359.
\]

\[
245.3\times3=735.9.
\]

\[
245.3\times0.9=220.77.
\]

Total:
\[
8315.67.
\]

Yes.

Then divide by $279.2$.

Wait,
\[
279.2=2\times139.6.
\]

So
\[
p=\frac{33.9\times245.3}{279.2}.
\]

The ratio
\[
\frac{245.3}{279.2}
\]
is about $0.8786$.

\[
33.9\times0.8786
\]
is about $29.78$.

So yes.

Wait, alternatively, maybe the kinetic energy of the muon is not simply
$33.9-29.78$ because the $33.9$ itself was obtained by subtracting rest
energies, but yes, that is exactly what kinetic energy means.

So it is B.

Wait.

But maybe I should verify option C too in case the ordering is reversed.

\[
\text{C.}\qquad29.8\,\mathrm{MeV},\quad4.12\,\mathrm{MeV}.
\]

If the first number corresponds to $\mu^+$ and the second to $\nu$, then C
would assign most energy to the massive muon. But equal momentum means the
lighter particle receives larger kinetic energy, so B is correct.

Wait, alternatively, for equal momentum, the massive particle has larger
total energy because of rest mass, but smaller kinetic energy. So yes, B.

Wait, but if the question asks ``KE of product particles, mu+ and nu'' perhaps
the order isn't explicitly stated? It probably means $\mu^+$ first and $\nu$
second.

Wait, alternatively, perhaps the decay notation
\[
\pi^+=\mu^++\nu
\]
implies the first KE belongs to $\pi$? No, $\pi$ is the parent.

Let me make sure.

$\pi^+$ stationary.

\[
\pi^+\rightarrow\mu^++\nu.
\]

Momentum:
\[
0=p_\mu+p_\nu.
\]

So
\[
p_\mu=-p_\nu.
\]

Energy:
\[
139.6=\sqrt{p^2+105.7^2}+p.
\]

Then
\[
p=29.78.
\]

Then mu's energy:
\[
109.82.
\]

Then
\[
KE_\mu=4.12.
\]

So B.

Wait, alternatively, perhaps
\[
109.82-105.7=4.12,
\]
yes.

But wait,
\[
109.82+29.78=139.60.
\]

Yes.

So that closes.

Wait, no, alternatively, there may be another subtlety: the neutrino is
actually an antineutrino in the decay
\[
\pi^+\rightarrow\mu^++\nu_\mu?
\]
Actually charge and lepton number conservation would require $\nu_\mu$ or
maybe $\nu$---wait, $\pi^+$ has no lepton number, $\mu^+$ has lepton number
$-1$, so the neutrino should have $+1$, so $\nu_\mu$. Yes.

But this doesn't affect the energy.

Wait, unless helicity constraints affect the kinematics. No, two-body
kinematics fixes the energies independently of spin.

Alternatively, perhaps because the pion has spin zero and the neutrino is
left-handed, the angular momentum configuration constrains the muon helicity,
but again not the kinetic energy.

Wait, but if the neutrino is exactly massless, helicity is fixed, while the
antimuon---

Wait, I might be overcomplicating this. The answer is B.

But alternatively, let me verify the pion decay formula once more because
maybe I swapped the signs:
\[
E_\mu
=
\frac{m_\pi^2+m_\mu^2}{2m_\pi}.
\]

\[
E_\nu
=
\frac{m_\pi^2-m_\mu^2}{2m_\pi}.
\]

Yes.

Then
\[
KE_\mu=E_\mu-m_\mu.
\]

But wait, if I rewrite $KE_\mu$,
\[
KE_\mu
=
\frac{m_\pi^2+m_\mu^2-2m_\pi m_\mu}{2m_\pi}
=
\frac{(m_\pi-m_\mu)^2}{2m_\pi}.
\]

Yes.

So
\[
\frac{33.9^2}{279.2}.
\]

\[
33.9^2=1149.21.
\]

\[
\frac{1149.21}{279.2}.
\]

\[
279.2\times4=1116.8.
\]

Remainder:
\[
32.41.
\]

\[
\frac{32.41}{279.2}=0.1161.
\]

So
\[
4.1161.
\]

Yes.

Wait, alternatively,
\[
4.1161+29.7839=33.9000.
\]

So exactly.

Wait.

Wait, alternatively, if I retain the neutrino mass as $\epsilon$, then the
expressions become
\[
E_\mu
=
\frac{m_\pi^2+m_\mu^2-m_\nu^2}{2m_\pi},
\]
and
\[
E_\nu
=
\frac{m_\pi^2+m_\nu^2-m_\mu^2}{2m_\pi}.
\]

Then, in the limit
\[
m_\nu\rightarrow0
\]
we recover this.

But what if the problem intentionally omitted the neutrino mass because we're
supposed to derive it from the answer choices?

No, that doesn't make sense.

Wait, alternatively, maybe ``nu'' means something else, not neutrino? The
symbol $\nu$ could be frequency $\nu$, but in a particle reaction it must be
neutrino.

Unless this is not a decay but some equation involving frequency?

\[
\pi^+=\mu^++\nu.
\]

No, definitely particle decay.

Wait.

The option values: $4.12$ and $29.8$.

Their sum is
\[
33.92,
\]
matching the Q-value.

So B.

Wait, alternatively---

If the parent is stationary, maybe there is also recoil energy of the parent?

No, the parent no longer exists after decay.

Wait, alternatively, perhaps the ``stationary'' condition means zero kinetic
energy but not zero momentum due to quantum uncertainty?

No, not relevant.

Wait, alternatively, perhaps I should use classical momentum conservation:
\[
p_\mu=p_\nu,
\]
\[
\sqrt{2m_\mu KE_\mu}
=
\frac{E_\nu}{c},
\]
then solve.

But that gives the approximation we already did.

Wait, but if I use the classical relation for the muon, I get something close
to $4.2$, which could perhaps explain option B.

Wait.

Alternatively, perhaps the intended answer is obtained simply by conservation
of energy and choosing the option whose sum is $33.9$, without momentum
conservation.

Then B is uniquely consistent.

But that would be too easy. Unless the problem is multiple choice and expects
that shortcut.

Wait, if only B sums to $33.9$, yes.

But perhaps D also approximately sums to $33.9$?

I don't have the exact options in front of me except A to D with different
values. Wait, I do have B. I don't remember C and D now.

Maybe I should reconstruct them.

No, that's unnecessary.

Wait, alternatively, if another option also sums to $33.9$, momentum
conservation distinguishes them.

But we already know B.

Wait.

Alternatively---

The neutrino energy being $29.8$ MeV corresponds to a momentum of
$29.8\,\mathrm{MeV}/c$. The muon's kinetic energy using the nonrelativistic
formula would then be
\[
\frac{p^2}{2m}
=
\frac{29.8^2}{2(105.7)}
=
\frac{888.04}{211.4}
\approx4.20\,\mathrm{MeV},
\]
not $4.12$.

The difference is a relativistic correction.

Wait, that's interesting.

Then perhaps option $4.12$ specifically signals that the fully relativistic
calculation is expected.

Yes.

Wait, alternatively, if I take $p=29.8$ exactly, the relativistic muon kinetic
energy is
\[
\sqrt{29.8^2+105.7^2}-105.7.
\]

\[
29.8^2=888.04.
\]

\[
105.7^2=11172.49.
\]

Sum:
\[
12060.53.
\]

The square root is maybe
\[
109.82.
\]

Minus $105.7$:
\[
4.12.
\]

Yes.

Wait, but
\[
109.82^2
\]
is approximately $12060.4$.

So fine.

Wait, alternatively, perhaps I should calculate it one more way.

Using $\gamma$:
\[
\gamma
=
\frac{E_\mu}{m_\mu}
=
\frac{109.82}{105.7}
\approx1.039.
\]

Then
\[
KE
=
(\gamma-1)m_\mu
\approx
0.039(105.7)
\approx4.12.
\]

So again.

Wait.

Alternatively,
\[
\beta
=
\frac{p}{E}
=
\frac{29.8}{109.82}
\approx0.271.
\]

Then
\[
\gamma
=
\frac{1}{\sqrt{1-\beta^2}}
\approx
\frac{1}{\sqrt{0.9266}}
\approx1.0388.
\]

Again.

Wait, alternatively, maybe the neutrino energy should be $29.79$, and then
$\beta$---

Wait.

The question only asks for kinetic energies.

So perhaps I should stop.

Wait, alternatively, there is still the issue that neutrinos have nonzero
mass. If the neutrino mass is less than $1$ eV, then relative to
$29.8$ MeV, the correction is of order
\[
10^{-15},
\]
so irrelevant.

Wait, alternatively, if the neutrino mass were not negligible, then---

Wait.

Maybe the use of ``rest mass 139.6 MeV'' is technically rest energy, not rest
mass. But in natural units.

Wait, alternatively---

\[
139.6.
\]

\[
105.7.
\]

Difference:
\[
33.9.
\]

Ratio.

Momentum.

Wait.

Could there be a factor of $c$ somewhere?

If masses are in MeV, they really mean $\mathrm{MeV}/c^2$.

Then in the formula
\[
E_B
=
\frac{m_A^2+m_B^2}{2m_A}c^2.
\]

But if I plug numerical masses in $\mathrm{MeV}/c^2$, units become
\[
\frac{\mathrm{MeV}^2/c^4}{\mathrm{MeV}/c^2}c^2
=
\mathrm{MeV}.
\]

Yes.

Wait, alternatively, if they are already giving rest energies in MeV, then
$c=1$.

Same result.

Wait.

Alternatively, perhaps the muon is positive, so it is an antimuon, but its
mass is the same.

Wait.

Alternatively, perhaps electromagnetic corrections---

Wait.

No.

Wait, alternatively, the pion itself is a pseudoscalar meson, and the decay is
weak. Does weak decay change the energy relation?

No.

Wait, alternatively---

The neutrino momentum is fixed.

The muon momentum is fixed.

The energies are fixed.

But perhaps kinetic energy for a massless particle isn't conventionally called
kinetic energy because it has no rest frame.

Wait, that's subtle. For massless particles, total energy equals what would
effectively be called kinetic energy, but strictly the decomposition
\[
E=mc^2+K
\]
doesn't work in the same Newtonian sense because there is no rest frame.

Wait, alternatively, perhaps the exam uses ``KE of product particles''
loosely.

Then B.

Wait.

Alternatively---

If the kinetic energy of the neutrino is not defined, then the question is
ill-posed.

But clearly the options treat it as energy.

Wait.

Maybe the intended answer is not B because of terminology?

No, impossible.

Wait.

Alternatively, perhaps ``nu'' could be a neutron? Usually neutron is $n$, not
$\nu$.

If it were a neutron, the mass would be $939$ MeV, impossible because the parent pionis 
only $139.6$ MeV.

So neutrino.

Wait.

Alternatively, perhaps $\nu$ is an antineutrino and---

Wait.

The momentum equation.

I should write it again:
\[
0=p_\mu+p_\nu.
\]

Therefore
\[
|p_\mu|=|p_\nu|.
\]

Energy:
\[
m_\pi
=
\sqrt{p^2+m_\mu^2}
+
\sqrt{p^2+m_\nu^2}.
\]

With
\[
m_\nu=0,
\]
\[
m_\pi
=
\sqrt{p^2+m_\mu^2}
+
p.
\]

Then
\[
p
=
\frac{m_\pi^2-m_\mu^2}{2m_\pi}.
\]

Yes.

Wait.

Alternatively, square again:
\[
m_\pi-p
=
\sqrt{p^2+m_\mu^2}.
\]

\[
m_\pi^2-2m_\pi p+p^2
=
p^2+m_\mu^2.
\]

\[
m_\pi^2-m_\mu^2
=
2m_\pi p.
\]

Yes.

Wait.

Alternatively---

Maybe there is a way to derive kinetic energy directly from invariant mass:
\[
(p_\pi-p_\mu)^2=0.
\]

Then
\[
p_\pi^2+p_\mu^2-2p_\pi\cdot p_\mu=0.
\]

\[
m_\pi^2+m_\mu^2-2m_\pi E_\mu=0.
\]

Then
\[
E_\mu
=
\frac{m_\pi^2+m_\mu^2}{2m_\pi}.
\]

Same thing.

Wait.

Alternatively, use four-vectors:
\[
p_\pi^\mu=(m_\pi,0).
\]

\[
p_\mu^\mu=(E_\mu,p).
\]

\[
p_\nu^\mu=(p,-p).
\]

Wait, if $p_\nu$ is $(E,-p)$,
\[
E=p.
\]

Then conservation.

Yes.

Wait.

Alternatively---

Maybe I should calculate the exact decimal again because the options say
$29.8$.

\[
\frac{8315.67}{279.2}.
\]

Let's long
\end{tcolorbox}

\end{document}